\documentclass[letterpaper]{article}
\let\appendixaddcontentsline\addcontentsline
\usepackage[preprint]{aaai2027}

\usepackage[hyphens]{url}
\usepackage{graphicx}
\usepackage{natbib}
\usepackage{caption}
\usepackage{subcaption}  %
\usepackage{float}
\usepackage{array}
\usepackage{booktabs}
\usepackage{tabularx}

\usepackage{amsmath}
\usepackage{amsthm}
\usepackage{thmtools}
\usepackage{amsfonts}
\usepackage{amssymb}
\usepackage{xspace}
\usepackage{braket}
\usepackage{tikz}
\usetikzlibrary{positioning, arrows.meta,calc,fit}
\usepackage{todonotes}

\usepackage{cleveref}
\crefname{algocf}{algorithm}{algorithms}
\Crefname{algocf}{Algorithm}{Algorithms}
\crefname{appendix}{appendix}{appendices}
\Crefname{appendix}{Appendix}{Appendices}
\usepackage{mathtools}

\usepackage[ruled,vlined]{algorithm2e}

\usepackage{xfrac}
\providecommand{\nicefrac}[2]{\sfrac{#1}{#2}}

\usepackage{comment}

\newtheorem{theorem}{Theorem}
\newtheorem*{theorem*}{Theorem}
\newtheorem{lemma}[theorem]{Lemma}
\newtheorem*{lemma*}{Lemma}

\newtheorem*{proposition*}{Proposition}

\newtheorem{definition}{Definition}
\newtheorem*{definition*}{Definition}
\newtheorem{example}{Example}
\newtheorem*{example*}{Example}
\newtheorem{problem}{Problem}

\theoremstyle{remark}

\crefname{conjecture}{conjecture}{conjectures}
\Crefname{conjecture}{Conjecture}{Conjectures}

\newcommand{\Rreg}{\mathrm{Rreg}}
\newcolumntype{L}[1]{>{\raggedright\arraybackslash}p{#1}}

\newcommand{\bQ}{\mathbb{Q}}
\newcommand{\bR}{\mathbb{R}}
\newcommand{\bE}{\mathbb{E}}

\newcommand{\sNP}{\mathsf{NP}}
\newcommand{\scoNP}{\mathsf{coNP}}
\newcommand{\sPSPACE}{\mathsf{PSPACE}}
\newcommand{\sETR}{\exists \bR}
\newcommand{\sUTR}{\forall \bR}

\newcommand{\sEUTR}{\exists \forall \bR}
\newcommand{\scoETR}{\mathsf{co}\sETR}
\newcommand{\scoUTR}{\mathsf{co}\sUTR}
\newcommand{\sSigmaTwoP}{\Sigma_2^p}

\newcommand{\sSRS}{\mathsf{SQRS}}
\newcommand{\scoSRS}{\mathsf{coSQRS}}
\newcommand{\sSignedSRS}{\mathsf{SQRS}_{\pm}}

\newcommand{\UNSAT}{\mathrm{UNSAT}}

\newcommand{\cD}{\mathcal{D}}

\newcommand{\cU}{\mathcal{U}}

\newcommand{\Cmp}{\operatorname{Cmp}}
\newcommand{\Lift}{\operatorname{Lift}}
\newcommand{\Restrict}{\operatorname{Restrict}}

\newcommand{\ie}{\emph{i.e.}\@\xspace}

\newcommand{\tup}[1]{(#1)}

\newcommand{\sinit}{s_{\iota}}

\renewcommand{\vec}[1]{{\boldsymbol{#1}}}

\newcommand{\polDet}{\Pi^{\mathrm{MD}}}
\newcommand{\polRand}{\Pi^{\mathrm{MR}}}

\newcommand{\Vmax}{{V_\mathrm{max}}}

\title{Adaptive Policy Portfolios for Robust Markov Decision Processes}
\author {
    Kasper Engelen\textsuperscript{\rm 1},
    Sebastian Junges\textsuperscript{\rm 2},
    Guillermo A. P\'{e}rez\textsuperscript{\rm 1}, 
    Marnix Suilen\textsuperscript{\rm 1}
}
\affiliations {
    \textsuperscript{\rm 1}University of Antwerp, Belgium\\
    \textsuperscript{\rm 2}Radboud University, Nijmegen, The Netherlands\\
    \{kasper.engelen,guillermo.perez,marnix.suilen\}@uantwerpen.be, sebastian.junges@ru.nl
}

\begin{document}
\maketitle

\begin{abstract}
Robust Markov decision processes optimize one policy against a set of plausible transition functions.
This can be conservative when the unknown dynamics are fixed and become partially identifiable after deployment.
We study \emph{adaptive policy portfolios}: finite sets of memoryless randomized policies synthesized offline and paired with a lightweight online selector.
Robust regret is a natural measure of portfolio quality: for each plausible environment, it measures the loss of the best portfolio member relative to the policy that would have been optimal had that environment been known.
Related regret objectives were studied by Ghavamzadeh et al. (2016) with an emphasis on approximations and relaxations for safe policy improvement.
We give a complexity-theoretic account of portfolio certification and synthesis.
Certifying a given portfolio is $\sUTR$-complete already for deterministic portfolios in acyclic $(s,a)$-rectangular RMDPs.
Synthesizing a portfolio of unary-bounded size is $\sEUTR$-complete for general rational polytopes, even with fixed discount and acyclic dynamics.
The single-policy case is already hard, both combinatorially and algebraically.
Finally, we present an offline portfolio construction that is amenable to runtime specialization.
\end{abstract}

\section{Introduction}\label{Introduction}

Markov decision processes (MDPs) are the standard formalism for sequential decision-making under uncertainty, but they require precise knowledge of transition probabilities.
\emph{Robust MDPs} (RMDPs, for short)~\citep{DBLP:journals/mor/WiesemannKR13} relax this requirement by optimizing against an uncertainty set of possible transition functions.
A robust policy is optimized against the worst transition dynamics in that set.
Consequently, robust policies can be overly conservative because their behavior is dominated by the hardest environments, even when those environments are unlikely or quickly ruled out by observation.

\emph{Robust regret}~\citep{DBLP:conf/nips/AhmedVAJ13,DBLP:conf/aaai/RigterLH21} offers an alternative.
Instead of maximizing value under the worst transition function, it minimizes the largest value shortfall relative to the policy that would have been optimal had the true transition function been known.
This yields a policy with uniformly small relative loss.
It still commits, however, to one policy for every possible transition function and cannot exploit evidence gathered online about which model governs the environment.

We propose \emph{adaptive policy portfolios}.
Offline, we compute a set of policies tailored to different regions of the uncertainty set.
Online, accumulated evidence is used to select among these already validated policies.
The portfolio is synthesized by minimizing robust regret.
We focus on portfolios of memoryless policies, the simplest practically interesting policies for sequential decision making.

\begin{figure}[t]
    \centering
    \includegraphics[width=.85\linewidth]{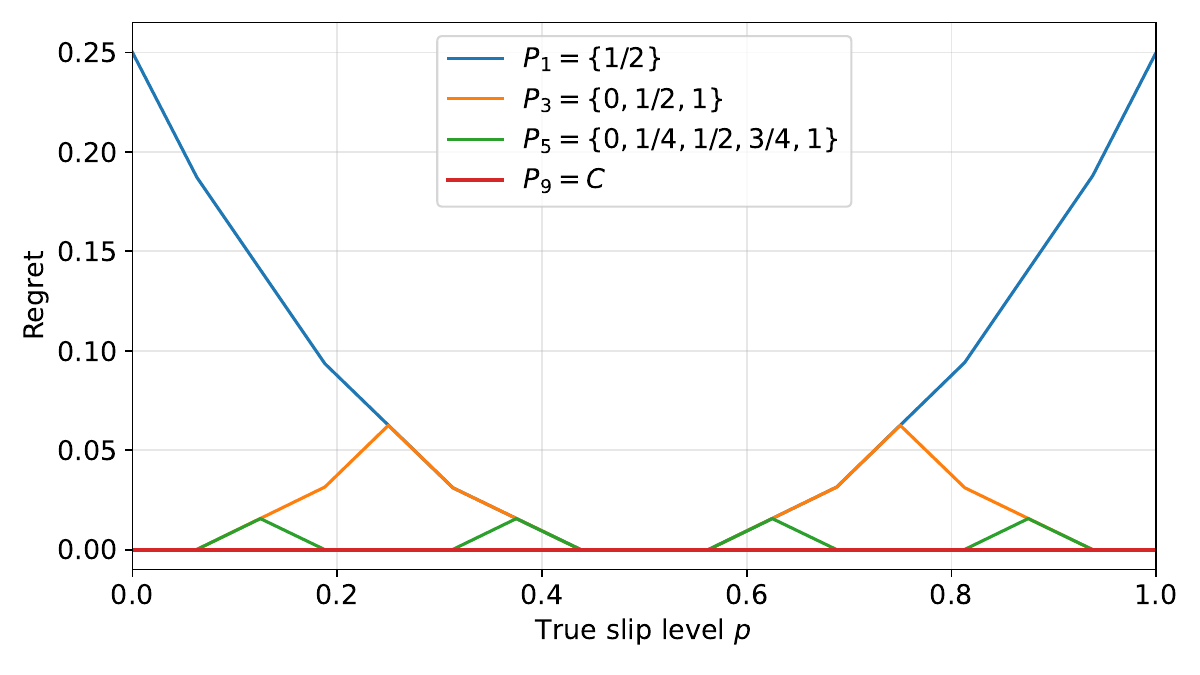}
    \caption{Portfolio regret in the actuator example.}
    \label{fig:intro:actuator}
\end{figure}
As a motivating example, consider an actuator whose true slip level is an unknown parameter $p\in[0,1]$.
Viewed as a single-state RMDP, each action is a calibration $c\in C=\{0,\nicefrac{1}{8},\dots,1\}$, and the unknown $p$ sets the success probability $1-(p-c)^2$ of reaching an absorbing success state.
The regret of a calibration is its excess squared distance to $p$ over the closest calibration in $C$.
A single robust-regret policy chooses $c=\nicefrac{1}{2}$ and has worst-case regret $\nicefrac{1}{4}$.
The portfolio $\{0,\nicefrac{1}{2},1\}$ reduces this to $\nicefrac{1}{16}$, while $\{0,\nicefrac{1}{4},\nicefrac{1}{2},\nicefrac{3}{4},1\}$ reduces it to $\nicefrac{1}{64}$.
Increasing the portfolio size provides a controlled form of adaptation while online selection remains over a finite set.
\Cref{fig:intro:actuator} plots these regret curves.

\begin{table*}[t]
\centering
\small
\begin{tabularx}{\textwidth}{@{}p{.18\textwidth}p{.15\textwidth}p{.23\textwidth}X@{}}
\toprule
Task
& Nonrect. uncertainty
& $(s,a)$-rectangular uncertainty
& Hardness already holds for \\
\midrule
Certify one given policy
& $\sUTR$-complete
& $\scoNP$-hard and $\scoSRS$-hard; in $\sUTR$
& deterministic policies; acyclic models whose uncertain choices are two-Dirac, for $\scoNP$ \\
\addlinespace
Certify a given portfolio
& $\sUTR$-complete
& $\sUTR$-complete
& deterministic portfolios with acyclic graphs and two-successor choices \\
\addlinespace
\midrule
Synthesize one randomized policy
& $\sUTR$-hard and in $\sEUTR$
& $\sNP$-hard, $\scoNP$-hard, and $\sSignedSRS$-hard; in $\sEUTR$
& two-Dirac models for the Boolean bounds \\
\addlinespace
Synthesize at most $k$ policies
& $\sEUTR$-complete
& hard already for $k=1$; exact complexity open
& regret threshold $2$, fixed discount, acyclic dynamics, two-successor uncertain choices \\
\bottomrule
\end{tabularx}
\caption{\textbf{Complexity of regret certification and synthesis.}
Certification is established by \Cref{thm:portfolio-regret-membership,thm:fixed-regret-hardness,thm:fixed-regret-forallr,thm:portfolio-regret-atr}.
Synthesis is established by \Cref{thm:min-regret-portfolio-eatr-membership,thm:min-regret-combinatorial,thm:min-regret-sqrs,thm:min-regret-forallr-hard,thm:min-portfolio-eatr}.}
\label{tab:complexity-landscape}
\end{table*}

In this work, we study the computational complexity of synthesizing minimal-regret policy portfolios and the certification and comparison subproblems that arise along the way.
\Cref{tab:complexity-landscape} summarizes our main results: the robust-regret problems are hard for different levels of the hierarchy of the theory of the reals \cite{DBLP:journals/mst/SchaeferS24}.
The $\scoNP$ lower bound was stated by \citet{DBLP:conf/nips/GhavamzadehPC16} under a policy restriction removed here.
Our square-root-sum and theory-of-the-reals bounds are new, and neither the Boolean nor the square-root-sum bound is known to be comparable with the theory-of-the-reals bounds.
This rules out efficient general-purpose algorithms under standard complexity assumptions.
These worst-case bounds still leave room for the practical offline construction developed later.

The lower bounds persist even under structural assumptions that usually simplify RMDPs.
Rectangularity is the independence assumption that transition uncertainty at one state-action pair does not constrain uncertainty at another, and it commonly lowers complexity and underlies much of the RMDP literature~\citep{DBLP:journals/mor/Iyengar05,DBLP:journals/mor/WiesemannKR13,DBLP:conf/birthday/SuilenBB0025}.
Here it does not rescue portfolio certification: although rectangularity removes the real quantifier alternation from comparison between two acyclic policies, the pointwise maximum over portfolio members restores it (\Cref{thm:portfolio-regret-atr}).
Consequently, checking a given portfolio is $\sUTR$-complete already for deterministic portfolios in acyclic $(s,a)$-rectangular RMDPs.
Choosing a portfolio under a unary size budget adds one existential real block and is $\sEUTR$-complete for general rational polytopic uncertainty.
The reduction couples choices, so it does not establish $\sEUTR$-completeness under rectangular uncertainty~(\Cref{thm:min-portfolio-eatr}).

Despite these worst-case lower bounds, we give a practical construction that discretizes the uncertainty into cells, computes a candidate policy for each cell, and evaluates every candidate against all cells.
It then clusters the resulting regret profiles to the desired portfolio budget.
At runtime, UCB \cite{pred-learn-games} selects among the portfolio members as observations accumulate.

\paragraph{Contributions.}
Our contributions are as follows.
\begin{enumerate}
    \item We introduce adaptive policy portfolios as a finite and certifiable form of adaptation for RMDPs.
    \item We prove the two exact portfolio classifications summarized in \Cref{tab:complexity-landscape}: given-portfolio certification is $\sUTR$-complete and bounded-portfolio synthesis is $\sEUTR$-complete (\Cref{thm:portfolio-regret-atr,thm:min-portfolio-eatr}).
    \item We identify the structural sources of hardness: uncertain transitions used by both policies yield Boolean hardness, their recurrence on cycles yields hardness from sums of square roots, and choosing a policy adds further Boolean and signed-square-root-sum hardness.
    \item We present an offline portfolio construction that is amenable to runtime specialization.
\end{enumerate}
The main text gives the principal arguments.
Complete proofs and supporting comparison results appear in the appendix.

\subsection*{Related Work}
Several approaches reduce the conservativeness of worst-case robust policies by optimizing over less conservative uncertainty sets~\citep{DBLP:journals/mor/WiesemannKR13,DBLP:journals/corr/abs-2602-03381}.
When multiple optimal robust policies exist, best-effort approaches can select among them without sacrificing worst-case value~\citep{DBLP:conf/aaai/AbateBGF26}.

A more adaptive response to fixed but initially unknown transition dynamics is provided by Bayes-adaptive MDPs~\citep{DBLP:conf/nips/GuezSD12}, hidden-model MDPs~\citep{DBLP:conf/aaai/ChadesCMNSB12}, and hidden-parameter MDPs~\citep{DBLP:conf/ijcai/Doshi-VelezK16}.
These models plan over a belief on the unknown dynamics and update it as evidence accumulates.
This formulation is expressive, but planning becomes intractable outside narrow special cases.
Its belief representation can be continuous or high-dimensional, and online belief-space planning can itself be expensive.

Adaptive policy portfolios occupy an intermediate position.
Their expensive computation is performed offline, while online adaptation reduces to selecting among a fixed set of policies.
Regret is natural in this setting because the portfolio should contain a policy that performs near-optimally for whichever environment turns out to be true (cf.\ \citet{DBLP:conf/nips/GhavamzadehPC16}).

\section{Problem Statement}\label{sec:preliminaries}

For a finite set $X$, we write $|X|$ for its cardinality, and $\cD(X)$ for the set of discrete probability distributions over $X$, \ie, functions $\mu\colon X\to[0,1]$ with $\sum_{x\in X}\mu(x)=1$.
Vectors and matrices are boldface.
For $\vec x\in\bR^n$, $\vec x(i)$ denotes its $i$-th entry.
Throughout, $S$ and $A$ denote finite sets of states and actions of a decision process, and we call a pair $(s,a)\in S\times A$ a \emph{choice}.
We fix an arbitrary ordering of the triples $(s,a,s')\in S\times A\times S$, so that a vector $\vec u\in\bR^{S\times A\times S}$ assigns a value $\vec u(s,a,s')$ to each triple.
We call $\vec u$ a \emph{transition vector} if $\vec u(s,a,\cdot)\in\cD(S)$ for every choice $(s,a)$.

\begin{definition}[RMDP]
A \emph{robust Markov decision process} (RMDP) is a tuple $\tup{S,A,\cU,R,\sinit,\gamma}$, where $R\colon S\times A\to\bR$ is the reward function, $\sinit\in S$ is the initial state, $\gamma\in(0,1)$ is a discount factor, and
\[
\cU=\{\vec u\in\bR^{S\times A\times S} \mid \vec F\vec u\leq \vec g\}
\]
is a convex polytope of transition vectors, for $\vec F\in\bR^{m\times n}$, $\vec g\in\bR^m$ ($n=|S||A||S|$).
The inequalities $\vec F\vec u\leq\vec g$ include the constraints defining a transition vector, so every $\vec u\in\cU$ is a valid transition function.
\end{definition}
A policy maps paths to distributions over actions, $\pi\colon(SA)^*S\to\cD(A)$.
A \emph{memoryless randomized} policy $\pi\colon S\to\cD(A)$ depends only on the current state.
The set of such policies is $\polRand$.
A \emph{memoryless deterministic} policy $\pi\colon S\to A$ further selects a single action per state.
The set of such policies is $\polDet\subseteq\polRand$.
Unless stated otherwise, candidate policies range over $\polRand$.

Each $\vec u\in\cU$ induces a classical MDP $M_{\vec u}=\tup{S,A,P_{\vec u},R,\sinit,\gamma}$ with $P_{\vec u}(s,a)(s')=\vec u(s,a,s')$.
For a policy $\pi$, its value $V^\pi_{\vec u}\colon S\to\bR$ in $M_{\vec u}$ is the expected discounted cumulative reward,
\[
V^\pi_{\vec u}(s) = \bE_{\pi}\Bigl[\sum_{t=0}^{\infty}\gamma^t R(s_t,a_t) \Bigm| s_0=s\Bigr],
\]
and $V^*_{\vec u}(s)=\sup_{\pi}V^\pi_{\vec u}(s)$ is the optimal value.
This supremum is attained by a policy in $\polDet$ and $V^*_{\vec u}$ is computable in polynomial time for fixed $\vec u$~\cite{DBLP:books/wi/Puterman94}.

\paragraph{Types of uncertainty.}
By default, $\cU$ places no further structure on how uncertainty interacts across choices.
We call this the \emph{general} (\emph{non-rectangular}, \emph{convex-polytopic}) case.
The special case where $\cU$ decomposes into independent per-choice uncertainty sets is \emph{$(s,a)$-rectangular}:
\[
\cU=\bigtimes_{(s,a)\in S\times A}\cU_{(s,a)}, \qquad \cU_{(s,a)}\subseteq\bR^{\{s\}\times\{a\}\times S}.
\]
We call $\cU_{(s,a)}$ the uncertainty set of choice $(s,a)$, and the choice \emph{uncertain} if $\cU_{(s,a)}$ is not a singleton.
Intuitively, $(s,a)$-rectangularity is an independence assumption across choices.
The general case allows $\cU$ to couple choices, whereas $(s,a)$-rectangularity prohibits this.

\paragraph{Subclasses of RMDPs.}
An uncertain choice is \emph{two-successor} if every distribution in its uncertainty set is supported on the same two states, and \emph{two-Dirac} if that set is the line segment between two Dirac distributions.
An RMDP has either property when all its uncertain choices do.
An RMDP is \emph{acyclic} when its possible-transition graph has no directed cycle except self-loops at absorbing final states.
A policy graph is acyclic under the analogous restriction to choices used by that policy.
Constructions that describe only their intended choices are completed on every other state-action pair by the ruinous-sink convention of \Cref{app:primitives}.

\paragraph{Robust Value \& Regret.}
For a policy $\pi$, its \emph{robust value} is
\[
V^{\pi,\mathrm{rob}}(s)=\inf_{\vec u\in\cU}V_{\vec u}^\pi(s),
\]
the worst-case value $\pi$ can guarantee against any realization in $\cU$.
The \emph{robust optimal value} is $V^{\mathrm{rob}}(s)=\sup_{\pi}V^{\pi,\mathrm{rob}}(s)$, and a policy attaining it is \emph{robust optimal}.
Robust value compares policies in absolute terms, but it does not distinguish a policy performing poorly from a realization that is simply hard for every policy: even the best policy for a fixed $\vec u$ may attain a low value there.
We therefore turn to a comparative notion, measuring a policy's shortfall against the best policy for each particular realization, rather than its raw worst-case value.
That is, for $\pi\in\polRand$, its \emph{robust regret} (at $\sinit$) is
\[
\Rreg(\pi)=\sup_{\vec u\in\cU}\bigl(V_{\vec u}^*(\sinit)-V_{\vec u}^\pi(\sinit)\bigr),
\]
the largest shortfall of $\pi$ from the policy optimal at $\vec u$, over every realization of the uncertainty.
We call the choice of realization $\vec u$ nature's move.

\begin{definition}[Adaptive policy portfolio]
An \emph{adaptive policy portfolio} is a finite set $\Pi\subseteq\polRand$ of candidate policies synthesized offline and paired with a runtime mechanism that selects among its members as evidence accumulates.
\end{definition}
Robust regret lifts to portfolios by comparing against the best member \emph{for that realization}:
\begin{equation}\label{eqn:rreg}
\Rreg(\Pi)=\sup_{\vec u\in\cU}\Bigl(V_{\vec u}^*(\sinit)-\max_{\pi\in\Pi}V_{\vec u}^\pi(\sinit)\Bigr).
\end{equation}
This is an offline coverage guarantee: it idealizes away the cost of identifying the best member and assumes that member is selected.
The online identification cost is evaluated separately in \Cref{sec:implementation}.
This quantity cannot in general be recovered from the single-policy regret values $\{\Rreg(\pi):\pi\in\Pi\}$.
Portfolio regret evaluates all members under the same realization before taking the worst case, whereas each singleton regret has already taken its own supremum.
Note that the bound $\Rreg(\Pi)\leq\min_{\pi\in\Pi}\Rreg(\pi)$ follows.

We seek the best guarantee achievable by a portfolio of bounded size $k\ge1$:
\[
\rho_k=\inf_{\Pi\subseteq\polRand: |\Pi|\leq k}\Rreg(\Pi).
\]
Note $\rho_1=\inf_{\pi\in\polRand}\Rreg(\pi)$: minimal single-policy regret is the $k=1$ case of minimal portfolio regret.

\section{Computational Complexity}\label{sec:complexity}

 We first recall the required classes, then turn to the certification and synthesis of portfolios.
 We note that the results yield a rich landscape, but all studied problems lie in $\sPSPACE$.

\subsection{Complexity Preliminaries}
We assume basic familiarity with standard complexity classes such as $\sNP$, $\scoNP$, and $\sPSPACE$~\citep{DBLP:books/daglib/0072413,DBLP:books/daglib/0023084}.
\paragraph{Input conventions.}
The rewards, $\gamma$, threshold $t$, polytope data $\vec F,\vec g$, and policy matrices in $\mathbb{Q}^{S\times A}$ are binary-encoded rationals.
The portfolio budget $k$ is encoded in unary.

\paragraph{Theory of the reals.}
We use the first-order theory of the reals over $(\bR,0,1,+,\cdot,<)$.
In this logic,  we can ask whether a system of polynomial equations and inequalities has a real solution, or holds for every real assignment.
We consider three fragments, given a quantifier-free formula $\varphi$:
\begin{itemize}
    \item \emph{existential} sentences $\exists x_1,\dots,x_n\ \varphi(x_1,\dots,x_n)$;
    \item \emph{universal} sentences $\forall x_1,\dots,x_n\ \varphi(x_1,\dots,x_n)$;
    \item \emph{existential-universal} sentences\\ $\exists x_1,\dots,x_n\ \forall y_1,\dots,y_m\ \varphi(x_1,\dots,x_n,y_1,\dots,y_m)$.
\end{itemize}
The truth problem of each fragment defines a complexity class, denoted $\sETR$, $\sUTR$, and $\sEUTR$ respectively~\citep{DBLP:journals/mst/SchaeferS24}.
These classes sit between $\sNP$ and $\sPSPACE$~\citep{DBLP:conf/stoc/Canny88,basu2006algorithms}, and $\sUTR=\scoETR$ and $\sETR=\scoUTR$.

\paragraph{Square-root-sum.}
We also consider variants of the well-known square-root-sum ($\sSRS$) decision problem and their associated complexity classes.
For the variants below, assume finite lists of positive integers $a_1,\dots,a_m$ and $b_1,\dots,b_n$, and an integer $k$, all encoded in binary.
\begin{itemize}
    \item $\sSRS$.
    Decide whether $\sum_{i=1}^m \sqrt{a_i}\leq k$.
    $\scoSRS$ is the complement class.
    \item $\sSignedSRS$.
    Given an operator $\bowtie\in\{\leq,\geq\}$, decide whether $\sum_{i=1}^m\sqrt{a_i}\bowtie\sum_{j=1}^n\sqrt{b_j}$.
\end{itemize}
Both $\sSRS$ and $\scoSRS$ reduce to $\sSignedSRS$: $\sSRS$ uses $\vec b=(k^2)$; for $\scoSRS$, test equality in polynomial time ($\sum_i\sqrt{a_i}\in\mathbb Q$ iff every $a_i$ is a perfect square), then use $\geq$.

\paragraph{Polynomial hierarchy and common upper bound.}
The polynomial hierarchy $\mathsf{PH}$ is the union of the classes obtained by alternating polynomially bounded existential and universal Boolean quantifiers in front of a polynomial-time predicate.
In particular, $\sSigmaTwoP$ begins with an existential block followed by a universal block~\citep{DBLP:books/daglib/0072413,DBLP:books/daglib/0023084}.
Every level of $\mathsf{PH}$, every fixed level of the real hierarchy considered above, and $\sSignedSRS$ lie in $\sPSPACE$~\citep{DBLP:conf/stoc/Canny88,DBLP:journals/siamcomp/AllenderBKM09}.
Thus all classes used in this paper share a $\sPSPACE$ upper bound.

\subsection{Policy and Portfolio Comparison}\label{sec:policy-portfolio-comparison}

\paragraph{Policy comparison.}
We call the following problem \emph{policy comparison}: given two policies $\pi_1,\pi_2\in\polRand$ and a threshold $t$, decide whether
\[
\Delta_\cU(\pi_1,\pi_2)=\sup_{\vec u\in\cU}\bigl(V_{\vec u}^{\pi_1}(\sinit)-V_{\vec u}^{\pi_2}(\sinit)\bigr)\leq t.
\]
Policy comparison is tractable when $\pi_1$ and $\pi_2$ share no uncertain choice (\Cref{app:comparison-separation}): in this case the robust value of each policy can be evaluated individually and this is known to be feasible in polynomial time \cite{DBLP:journals/mor/Iyengar05,DBLP:journals/corr/abs-2604-26748}.
Otherwise, it is $\scoNP$-hard\footnote{This was stated by \citet{DBLP:conf/nips/GhavamzadehPC16} but proved only under a restrictive constraint imposed on the policies.
We give an unconditional proof of the fact in \Cref{app:shared-row-boolean}.}
already in acyclic $(s,a)$-rectangular RMDPs, and $\scoNP$-complete when no uncertain choice used by both policies lies on a cycle of either policy graph (\Cref{app:shared-row-boolean,app:comparison-membership}).
In this cycle-free regime the supremum defining $\Delta_\cU$ is attained at a rational vertex of the uncertainty polytope, so a witness realization can be written down exactly.

\begin{proof}[Sketch of $\scoNP$-hardness]
Reduce from $\mathrm{UNSAT}$.
For a 3-CNF formula $\varphi$, the RMDP keeps two local bits per literal occurrence, one certifying each value for the bits, and lets nature fix them independently through separate two-Dirac choices.
Both policies read the same copies but ask different questions of them.
The \emph{clause policy} $\pi_C$ accepts when every clause has a locally true literal.
The \emph{consistency policy} $\pi_K$ uniformly selects a variable and accepts when all its copies agree with one global value.
Thus $\pi_C$ contributes either $0$ or $1$ to $\Delta_\cU$, while $\pi_K$ contributes the fraction of variables whose audits pass.
Nature attains $2$ exactly when $\varphi$ is satisfiable; otherwise the difference is at most $2-1/n$.
\end{proof}

Even in the rectangular case, optimality at the vertices no longer holds once a shared choice lies on a cycle: the supremum can then be attained in the interior, at an irrational value, a sum of square roots, which is exactly the phenomenon $\scoSRS$-hardness captures (see \Cref{app:shared-cycle-sqrs} and the example below).
Membership in $\sUTR$ follows from a known adaptation of the Bellman equations for the value functions to the parametric setting \cite{DBLP:journals/jcss/JungesK0W21}.

\begin{example}\label{ex:comparison-interior}
Consider this RMDP with discount $\gamma=1/2$.
\begin{center}
\begin{tikzpicture}[
  >=Latex,
  state/.style={circle,draw,minimum size=8mm,inner sep=1pt},
  distribution/.style={circle,fill,inner sep=1.5pt},
  every node/.style={font=\small}
]
\node[state] (s) at (0,0) {$\sinit$};
\node[distribution] (d) at (1.3,0) {};
\node[state] (c) at (3.3,0.6) {$c$};
\node[state] (g) at (3.3,-0.6) {$g$};

\draw[->] ($(s.west)+(-6mm,0)$) -- (s.west);
\draw[-] (s) -- node[above] {$a$} (d);
\draw[->] (d) -- node[above,sloped] {$x$} (c);
\draw[->] (d) -- node[below,sloped] {$1-x$} (g);
\draw[->] (c) to[bend right=32] node[above] {$\pi_1(c)$} (s);
\draw[->] (c) edge[loop right] node[right] {$\pi_2(c)$} (c);
\draw[->] (g) edge[loop right] node[right] {$R=1$} (g);
\end{tikzpicture}
\end{center}
Fix the realization $\vec u\in\cU$ corresponding to $x$.
The value of $g$ is $2$.
Under $\pi_1$, reaching $c$ gives one zero-reward step back to $\sinit$.
Hence
\[
V_{\vec u}^{\pi_1}(\sinit)
 =\frac12\left(x\frac12V_{\vec u}^{\pi_1}(\sinit)
 +(1-x)2\right),
\]
hence \( V_{\vec u}^{\pi_1}(\sinit)=\frac{1-x}{1-x/4}.
\) Under $\pi_2$, the state $c$ has value zero, so \( V_{\vec u}^{\pi_2}(\sinit)=1-x.
\) Thus \( V_{\vec u}^{\pi_1}(\sinit)-V_{\vec u}^{\pi_2}(\sinit) =\frac{x(1-x)}{4-x}.
\) This difference is zero at $x=0$ and $x=1$, but positive between them.
Its maximum over $[0,1]$ is attained at \( x^*=4-2\sqrt3.
\) Substitution gives \( \Delta_{\cU}(\pi_1,\pi_2)=7-4\sqrt3.
\) The square root appears because $\pi_1$ can revisit the same uncertain choice, so its value is a rational function of $x$ whose maximum lies inside the interval.
\end{example}

\paragraph{Nonrectangularity makes things harder.}\label{par:polynomial-evaluation}
Without rectangularity, coupling lets one parameter recur at different choices. 
Then comparison is even $\sUTR$-complete (\Cref{app:comparison-real}).
To obtain hardness, we encode polynomial nonpositivity and build an RMDP whose value at each realization equals a given polynomial.
A monomial $p_{i_1}\cdots p_{i_d}$ becomes a chain of $d$ uncertain choices that survives with probability equal to the monomial.
A \emph{certain splitter} is a state whose sole action draws among these monomial branches with fixed rational probabilities.
On each branch, a terminal payoff cancels its certain splitter probability and discount, so the branch contributes its signed monomial.
Comparing this polynomial evaluator with a policy that goes to a zero-reward sink decides whether a given polynomial $f$ is nonpositive at every point in the uncertainty domain (\Cref{app:comparison-real}).

\paragraph{Portfolio comparison.}\label{par:portfolio-comparison}
Portfolio comparison remains $\sUTR$-complete even under $(s,a)$-rectangular uncertainty.
The pointwise maximum $\max_{\pi\in\Pi}V_{\vec u}^\pi$ over the explicit portfolio, rather than parameter reuse or recurrence, supplies the real quantifier alternation (\Cref{app:portfoliocomparison}).

\subsection{Regret Certification}\label{sec:robust:regret}

Certification asks whether a given policy or portfolio meets a target regret value.

\begin{problem}[Robust-regret certification]\label{prob:regret-certification}
Given an RMDP, a policy $\pi\in\polRand$, and a threshold $t \in \mathbb{Q}$, decide whether $\Rreg(\pi)\leq t$.
\end{problem}

\begin{problem}[Portfolio-regret certification]\label{prob:portfolio-regret}
Given an RMDP, a portfolio $\Pi\subseteq\polRand$, and a threshold $t \in \mathbb{Q}$, decide whether $\Rreg(\Pi)\leq t$.
\end{problem}
We show $\forall\mathbb{R}$-membership for both problems:
\begin{restatable}[Membership]{theorem}{portfolioregretmembership}\label{thm:portfolio-regret-membership}
Regret certification is in $\sUTR$.
\end{restatable}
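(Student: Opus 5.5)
We will write the complement of the statement as an existential sentence. Since $\sUTR=\scoETR$, it suffices to show that deciding $\Rreg(\Pi)>t$ is in $\sETR$. This covers both Problem~\ref{prob:regret-certification} (the case $|\Pi|=1$) and Problem~\ref{prob:portfolio-regret}. By \eqref{eqn:rreg}, $\Rreg(\Pi)>t$ holds iff there exists $\vec u\in\cU$ with
\[
V^*_{\vec u}(\sinit)-\max_{\pi\in\Pi}V^\pi_{\vec u}(\sinit)>t .
\]
The goal is to express this condition with polynomially many real variables and polynomial constraints.

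\textbf{Variables.} We introduce:
\begin{itemize}
\item the entries of $\vec u$, constrained by the linear inequalities $\vec F\vec u\le\vec g$;
\item for each $\pi\in\Pi$, a vector $\vec v_\pi\in\bR^S$;
\item a vector $\vec w\in\bR^S$ intended to be $V^*_{\vec u}$.
\end{itemize}

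\textbf{Policy values.} For each member $\pi$ we impose the parametric Bellman equations
\[
\vec v_\pi(s)=\sum_a\pi(s)(a)\Bigl(R(s,a)+\gamma\sum_{s'}\vec u(s,a,s')\,\vec v_\pi(s')\Bigr)\quad\text{for all } s .
\]
These equations are quadratic, and their coefficients are the given rationals. Because $\gamma<1$, the operator on the right-hand side is a contraction for every fixed transition vector $\vec u$. The system therefore has a unique solution, namely $V^\pi_{\vec u}$, as in the parametric setting of \citet{DBLP:journals/jcss/JungesK0W21}.

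\textbf{Optimal value.} We do not need equality for $V^*_{\vec u}$, only a lower bound. It suffices to guess an arbitrary policy value: $V^*_{\vec u}(\sinit)\ge V^{\sigma}_{\vec u}(\sinit)$ for every $\sigma$, with equality for some $\sigma\in\polDet$. Rather than guessing $\sigma$ as discrete data, we encode it with additional real variables $\sigma(s)(a)\ge0$ satisfying $\sum_a\sigma(s)(a)=1$. We then impose the same Bellman equations for $\vec w$ under $\sigma$. Alternatively, we can use the Bellman optimality inequalities with a disjunction over actions, $\vec w(s)=R(s,a)+\gamma\sum_{s'}\vec u(s,a,s')\vec w(s')$ for some $a$, which are also quantifier-free.

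\textbf{Combining.} The final conjunct is
\[
\bigwedge_{\pi\in\Pi}\bigl(\vec w(\sinit)-\vec v_\pi(\sinit)>t\bigr),
\]
which expresses that $\vec w(\sinit)-\max_\pi\vec v_\pi(\sinit)>t$.

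\textbf{Correctness.}
\begin{itemize}
\item \emph{Soundness.} Any satisfying assignment yields $V^*_{\vec u}(\sinit)\ge\vec w(\sinit)>t+V^\pi_{\vec u}(\sinit)$ for every $\pi\in\Pi$. Hence $\Rreg(\Pi)>t$.
\item \emph{Completeness.} If $\Rreg(\Pi)>t$, the supremum exceeds $t$ at some $\vec u\in\cU$. We set $\sigma$ to an optimal deterministic policy for $M_{\vec u}$, which exists by \citet{DBLP:books/wi/Puterman94}, and take exact values for all remaining variables.
\end{itemize}
The formula has size polynomial in the input; the portfolio is explicit, so $|\Pi|$ is bounded by the input length. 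We conclude that its negation, $\forall\ldots\neg\varphi$, decides $\Rreg(\Pi)\le t$. This places the problem in $\sUTR$.

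\textbf{Main obstacle.} The key point is that the Bellman equations pin down the value uniquely for \emph{every} $\vec u$, including realizations that produce cycles through uncertain choices. Uniqueness follows from $\gamma<1$, with no reachability side conditions needed. A second point is that the strict inequality and the rational constants are allowed in the theory over $(\bR,0,1,+,\cdot,<)$. Binary rationals are fine here: multiplying through by denominators gives polynomial-size integer coefficients, which can be built by repeated squaring with auxiliary existential variables if needed.
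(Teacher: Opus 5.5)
Your proposal is correct and is essentially the paper's argument. The paper states the universal implication directly through its universal Bellman encoding: it quantifies the realization, a memoryless randomized comparison policy, and one Bellman system per member, with conclusion $\bigvee_i (v^\tau_{\sinit}-v^i_{\sinit}\le t)$; the negation of that formula is exactly your existential sentence, and both rest on the same fact that every realized discounted MDP has an optimal memoryless deterministic policy, so quantifying over policy tables captures $V^*_{\vec u}$.
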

The idea is to universally quantify the realization, an optimal policy at that realization, and one Bellman system per portfolio member, comparing the optimal value against the best of them.
See \Cref{app:portfolio-regret-membership}.

\paragraph{From comparison to regret.}
The obstacle to transferring comparison lower bounds is that it fixes both policies.
Regret measures a candidate against the best policy at the fixed realization.
For a policy $\pi$ and portfolio $\Pi$, write
\[
\Delta_\cU(\pi,\Pi)=\sup_{\vec u\in\cU}\left(V^\pi_{\vec u}(\sinit)-\max_{\pi'\in\Pi}V^{\pi'}_{\vec u}(\sinit)\right).
\]
For a source RMDP $N$ and reference policy $\pi_0$, $\widehat N=\Lift(N,\pi_0)$ makes $\pi_0$ optimal at every realization (\Cref{app:exact-lift}).
Each source choice becomes a tag state that selects an action and a selector state that carries its transition uncertainty.
For a source portfolio $\Pi$, let $\widehat\Pi$ be its image under $\Lift$.
Certification in $\widehat N$ then becomes comparison against $\pi_0$, up to a known constant $\Lambda$:
\(
\Rreg(\widehat\Pi)=\Lambda+\Delta_\cU(\pi_0,\Pi).
\)

\paragraph{Lower bounds.}
The following lower bounds combine comparison hardness with the lift construction.
\begin{restatable}{theorem}{fixedregrethardness}\label{thm:fixed-regret-hardness}
Given-policy robust regret is $\scoNP$-hard and $\scoSRS$-hard under $(s,a)$-rectangular uncertainty.
$\scoNP$-hardness already holds when every uncertain choice is two-Dirac and the only other stochastic rows are certain uniform splitters.
\end{restatable}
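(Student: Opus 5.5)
We reduce from policy comparison via the lift construction, choosing the reference policy so that the regret of the lifted candidate equals a comparison quantity. Start from a comparison instance $(N,\pi_1,\pi_2,t)$ on which $\scoNP$-hardness (the UNSAT reduction sketched above, \Cref{app:shared-row-boolean}) or $\scoSRS$-hardness (the shared-cycle construction of \Cref{app:shared-cycle-sqrs}) holds. Set the reference policy $\pi_0:=\pi_1$ and the candidate portfolio $\Pi:=\{\pi_2\}$, and form $\widehat N=\Lift(N,\pi_1)$ together with the lifted candidate $\widehat\pi_2$. By the identity stated before the theorem,
\[
\Rreg(\widehat\pi_2)=\Lambda+\Delta_\cU(\pi_1,\{\pi_2\})=\Lambda+\Delta_\cU(\pi_1,\pi_2).
\]
Hence $\Delta_\cU(\pi_1,\pi_2)\le t$ if and only if $\Rreg(\widehat\pi_2)\le t+\Lambda$. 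Since $\Lambda$ is a known rational computable in polynomial time, this is a polynomial-time many-one reduction, and both lower bounds transfer at once.

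The steps, in order, are as follows. (i) Recall the source hardness results, noting that both are already for acyclic (respectively shared-cycle) $(s,a)$-rectangular instances with the required row structure. (ii) Verify that $\Lift$ preserves $(s,a)$-rectangularity. Each source choice becomes a tag state with deterministic action selection and a selector state carrying exactly the source uncertainty set $\cU_{(s,a)}$, so the product structure is preserved and no coupling is introduced. (iii) Verify that $\Lift$ preserves the row shapes claimed in the refinement. Uncertain rows of $\widehat N$ are copies of the source uncertain rows, hence still two-Dirac in the UNSAT instance. Any new stochastic rows introduced by the lift, for example the gadget that makes $\pi_0$ optimal at every realization, must be certain uniform splitters. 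The source UNSAT instance uses only uniform splitters for the consistency audit, so we just need to check that the lift adds nothing else. (iv) Confirm that the lift produces a memoryless policy $\widehat\pi_2$ whose size is polynomial, and that $\Lambda$ has polynomial bit size.

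The main obstacle is step (iii) together with the exactness of the lift identity on cyclic instances. For $\scoSRS$, the source supremum is attained at an interior irrational point. We must ensure the lift makes $\pi_1$ optimal pointwise at every realization without perturbing the rational-function dependence on $x$, so that the regret is exactly $\Lambda$ plus the square-root-sum quantity and not merely an upper bound. I would first check that the optimality gadget in \Cref{app:exact-lift} adds a realization-independent bonus to the tagged actions of $\pi_0$: for example, a reward term scaled so that deviating from $\pi_0$ at any tag loses at least as much as could be gained. Then $V^*_{\vec u}=V^{\widehat\pi_0}_{\vec u}$, and the constant offset is uniform in $\vec u$. If the gadget instead used discount-dependent splitting, I would normalize the splitter probabilities to be uniform by padding with ruinous-sink branches and rescaling rewards. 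This keeps the ``certain uniform splitters'' clause intact at the cost of changing $\Lambda$ by a computable rational.
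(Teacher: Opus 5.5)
Your proposal is correct and matches the paper's proof: lift each comparison instance with the first policy ($\pi_C$, resp.\ $\pi_u$) as reference and the other ($\pi_K$, resp.\ $\pi_v$) as singleton candidate, then shift the threshold by $\Lambda$, with rectangularity, two-Dirac rows and the certain uniform splitter preserved by \Cref{lem:exact-regret-lift}. Your fallback of re-normalizing splitters is unnecessary, because the lift adds only deterministic tag and bonus rows and singleton ruinous rows; the one hypothesis worth stating explicitly is that both source constructions already use the discount $\gamma_0=\beta^2$ the exact lift requires.
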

\begin{proof}[Proof sketch]
Apply $\Lift$ to $\Cmp(\varphi)$ and to the shared-cycle square-root-sum construction, respectively.
The transformation converts each comparison threshold by one known additive constant.
See \Cref{app:fixed-regret-hardness}.
\end{proof}

The pointwise maximum over members makes portfolio comparison $\sUTR$-hard even when rectangularity removes the corresponding alternation for one acyclic policy pair.
The transformation $\Lift$ transfers this hardness to certification:
\begin{restatable}{theorem}{portfolioregretatr}\label{thm:portfolio-regret-atr}
Portfolio-regret certification is $\sUTR$-hard, already for deterministic portfolios and acyclic $(s,a)$-rectangular RMDPs with two-successor uncertain choices.
\end{restatable}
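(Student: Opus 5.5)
The plan is to reduce from $\sUTR$-hard portfolio comparison and then transfer the hardness to certification with $\Lift$. This is the same pattern used for \Cref{thm:fixed-regret-hardness}, but with the Boolean and square-root-sum sources replaced by a genuinely real-algebraic one.

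\textbf{Source problem.} Concretely, I would reduce from polynomial nonpositivity on a box: given $f\in\bQ[p_1,\dots,p_n]$, decide whether $f(\vec p)\le 0$ for all $\vec p\in[0,1]^n$. This problem is $\sUTR$-complete. Under $(s,a)$-rectangularity, parameters cannot be reused across choices, so I cannot build one policy whose value equals $f$, as in the nonrectangular evaluator. Instead, the alternation must come from the pointwise maximum.

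\textbf{Source RMDP.} The first step is to rewrite the sentence in a form where one universally quantified real is reused through a $\max$ rather than through parameter sharing. The plan is to introduce, for each variable $p_i$, its own two-successor uncertain choice $x_i$, and to replace products of reused variables by copies. Consistency of the copies is then enforced by portfolio members: each member is a deterministic policy that routes through a particular collection of copies. The portfolio's pointwise max penalizes inconsistent assignments, in the spirit of the consistency policy $\pi_K$, but now with real-valued deviations $(x_i-x_i')^2$ captured as $\max$ over linear terms.

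Concretely, I would use the identity that a rectangular acyclic deterministic policy's value is multilinear in distinct copies. I would also use the fact that $\max_j$ of finitely many such multilinear forms can simulate $f$ on the consistent diagonal while being large off the diagonal. Polynomial size is controlled by degree and the number of monomials, and monomial chains are realized exactly as in the nonrectangular evaluator, with certain splitters cancelling probabilities and discounts. Thus I obtain an acyclic rectangular RMDP $N$, a reference policy $\pi_0$, and a deterministic portfolio $\Pi$ with $\Delta_\cU(\pi_0,\Pi)\le 0$ if and only if $\forall\vec p\, f(\vec p)\le 0$.

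\textbf{Transfer.} Apply $\Lift(N,\pi_0)$. By the identity $\Rreg(\widehat\Pi)=\Lambda+\Delta_\cU(\pi_0,\Pi)$, asking whether $\Rreg(\widehat\Pi)\le\Lambda$ answers the comparison instance. I then need to check that $\Lift$ preserves acyclicity, two-successor uncertain choices, rectangularity, and determinism of the lifted portfolio. This should be routine, since $\Lift$ only inserts tag and selector states and copies each source choice's uncertainty set unchanged. Finally, $\Lambda$ must be a polynomial-size rational computable from the data.

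\textbf{Main obstacle.} The hard part is the rectangular comparison gadget: showing that the pointwise max over a polynomial-size deterministic portfolio can enforce consistency among independent copies strongly enough to yield exact $\sUTR$-hardness, with no gap loss. My first attempt would be to reduce instead from a normal form where each variable appears once per monomial in a multilinear (or quadratic) polynomial over a box, which remains $\sUTR$-hard. In this form, only cross-monomial sharing needs enforcement. I would then try to show that $\min$ over copies can be simulated by letting nature's choice favor the worst consistent completion.
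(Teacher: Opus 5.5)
Your outer structure is the paper's: rectangular portfolio-comparison hardness, then $\Lift$ with threshold $\Lambda$ via \Cref{lem:exact-regret-lift}. The gap is the inner step, which carries all the difficulty. You leave it open, and the mechanism you sketch for it fails.

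Suppose $f$ enters through the portfolio's maximum: a zero reference, a member of value $-\tilde f(z)$ (the multilinearization with one copy per occurrence), and consistency members of values $\pm C(z-w)$. Then $\Delta_\cU=\sup_u\min\{\tilde f(z),-C|z-w|,\dots\}\le 0$ on every instance, so every no-instance is mapped to a yes-instance. Adding a tolerance $\epsilon>0$, or shifting the consistency members so they are ``large'' away from the diagonal, only enforces approximate consistency. By continuity, nature then profits from near-diagonal points where $\tilde f$ exceeds $\max f$. For instance, $f(p)=-(p-\tfrac12)^2\le 0$ has $\tilde f=-z_1z_2+z_3-\tfrac14$. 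The choice $z_1=z_2=\tfrac12-\delta$, $z_3=\tfrac12+\delta$ gives $\tilde f=2\delta-\delta^2>0$ at deviation $\delta$, so for every fixed tolerance this yes-instance is mapped to $\Delta_\cU>0$.

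Note also that rectangularity forbids coupling distinct choices, not reading one choice from several policies. A single selector per variable can therefore be shared by all members, as in the paper's shared-selector evaluator. The only real obstruction is that a stationary policy fixes one continuation at each outcome state, so one member cannot read a variable in two monomials.

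The paper closes this step by reducing from strict elementary feasibility (\Cref{lem:strict-elementary}). There each constraint $g_i>0$ is affine, or affine plus one bilinear term, with no repeated variable, so each becomes one member of value $-g_i$ against a zero reference, and $\Delta_\cU=\sup_x\min_i g_i(x)$. Exactness comes from imposing copy consistency as $a_p\pm h_j>0$, with a tolerance $a_p$ forced below $2^{-2^p}$ by repeated-squaring constraints. The effective \L{}ojasiewicz bound then guarantees $\min_x\max_j|h_j|>2^{-2^p}$ whenever the residual system is infeasible.

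Your copy-and-penalty idea can also be made exact, but only by moving $f$ to the reference side. Let $\pi_0$ evaluate $\tilde f(z)$ with a fresh selector per occurrence. Let the portfolio consist of members of values $\pm C(z_{\nu,j}-w_{i_{\nu,j}})$, for representative selectors $w_i$ and $C\ge\sum_\nu|c_\nu|d_\nu$. The penalty is then subtracted rather than combined by a minimum: $\Delta_\cU=\sup_u\bigl(\tilde f(z)-C\max_{\nu,j}|z_{\nu,j}-w_{i_{\nu,j}}|\bigr)$. The bound $|\prod_j z_{\nu,j}-\prod_j w_{i_{\nu,j}}|\le\sum_j|z_{\nu,j}-w_{i_{\nu,j}}|$ gives $\Delta_\cU=\max_{[0,1]^n}f$ exactly, after which your $\Lift$ step finishes the proof.
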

\begin{proof}[Proof sketch]
Apply $\Lift$ to the portfolio-comparison construction in \Cref{par:portfolio-comparison}.
See \Cref{app:portfolio-atr}.
\end{proof}
\begin{restatable}{theorem}{fixedregretforallr}\label{thm:fixed-regret-forallr}
Given-policy robust regret is $\sUTR$-complete under general rational polytopic uncertainty, even for deterministic policies.
\end{restatable}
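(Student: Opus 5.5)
Membership is immediate from \Cref{thm:portfolio-regret-membership}, since a single policy is the portfolio $\{\pi\}$ and $\Rreg(\{\pi\})=\Rreg(\pi)$. The work is in $\sUTR$-hardness. I would obtain it by composing the general-case comparison hardness (the polynomial-evaluator construction of \Cref{par:polynomial-evaluation}, shown $\sUTR$-hard in \Cref{app:comparison-real}) with the lift of \Cref{app:exact-lift}. That is, I reduce from policy comparison under nonrectangular polytopic uncertainty, rather than from portfolio comparison as in \Cref{thm:portfolio-regret-atr}.

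Concretely, I start from a universal sentence, put in the standard normal form $\forall \vec p\in[0,1]^n\colon f(\vec p)\le 0$ for a polynomial $f$ with rational coefficients. The construction in \Cref{app:comparison-real} gives an RMDP $N$ together with two deterministic policies. The first is $\pi_f$, which enters the certain splitter and then follows the monomial chains, so that $V^{\pi_f}_{\vec u}(\sinit)=f(\vec p(\vec u))$. The second is $\pi_0$, which moves to a zero-reward sink. The uncertainty polytope couples the repeated occurrences of each parameter $p_i$ across the chain choices; this coupling is exactly where nonrectangularity is used. Then $\Delta_\cU(\pi_f,\pi_0)\le 0$ holds iff the sentence is true.

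Since the target is single-policy regret, the roles must be arranged so that the candidate is the fixed policy and the lift makes the other policy optimal. I apply $\Widehat N=\Lift(N,\pi_f)$, which makes the image of $\pi_f$ optimal at every realization, and take the candidate to be $\widehat{\pi_0}$, the image of the sink policy. This gives
\[
\Rreg(\widehat{\pi_0})=\Lambda+\Delta_\cU(\pi_f,\{\pi_0\}),
\]
so asking whether $\Rreg(\widehat{\pi_0})\le\Lambda$ decides the sentence. The candidate stays deterministic because $\Lift$ maps deterministic policies to deterministic ones through the tag-state choices. Every quantity involved ($\Lambda$, rewards, splitter probabilities, polytope constraints) is polynomial-size rational, so the reduction runs in polynomial time.

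The main obstacle is checking that $\Lift$ stays valid for nonrectangular $\cU$. The lift relocates each source choice's uncertainty onto a selector state, so the coupling constraints $\vec F\vec u\le\vec g$ have to be transported onto the corresponding selector rows. I then need the lift's optimality argument to be pointwise in $\vec u$, so that it does not depend on rectangularity: for each fixed realization, the lifted MDP is an ordinary MDP in which $\widehat{\pi_f}$ attains $V^*$. I would first verify that the proof in \Cref{app:exact-lift} only fixes a realization and argues about that ordinary MDP. If it does, the coupled polytope carries over unchanged, and $\Lambda$ is the same constant because it does not depend on $\vec u$. If the lift instead uses per-row worst cases, I would fall back to building the optimality gadget directly. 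In that gadget, reference-side payoffs dominate all deviations by a margin that $\Lambda$ absorbs, uniformly over the bounded range of $f$ on $[0,1]^n$.
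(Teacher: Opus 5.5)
Your proof is correct, but it takes a different route from the paper's. The paper does not use the lift for this theorem. It takes the polynomial evaluator $M_f$, adds a fresh root whose two reward-zero actions enter either $M_f$ or a zero sink (\Cref{def:positive-part-gadget}), and certifies the sink policy $\pi_\bot$ directly. The only real decision is at the root, and ruinous dominance handles every other state, so the optimal value at realization $p$ is just $\max\{\gamma_0 f(p),0\}$. Hence $\Rreg(\pi_\bot)$ is the supremum of the positive part of $\gamma_0 f$, and threshold zero decides $\forall p\colon f(p)\le 0$. That argument uses only the sign of $\sup f$ and needs neither discount matching nor any rectangularity hypothesis.

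Your route makes $\pi_f$ optimal via $\Lift(N,\pi_f)$ and obtains the exact identity $\Rreg(\widehat{\pi_0})=\Lambda+\Delta_\cU(\pi_f,\pi_0)$. This is stronger: it transfers an arbitrary comparison threshold, so it shows generically that general-polytope comparison hardness carries over to certification. The cost is extending \Cref{def:lifted-rmdp}, which the paper states only for $(s,a)$-rectangular sources.

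The check you planned succeeds, so your fallback gadget is unnecessary. The proof of \Cref{lem:exact-regret-lift} fixes a single realization $u$ and verifies the Bellman optimality equation for $\Lambda+V_u^{\pi_f}$ in an ordinary MDP. Each source row is copied to exactly one selector, so the tying equalities transport by relabeling coordinates. No projection of the polytope is needed, because every uncertain row of $M_f$ is used by the reference and is therefore copied.

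Two small details remain. Instantiate the evaluator at $\gamma_0$ so that $\beta^2=\gamma_0$ holds. With the fresh root choice, $\pi_f$'s value is $\gamma_0 f(p)$ rather than $f(p)$, which does not change the threshold $\Lambda$.
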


\begin{proof}[Proof sketch]
Membership is the singleton-portfolio case of \Cref{thm:portfolio-regret-membership}.
For hardness, reduce from the $\sUTR$-complete problem of deciding whether a polynomial is nonpositive throughout a bounded domain, introduced in \Cref{par:polynomial-evaluation}.
At each realization, we first force the given policy to take a zero-valued branch, while an optimal policy at that realization may choose either that branch or one evaluating a polynomial function.
Up to the fixed initial discount, the regret is therefore the positive part of the polynomial's value.
Consequently, the policy has regret at most zero exactly when the polynomial is nonpositive.
See \Cref{app:regret-certification-real}.
\end{proof}

\subsection{Minimal Robust Regret \& Bounded Synthesis}
\label{sec:minimal:robust:regret}
Certification fixes the candidate set whereas in the minimization and synthesis problems we need to choose it.

\begin{problem}[Minimal robust regret]\label{prob:min-regret}
Given an RMDP and a threshold $t \in \mathbb{Q}$, decide whether $\inf_{\pi\in\polRand}\Rreg(\pi)\leq t$.
\end{problem}

\begin{problem}[Bounded portfolio synthesis]\label{prob:min-portfolio-regret}
Given an RMDP, a budget $k$ encoded in unary, and a threshold $t \in \mathbb{Q}$, decide whether $\rho_k\leq t$.
\end{problem}

Portfolio regret is monotone under inclusion, and $\Rreg(\{\pi\})=\Rreg(\pi)$.
Thus minimal robust regret is the special case of bounded portfolio synthesis with $k=1$.
\begin{restatable}[Membership]{theorem}{minregretportfoliomembership}\label{thm:min-regret-portfolio-eatr-membership}
Minimal robust regret and bounded portfolio synthesis belong to $\sEUTR$.
\end{restatable}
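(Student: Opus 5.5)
Since minimal robust regret is the $k=1$ case of bounded portfolio synthesis, it suffices to place bounded portfolio synthesis in $\sEUTR$. The plan is to write $\rho_k\le t$ as an existential-universal sentence whose size is polynomial in the input; this is possible because $k$ is unary. The existential block guesses the portfolio. The universal block reuses the certification formula from \Cref{thm:portfolio-regret-membership}, now with the portfolio entries as free variables instead of fixed rationals.

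\textbf{Handling the infimum.} $\rho_k$ is an infimum and need not be attained, so we rely on compactness. The set $(\polRand)^k$ is a product of simplices and hence compact. For fixed $\vec u$, each $V^\pi_{\vec u}(\sinit)$ is continuous in $\pi$, and in fact jointly continuous in $(\pi,\vec u)$ since $\gamma<1$. Because $\cU$ is compact, the function $\Pi\mapsto\Rreg(\Pi)$ is a supremum over a compact set of jointly continuous functions, and is therefore continuous. So the infimum over $(\polRand)^k$ is attained, and $\rho_k\le t$ holds iff some tuple $(\pi_1,\dots,\pi_k)$ satisfies $\Rreg(\{\pi_1,\dots,\pi_k\})\le t$. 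Tuples may repeat entries, which covers portfolios with fewer than $k$ members.

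\textbf{The sentence.} Existentially quantify real variables $\pi_j(s,a)$ for $j\le k$, and add the linear constraints $\pi_j(s,a)\ge0$ and $\sum_a\pi_j(s,a)=1$. Universally quantify:
\begin{itemize}
\item a realization $\vec u$;
\item value vectors $\vec v^*$ and $\vec v_1,\dots,\vec v_k\in\bR^S$.
\end{itemize}
The matrix of the sentence is the implication
\[
\Bigl(\vec F\vec u\le\vec g\ \wedge\ \mathrm{Opt}(\vec u,\vec v^*)\ \wedge\ \textstyle\bigwedge_j \mathrm{Bell}(\vec u,\pi_j,\vec v_j)\Bigr)\ \rightarrow\ \vec v^*(\sinit)-\max_j \vec v_j(\sinit)\le t.
\]
The conditions in the hypothesis are:
\begin{itemize}
\item $\mathrm{Bell}$ is the polynomial Bellman system $\vec v_j(s)=\sum_a\pi_j(s,a)\bigl(R(s,a)+\gamma\sum_{s'}\vec u(s,a,s')\vec v_j(s')\bigr)$.
\item $\mathrm{Opt}$ is the Bellman optimality system $\vec v^*(s)=\max_a\bigl(R(s,a)+\gamma\sum_{s'}\vec u(s,a,s')\vec v^*(s')\bigr)$.
\end{itemize}
The $\max$ operators are expanded as disjunctions of polynomial constraints. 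The conclusion $\max_j$ is written as $\bigvee_j\bigl(\vec v^*(\sinit)-\vec v_j(\sinit)\le t\bigr)$.

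\textbf{Correctness.} Since $\gamma<1$, each Bellman system is a contraction and has a unique solution for every valid $\vec u$ and $\pi_j$. Hence the universally quantified value vectors are forced to equal $V^{\pi_j}_{\vec u}$ and $V^*_{\vec u}$, and the implication states exactly that the regret at $\vec u$ is at most $t$. Alternatively, as in the certification proof, one can universally quantify a deterministic optimal policy at $\vec u$ together with its Bellman system; the max-form above avoids this. The formula has polynomially many variables and constraints in $|S|$, $|A|$, $m$ and $k$, with rational constants.

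\textbf{Main obstacle.} The delicate step is to justify both of the following:
\begin{itemize}
\item the universal quantification over value vectors really captures the values, which requires uniqueness of the Bellman solution for \emph{every} $\vec u$ in $\cU$, including degenerate vertices;
\item the infimum is attained, which requires the joint-continuity argument.
\end{itemize}
If attainment were doubtful, I would instead encode $\forall\varepsilon>0\,\exists\Pi$, but that would break the $\exists\forall$ form. So I will prove the continuity claim directly from $V^\pi_{\vec u}=(I-\gamma P^\pi_{\vec u})^{-1}R^\pi$. The inverse exists with entries bounded uniformly, and its entries are rational functions in $(\pi,\vec u)$ with nonvanishing denominators on the whole compact domain.
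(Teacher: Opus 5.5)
Your proposal is correct and follows the paper's route: existentially quantify the $k$ policy tables with their simplex constraints kept outside a universal Bellman encoding, and use compactness of the policy simplexes and of $\cU$ together with joint continuity (the paper's portfolio-attainment lemma) to turn the infimum $\rho_k$ into an attained minimum. The only difference is cosmetic. You encode $V^*_{\vec u}$ through the Bellman optimality equation with the $\max$ expanded into disjunctions, whereas the paper universally quantifies a comparator policy $\tau$ together with its Bellman system; both encodings are sound. In the continuity step, you should state explicitly that $V^*_{\vec u}(\sinit)$ is a finite maximum over memoryless deterministic policies and is therefore continuous in $\vec u$.
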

For single-policy minimization, we existentially quantify the candidate policy table.
For bounded synthesis, we quantify the $k$ portfolio tables.
We then universally quantify the realization, an optimal policy at that realization, and the corresponding Bellman systems.
Since $k$ is encoded in unary, both formulas have polynomial size.
See \Cref{app:portfolio-regret-membership}.

\paragraph{Restricting the synthesized policy.}
Synthesis chooses its candidate, so a reduction must keep the synthesized policy in the role assigned by its encoding, either naming a valuation or following the consistency policy, rather than letting it escape to an unintended lower-regret policy.
The policy-restriction transformation $\Restrict$ makes disallowed actions so costly that minimizing over all policies comes within any rational $\varepsilon>0$ of minimizing over the allowed policies (\Cref{app:min-regret-boolean}).

\paragraph{Combinatorial hardness under rectangular uncertainty.}
\begin{restatable}{theorem}{minregretcombinatorial}\label{thm:min-regret-combinatorial}
Minimal robust regret is $\sNP$-hard and $\scoNP$-hard on $(s,a)$-rectangular RMDPs in which every uncertain choice is two-Dirac and the only other stochastic rows are certain uniform splitters.
\end{restatable}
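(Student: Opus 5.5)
We prove the two bounds by separate reductions. Both reuse the shared-row Boolean gadgetry from the $\scoNP$-hardness sketch of policy comparison ($\Cmp(\varphi)$), the exact lift $\Lift$, and the policy-restriction transformation $\Restrict$. The common template is the same in each case. We build an $(s,a)$-rectangular RMDP in which the only uncertain choices are two-Dirac local bits and the only other stochastic rows are certain uniform splitters. We then use $\Restrict$ to confine the synthesized policy, up to an arbitrarily small rational $\varepsilon$, to a prescribed family of allowed policies. Finally, we pick a threshold that separates the yes and no instances by a gap of at least $1/\mathrm{poly}$, and choose $\varepsilon$ below half that gap so that the approximation does not interfere.

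\emph{$\scoNP$-hardness.} Reduce from UNSAT. Take $N=\Cmp(\varphi)$ with the clause policy $\pi_C$ and the consistency policy $\pi_K$, and form $\widehat N=\Lift(N,\pi_K)$, so that $\pi_K$ is optimal at every realization. Then apply $\Restrict$ so that the only allowed synthesized policy is the image $\widehat{\pi}_C$ of the clause policy; randomization is removed by making every deviating action ruinous. By the lift identity, $\Rreg(\widehat{\pi}_C)=\Lambda+\Delta_\cU(\pi_K,\pi_C)$. I expect the orientation of the comparison to need adjusting, for instance by lifting with $\pi_C$ or swapping the roles, so that the regret equals $\Lambda+\Delta$ in the direction where satisfiability yields the larger value. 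The threshold is then $t=\Lambda+2-1/n+\varepsilon$, which holds iff $\varphi$ is unsatisfiable. This is essentially \Cref{thm:fixed-regret-hardness} with the policy now forced by $\Restrict$ rather than given.

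\emph{$\NP$-hardness.} Reduce from SAT. Here the synthesized policy should \emph{name a valuation}. Add a prefix of decision states $x_1,\dots,x_n$, each with actions true and false, at which the allowed policies choose a value. The allowed family is thus in bijection with assignments; mixtures are handled by $\Restrict$ or by the linearity argument below. Nature again sets the local literal bits through two-Dirac choices. After a uniform splitter, the chosen policy is audited against each clause. The optimal policy at realization $\vec u$ reads the bits freely, whereas the synthesized policy is memoryless, so it must commit to its valuation before seeing the bits. We design the payoff so that the regret is $0$ plus a fixed constant when the committed valuation satisfies every clause, and at least $1/m$ larger whenever some clause $C$ is falsified: nature sets the bits so that the optimum profits on $C$ while the committed valuation cannot. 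Randomized valuations induce a regret that is a supremum of functions affine in the per-variable mixing probabilities. To rule out advantage from mixing, we use per-variable uniform splitting so that a falsified clause under any mixture still leaves a loss of at least $1/m$ times a product of probabilities. If that fails, we instead force determinism with $\Restrict$.

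The main obstacle is this randomization issue in the $\NP$ direction, together with keeping rectangularity. Nature's bits must be set independently per choice, and the regret must still detect a falsified clause. Memorylessness is what prevents the synthesized policy from copying the optimal bit-reading policy, and the construction must make sure the synthesized policy never visits the bit-reading states under its own actions. I would first try routing the synthesized policy's clause audits through states disjoint from the bit states. Their payoffs would be fixed to depend only on the chosen literal value, so the only uncertain quantity in the regret comes from the optimal policy's side.
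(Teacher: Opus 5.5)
Your $\scoNP$ half is the paper's argument once the orientation is fixed. With $\Lift(\Cmp(\varphi),\pi_K)$ and candidate $\widehat\pi_C$ you get $\Delta_\cU(\pi_K,\pi_C)=\sup_u\bigl(-\Pr[\pi_K\text{ accepts}]-\mathbf 1[\pi_C\text{ accepts}]\bigr)=0$ for every $\varphi$, which separates nothing. You must instead lift with $\pi_C$ as reference and allow only $\pi_K$'s actions, so the lift gives $\Lambda+\Delta_\cU(\pi_C,\pi_K)$. Your threshold $\Lambda+2-\frac1n+\varepsilon$ then works for $\varepsilon<\frac1{2n}$.

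Do not paraphrase $\Restrict$ as ``making deviating actions ruinous.'' Here the disallowed action is the bonus route, and deleting it would destroy the pointwise optimality of the bonus policy on which the lift identity rests. $\Restrict$ instead lets nature interpolate toward the ruinous sink, and the original row is kept at $\theta=1$.

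The $\sNP$ half has a genuine gap. No construction is actually specified, and the one design you commit to cannot work. If every allowed candidate avoids the bit selectors, then $V^\pi_u(\sinit)$ is independent of $u$, so $\Rreg(\pi)=\sup_uV^*_u(\sinit)-V^\pi(\sinit)$. The first term is the optimistic value of a rectangular RMDP. With two-Dirac choices, that is an MDP whose actions also pick a vertex. Maximizing the second term over allowed memoryless policies just solves the allowed sub-MDP. Both are polynomial-time, so the restricted minimum your correctness argument relies on would be computable in polynomial time, and the reduction would decide SAT in polynomial time.

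Memorylessness is not the source of hardness either. The best response at each $u$ is itself memoryless deterministic; what the candidate lacks is dependence on $u$. Your fallback also fails. $\Restrict$ restricts supports, not determinism, so at a variable state with both values allowed, $\polRand_{\mathrm{allow}}$ still contains every mixture. Your claimed $1/m$ gap under mixtures has no supporting argument.

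The missing idea is to couple the candidate to a fixed reference through the \emph{same} local-bit selectors, so that one realization must serve both.
\begin{itemize}
\item In the paper, the reference is the falsification scanner $\pi_F$, made pointwise optimal by $\Lift$. Using pair tests, it accepts when some clause is locally false in all three positions.
\item The candidate $K_\sigma$ commits to a value at each $v_x$; these states are visited once, in sequence. It then audits every occurrence of $x$ against the local bit certifying that value.
\item With accepting payoffs $\pm\gamma_0^{-H}$ at common depth $H$, the value difference is the sum of the two acceptance probabilities.
\item If both accept at a vertex, the committed valuation falsifies a clause (\Cref{lem:falsification-valuation-separation}). Hence a satisfying $\alpha$ has difference at most $1$ at every vertex, and by \Cref{lem:cyclefree-vertex} on the whole polytope.
\item If $\varphi$ is unsatisfiable, taking a most probable action at each $v_x$ gives a valuation $\alpha$ that $K_\sigma$ follows with probability at least $2^{-n}$. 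At the canonical realization for $\alpha$, $\pi_F$ accepts surely, so every randomized candidate has difference at least $1+2^{-n}$.
\end{itemize}
This is how mixing is defeated. The gap is exponentially small but of polynomial bit length, so a $1/\mathrm{poly}$ gap is unnecessary. The reduction finishes with $\Restrict(\Lift(N_\varphi,\pi_F),A_{\mathrm{allow}},2^{-n}/4)$ and threshold $\Lambda+1+2^{-n-1}$.
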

\begin{proof}[Proof sketch]
For $\scoNP$-hardness, lift $\Cmp(\varphi)$ with the clause policy as reference and restrict the synthesized policy to the lifted consistency policy.
The resulting threshold separates satisfiable from unsatisfiable formulas.
For $\sNP$-hardness, the fixed reference finds a falsified clause and the synthesized policy names a valuation whose local copies nature can audit.
The most probable randomized choices still name a valuation with probability at least $2^{-n}$, which supplies the required gap.
The transformations $\Lift$ and $\Restrict$ preserve both gaps.
Both reductions retain independent two-Dirac choices.
See \Cref{app:min-regret-boolean}.
\end{proof}

\paragraph{Square-root-sum hardness under rectangular uncertainty.}
Beyond the combinatorial hardness, irrational regret values let us encode problems with signed sums of roots.

\begin{restatable}{theorem}{minregretsqrs}\label{thm:min-regret-sqrs}
Minimal robust regret is $\sSignedSRS$-hard under $(s,a)$-rectangular uncertainty.
\end{restatable}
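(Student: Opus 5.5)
We reduce from $\sSignedSRS$: given $a_1,\dots,a_m$, $b_1,\dots,b_n$ and $\bowtie$, we decide whether $\sum_i\sqrt{a_i}\bowtie\sum_j\sqrt{b_j}$. The plan combines two gadgets already in the paper. The first is the shared-cycle gadget of \Cref{ex:comparison-interior}, scaled so that the per-gadget comparison value is an affine function of $\sqrt{a_i}$. The second is the pair $\Lift$/$\Restrict$, which turns a minimization over policies into a comparison against a fixed reference.

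\textbf{Step 1: calibrated one-root gadgets.} For each integer $c>0$ we build an $(s,a)$-rectangular gadget $G_c$ with a single uncertain two-successor choice on a cycle. Two fixed local policies $\sigma,\tau$ in $G_c$ should satisfy
\[
\sup_x\bigl(V^\sigma_x-V^\tau_x\bigr)=\alpha_c+\beta_c\sqrt c ,
\]
with rational, efficiently computable $\alpha_c,\beta_c$ and $\beta_c=\beta>0$ independent of $c$ after rescaling. To get this, we parameterize the example through the return probability and the discount on the cycle edge. The maximizer of $x(1-x)/(K-x)$ is $K-\sqrt{K^2-K}$, so choosing $K$ with $K^2-K$ proportional to $c$ (after rational scaling of rewards, or by replacing the quadratic with a suitable $\frac{x(1-x)}{K-x}$ family) produces $\sqrt c$. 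We also need the gadget to be acyclic outside that one loop.

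\textbf{Step 2: sign via who owns the choice.} Minimal regret has a $\sup_{\vec u}$ over nature and an $\inf$ over the synthesized policy. Gadgets $G_{a_i}$ in which nature maximizes therefore contribute $+\sqrt{a_i}$ terms to the regret. For the $-\sqrt{b_j}$ terms we want the synthesizer to optimize: it picks the cycle-exit probability (a randomized action) in a gadget where nature's choice is certain, so the best randomized value is again an interior optimum of a rational function. That optimum is an irrational $\sqrt{b_j}$-expression, which enters the regret with the opposite sign. Randomization is essential here, because deterministic policies only reach vertex values.

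\textbf{Step 3: assembly.} A certain uniform splitter at $\sinit$ branches to all $m+n$ gadgets. We apply $\Lift$ with the reference $\sigma$ on the nature gadgets and $\Restrict$ so that the synthesized policy may deviate only in the intended positions. By rectangularity, both nature's supremum and the synthesizer's infimum decompose across branches, giving
\[
\inf_\pi\Rreg(\pi)=\Lambda+\frac{\beta}{m+n}\Bigl(\sum_i\sqrt{a_i}-\sum_j\sqrt{b_j}\Bigr).
\]
Thresholding at $t=\Lambda$ decides $\leq$, and swapping the roles of the two lists decides $\geq$.

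\textbf{Main obstacle.} The hard step is making the decomposition in Step 3 exact. Two couplings must be ruled out: the regret is a $\sup$ of a difference in which $V^*_{\vec u}$ is optimal jointly, and the synthesizer's choice in $b$-gadgets must not interact with nature's choices elsewhere. I would enforce separability with $\Lift$, which makes the reference optimal at every realization so that $V^*$ is explicit. The two gadget families would also use disjoint choices, making $\sup_{\vec u}$ and $\inf_\pi$ commute branchwise. Finally, $\Restrict$ only guarantees an $\varepsilon$-approximation, so I must check that equality at threshold survives it. The fallback is to design the penalty so that disallowed actions are strictly dominated, making $\Restrict$ exact here.
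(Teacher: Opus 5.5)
Your proposal has a genuine gap: neither gadget family produces the sign you assign to it, and the second family produces no square root at all.

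\textbf{Step 1 gives the opposite sign.} For your family, $\max_{x}\frac{x(1-x)}{K-x}=2K-1-2\sqrt{K^2-K}$; with $K=4$ this is the $7-4\sqrt3$ of \Cref{ex:comparison-interior}. More generally, an interior maximum of any $a+cy+d/y$ equals $a-2\sqrt{cd}$. So a gadget in which nature maximizes a fixed comparison contributes $\alpha_c-\beta_c\sqrt c$ with $\beta_c>0$. This is the $\scoSRS$ sign, the paper's $C_b-\sqrt b$, and the reason comparison and given-policy regret are only shown $\scoSRS$-hard.

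\textbf{Step 2 gives no root at all.} You would then need the synthesizer's gadgets to supply $+\sqrt{\cdot}$, but they cannot. If nature's choice in a gadget is certain, the synthesizer faces a fixed discounted MDP. Its value is linear-fractional, hence monotone, in the mixing probability at any single state, and $\sup_{\pi\in\polRand}V^\pi$ is attained by a deterministic policy at a rational value. The local contribution is therefore rational (zero against $V^*$, rational against a lifted reference). Randomization gains nothing unless an adversary acts at the same state.

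\textbf{The missing idea.} The irrationality in minimal regret comes from nature and the synthesizer acting at the same state. The paper's one-state gadget has two actions $L,R$ whose self-loop probabilities nature chooses independently in $[0,h]$. Against a policy that plays $L$ with probability $x$, nature can make either action the better one, so
$\Rreg(x)=\max\{\frac{A(1-x)}{1-(1-q)x},\frac{Bx}{q+(1-q)x}\}$.
The first term decreases and the second increases in $x$, so their balance point solves a quadratic. The sign of the root is set by the sign of the rewards, not by who owns the choice. Positive rewards give $\frac{D-\sqrt{D^2-4(1-q^2)AB}}{2(1-q^2)}$, and negative rewards give $\frac{\sqrt{q^2D^2+4(1-q^2)AB}-qD}{2(1-q^2)}$. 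Rational $q,A,B$ are chosen so that each radicand is exactly the input integer. A certain uniform splitter then adds the local minimal regrets exactly, so neither $\Lift$ nor $\Restrict$ is used.

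\textbf{Step 3 cannot be repaired as planned.} $\Restrict$ is only $\varepsilon$-exact. No separation bound with polynomially many bits is known for nonzero signed square-root sums, so no admissible $\varepsilon$ preserves the non-strict threshold. Your fallback of making disallowed actions strictly dominated is incompatible with $\Lift$. The action you must forbid the synthesizer is the bonus action, and the identity $\Rreg(\widehat\Pi)=\Lambda+\Delta_\cU(\pi_0,\Pi)$ requires it to be the comparator's unique optimal action at every realization.
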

\begin{proof}[Proof sketch]
The reduction uses local RMDPs in which the policy chooses a mixing probability $x$ between two actions.
For suitable rational $A,B,q>0$, its regret is
\[
\max\left\{
\frac{A(1-x)}{1-(1-q)x},
\frac{Bx}{q+(1-q)x}
\right\}.
\]
As $x$ increases the first term decreases while the second increases, so their maximum is minimized at an interior balance point, producing a square root.
Reward signs realize the two forms $D_b-\sqrt b$ and $\sqrt b-C_b$.
A certain uniform splitter adds the local minima, which realizes a signed sum.
See \Cref{app:min-regret-sqrs}.
\end{proof}

\paragraph{Hardness under general polytopic uncertainty.}
Under general rational polytopic uncertainty, policy comparison reduces to single-policy regret minimization.
\begin{restatable}{theorem}{minregretforallrhard}\label{thm:min-regret-forallr-hard}
Single-policy minimal robust regret is $\sUTR$-hard for arbitrary rational polytopic uncertainty.
\end{restatable}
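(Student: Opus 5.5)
The plan is to reduce from the $\sUTR$-complete policy-comparison problem under general rational polytopic uncertainty (\Cref{app:comparison-real}). Concretely, we use the polynomial-nonpositivity instance behind it: decide whether a polynomial $f$ is nonpositive on the whole uncertainty domain. The reduction will reuse the two gadgets that already gave \Cref{thm:fixed-regret-forallr}. The first is the polynomial evaluator of \Cref{par:polynomial-evaluation}: certain splitters, monomial chains of coupled uncertain choices, and terminal payoffs cancelling the splitter probabilities and discounts. The second is the regret gadget at the initial state, which offers an optimal policy the choice between a zero-reward branch and the evaluator branch.

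The new difficulty is that the candidate policy is no longer given but synthesized. So in the \Cref{thm:fixed-regret-forallr} instance, where $\Rreg(\pi_0)=\gamma\max\{0,\max_{\vec u}f(\vec u)\}$ for the zero-branch policy $\pi_0$, we must prevent the minimizer from choosing something better. At the only policy-controlled state $\sinit$, a randomized policy mixes the zero branch with probability $1-x$ and the evaluator with probability $x$. Its regret at realization $\vec u$ is then
\[
\gamma\bigl(\max\{0,f(\vec u)\}-x f(\vec u)\bigr),
\]
and the adversary may also pick realizations where $f<0$.

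To make $x=0$ forced, I will add to the same polytope an independent dimension (nonrectangular coupling is allowed) or a separate branch, where committing to the evaluator is punished. The natural choice is to add a constant shift so the evaluator branch computes $f-M$, with $M$ a rational upper bound on $|f|$ over the bounded domain, computed from coefficient sizes. Alternatively, I can apply $\Restrict$ (\Cref{app:min-regret-boolean}), which makes disallowed actions costly enough that $\inf_\pi\Rreg(\pi)$ comes within any rational $\varepsilon$ of the minimum over allowed policies. Used directly, $\Restrict$ on the only choice would trivialize the instance. So instead I will place the choice so that optimal-at-$\vec u$ policies still see both branches, while the synthesized policy's mixing weight $x$ is penalized by a separate realization where the evaluator branch is worth $-M$. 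The cleanest version: let nature additionally pick a coupled flag $z\in[0,1]$ that scales the evaluator's payoff toward $-M$. For $x>0$ this yields regret at least $xM\gamma$, so any $x>0$ costs more than $\max\{0,\max f\}\le M$ could save.

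With this in place, I will check the two directions. If $f\le 0$ everywhere, the policy $x=0$ has regret $0$, so $\inf\Rreg=0\le t$ with $t=0$. If $f(\vec u^\ast)>0$ somewhere, every $x$ incurs regret either $\gamma(1-x)f(\vec u^\ast)$ or about $x\gamma M$. Both are positive, and the minimum over $x$ of their maximum is strictly positive, so $\inf\Rreg>0$. Since the infimum over $x\in[0,1]$ of a maximum of two continuous functions is attained, strictness is preserved and no $\varepsilon$-gap is needed.

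The main obstacle is designing the penalizing realization so that it does not also help the optimal comparator. Optimal policies can avoid the evaluator at that realization, which is exactly why the penalty only hurts mixtures. It must also stay within a single rational polytope with acyclic, fixed-discount dynamics. I would first try the explicit extra coordinate $z$ multiplying each monomial branch's survival, and verify that the evaluator's value is affine in $z$ between $f$ and $-M$.
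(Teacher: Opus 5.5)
Your reduction is correct, but it is organized differently from the paper's.

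\textbf{The paper's route.} The paper reduces from general policy comparison $(M,\pi_1,\pi_2,t)$ with an arbitrary threshold, via two-action anchoring. The $\pi_1$ action receives a bonus $L$. Two fresh coordinates $\lambda_\alpha,\lambda_\beta$ divert the two copies to anchors of value $-Z$ and $+Z$. This makes the regret of the $x$-mixture exactly $\max\{(1-x)D,xE\}$ with $D=L+\gamma\Delta$. The minimum $DE/(D+E)$ is strictly increasing in $\Delta$, so the threshold transfers through $t'=ET/(E+T)$.

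\textbf{Your route.} You specialize to the polynomial-nonpositivity source, with the zero policy as comparator and threshold $0$, and there one anchor suffices. Your flag $z$ plays exactly the role of the paper's $\lambda_\alpha$. The bonus $L$ and the $+Z$ anchor become unnecessary because you only need to separate zero regret from positive regret, not compute the minimum exactly. The two-direction check is sound. If $f\le0$ everywhere, the zero-branch policy has regret $0$. If $f(p^*)>0$, every $x$ has regret at least $\min_x\max\{\gamma(1-x)f(p^*),\gamma xM\}=\gamma f(p^*)M/(f(p^*)+M)>0$.

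\textbf{Comparison.} Your version is more elementary. The threshold is simply zero, and $M$ need not bound $|f|$: any positive rational works, since only positivity of that minimum matters. The paper's version is a generic comparison-to-minimization transfer. It preserves the full order of comparison values and applies to any comparison instance, including irrational gaps as in \Cref{ex:two-action-anchoring}.

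\textbf{Points to fix in the write-up.}
\begin{itemize}
\item Multiplying each monomial branch's survival by $z$ alone yields $zf$, which is $0$ rather than $-M$ at $z=0$. You must route the complementary mass $1-z$ to a terminal of payoff $-M$. A single fresh two-successor row in front of $s_f$ does this, or you can feed $zf-(1-z)M$ to the polynomial evaluator.
\item The infimum ranges over all stationary policies of the ruinous-completed RMDP. Cite \Cref{lem:ruin-dominance} to reduce to the single mixing weight $x$ at $\sinit$, as the paper does.
\item Your heuristic that any $x>0$ ``costs more than it could save'' is inaccurate in the positive case, where the minimizer is the interior balance point. Likewise, ``both are positive'' fails at the endpoints $x\in\{0,1\}$. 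Neither claim is used by your actual argument, which only needs the max of the two terms to be positive for every $x$.
\end{itemize}
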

\begin{proof}[Proof sketch]
Reduce from the $\sUTR$-complete general-polytope comparison problem described in \Cref{par:polynomial-evaluation}.
The construction leaves the synthesized policy one decision: a mixing probability $x$ between forced copies of the two source policies.
Two absorbing states of values $\pm Z$ anchor the branches so that only this mixing probability matters.
They make its regret $\max\{(1-x)D,xE\}$, where $E$ is a fixed positive rational and $D$ is an affine, strictly increasing function of the source comparison value $\Delta$.
The minimizing policy balances the two terms, giving $DE/(D+E)$, again strictly increasing in $\Delta$, so the comparison threshold transfers by a rational transformation.
See \Cref{app:regret-minimization-real}.
\end{proof}

Together with the upper bound above, this places single-policy minimal robust regret between $\sUTR$-hardness and $\sEUTR$-membership under general rational polytopic uncertainty.
The current bounds do not establish completeness.

Restricting the search to memoryless deterministic policies on acyclic $(s,a)$-rectangular RMDPs makes minimal robust regret $\sSigmaTwoP$-complete (\Cref{app:min-regret-deterministic}).
This is a different problem from \Cref{prob:min-regret}, which minimizes over $\polRand$, and neither classification implies the other.
In particular, the $\sSigmaTwoP$-hardness construction leaves the synthesized policy a free choice at every existential variable state, so a randomized policy may mix there and undercut every deterministic candidate.
The $\sNP$-hardness of \Cref{thm:min-regret-combinatorial} thus rests on a separate construction, whose $2^{-n}$ gap randomization cannot close.

\paragraph{Bounded portfolio synthesis.}
For bounded portfolios, the general upper bound is tight.

\begin{restatable}{theorem}{minportfolioeatr}\label{thm:min-portfolio-eatr}
Bounded portfolio synthesis is $\sEUTR$-complete under general rational polytopic uncertainty, with $k$
encoded in unary.
Hardness holds already for regret threshold two, the fixed discount $\gamma=\tfrac12$, RMDPs acyclic
apart from absorbing final states, and uncertain choices with two successors.
\end{restatable}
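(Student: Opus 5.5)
Membership is \Cref{thm:min-regret-portfolio-eatr-membership}, so the work is hardness. We reduce from the canonical $\sEUTR$-complete problem. Given a polynomial $f(\vec x,\vec y)$ and boxes (after standard normalization, $\vec x\in[0,1]^n$, $\vec y\in[0,1]^m$), decide whether
\[
\exists \vec x\,\forall \vec y\; f(\vec x,\vec y)\le 0 .
\]
Throughout, we use the normalization available from Schaefer--\v{S}tefankovi\v{c} style results. There the existential variables can be taken to range over a compact box, and the problem stays hard with a promise-free threshold. Without loss of generality we may also assume the infimum over $\vec x$ is attained, by compactness.

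\textbf{Step 1: encode each existential variable in the portfolio.}
The existential block is realized by the synthesized policies. Randomized memoryless policies supply real parameters through their mixing probabilities. A single policy (k=1) offers a natural encoding: for each $x_i$, a policy-controlled state mixes two actions with probabilities $x_i$ and $1-x_i$. A monomial in $\vec x$ needs products of the $x_i$, so we route the polynomial evaluator of \Cref{par:polynomial-evaluation} through chains alternating policy-controlled mixing states (contributing factors $x_i$) and uncertain nature choices (contributing factors $y_j$).

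Reusing a variable $x_i$ at different chain positions requires the \emph{same} mixing probability at distinct states. We will handle this in one of two ways: either we make every occurrence pass through a single state, which is possible only in the acyclic setting if occurrences are sequential, or we add consistency-audit branches. The latter are the harder route and are discussed in Step 3.

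\textbf{Step 2: the universal block and the regret target.}
The universal block is nature's choice of $\vec y$ in a general polytope; coupling lets one $y_j$ recur across choices, as in \Cref{thm:fixed-regret-forallr}. Following that theorem, the realization also offers an optimal policy an alternative branch of fixed value. The fixed discount $\gamma=\tfrac12$ and the cancelling terminal payoffs are set so that the regret equals a constant plus the positive part of $f(\vec x,\vec y)$, scaled appropriately. We then shift by an absorbing anchor so that the threshold becomes exactly $2$. Because we use $k=1$ inside a unary budget, this also covers the statement for all $k$; alternatively we take $k$ copies, which is harmless.

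\textbf{Step 3 (main obstacle): policy faithfulness.}
The synthesized policy might escape the encoding. It could use inconsistent mixing probabilities at different occurrences of $x_i$, or choose unintended actions. Unintended actions are handled by $\Restrict$, but $\Restrict$ only gives $\varepsilon$-approximation, which conflicts with an exact threshold of $2$.

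To fix inconsistency, we first reduce $f$ so that each $x_i$ appears with degree at most one per monomial, and only through one decision state. We do this by introducing copies $x_i'$ together with equality constraints. Those constraints are expressed as universal penalties: add a term $\lambda\,(x_i-x_i')\,(2y-1)$ with fresh universal $y$, which nature can make positive whenever $x_i\neq x_i'$. This keeps the formula within $\exists\forall$ and exactly preserves truth.

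For the exactness issue, instead of $\Restrict$ we choose the ruinous-sink penalties so that any deviation raises regret \emph{strictly} above $2$ at some realization. This is possible because nature can target the deviating state.

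Once these points are in place, the remaining checks are routine. The RMDP is acyclic apart from absorbing states, every uncertain choice has two successors, and all numbers have polynomial encoding size.
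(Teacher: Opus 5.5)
\textbf{Step 3 (copy consistency).} Your reduction works entirely at $k=1$. If it went through, it would prove that single-policy minimal robust regret is $\sEUTR$-hard under general polytopes. The paper obtains only $\sUTR$-hardness there (\Cref{thm:min-regret-forallr-hard}) and lists the gap as open, and your Step~3 does not supply the missing idea.

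In an acyclic RMDP, a memoryless policy contributes its mixing probability at a state $s$ as $x_sA_s+(1-x_s)B_s$. The continuations $A_s,B_s$ are fixed by the actions at $s$ and shared by every predecessor of $s$. A monomial with several existential factors is therefore a chain of decision states. A variable occurring in two monomials, or in a monomial and in your penalty $\lambda(x_i-x_i')(2y-1)$, needs two different continuations and hence two states. The penalty is itself a new occurrence of $x_i$ and $x_i'$, so it calls for a further copy and a further penalty. Hanging it off the existing state through a splitter instead multiplies it by the chain prefix leading to that state, so it vanishes when an earlier factor is zero. Giving that state a prefix-free predecessor instead adds its monomial continuation a second time. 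The logical rewriting is sound for $\lambda$ above the Lipschitz bound, but ``each $x_i$ appears only through one decision state'' is exactly what cannot be arranged for general $f$.

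The paper goes the other way. In its residual normal form the copies are \emph{nature's} coordinates, so each test $g_{j,\sigma}$ is affine in $x$ with at most one existential coordinate. The price is that ``copies correct $\Rightarrow F\ge0$'' becomes the disjunction $\max_i g_i(x,\eta)\ge0$ over $r=2s$ tests. One policy's value is a single polynomial and does not express this maximum; the pointwise maximum over $r$ portfolio members does. That is why the proof uses budget $r$ in unary. For the same reason, extra members are not harmless: they let the portfolio use a different $x$ at each realization.

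\textbf{Step 2 (anchor).} The best response at a realization also controls your $x$-states. Its value is $\max\{c,\max_{x'}G(x',y)\}$, for anchor value $c$ and evaluator value $G$, not the evaluator at the candidate's $x$. So the identity ``regret $=$ constant $+$ positive part of $f(x,y)$'' fails. Moreover, a dominating fixed-value anchor is equally available to the candidate, which then has regret zero.

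Your exactness fix does not repair this. Sending a deviating action to the sink is a realization at which the comparator also loses that action. The gain and the penalty of a small deviation occur at different realizations, which is precisely why \Cref{lem:policy-restriction} gives only an $\varepsilon$-bound. Deviations to genuinely bad actions need no targeting, since ruinous-sink completion and \Cref{lem:ruin-dominance} already handle them exactly.

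The paper pins the optimal value without offering an escape, using four devices, all at threshold exactly $2$ and without $\Restrict$:
\begin{itemize}
\item realization-dependent auxiliary actions $b_h$ of value $3\lambda_h-1$, so $V_u^*\le2$ with equality at pure cases;
\item $3r+1$ forcing cases per role, with a counting argument that makes every role action be played with probability one by a distinct member;
\item oriented equality cases with fresh $q_h$, which force a common $x$ across members;
\item the correction $1-\sum_h\lambda_h^2$, so that covering pure cases covers every mixture.
\end{itemize}
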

\begin{proof}[Proof sketch]
Membership is the encoding above.
For hardness, normalize $\exists x\,\forall y:F(x,y)\ge0$ into tests $g_i(x,\eta)$ affine in $x$, and
use one portfolio member per test.
A case distribution lets nature audit roles, witness agreement, and tests through fixed-reward
polynomial evaluators.
The regret threshold is met exactly when some $x$ passes every test for every $\eta$.
See \Cref{app:portfolio-eatr}.
\end{proof}

The equality ties couple choices at different state-action pairs, so the reduction does not settle the rectangular case.
Under rectangular uncertainty, bounded synthesis is hard already for the fixed budget $k=1$.

\section{Portfolio Construction and Evaluation}
\label{sec:implementation}

We show that even a simple offline pipeline yields portfolios with substantially lower empirical regret,
despite the intractability of portfolio construction. To this end, we implemented the three-phase prototype summarized in \Cref{alg:experimental-pipeline}.
Two research questions guide the experiments: whether portfolios reduce empirical robust regret as the budget grows \textbf{(RQ1)}, and whether the best member can be identified online in a fixed but unknown environment \textbf{(RQ2)}.
All details on the benchmarks, experiment protocol, and results can be found in \Cref{app:experiments}.

\begin{algorithm}[t]
\caption{Experimental pipeline}
\label{alg:experimental-pipeline}
\KwIn{RMDP $M$, parameter box $D$, portfolio size $K$, and evaluation seeds}
\textbf{Construct:} split $D$ into cells and compute an optimal midpoint policy for each cell\;
Robustly evaluate every candidate on every cell, giving one regret profile per policy\;
Cluster the profiles and select the policy nearest each of the $K$ centers\;
\textbf{Evaluate:} for 3 seeded samples of $D$, estimate min regret in portfolio, maximized over samples\;
\textbf{Deploy:} treat members as bandit arms and run UCB in each fixed sampled environment\;
\end{algorithm}

\paragraph{Benchmarks.}
We introduce two new benchmarks: \emph{datacenter climate control} and \emph{UAV control}.
The datacenter benchmark contains two uncertain parameters, cooling effectiveness and ambient pressure, both ranging over a box domain.
The rewards combine a per-action energy cost with penalties activating near the temperature, humidity, and queue ceilings, giving $\Vmax=2080$.
The UAV benchmark is a planning problem through a three-dimensional grid under uncertain wind intensity $p$ and actuator-drop probability $q$, again over a box domain.
Reaching the goal is the only rewarded event; thus, the value is the discounted landing probability, and $\Vmax=100$.
We consider three different instances of this benchmark, \texttt{uav-small}, \texttt{uav-medium}, and \texttt{uav-large}. 
Every grid position offers seven actions, so even \texttt{uav-small} admits $7^{95}$
memoryless deterministic policies, ruling out exhaustive search.

\paragraph{Portfolio construction.}
The benchmarks are nonrectangular RMDPs whose transition probabilities are affine in a parameter vector ranging over a box $D$. Hence, every valuation $\theta\in D$ induces a realization $\vec u\in\cU$.
Discretizing each parameter into $10$ bins gives a set $\mathcal{C}$ of $100$ cells $c$ for our two-parameter benchmarks, and the optimal policy at each cell midpoint $\operatorname{mid}(c)$ becomes a candidate $\pi \in \Pi(\mathcal{C})$ in the candidate set.
Robust policy evaluation then assigns each candidate $\pi$ a midpoint-based approximate regret vector $L_\pi$ with one entry per cell:
\(
L_{\pi,c}=V^*_{\operatorname{mid}(c)}(\sinit)-\inf_{\theta\in c}V^\pi_\theta(\sinit).
\)
We then use K-means over these vectors
to select the policy nearest each center, resulting in a set $\Pi_K \subseteq \Pi(\mathcal{C})$ of policies.
Clustering is a computationally cheap approximation for obtaining portfolios of deterministic policies.

\paragraph{Portfolio evaluation.}
A uniform parameter sample $\widehat{D} \subset D$ with $\lvert \widehat{D} \rvert = 1000$
is drawn to estimate portfolio regret by
\[
\widehat\Rreg_{\widehat{D}}(\Pi)=\max_{\theta\in \widehat{D}}\Big(V^*_\theta(\sinit)-\max_{\pi\in\Pi}V^\pi_\theta(\sinit)\Big),
\]
which measures coverage without charging for online identification and under-approximates the true regret, since it maximizes over finitely many samples (cf. \Cref{eqn:rreg}).
For deployment, UCB~\citep{pred-learn-games} treats portfolio members as bandit arms whose returns come from fixed-length trajectories in a fixed but unknown environment. We use it with error $\varepsilon=0.001$ (as a fraction of the return bound), confidence $\delta=0.1$, and empirical-Bernstein bounds~
\citep{10.1145/1390156.1390241} 
to speed up convergence.
For the UCB-based deployment, we uniformly draw $30$ valuations and run UCB once per valuation with a trajectory of length $H=100$ sampled on each iteration.
We plot, per iteration, the fraction of UCB runs where the final recommended arm is
the best portfolio member and the difference between the best and recommended policy, normalized between $0$ (best) and $1$ (worst).

\paragraph{Setup.}
The Python prototype uses Stormvogel~\citep{VolkEtAl26StormTutorial}, robust value iteration~\citep{DBLP:journals/mor/Iyengar05}, and scikit-learn K-means.
Experiments were run on a 2022 MacBook Pro M1 under macOS Tahoe 26.5.2, using 6 threads for robust policy evaluation, regret approximation, and UCB evaluation.
All experiments are done for three different seeds. For the plots in \Cref{fig:ucb-datacenter}, we merged all $90$ samples across the three seeds.

\begin{table}[t]
\centering
\small
\setlength{\tabcolsep}{2.7pt}
\begin{tabular}{@{}ccccc@{}}
\toprule
$K$ & \shortstack{UAV-S\\(98)} & \shortstack{UAV-M\\(358)} & \shortstack{UAV-L\\(1813)} & \shortstack{datacenter\\(330)} \\
\midrule
1  & 0.053 & 0.091 & 0.126 & 14.28 \\
2  & 0.032 & 0.019 & 0.017 & 3.86 \\
3  & 0.018 & 0.010 & 0.016 & 2.82 \\
5  & 0.002 & 0.003 & 0.015 & 2.79 \\
7  & 0.002 & 0.003 & 0.008 & 1.79 \\
10 & 0.002 & 0.003 & 0.003 & 0.80 \\
\midrule
Construction time & 43.8s & 384.1s & 2357.8s & 2732.7s \\
\bottomrule
\end{tabular}
\caption{Empirical robust regret, averaged over three seeds.
Parentheses give the number of states. Bottom row indicates wall-clock time.}
\label{tab:portfolio-results-summary}
\end{table}

\paragraph{RQ1: Portfolios reduce robust regret.}
\Cref{tab:portfolio-results-summary} shows empirical robust regret
decreasing with $K$ on every benchmark, with the largest single drop at $K=2$, and further gains as $K$ increases.
The $K=1$ row is a singleton rather than an adaptive portfolio.
Seed-level values and K-means inertia appear in \Cref{tab:merged-results} in \Cref{app:experiments}. For reference, the mini-max regret $\min_{\pi \in \Pi(\mathcal{C})} \max_{c \in \mathcal{C}} L_{\pi,c}$ is $0.039$, $0.118$, $0.146$, and $41.502$ by column.
Policy portfolios beat it for all $K$ on the three largest benchmarks.

\begin{figure}[t]
    \centering
    \begin{subfigure}[t]{0.48\columnwidth}
        \centering
        \includegraphics[width=\linewidth]{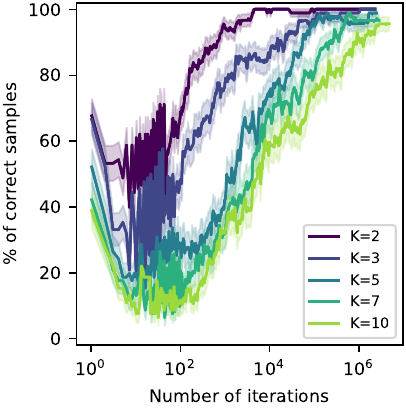}
        \caption{Runs with the correct arm recommended.}
        \label{fig:arm-ucb-recommended}
    \end{subfigure}
    \hfill
    \begin{subfigure}[t]{0.48\columnwidth}
        \centering
        \includegraphics[width=\linewidth]{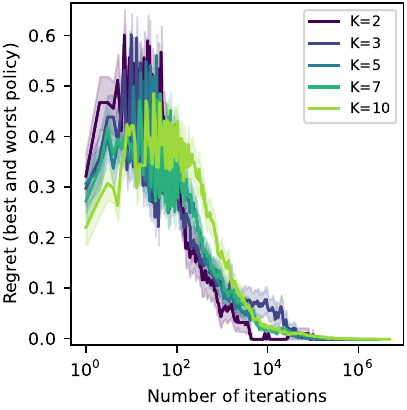}
        \caption{Regret of recommended arm.}
        \label{fig:regret-ucb-recommended}
    \end{subfigure}
    \caption{Reported UCB deployment results on the datacenter across all tested values of $K$.}
    \label{fig:ucb-datacenter}
\end{figure}

\paragraph{RQ2: Portfolio members can be identified online.}
At each iteration of the UCB algorithm, we measure which arm was pulled (maximal UCB) and which arm was recommended (maximal LCB). 
\Cref{fig:arm-ucb-recommended} gives the fraction of UCB runs recommending the best portfolio policy at each iteration.
Identification slows as the portfolio grows: after $10^4$ iterations, about half the runs recover the best member at $K=10$.
Since this ignores how well the remaining members perform, \Cref{fig:regret-ucb-recommended} instead reports
the regret of the recommended policy relative to the best portfolio policy, and there the value difference after $10^4$ iterations is minimal.

\section{Conclusion}
Adaptive policy portfolios sit between committing to a single robust policy and planning over a full belief state, offering finite and certifiable adaptation.
Certification is $\sUTR$-complete and bounded synthesis $\sEUTR$-complete under rational polytopic uncertainty; certification stays hard under rectangularity, since the pointwise maximum over members reintroduces the quantifier alternation that rectangularity eliminates.
Our prototype empirically shows portfolios whose regret falls sharply with the first few members, at an identification cost that grows with $K$.
Bounded synthesis under rectangular uncertainty, and the gap between $\sUTR$-hardness and $\sEUTR$-membership for single-policy minimal robust regret, remain open.

\clearpage
\bibliography{references}

\onecolumn
\appendix
\let\addcontentsline\appendixaddcontentsline
\renewcommand{\contentsname}{Appendix Contents}
\setcounter{tocdepth}{1}
\tableofcontents
\clearpage
\section{Conventions and primitives}
\label[appendix]{app:sqrs-preliminaries}
This appendix collects the conventions and building blocks that the later reductions share.
Three groups follow.
First, two numerical conventions: a discount $\gamma_0$ chosen so that the comparison-to-regret lift
composes exactly, and one inequality about it that the normalized square-root gadget needs.
Second, reduction primitives: how a state is given a prescribed value, how uncertainty enters a
single choice, the two-Dirac gadgets by which nature encodes a Boolean assignment and a verifier
reads it back, and ruinous-sink completion, which makes every choice we did not describe so costly
that no policy gains by taking it.
Third, encodings: a degree-four normal form that turns a polynomial constraint into small residuals,
and the Bellman formulas behind the membership proofs.
A reader may skip ahead and return here when a later construction cites one of these by name.

We first fix two conventions used throughout the appendix.
Set
\[
\beta=\frac{19}{20},
\qquad
\gamma_0=\beta^2=\frac{361}{400}.
\]
The identity $\beta^2=\gamma_0$ is needed by the exact comparison-to-regret lift, while
\[
\left(\frac{1+\gamma_0+\gamma_0^2}{1+\gamma_0}\right)^2>2
\]
is the numerical inequality used by the normalized square-root gadget.
All appendix constructions use $\gamma_0$ unless stated otherwise.
The bounded-synthesis reduction uses $\gamma=\tfrac12$ because it does not invoke the lift and hence
does not require $\beta^2=\gamma_0$.

\subsection{Reduction primitives}
\label{app:primitives}
A state is an \emph{absorbing final} when every described action has a Dirac self-loop.
A \emph{terminal with payoff $c$} is an absorbing final whose described action has reward $(1-\gamma)c$, so its value is exactly $c$.
Every action not separately described at such a terminal is assigned the same reward and Dirac self-loop.
Reaching such a terminal after $d$ transitions with probability $w$ contributes $w\gamma^d c$ to the value at the initial state.

For a choice $(s,a)$, call the distribution $\vec u(s,a,\cdot)$ its \emph{row}; the choice is uncertain
exactly when its row is not fixed.
A \emph{selector} is a state whose described action has a row carrying one uncertainty coordinate.
A \emph{splitter} is a state whose described action has a fixed row, independent of both the
realization and the policy; such a row is \emph{certain}.
A splitter is \emph{uniform} when its row is uniform over its successors.
We reserve $\bot$ for an absorbing zero-reward sink, that is, the terminal with payoff zero, and
$\bot_{\rm r}$ for a ruinous sink.

\begin{definition}[Local-bit verifier primitives]\label{def:local-bit-verifier-primitives}
For an occurrence $o$ and Boolean value $c$, a local-bit selector $q_{o,c}$ has outcomes
$q_{o,c}^0,q_{o,c}^1$ and row
\begin{equation}\label{eq:boolean-selector-choice}
\{(1-p)\delta_{q_{o,c}^0}+p\delta_{q_{o,c}^1}:p\in[0,1]\}.
\end{equation}
Its vertices encode the bit $b_{o,c}=p\in\{0,1\}$.
A pair test for an occurrence with designated satisfying value $\operatorname{val}(o)$ reads
$q_{o,\operatorname{val}(o)}$ and then $q_{o,1-\operatorname{val}(o)}$.
Outcomes $(1,0)$ certify local truth and $(0,1)$ certify local falsity.
An audit committed to value $c$ visits $q_{o,c}$ for the relevant occurrences in a fixed order,
advancing on outcome one and rejecting on outcome zero.
\end{definition}

\begin{lemma}[Acceptance-difference verifier]\label{lem:acceptance-difference-verifier}
Suppose reference and candidate verifier paths are padded to depth $H$, all nonaccepting payoffs are
zero, and their accepting terminals have payoffs $\gamma^{-H}$ and $-\gamma^{-H}$.
Then their value difference is the sum of their acceptance probabilities.
\end{lemma}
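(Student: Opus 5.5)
The plan is to compute each verifier's value by linearity of expectation over the terminals it can reach. Fix a realization $\vec u$ and consider the reference verifier path first. Padding to depth $H$ means that every terminal, accepting or not, is entered after exactly $H$ transitions. By the terminal convention of \Cref{app:primitives}, reaching a terminal of payoff $c$ after $d$ transitions with probability $w$ contributes $w\gamma^d c$ to the value at the initial state. We will also use that the intermediate verifier states carry zero reward, which is implicit in "padded to depth $H$" and which we state as part of the construction. Under that convention, the reference value is the sum over terminals of (reach probability)$\cdot\gamma^H\cdot$(payoff). Nonaccepting terminals contribute $0$. Accepting terminals have payoff $\gamma^{-H}$, so each contributes exactly its reach probability. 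The reference value is therefore $\Pr_{\vec u}[\text{reference accepts}]$.

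The same computation for the candidate path, whose accepting payoff is $-\gamma^{-H}$, gives value $-\Pr_{\vec u}[\text{candidate accepts}]$. Subtracting yields
\[
V^{\mathrm{ref}}_{\vec u}(\sinit)-V^{\mathrm{cand}}_{\vec u}(\sinit)=\Pr_{\vec u}[\text{reference accepts}]+\Pr_{\vec u}[\text{candidate accepts}],
\]
which is the claimed identity. It holds pointwise for every realization, so it survives taking suprema over $\cU$ as in $\Delta_\cU$.

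The only step I expect to need care is padding. I would insert chains of certain single-successor zero-reward states so that every branch has length exactly $H$. Such a chain changes neither acceptance probabilities nor rewards, only the discount exponent, and the exponent is then uniformly $H$. If the two paths start below a common root, as in $\Cmp(\varphi)$, I would absorb the shared prefix discount and branch probability into the stated payoffs, or count them inside the depth $H$. After that, the rest is bookkeeping.
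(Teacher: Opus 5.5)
Your proof is correct and matches the paper's argument: each accepting run is weighted by $\gamma^H\cdot(\pm\gamma^{-H})=\pm1$ and each rejecting run by zero, so the two signed values are the acceptance probabilities and their difference is their sum. One caution on your side remark: do not absorb a shared splitter's branch probability into the payoffs, since that would break the fixed $\pm\gamma^{-H}$ hypothesis. As in the paper, that probability stays inside the acceptance probability, with $H$ counted from the initial state.
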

\begin{proof}
Every accepting run contributes $\gamma^H\gamma^{-H}=1$ to the corresponding signed value, while
every rejecting run contributes zero.
\end{proof}

A construction's \emph{ruinous-sink completion} adds a fresh absorbing state $\bot_{\rm r}$ and a rational $Z>0$.
Every action at $\bot_{\rm r}$ has reward $-(1-\gamma)Z$ and the singleton row $\{\delta_{\bot_{\rm r}}\}$, so $V(\bot_{\rm r})=-Z$.
Every otherwise undescribed choice at a nonterminal state has reward zero and the singleton row $\{\delta_{\bot_{\rm r}}\}$.
Write $V^{\rm bd}=\max_{s,a}|R(s,a)|/(1-\gamma)$ for the standard bound on values, the maximum ranging
over the choices described before completion.
Unless stated otherwise, $Z$ is fixed after those rewards, has polynomial encoding length, and
satisfies $\gamma Z>V^{\rm bd}+1$.
We invoke this convention by saying that the remaining choices use ruinous-sink completion.
The added state is an absorbing final and all added rows are singletons, so the completion preserves acyclicity, rectangularity, and every two-successor or two-Dirac restriction on uncertain choices.
Call a choice \emph{ruinous} when its row is $\{\delta_{\bot_{\rm r}}\}$ and its reward is zero,
whether the completion supplied it or a later transformation described it explicitly.
Call a policy \emph{compliant} when it plays a nonruinous action at every state, and write $\bar\pi$
for the policy obtained from $\pi$ by conditioning on the nonruinous actions at every state, using an
arbitrary nonruinous action where that conditioning has zero mass.

\begin{lemma}[Ruinous dominance]\label{lem:ruin-dominance}
Suppose $|V_u^{\bar\rho}(s)|\le M$ for every compliant policy $\bar\rho$, every state $s$ other than
$\bot_{\rm r}$, and every realization $u$, and suppose $\gamma Z>M+1$.
Then $V_u^\pi(s)\le V_u^{\bar\pi}(s)$ for every stationary randomized policy $\pi$, every state, and
every realization.
\end{lemma}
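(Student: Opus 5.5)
Fix a realization $u$ and a stationary randomized policy $\pi$. The plan is to compare $\pi$ with $\bar\pi$ through a one-step improvement argument: show that $\bar\pi$'s value is at least the value obtained by one application of $\pi$'s Bellman operator to $V_u^{\bar\pi}$, and then apply the standard monotonicity of the policy-evaluation operator.

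Write $W=V_u^{\bar\pi}$. Since $\bar\pi$ is compliant, the hypothesis gives $|W(s)|\le M$ at every state $s\neq\bot_{\rm r}$, while $W(\bot_{\rm r})=-Z$. Let $T^\pi$ be the policy-evaluation operator of $\pi$ in $M_u$:
\[
(T^\pi V)(s)=\sum_a\pi(s)(a)\Bigl(R(s,a)+\gamma\sum_{s'}u(s,a,s')V(s')\Bigr).
\]
The central claim is that $T^\pi W\le W$ pointwise. Once this holds, monotonicity of $T^\pi$ gives $(T^\pi)^nW\le W$ for all $n$. Because $T^\pi$ is a $\gamma$-contraction, $(T^\pi)^nW\to V_u^\pi$, and hence $V_u^\pi\le W=V_u^{\bar\pi}$.

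To prove the claim, split on the state.

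\emph{At $\bot_{\rm r}$.} Every action has the same reward and the same self-loop, so $(T^\pi W)(\bot_{\rm r})=-Z=W(\bot_{\rm r})$.

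\emph{At a terminal.} Undescribed actions at an absorbing final were assigned the described reward and self-loop rather than being made ruinous. Every action therefore behaves identically there, and equality holds.

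\emph{At any other state $s$.} Split $\pi(s)$ into its nonruinous mass $\lambda$ and ruinous mass $1-\lambda$.

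\begin{itemize}
\item The nonruinous part. Conditioning $\pi(s)$ on nonruinous actions gives exactly $\bar\pi(s)$, and the Bellman equation of $\bar\pi$ states $W(s)=\sum_a\bar\pi(s)(a)(R(s,a)+\gamma\,u(s,a,\cdot)\cdot W)$. So this part contributes exactly $\lambda W(s)$.
\item The ruinous part. Each ruinous action has reward zero and row $\delta_{\bot_{\rm r}}$, so it contributes $(1-\lambda)(-\gamma Z)$.
\item Combining. We get $(T^\pi W)(s)=\lambda W(s)-(1-\lambda)\gamma Z$. Since $-\gamma Z<-(M+1)<-M\le W(s)$, this is at most $W(s)$, with strict inequality when $\lambda<1$.
\item The case $\lambda=0$. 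Then $\bar\pi(s)$ is an arbitrary nonruinous action, and the bound $(T^\pi W)(s)=-\gamma Z\le W(s)$ still holds.
\end{itemize}

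The main subtlety is bookkeeping rather than analysis. We must check that the value bound $M$ applies to the mixed policy $\bar\pi$, including the states where it plays its arbitrary fallback action. This is fine because $\bar\pi$ is compliant by construction, and the hypothesis is stated for every compliant policy. We must also check that no nonterminal state has only ruinous actions, so that $\bar\pi$ is well defined; the described constructions ensure every nonterminal state has at least one described action.

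Finally, note that only $\gamma Z>M$ was used above. The stronger margin $\gamma Z>M+1$ is not needed here; it supplies slack for later gap arguments.
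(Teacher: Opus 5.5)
Your proof is correct and follows the paper's argument: you show $T^\pi V_u^{\bar\pi}\le V_u^{\bar\pi}$ by comparing $\pi$'s ruinous mass, whose action value is $-\gamma Z$, against $\bar\pi$'s value, and conclude by monotonicity and contraction of $T^\pi$. Your version is slightly leaner, because it compares $-\gamma Z$ only with $W(s)=V_u^{\bar\pi}(s)\ge -M$ instead of bounding each nonruinous action value by $-M$ through a one-step-deviation policy; your separate terminal case is unnecessary, since the general split with $\lambda=1$ already covers it, and its claim that all terminal actions behave identically is false in general (e.g.\ at the lifted final tags $x_f$).
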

\begin{proof}
Both policies have value $-Z$ at $\bot_{\rm r}$, so fix any other state $s$.
The action value of a nonruinous choice $(s,a)$ under $V_u^{\bar\pi}$ is the value at $s$ of the
compliant policy that plays $a$ once and follows $\bar\pi$ afterwards, hence at least $-M$.
A ruinous choice has action value $\gamma V(\bot_{\rm r})=-\gamma Z<-M-1$, strictly smaller.
Writing $T_\pi$ for the Bellman operator of $\pi$ at $u$, evaluating both policies at
$V_u^{\bar\pi}$ therefore gives $T_\pi V_u^{\bar\pi}\le T_{\bar\pi}V_u^{\bar\pi}=V_u^{\bar\pi}$,
since $\bar\pi$ redistributes $\pi$'s ruinous mass onto strictly larger action values.
Monotonicity and contraction of $T_\pi$ give $V_u^\pi\le V_u^{\bar\pi}$.
\end{proof}

Under the default rule, $M=V^{\rm bd}$ by the standard bound on discounted values, so the hypothesis
holds.
A construction that fixes its own constant instead, such as $Z_{\rm r}$ in
\Cref{def:two-action-anchoring}, exhibits a bound on its own compliant values and checks the
inequality against that.
\Cref{def:eps-restricted-rmdp} is the exception: it chooses $Z_\varepsilon$ to meet a quantitative
requirement of its own and establishes dominance directly in the proof of
\Cref{lem:policy-restriction}.
Value identities below are stated for compliant policies and hold as upper bounds in general by
\Cref{lem:ruin-dominance}.

We use function-like notation for the four transformations that recur below.
For a 3-CNF formula $\varphi$ on $n$ variables, $\Cmp(\varphi)$ denotes the Boolean comparison RMDP of \Cref{def:boolean-comparison-rmdp}, together with its clause and consistency policies and threshold $2-\tfrac1{2n}$.
For a source RMDP $N$ and reference policy $\pi_0$, $\Lift(N,\pi_0)$ denotes the exact regret lift of \Cref{def:lifted-rmdp}, with its allowed-action family specified by context.
$\Restrict(N,A_{\mathrm{allow}},\varepsilon)$ denotes the policy-restriction transformation of \Cref{def:eps-restricted-rmdp}.
Finally, $\operatorname{Ruin}(N,Z)$ denotes ruinous-sink completion of $N$ with constant $Z$.

All constructions below run in polynomial time, and all the rational constants above have polynomial encoding length.
Ruinous-sink completion adds one state and $O(|S||A|)$ singleton rows, so it preserves polynomial size.
We mention size bounds only where they are not immediate from the construction.

\subsection{Degree-Four Residual Normal Form}
The bounded-domain equivalence of
\citet[Proposition~2.13]{DBLP:journals/mst/SchaeferS24} and the degree-four normal form quoted in their
proof of Lemma~2.8 allow bounded real sentences to use an explicitly represented rational polynomial
of degree at most four over $[0,1]^N$.
Write such a polynomial as
\[
F(w)=\sum_{\nu=1}^{N_{\rm mon}}c_\nu\prod_{j=1}^{d_\nu}w_{\nu,j},
\qquad d_\nu\le4,
\qquad
B=\max\left\{1,\sum_{\nu=1}^{N_{\rm mon}}|c_\nu|\right\}.
\]
Thus $|F|\le B$ on the box, and $B$ has polynomial encoding length.

\begin{definition}[Degree-four residual normal form]\label{def:policy-affine-normal-form}
For every occurrence $w_{\nu,j}$, introduce a copy $z_{\nu,j}\in[0,1]$ and residual
$h^{\rm cp}_{\nu,j}=z_{\nu,j}-w_{\nu,j}$.
Compute each monomial with auxiliary coordinates in $[0,1]$:
\begin{itemize}
\item for $d_\nu=0$, set $p_\nu=1$, and for $d_\nu=1$, set $p_\nu=z_{\nu,1}$;
\item for $d_\nu=2$, use $h_\nu=p_\nu-z_{\nu,1}z_{\nu,2}$;
\item for $d_\nu=3$, use
$t_\nu-z_{\nu,1}z_{\nu,2}$ and $p_\nu-t_\nu z_{\nu,3}$;
\item for $d_\nu=4$, use
$t_{\nu,12}-z_{\nu,1}z_{\nu,2}$,
$t_{\nu,34}-z_{\nu,3}z_{\nu,4}$, and
$p_\nu-t_{\nu,12}t_{\nu,34}$.
\end{itemize}
These copy and product residuals lie in $[-1,1]$, are affine or affine plus one product of distinct
variables, and have a common zero exactly at correct copies and products.
The decoded polynomial is $\widehat F=\sum_\nu c_\nu p_\nu$.
When an explicit output is needed, introduce $\bar o\in[0,1]$, put
$o=B(2\bar o-1)$, and append
$h^{\rm out}=(o-\widehat F)/(2B)$; this residual also lies in $[-1,1]$.
\end{definition}

\begin{lemma}[Degree-four residual error]\label{lem:policy-affine-error}
For any assignment to the inputs and auxiliary coordinates, let $\delta$ be the largest absolute
copy or product residual.
Then
\[
|p_\nu-\textstyle\prod_jw_{\nu,j}|\le7\delta
\quad\text{and}\quad
|\widehat F-F|\le7B\delta.
\]
If the output residual is present and $\delta'$ also includes $|h^{\rm out}|$, then
$|o-F|\le9B\delta'$.
\end{lemma}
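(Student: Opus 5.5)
We prove the bound on $p_\nu$ by induction along the fixed circuit of \Cref{def:policy-affine-normal-form}, then sum over monomials. Throughout, write $\omega_j=w_{\nu,j}$ and $\pi_\nu=\prod_j\omega_j$, and use that every variable lies in $[0,1]$.

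\textbf{Copies.} Each copy satisfies $|z_{\nu,j}-\omega_j|\le\delta$.

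\textbf{Products of two copies.} Telescoping gives
\[
|z_1z_2-\omega_1\omega_2|\le|z_1-\omega_1|\,z_2+\omega_1|z_2-\omega_2|\le2\delta.
\]

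\textbf{Case analysis by degree.}
\begin{itemize}
\item For $d_\nu\le1$ the error is $0$ or $\delta$.
\item For $d_\nu=2$, the product residual adds $\delta$, giving $3\delta$.
\item For $d_\nu=3$, we have $|t_\nu-\omega_1\omega_2|\le3\delta$. Then
\[
|p_\nu-\pi_\nu|\le|p_\nu-t_\nu z_3|+|t_\nu z_3-t_\nu\omega_3|+|t_\nu\omega_3-\omega_1\omega_2\omega_3|\le\delta+\delta+3\delta=5\delta.
\]
\item For $d_\nu=4$, both $t_{\nu,12}$ and $t_{\nu,34}$ are within $3\delta$ of their targets. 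Telescoping the product $t_{\nu,12}t_{\nu,34}$ against $(\omega_1\omega_2)(\omega_3\omega_4)$ costs $3\delta+3\delta$, and the final residual adds $\delta$, for a total of $7\delta$.
\end{itemize}
All telescoping factors are bounded by $1$ because every coordinate is in $[0,1]$. The worst case is therefore $7\delta$.

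\textbf{Bound on $\widehat F$.} Summing over monomials,
\[
|\widehat F-F|\le\sum_\nu|c_\nu|\,7\delta\le7B\delta,
\]
using $\sum_\nu|c_\nu|\le B$.

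\textbf{Output bound.} From $|h^{\rm out}|\le\delta'$ we get $|o-\widehat F|\le2B\delta'$. Since $\delta\le\delta'$, the previous bound gives $|\widehat F-F|\le7B\delta'$. The triangle inequality then yields $|o-F|\le9B\delta'$.

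The only care needed is in the degree-four case. There one must verify that the intermediate values $t$ stay in $[0,1]$, which holds because they are declared auxiliary coordinates in $[0,1]$. This is what keeps the telescoping factors at most $1$, so that errors add rather than multiply.
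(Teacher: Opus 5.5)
Your proof is correct and matches the paper's argument step for step. It uses the same bound $|ab-a'b'|\le|a-a'|+|b-b'|$ for factors in $[0,1]$, the same degree-by-degree accumulation $\delta,3\delta,5\delta,7\delta$, and the same summation and triangle-inequality steps for $\widehat F$ and $o$. The $[0,1]$ range of the auxiliary $t$ coordinates is already needed in the degree-three case, not only in degree four, but your degree-three computation uses it correctly.
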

\begin{proof}
For $a,b,a',b'\in[0,1]$,
$|ab-a'b'|\le |a-a'|+|b-b'|$, because
$ab-a'b'=b(a-a')+a'(b-b')$.
The errors in degrees zero and one are zero and at most $\delta$.
For degree two the error is at most $\delta+\delta+\delta=3\delta$.
For degree three it is at most $\delta+3\delta+\delta=5\delta$, and for degree four it is at most
$\delta+3\delta+3\delta=7\delta$.
Therefore
$|\widehat F-F|\le7\delta\sum_\nu|c_\nu|\le7B\delta$.
With the output residual,
$|o-\widehat F|=2B|h^{\rm out}|\le2B\delta'$, and the triangle inequality gives the last claim.
\end{proof}

\subsection{Encoding primitives}
For an uncertainty vector $u$, policy table $\sigma$, values $v_s$, and action values $q_{s,a}$, define
\[
\operatorname{Bell}(\sigma,u,v,q)=
\bigwedge_s\left(v_s=\sum_a\sigma_{s,a}q_{s,a}\right)
\wedge
\bigwedge_{s,a}\left(q_{s,a}=R(s,a)+\gamma\sum_{s'}u(s,a,s')v_{s'}\right).
\]
Let $\Phi_{\cU}(u)$ be the rational linear description of the uncertainty polytope, including nonnegativity and normalization, and let
\[
\operatorname{Policy}(\sigma)=
\bigwedge_s\left(\sum_a\sigma_{s,a}=1\right)
\wedge\bigwedge_{s,a}\sigma_{s,a}\ge0.
\]

\begin{lemma}[Universal Bellman encoding]\label{lem:universal-bellman-encoding}
For an explicit family $\Pi=\{\pi_1,\ldots,\pi_r\}$ and a universally quantified policy $\tau$, the assertion
\[
V_u^\tau(\sinit)-\max_{i\le r}V_u^{\pi_i}(\sinit)\le t
\quad\text{for every }u\in\cU
\]
has a polynomial-size universal formula over the reals.
The policy $\tau$ may instead be fixed, and prefixing an existential block for $r$ valid policy tables gives an existential-universal formula when $r$ is unary-bounded.
\end{lemma}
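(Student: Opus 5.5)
The plan is to write the formula directly, quantifying universally over the realization, the Bellman solutions for $\tau$ and for every member, and then asserting the implication. Concretely, the universal block consists of $u\in\bR^{S\times A\times S}$, value and action-value vectors $(v^\tau,q^\tau)$ for $\tau$, and $(v^{i},q^{i})$ for each $i\le r$. If $\tau$ is itself universally quantified, the block also contains its table $\sigma^\tau$. The matrix of the formula is
\[
\Bigl(\Phi_\cU(u)\wedge\operatorname{Policy}(\sigma^\tau)\wedge\operatorname{Bell}(\sigma^\tau,u,v^\tau,q^\tau)\wedge\textstyle\bigwedge_{i\le r}\operatorname{Bell}(\pi_i,u,v^{i},q^{i})\Bigr)\rightarrow\textstyle\bigvee_{i\le r}\bigl(v^\tau_{\sinit}-v^{i}_{\sinit}\le t\bigr).
\]
Here the explicit tables $\pi_i$ are substituted as rational constants. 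The disjunction expresses $v^\tau_{\sinit}-\max_i v^i_{\sinit}\le t$ without a max operator, because $a-\max_i b_i\le t$ iff some $i$ has $a-b_i\le t$. Every constraint is a polynomial of degree at most two, and the number of variables and atoms is $O((r+1)|S|^2|A|)$. The formula therefore has polynomial size.

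The key semantic step is that the hypotheses of the implication pin down the values uniquely. I would argue this as follows. Take $u$ satisfying $\Phi_\cU(u)$, so $u$ is a valid transition vector, and take a valid policy table $\sigma$. Substituting $q$ into $v$ turns the conjunction $\operatorname{Bell}(\sigma,u,v,q)$ into the linear system $v=r_\sigma+\gamma P_{\sigma,u}v$. Since $P_{\sigma,u}$ is row-stochastic and $\gamma<1$, the matrix $I-\gamma P_{\sigma,u}$ is invertible. Its unique solution is $V^\sigma_u$, and $q$ is then determined as well. Hence for every $u\in\cU$ and valid $\tau$, the antecedent is satisfiable by exactly one choice of the remaining universal variables. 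For that choice the consequent says precisely $V^\tau_u(\sinit)-\max_iV^{\pi_i}_u(\sinit)\le t$. Assignments violating the antecedent satisfy the implication vacuously. The universal sentence is therefore true iff the assertion holds for every $u$ (and every $\tau$). When $\tau$ is fixed, we simply substitute its rational table and drop $\operatorname{Policy}(\sigma^\tau)$.

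For the existential-universal variant, I would prefix $\exists\sigma^1,\dots,\sigma^r$ and replace each constant $\pi_i$ by $\sigma^i$. The validity constraints $\operatorname{Policy}(\sigma^i)$ go into the matrix as a conjunct outside the universal block, i.e.\ we use $\exists\sigma\,\forall(\dots)\,[\bigwedge_i\operatorname{Policy}(\sigma^i)\wedge(\text{implication})]$. This is allowed because those constraints do not depend on the universal variables. The products $\sigma^i_{s,a}q^i_{s,a}$ remain quadratic. Since $r$ is unary-bounded, the formula stays polynomial in the input.

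The main obstacle I expect is the uniqueness step, since it is what justifies quantifying the Bellman solutions universally rather than existentially. Everything else is syntactic bookkeeping. I would state uniqueness carefully via invertibility of $I-\gamma P$ for substochastic-bounded rows, and note that it needs only the normalization and nonnegativity constraints that $\Phi_\cU$ already contains.
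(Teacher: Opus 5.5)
Your proposal is correct and follows the paper's proof essentially verbatim. You use the same universal implication over $u$, the Bellman variables, and (when quantified) the table of $\tau$; uniqueness of the discounted Bellman solution makes the antecedent pin down the true values, and the simplex constraints of the existentially chosen members stay outside the implication.
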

\begin{proof}
Use the universal implication
\[
\begin{split}
\forall\,\tau,u,v^\tau,q^\tau,(v^i,q^i)_{i=1}^r:\quad
&\Bigl(\operatorname{Policy}(\tau)\wedge\Phi_{\cU}(u)
\wedge\operatorname{Bell}(\tau,u,v^\tau,q^\tau)\wedge\bigwedge_{i=1}^r
\operatorname{Bell}(\pi_i,u,v^i,q^i)\Bigr)%
\Longrightarrow
\bigvee_{i=1}^r\left(v^\tau_{\sinit}-v^i_{\sinit}\le t\right).
\end{split}
\]
Discounting makes each valid Bellman system unique.
Invalid policy, realization, or Bellman assignments falsify the antecedent.
Fixing $\tau$ removes its policy variables, while existentially quantified family members require their simplex constraints outside the universal implication.
Compactness of the policy simplexes and uncertainty polytope ensures that the relevant extrema are attained.
\end{proof}

\section{On Policy and Portfolio Comparison}
\label[appendix]{app:policy-portfolio-comparison}
We collect the comparison problems and constructions used by the regret lower bounds.

\subsection{Comparison Preliminaries}
\label[appendix]{app:policy-comparison}

\begin{problem}[Robust policy comparison]\label{prob:policy-comparison}
Given an RMDP $M$, policies $\pi_1,\pi_2\in\polRand$, and a rational threshold $t$, decide whether
\[
\Delta_{\cU}(\pi_1,\pi_2)
=\sup_{\vec u\in\cU}
\bigl(V_{\vec u}^{\pi_1}(\sinit)-V_{\vec u}^{\pi_2}(\sinit)\bigr)
\le t.
\]
\end{problem}

A choice $(s,a)$ is \emph{used} by $\pi$ if $s$ is reachable with positive probability under some realization and $\pi(s,a)>0$.
It is \emph{shared} by $\pi_1$ and $\pi_2$ if both use it.
The set of shared choices is denoted $\mathrm{Sh}(\pi_1,\pi_2)$.
Call $s$ \emph{absorbing under $\pi$} when every choice used by $\pi$ at $s$ has row $\{\delta_s\}$.
Let $G_\pi$ contain $s\to s'$ when a choice used by $\pi$ at $s$ can reach $s'$ under some realization, except that the self-loop at a state absorbing under $\pi$ is omitted.
This exception matches the convention for RMDP acyclicity, which likewise permits self-loops only at absorbing final states.
Self-loops at other states remain one-edge cycles.
A used choice $(s,a)$ is \emph{cycle-free under $\pi$} if $s$ lies on no directed cycle of $G_\pi$.
We write $\mathrm{CF}(\pi)$ for these choices.
\begin{definition}[Cycle-free on shared choices]\label{def:cyclefree-shared}
The pair $(\pi_1,\pi_2)$ is \emph{cycle-free on shared choices} if $\mathrm{Sh}(\pi_1,\pi_2)\subseteq\mathrm{CF}(\pi_1)\cap\mathrm{CF}(\pi_2)$.
\end{definition}
For $\pi_1,\pi_2$ and $\vec u\in\cU$, write $D_{\pi_1,\pi_2}(\vec u):=V_{\vec u}^{\pi_1}(\sinit)-V_{\vec u}^{\pi_2}(\sinit)$, so that $\Delta_\cU(\pi_1,\pi_2)=\sup_{\vec u\in\cU}D_{\pi_1,\pi_2}(\vec u)$.
For a polytope $P$, write $\operatorname{vert}(P)$ for its set of vertices.
For $(s,a)$-rectangular $\cU$, write
$\operatorname{Vert}(\cU):=\prod_{s,a}\operatorname{vert}(\cU_{s,a})$ for the set of vertex tuples.

\paragraph{Separated choices.}
\label[appendix]{app:comparison-separation}
If $\pi_1$ and $\pi_2$ share no uncertain choice, rectangularity lets nature optimize their used rows independently, so
\[
\Delta_{\cU}(\pi_1,\pi_2)
=\sup_{u_1}V^{\pi_1}_{u_1}(\sinit)
-\inf_{u_2}V^{\pi_2}_{u_2}(\sinit).
\]
This is a direct consequence of the standard optimistic and robust fixed-policy linear programs and is computable in polynomial time \cite{DBLP:journals/mor/Iyengar05,DBLP:journals/corr/abs-2604-26748}.
\subsection{Combinatorial Hardness of Robust Policy Comparison}
\label[appendix]{app:shared-row-boolean}
The reduction makes one local copy of every variable in every clause.
One policy scans the clauses, and the other audits a uniformly selected variable against one valuation.

\begin{theorem}[Shared-choice Boolean hardness]\label{thm:comparison-conp}
Robust policy comparison is $\scoNP$-hard even for deterministic policies in acyclic $(s,a)$-rectangular RMDPs in which every uncertain choice is a two-Dirac segment and the only other stochastic row is a certain uniform splitter.
\end{theorem}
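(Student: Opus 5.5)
We reduce from $\UNSAT$, following the sketch in \Cref{sec:policy-portfolio-comparison}. Fix a 3-CNF formula $\varphi$ with variables $x_1,\dots,x_n$ and clauses $C_1,\dots,C_m$. We may assume every variable occurs, and we treat each clause as containing every variable $x_i$ through a local copy; the paper's phrase ``one local copy of every variable in every clause'' suggests exactly this. For every occurrence $o=(j,i)$ and value $c\in\{0,1\}$ we create a local-bit selector $q_{o,c}$ as in \Cref{def:local-bit-verifier-primitives}, whose row is the two-Dirac segment \eqref{eq:boolean-selector-choice}. These selectors are the only uncertain choices, and they are independent, so the RMDP is $(s,a)$-rectangular. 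A realization is a vertex assignment of bits $b_{o,c}$. We call $o$ \emph{set to $c$} when $b_{o,c}=1$ and $b_{o,1-c}=0$, which is what a pair test certifies.

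We build two deterministic verifier paths from $\sinit$, with the comparison taken as the reference/candidate difference of \Cref{lem:acceptance-difference-verifier}. To make this work, the initial state uses one certain uniform splitter with two branches, so each policy's branch carries a known factor $\tfrac12$ that we absorb into the payoffs. The \emph{clause policy} $\pi_C$ scans clauses in order. In clause $j$, for each literal, a state with one action per literal is visited and $\pi_C$ deterministically tries the literals sequentially: it runs the pair test on the occurrence for that literal; on outcome $(1,0)$ it moves to clause $j+1$, and otherwise it tries the next literal. If no literal passes, it rejects. Because the selectors are two-Dirac, this sequential testing is deterministic given a vertex realization, with no policy randomness needed. $\pi_C$ accepts iff every clause has a locally certified true literal. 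The \emph{consistency policy} $\pi_K$ passes through a certain uniform splitter over $i\in[n]$ and over a value $c\in\{0,1\}$, and then audits all copies of $x_i$ committed to $c$: it visits $q_{(j,i),c}$ and then $q_{(j,i),1-c}$ for all $j$, requiring outcomes $1,0$. It accepts iff all copies are set to $c$. Both paths are padded to a common depth $H$ with zero-reward states, accepting terminals get payoffs $\pm(\text{scale})\gamma^{-H}$, and everything else uses ruinous-sink completion. The graph is then acyclic, and the only stochastic rows are the two-Dirac selectors and the uniform splitters. We merge the two splitters into one uniform splitter over $1+2n$ leaves with suitable weights.

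The analysis restricts to vertex realizations. For these policies no shared selector lies on a cycle, so $D$ is multiaffine in the independent coordinates and its supremum is attained at a vertex. By \Cref{lem:acceptance-difference-verifier}, the difference equals $\Pr[\pi_C\text{ accepts}]+\Pr[\pi_K\text{ accepts}]$, with $\pi_C$ contributing $0$ or $1$ and $\pi_K$ contributing $\tfrac1{2n}\cdot\#\{(i,c):\text{all copies of }x_i\text{ set to }c\}$. Scale so that this second term is $\tfrac1n$ per consistent variable. For each $i$ at most one $c$ can pass, because both would require $b=1$ and $b=0$ on the same selector.

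If $\varphi$ is satisfiable with assignment $\alpha$, set every copy to $\alpha$. Then $\pi_C$ accepts and every variable is consistent, so $\Delta=2$. Conversely, suppose $\Delta>2-\tfrac1{2n}$. Then $\pi_C$ accepts and all $n$ variables are consistent, since one inconsistent variable costs $\tfrac1n$. All copies therefore agree with a global assignment, and the locally true literals certified by $\pi_C$ are then globally true, so $\varphi$ is satisfiable. Hence $\varphi\in\UNSAT$ iff $\Delta_\cU(\pi_C,\pi_K)\le 2-\tfrac1{2n}$, matching the threshold stated for $\Cmp(\varphi)$.

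The main obstacle is the vertex reduction and the correctness of $\pi_C$'s literal search on non-vertex realizations. Fractional bits blend acceptance probabilities, and nature might exploit them. I would handle this by noting that the difference is multiaffine in each selector coordinate: each coordinate appears at most once along any path, because copies are distinct per clause and value. A coordinate-wise maximization then shows that some vertex is optimal. Keeping the literal search deterministic without any coordinate being reused on a single path is the delicate point of the construction. It is ensured by keeping per-clause copies and reading each $q_{o,c}$ at most once per path; the literals within a clause are distinct occurrences, which guarantees this.
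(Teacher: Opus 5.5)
\paragraph{Gap: the consistency policy is not stationary.} Your $\pi_K$ is not a memoryless policy, so the reduction does not produce an instance of robust policy comparison. The audit branches $(i,0)$ and $(i,1)$ both read the two shared selectors $q_{o,0},q_{o,1}$ of every copy $o$ of $x_i$, but they interpret the outcomes in opposite ways. In branch $c$, the outcome state $q_{o,c}^1$ must continue to $q_{o,1-c}$. In branch $1-c$, where $q_{o,c}$ is read second and must show $0$, the same state $q_{o,c}^1$ must reject. Symmetrically, $q_{o,c}^0$ must reject in branch $c$ and advance in branch $1-c$. Each selector has a single action, and its outcome states do not record which splitter branch led there. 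Hence no stationary policy, deterministic or randomized, can prescribe both continuations.

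Dropping the second read does not rescue the design. Your claim that at most one value $c$ passes per variable relies on auditing both bits. If the $c$-audit only checks $b_{o,c}=1$, nature sets every bit to one and all $2n$ audits pass. With your scaling the difference then reaches $2>2-\frac1{2n}$ for every formula. You named vertex extremality as the main obstacle, but your multiaffinity argument handles it routinely. The delicate point is that each policy must prescribe one continuation at every shared outcome state it can reach; the paper checks exactly this in its unambiguous-prescriptions lemma.

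\paragraph{The missing idea.} Let nature, rather than a certain splitter, commit the audited value. In the paper, the consistency branch for $x$ passes through a global two-Dirac selector $q_{X_x}$. The audit committed to $c$ then visits only $q_{o,c}$ for $o\in O_x$, advancing on $1$ and rejecting on $0$. This buys three things. Each local selector lies in exactly one audit chain, so every outcome state has a unique audit continuation. Only the nature-selected audit runs, so each variable carries weight $1/n$ without any exclusivity argument. Complementarity of the two bits is enforced by the clause scan's pair test alone. The clause-local equivalence then replaces your ``set to $c$'' reasoning: a locally true occurrence has $b_{o,1-\operatorname{val}(o)}=0$, so $\Phi_K^{\operatorname{var}(o)}$ forces $X_{\operatorname{var}(o)}=\operatorname{val}(o)$.

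\paragraph{A smaller issue.} A certain splitter at $\sinit$ traversed by both policies sends each policy into the other's branch, and you do not specify its behavior or value there. The paper instead gives $\sinit$ two actions $a_C,a_K$ and places the uniform splitter only behind $a_K$.
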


Fix a 3-CNF formula $\varphi=\bigwedge_{i=1}^m C_i$ over variables $x_1,\ldots,x_n$.
We may remove tautological clauses, repeated literals, and variables with no occurrence; the constant case is decided directly, so assume $n\ge1$.
For a literal occurrence $o$, let $\operatorname{var}(o)$ be its variable, let $\operatorname{val}(o)\in\{0,1\}$ be the value that makes its literal true, and order the occurrences $O_x$ of each variable $x$ by clause and then by position within the clause.

\begin{definition*}[Clause-local encoding]
For every occurrence $o$, introduce two local bits $b_{o,1}$ and $b_{o,0}$, indexed by the values they certify.
The equation $b_{o,c}=1$ says that occurrence $o$ is locally consistent with value $c$.
Occurrence $o$ is \emph{locally true} when $b_{o,\operatorname{val}(o)}=1$ and $b_{o,1-\operatorname{val}(o)}=0$.
Write $\Phi_C$ for the clause condition
\begin{equation}\label{eq:clause-local-satisfaction}
\Phi_C:=\bigwedge_{i=1}^m
\left(
 \bigvee_{o\in C_i}
 \bigl(
 b_{o,\operatorname{val}(o)}
 \wedge
 \lnot b_{o,1-\operatorname{val}(o)}
 \bigr)
\right).
\end{equation}
The local bits of different clauses are disjoint.

For every original variable $x$, introduce a global bit $X_x$.
For each variable write
\begin{equation}\label{eq:clause-local-consistency}
\Phi_K^x:=\bigwedge_{o\in O_x}b_{o,X_x},
\qquad
\Phi_K:=\bigwedge_x\Phi_K^x.
\end{equation}
\end{definition*}

\begin{lemma*}[Clause-local equivalence]
Formula $\varphi$ is satisfiable iff its global and local bits have an assignment satisfying $\Phi_C\wedge\Phi_K$.
\end{lemma*}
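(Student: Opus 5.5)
We prove the two directions separately, each by an explicit translation between a satisfying valuation of $\varphi$ and an assignment to the global and local bits.

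\emph{($\Rightarrow$)} Suppose $\alpha$ is a satisfying valuation of $\varphi$. Set $X_x=\alpha(x)$ for every variable $x$. For every occurrence $o$ of $x$, set
\[
b_{o,\alpha(x)}=1 \quad\text{and}\quad b_{o,1-\alpha(x)}=0.
\]
Then $\Phi_K^x$ holds, since every copy has $b_{o,X_x}=b_{o,\alpha(x)}=1$. For $\Phi_C$, each clause $C_i$ contains a literal occurrence $o$ made true by $\alpha$, which means $\operatorname{val}(o)=\alpha(\operatorname{var}(o))$. Hence $b_{o,\operatorname{val}(o)}=1$ and $b_{o,1-\operatorname{val}(o)}=0$, so $o$ is locally true. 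This assignment is well defined because the local bits of distinct occurrences are disjoint.

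\emph{($\Leftarrow$)} Suppose the global and local bits satisfy $\Phi_C\wedge\Phi_K$. Define the valuation $\alpha(x)=X_x$. Fix a clause $C_i$. By $\Phi_C$ it has an occurrence $o$ with $b_{o,\operatorname{val}(o)}=1$ and $b_{o,1-\operatorname{val}(o)}=0$. Let $x=\operatorname{var}(o)$. By $\Phi_K^x$ we have $b_{o,X_x}=1$.

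The key step is to show $X_x=\operatorname{val}(o)$. If $X_x$ were $1-\operatorname{val}(o)$, then $b_{o,1-\operatorname{val}(o)}=1$, contradicting local truth. So $X_x=\operatorname{val}(o)$, the literal at $o$ is true under $\alpha$, and $C_i$ is satisfied. Since $C_i$ was arbitrary, $\alpha$ satisfies $\varphi$.

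The only subtle point is the role of the two bits per occurrence. Local truth demands that the opposite bit be $0$, and consistency demands that the bit indexed by the global value be $1$. Together these pin the global value to the literal's satisfying value. Nothing else needs care, since the preprocessing (no tautologies or repeated literals) is not used by this equivalence.
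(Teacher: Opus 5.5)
Your proposal is correct and follows essentially the same route as the paper. The forward direction uses the same canonical assignment ($X_x=\alpha(x)$ and $b_{o,c}=1$ exactly when $c=\alpha(\operatorname{var}(o))$). The converse uses the same argument: local truth makes $b_{o,1-\operatorname{val}(o)}=0$ and $\Phi_K$ makes $b_{o,X_{\operatorname{var}(o)}}=1$, so $X_{\operatorname{var}(o)}=\operatorname{val}(o)$.
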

\begin{proof}
If a valuation $v$ satisfies $\varphi$, set $X_x=v(x)$ and set $b_{o,c}=1$ exactly when $c=v(\operatorname{var}(o))$.
Every $\Phi_K^x$ holds.
Every clause has an occurrence $o$ with $\operatorname{val}(o)=v(\operatorname{var}(o))$, so that occurrence has the locally true pattern and $\Phi_C$ holds.

Conversely, suppose $\Phi_C\wedge\Phi_K$ holds and define $v(x):=X_x$.
Each clause has a locally true occurrence $o$ with $b_{o,1-\operatorname{val}(o)}=0$.
Since $\Phi_K^{\operatorname{var}(o)}$ forces $b_{o,X_{\operatorname{var}(o)}}=1$, we have $X_{\operatorname{var}(o)}=\operatorname{val}(o)$.
Thus the literal at $o$ is true under $v$, and every clause is satisfied.
\end{proof}
$\Phi_K$ does not force an occurrence's two bits to be complementary.
The pair test is part of $\Phi_C$.

\begin{example*}[A clause-local encoding]
Consider
\[
\varphi=(x_1\vee\lnot x_2\vee x_3)
       \wedge(\lnot x_1\vee x_2\vee x_3).
\]
Take the valuation: \[ x_1=x_2=1, x_3=0 ,\] so \[ X_{x_1}=X_{x_2}=1\text{ and }X_{x_3}=0.\]
Following the encoding in the proof above, set $b_{o,c}=1$ exactly when $c=X_{\operatorname{var}(o)}$, for every occurrence $o$.
\begin{center}
\small
\begin{tabular}{@{}ccccccc@{}}
\toprule
Clause & literal & $\operatorname{var}(o)$ & $\operatorname{val}(o)$ & $X_{\operatorname{var}(o)}$ & $(b_{o,1},b_{o,0})$ & locally true? \\
\midrule
$C_1$ & $x_1$       & $x_1$ & $1$ & $1$ & $(1,0)$ & yes \\
      & $\lnot x_2$ & $x_2$ & $0$ & $1$ & $(1,0)$ & no  \\
      & $x_3$       & $x_3$ & $1$ & $0$ & $(0,1)$ & no  \\
\addlinespace
$C_2$ & $\lnot x_1$ & $x_1$ & $0$ & $1$ & $(1,0)$ & no  \\
      & $x_2$       & $x_2$ & $1$ & $1$ & $(1,0)$ & yes \\
      & $x_3$       & $x_3$ & $1$ & $0$ & $(0,1)$ & no  \\
\bottomrule
\end{tabular}
\end{center}
A literal is locally true exactly when $b_{o,\operatorname{val}(o)}=1$, i.e., when $\operatorname{val}(o)=X_{\operatorname{var}(o)}$: this holds for $x_1$ in $C_1$ and $x_2$ in $C_2$, matching the fact that these are the literals true under the valuation.
Using the labels $o_1,\ldots,o_6$, the clause and consistency conditions instantiate to
\[
\Phi_C=
\bigl[(b_{o_1,1}\wedge\lnot b_{o_1,0})\vee(b_{o_2,0}\wedge\lnot b_{o_2,1})\vee(b_{o_3,1}\wedge\lnot b_{o_3,0})\bigr]
\wedge
\bigl[(b_{o_4,0}\wedge\lnot b_{o_4,1})\vee(b_{o_5,1}\wedge\lnot b_{o_5,0})\vee(b_{o_6,1}\wedge\lnot b_{o_6,0})\bigr],
\]
\[
\Phi_K^{x_1}=b_{o_1,1}\wedge b_{o_4,1},
\qquad
\Phi_K^{x_2}=b_{o_2,1}\wedge b_{o_5,1},
\qquad
\Phi_K^{x_3}=b_{o_3,0}\wedge b_{o_6,0},
\qquad
\Phi_K=\Phi_K^{x_1}\wedge\Phi_K^{x_2}\wedge\Phi_K^{x_3}.
\]
Substituting the values from the table: in $\Phi_C$'s first bracket, $o_1$ gives $(1\wedge1)=1$ while $o_2,o_3$ give $0$, so the bracket is $1$; in the second, $o_5$ gives $(1\wedge1)=1$ while $o_4,o_6$ give $0$, so that bracket is also $1$; hence $\Phi_C=1$.
Each $\Phi_K^x$ is a conjunction of two matching bits (e.g.\ $\Phi_K^{x_1}=1\wedge1=1$), so $\Phi_K=1$ as well.
\end{example*}

The reduction realizes $\Phi_C$ as a clause scan and each $\Phi_K^x$ as one audit branch in the same RMDP.
Nature selects the global and local bits.
The two paths share every local-bit choice, so they evaluate the same assignment.
\begin{definition}[Boolean comparison RMDP]
\label{def:boolean-comparison-rmdp}
For a 3-CNF formula $\varphi=\bigwedge_{i=1}^m C_i$ over variables $x_1,\ldots,x_n$ with $n\ge1$, no tautological clause, no repeated literal within a clause, and no variable absent from every clause, the Boolean comparison RMDP is $M_\varphi=\tup{S,A,\cU,R,\sinit,\gamma_0}$, with the following components.
\begin{itemize}
\item $S$ is the disjoint union of the following groups.
\begin{itemize}
\item The initial state $\sinit$.
\item Clause-scan states: a control state $f_{i,j}$ for each literal occurrence $o=(i,j)$, together with the terminals $\mathrm{acc}_C$ and $\mathrm{rej}_C$.
\item Audit states: global selectors $q_{X_x}$ with outcomes $q_{X_x}^0,q_{X_x}^1$, controls $k_{x,c,\ell}$ that inspect the $\ell$th occurrence of $x$ after committing to $c\in\{0,1\}$, together with the terminals $\mathrm{acc}_K$ and $\mathrm{rej}_K$.
\item Shared local-bit selectors: for each occurrence $o$ and $c\in\{0,1\}$, the selector of
\Cref{def:local-bit-verifier-primitives}, used by both policies.
\item Padding states $S_{\mathrm{pad}}$, reward-free, fixed once the transitions below are defined.
\item A ruinous sink $\bot_{\rm r}$.
\end{itemize}
\item At $\sinit$, action $a_C$ enters $f_{1,1}$, whereas action $a_K$ has the fixed uniform transition to $q_{X_x}$, $x\in\{x_1,\ldots,x_n\}$.
Every selector has one action.
Outcome $q_{X_x}^c$ leads to $k_{x,c,1}$, each $f_{i,j}$ leads to the first local selector for its occurrence, and each $k_{x,c,\ell}$ leads to the corresponding $q_{o,c}$.
From a shared outcome $q_{o,c}^b$, the clause-scan action and audit action have their respective deterministic continuations, ending in $\mathrm{acc}_C$/$\mathrm{rej}_C$ or $\mathrm{acc}_K$/$\mathrm{rej}_K$ as described below.
All other control states have the indicated unique deterministic continuation.
Fix $H$ at least as long as the longest of these paths.
$S_{\mathrm{pad}}$ consists of fresh states inserted along every shorter path so it also reaches its terminal after exactly $H$ transitions, each with a single reward-free action continuing toward that terminal.
This common depth is necessary because a shared accepting terminal can be reached at
realization-dependent depths, so one terminal payoff cannot otherwise cancel every discount factor.
\item $\cU$ is the product of the two-Dirac segments in
\Cref{eq:boolean-selector-choice}, one for every global bit $X_x$ and local bit $(o,c)$.
All remaining described choices are singletons, including the certain uniform splitter at $\sinit$.
Its vertices are exactly the Boolean realizations $p_z\in\{0,1\}$.
\item $\mathrm{acc}_C$ is the terminal with payoff $\gamma_0^{-H}$ and $\mathrm{acc}_K$ the terminal with payoff $-\gamma_0^{-H}$, while every other described reward, including at $\mathrm{rej}_C,\mathrm{rej}_K$ and at padding states, is zero.
\item All remaining choices use ruinous-sink completion.
\item The initial state is $\sinit$ and the discount is $\gamma_0$.
\end{itemize}
\end{definition}
Fix a realization $u\in\cU$ and a stationary policy $\pi$.
Since $\pi$ prescribes one action at every state and $u$ fixes one outcome at every two-Dirac selector, the only randomness left in $\pi$'s run under $u$ comes from any fixed-probability transitions $\pi$ itself uses, such as the certain uniform splitter.
Because the graph is acyclic and $S$ is finite, this run reaches a terminal with probability one.
Say $\pi$ \emph{accepts} under $u$ with probability equal to the chance of reaching an accepting terminal; if $\pi$ never uses such a transition -- as is the case for $\pi_C$ -- this probability is always $0$ or $1$, and we speak of $\pi$ accepting or rejecting outright.

\paragraph{The clause policy.}
Policy $\pi_C$ processes clauses and their literals in order.
At occurrence $o=(i,j)$, it uses the pair test of
\Cref{def:local-bit-verifier-primitives}, accepting the locally true outcome pair.
A true literal advances to the next clause, a false literal advances to the next literal, and an entirely false clause rejects.
Satisfying the last clause accepts.
Thus $\pi_C$ checks exactly $\Phi_C$.

\paragraph{The consistency policy.}
Policy $\pi_K$ first takes the certain uniform splitter, which selects a variable $x$.
It reads $X_x$ and thereby commits to $c\in\{0,1\}$.
It then uses the audit chain of \Cref{def:local-bit-verifier-primitives} over $O_x$ in clause-major
order.
Conditioned on the certain uniform splitter selecting $x$, $\pi_K$ therefore checks exactly $\Phi_K^x$.

\begin{lemma*}[Unambiguous prescriptions]
Both policies above are stationary deterministic policies.
\end{lemma*}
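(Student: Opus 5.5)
To prove that $\pi_C$ and $\pi_K$ are stationary deterministic, we exhibit, for each state, the single action that each policy plays there. We then check that this assignment depends only on the current state and not on the history. The natural approach is to list the states of \Cref{def:boolean-comparison-rmdp} group by group and define $\pi_C(s)$ and $\pi_K(s)$ explicitly.

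\textbf{Initial state and unique-action states.} At $\sinit$ we set $\pi_C(\sinit)=a_C$ and $\pi_K(\sinit)=a_K$. Selectors, padding states, control states $f_{i,j}$ and $k_{x,c,\ell}$, and terminals each have a unique described action. So any compliant policy is forced there, and the prescription is trivially a function of the state. This covers everything except the shared outcome states $q_{o,c}^b$.

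\textbf{Shared outcome states: the main obstacle.} At $q_{o,c}^b$ the clause scan and the audit need different continuations. A state-based choice is therefore possible only if each policy, on its own, never needs two different continuations at the same outcome state. The key observation is that every outcome state $q_{o,c}^b$ records the occurrence $o=(i,j)$, the inspected value $c$, and the bit $b$. This data determines each policy's next step.

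For $\pi_C$, occurrence $o$ is read only inside its own pair test, first at $c=\operatorname{val}(o)$ and then at $c=1-\operatorname{val}(o)$. From $(o,c,b)$ one therefore knows:
\begin{itemize}
\item which stage of the pair test has been reached;
\item after the first read, whether to continue to $q_{o,1-\operatorname{val}(o)}$ (when $b=1$) or to abandon the literal (when $b=0$);
\item after the second read, whether the literal is locally true or false.
\end{itemize}
Each of these outcomes maps to a fixed next state: the next clause, the next literal, $\mathrm{rej}_C$, or $\mathrm{acc}_C$. One wrinkle is that the first read with $b=0$ and the second read with $b=1$ both mean ``literal false''. They nevertheless sit at different states, because $c$ differs, so no conflict arises.

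For $\pi_K$, an audit committed to $c$ visits $q_{o,c}$ only for $o\in O_{\operatorname{var}(o)}$, at the position of $o$ in the clause-major order. On $b=1$ it moves to $k_{x,c,\ell+1}$ or to $\mathrm{acc}_K$, and on $b=0$ it moves to $\mathrm{rej}_K$. Since $x=\operatorname{var}(o)$ and $\ell$ are determined by $o$, the continuation is a function of the state.

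\textbf{Padding and consistency of the layering.} Finally we check that padding does not break this. Padding states are fresh and private to a single path segment, so each has one action. Moreover, the intended continuation from $q_{o,c}^b$ under each policy names a specific first padding state (or the next control) of that segment. Therefore both tables are well defined, and every action not named is ruinous and never chosen.

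Putting these pieces together, $\pi_C$ and $\pi_K$ are memoryless deterministic policies that realize exactly the clause scan and the audit described above.
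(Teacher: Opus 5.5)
Your argument is correct and is essentially the paper's. The paper likewise notes that the clause scan reaches each occurrence's two selectors once and in a fixed order, and that an audit commits to $c$ at $q_{X_x}^c$ before it visits any $q_{o,c}$; hence every outcome state $q_{o,c}^b$ fixes a single continuation for each role, and the two roles simply use distinct actions there. Your explicit state-by-state table, including the padding segments and the two ``literal false'' outcomes, is a more detailed version of that observation, while the paper additionally completes each policy with a fixed default action at unreachable states.
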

\begin{proof}
The clause scan encounters each $f_{i,j}$ once and reaches the two selectors of an occurrence in the fixed order prescribed above.
An audit branch commits at $q_{X_x}^c$ before reaching any $k_{x,c,\ell}$.
Thus each policy prescribes a single continuation at every outcome state it can reach.
At a shared outcome $q_{o,c}^b$, the two roles use distinct actions, so their continuations need not agree.
Complete each policy at its unreachable states with a fixed default action, using ruinous-sink completion when no role action was described there.
\end{proof}
By \Cref{lem:acceptance-difference-verifier}, their value difference is the sum of their acceptance
probabilities.

The shared fragment for a positive occurrence is shown in \Cref{fig:comparison-boolean-rmdp}.
The audit path shown is the branch that committed to $X_x=1$.

\begin{figure}[t]
\centering
\resizebox{0.98\columnwidth}{!}{%
\begin{tikzpicture}[
  >=Latex,
  state/.style={draw,rounded corners,minimum height=5.2mm,align=center,inner sep=2pt},
  box/.style={draw,rounded corners,minimum height=5.2mm,align=center,inner sep=2pt},
  font=\scriptsize,
  x=1cm,y=1cm
]
\node[box] (split) at (-4.0,-1.45) {certain uniform splitter};
\node[state] (qx) at (-1.80,-1.45) {$q_{X_x}$};
\node[state] (qx1) at (-0.50,-1.45) {$q_{X_x}^1$};
\node[box] (kin) at (0.7,-1.45) {$k_{x,1,\ell}$};
\node[box] (cin) at (-1.0,0.75) {$f_{i,j}$: test $x$};
\node[state] (q1) at (2.35,0) {$q_{o,1}$};
\node[state] (q10) at (4.05,0.8) {$q_{o,1}^0$};
\node[state] (q11) at (4.05,-0.8) {$q_{o,1}^1$};
\node[box] (fail) at (6.2,1.15) {$\pi_C$: next literal\\$\pi_K$: reject};
\node[state] (q0) at (6.0,-0.35) {$q_{o,0}$};
\node[box] (passk) at (6.35,-1.45) {$\pi_K$: next copy\\or accept};
\node[state] (q00) at (7.8,0.15) {$q_{o,0}^0$};
\node[state] (q01) at (7.8,-0.85) {$q_{o,0}^1$};
\node[box] (passc) at (9.8,0.15) {$\pi_C$: next clause\\or accept};
\node[box] (failc) at (9.8,-0.85) {$\pi_C$: next literal\\or reject};

\draw[->] (split) -- node[below] {$1/n$} (qx);
\draw[->] (qx) -- node[below] {$p_{X_x}$} (qx1);
\draw[->] (qx1) -- (kin);
\draw[->] (cin) -- (q1);
\draw[->] (kin) -- (q1);
\draw[->] (q1) -- node[above left] {$1-p_{o,1}$} (q10);
\draw[->] (q1) -- node[below left] {$p_{o,1}$} (q11);
\draw[->] (q10) -- (fail);
\draw[->] (q11) -- node[above] {$\pi_C$} (q0);
\draw[->] (q11) -- node[below] {$\pi_K$} (passk);
\draw[->] (q0) -- node[above left] {$1-p_{o,0}$} (q00);
\draw[->] (q0) -- node[below left] {$p_{o,0}$} (q01);
\draw[->] (q00) -- (passc);
\draw[->] (q01) -- (failc);
\end{tikzpicture}%
}
\caption{The shared fragment for a positive occurrence $o=(i,j)$ and the audit branch $X_x=1$.
The clause scan tests the pair $(b_{o,1},b_{o,0})$, whereas this audit branch needs only $b_{o,1}$.
The branch $X_x=0$ analogously audits $q_{o,0}$.
Selector arrows are uncertain; arrows from outcome states are the certain, role-specific actions of the two policies.}
\label{fig:comparison-boolean-rmdp}
\end{figure}
\begin{lemma*}[Policy semantics]
At a vertex $\vec u$ of the uncertainty polytope, $\pi_C$ accepts under $\vec u$ exactly when $\Phi_C$ holds under $\vec u$.
Conditioned on the certain uniform splitter selecting $x$, $\pi_K$ accepts under $\vec u$ exactly when $\Phi_K^x$ holds under $\vec u$; unconditionally, this makes $\pi_K$'s probability of accepting under $\vec u$ equal to $\frac1n|\{x:\Phi_K^x\text{ holds under }\vec u\}|$, the fraction of variables whose audit would pass.
Moreover,
\[
D_{\pi_C,\pi_K}(\vec u)
=\mathbf{1}_{\{\pi_C\text{ accepts under }\vec u\}}
 +\Pr[\pi_K\text{ accepts under }\vec u].
\]
\end{lemma*}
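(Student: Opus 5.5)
The plan is to fix a vertex $\vec u$, so that every selector $q_{X_x}$ and $q_{o,c}$ has a Dirac row with bit $p\in\{0,1\}$. Each run of $\pi_C$ and $\pi_K$ is then determined except for the certain uniform splitter at $\sinit$, which only $\pi_K$ uses. I will trace both policies through the graph of \Cref{def:boolean-comparison-rmdp} and then invoke \Cref{lem:acceptance-difference-verifier}.

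\emph{Clause policy.} I argue by induction over the clause-scan order. When $\pi_C$ is at $f_{i,j}$ for occurrence $o$, it runs the pair test. It reads $q_{o,\operatorname{val}(o)}$ first. On outcome $0$ it moves to the next literal. On outcome $1$ it reads $q_{o,1-\operatorname{val}(o)}$, and outcome $0$ there advances to the next clause or accepts, while outcome $1$ moves to the next literal or rejects. So $\pi_C$ leaves clause $C_i$ toward $C_{i+1}$ exactly when some $o\in C_i$ has $b_{o,\operatorname{val}(o)}=1$ and $b_{o,1-\operatorname{val}(o)}=0$, i.e.\ the disjunct of \eqref{eq:clause-local-satisfaction} holds. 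If no literal of $C_i$ passes, it rejects. Hence $\pi_C$ reaches $\mathrm{acc}_C$ iff $\Phi_C$ holds under $\vec u$, and otherwise reaches $\mathrm{rej}_C$. The only point needing care is the orientation in \Cref{fig:comparison-boolean-rmdp}, where positive and negative occurrences swap which selector is read first; this matches the definition of the pair test.

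\emph{Consistency policy.} Condition on the splitter choosing $x$, which has probability $1/n$. At $q_{X_x}$ the Dirac outcome fixes $c=X_x$, and $\pi_K$ moves to $k_{x,c,1}$. It then visits $q_{o,c}$ for $o\in O_x$ in order, advancing on outcome $1$ and rejecting on outcome $0$. Thus it reaches $\mathrm{acc}_K$ iff $b_{o,X_x}=1$ for all $o\in O_x$, which is exactly $\Phi_K^x$ from \eqref{eq:clause-local-consistency}. Summing over the uniform splitter gives the acceptance probability $\frac1n|\{x:\Phi_K^x\}|$.

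\emph{Value difference.} All described paths are padded to depth exactly $H$, and the accepting terminals carry payoffs $\gamma_0^{-H}$ and $-\gamma_0^{-H}$, with all other rewards zero. Both policies are compliant, so no ruinous sink is visited. \Cref{lem:acceptance-difference-verifier} then gives
\[
V^{\pi_C}_{\vec u}-V^{\pi_K}_{\vec u}=\Pr[\pi_C\text{ accepts}]+\Pr[\pi_K\text{ accepts}],
\]
and the first term is the indicator just established. The one step I expect to need real care is checking that padding makes every accepting run have length exactly $H$ even though the depths depend on the realization. This holds by construction, since $H$ bounds every path and padding is inserted per path, so the discount cancels uniformly.
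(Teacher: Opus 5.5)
Your proposal is correct and follows essentially the same route as the paper: fix a Boolean vertex, trace the pair-test clause scan to obtain $\Phi_C$ and the committed audit chain to obtain $\Phi_K^x$, average over the certain uniform splitter, and use the common depth $H$ via \Cref{lem:acceptance-difference-verifier} to turn acceptance into contributions $+1$ and $-1$. You spell out the outcome-by-outcome transitions that the paper's short proof leaves implicit. One small nit: the policies are not literally compliant, since the paper completes them with possibly ruinous defaults at unreachable states, but what your argument needs is only that no reachable run enters $\bot_{\rm r}$, and that holds.
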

\begin{proof}
At a vertex, each transition in \eqref{eq:boolean-selector-choice} selects one Boolean value.
The clause path implements \eqref{eq:clause-local-satisfaction}, and the audit branch for $x$ implements its conjunct in \eqref{eq:clause-local-consistency}.
Padding places every final state at depth $H$, so the accepting rewards contribute $1$ and $-1$ to the two policy values.
The certain uniform splitter averages the audit contribution over the $n$ variables, and rejecting paths contribute zero.
\end{proof}

\paragraph{Example.}
For the satisfying assignment in the encoding table, $\pi_C$ confirms $x_1$ in $C_1$ and $x_2$ in $C_2$, and every variable audit passes, giving difference $2$.
Now take the unsatisfying valuation $x_1=0,x_2=1,x_3=0$ and its canonical local pairs.
The clause scan rejects $C_1$, although every audit passes.
If only the copy of $x_1$ in $C_1$ is changed from $(0,1)$ to $(1,0)$, the clause scan accepts, but the $x_1$ audit, committed to $0$, reads $b_{o,0}=0$ and rejects.
The malformed pairs $(0,0)$ and $(1,1)$ cannot make a literal locally true because the clause scan tests both bits.

\begin{lemma}\label{lem:common-label-monolithic}
There is a vertex under which $\pi_C$ accepts and every variable audit passes if and only if $\varphi$ is satisfiable.
\end{lemma}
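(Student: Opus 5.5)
The lemma follows by chaining the \emph{Policy semantics} lemma with the clause-local equivalence lemma. The work is to identify vertices of $\cU$ with assignments to the global and local bits. Since $\cU$ is the product of two-Dirac segments, one per global bit $X_x$ and per local bit $(o,c)$, its vertices are exactly the Boolean vectors $(p_z)_z\in\{0,1\}^{\cdot}$. This gives a bijection between vertices and assignments to all bits $X_x$ and $b_{o,c}$, with $X_x=p_{X_x}$ and $b_{o,c}=p_{o,c}$.

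For the forward direction, suppose $\vec u$ is a vertex under which $\pi_C$ accepts and every variable audit passes. By the \emph{Policy semantics} lemma, acceptance of $\pi_C$ at a vertex means $\Phi_C$ holds under the corresponding assignment. The audit branch for $x$ (conditioned on the splitter choosing $x$) accepts exactly when $\Phi_K^x$ holds. Hence every audit passing gives $\Phi_K=\bigwedge_x\Phi_K^x$. So the assignment satisfies $\Phi_C\wedge\Phi_K$, and the clause-local equivalence lemma yields that $\varphi$ is satisfiable.

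For the converse, take a satisfying valuation $v$. The clause-local equivalence lemma gives an assignment of global and local bits satisfying $\Phi_C\wedge\Phi_K$; explicitly, $X_x=v(x)$ and $b_{o,c}=\mathbf 1[c=v(\operatorname{var}(o))]$. Let $\vec u$ be the vertex with $p_{X_x}=X_x$ and $p_{o,c}=b_{o,c}$. This is a legitimate vertex because the selectors are independent rectangular factors, so any Boolean combination is realizable. Reading the \emph{Policy semantics} lemma backwards, $\pi_C$ accepts and each audit branch accepts.

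The only point needing care is that the \emph{Policy semantics} lemma is stated for a single vertex, and that the audit for $x$ reads the same selector copies $q_{o,c}$ that the clause scan reads. Because both policies share these selectors, the two conditions are evaluated on one common assignment rather than on separately chosen ones. This sharing is what makes the single-vertex formulation match the conjunction $\Phi_C\wedge\Phi_K$. I expect no real obstacle beyond stating this identification explicitly. I would also note that $\pi_K$'s randomness comes only from the certain splitter, so ``every variable audit passes'' is equivalent to $\pi_K$ accepting with probability one at $\vec u$.
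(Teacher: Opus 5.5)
Your proposal is correct and takes the same approach as the paper: its proof is exactly the chain of the clause-local equivalence with the policy-semantics lemma. Your identification of vertices with Boolean bit assignments, and your use of the canonical vertex $X_x=v(x)$, $b_{o,c}=\mathbf 1[c=v(\operatorname{var}(o))]$ for the converse, spell out what the paper leaves implicit.
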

\begin{proof}
This is the clause-local equivalence, together with the policy semantics established above.
\end{proof}
The uncertain choices are independent line segments between two Dirac distributions.
Hence the RMDP is $(s,a)$-rectangular and every uncertain choice is two-Dirac.
The only other stochastic row is the certain uniform splitter.
After deleting terminal self-loops, the full transition graph is a DAG:
after the initial state, order the global-selector blocks first, then the occurrence blocks in clause-major order, placing within each occurrence the selector for $\operatorname{val}(o)$ before the other selector, and finally the padding and terminal states.
Every control and outcome state can be placed immediately before or after its associated selector.
In particular, no run under either policy uses the same uncertain choice twice, and every terminal that both policies reach is absorbing under each of them, so every shared choice is cycle-free under both policies in the sense of \Cref{def:cyclefree-shared}.
By \Cref{lem:cyclefree-vertex}, $\pi_C$ and $\pi_K$'s value difference attains its maximum at a tuple of choice vertices.
By this and the two lemmas above,
\[
\begin{aligned}
\varphi\in\mathrm{SAT}
&\Longrightarrow \Delta_{\cU}(\pi_C,\pi_K)=2,\\
\varphi\in\mathrm{UNSAT}
&\Longrightarrow \Delta_{\cU}(\pi_C,\pi_K)\le 2-\frac1n.
\end{aligned}
\]
In the first case, nature chooses the canonical vertex of a satisfying valuation.
In the second, a vertex satisfying $\Phi_C$ has at least one failing variable audit by \Cref{lem:common-label-monolithic}, and therefore has difference at most $1+(n-1)/n$; a vertex violating $\Phi_C$ has difference at most $1$.
Vertex sufficiency shows that an interior realization cannot do better.

\begin{proof}[Proof of \Cref{thm:comparison-conp}]
Use threshold $2-\frac1{2n}$.
The construction satisfies
\[
\varphi\in\UNSAT
\iff
\Delta_{\cU}(\pi_C,\pi_K)\le 2-\frac1{2n}.
\]
Indeed,
\[
2-\frac1n < 2-\frac1{2n} < 2.
\]
The construction has polynomial size, $\gamma_0^{-H}=(400/361)^H$ has $O(H)$ bits, and all uncertain choices are independent two-Dirac choices.
This is a polynomial reduction from UNSAT.
\end{proof}

\subsection{Vertex-Extremal Membership}
\label[appendix]{app:comparison-membership}
\begin{theorem}[Vertex-extremal membership]\label{thm:comparison-conp-membership}
Robust policy comparison is in $\scoNP$ for $(s,a)$-rectangular polytopes when both policies are cycle-free on shared choices (\Cref{def:cyclefree-shared}).
\end{theorem}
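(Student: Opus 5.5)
The plan is to show that the complement, $\Delta_\cU(\pi_1,\pi_2)>t$, is in $\sNP$, with a witness consisting of one vertex per uncertain choice. The argument uses \Cref{lem:cyclefree-vertex}, already invoked in the proof of \Cref{thm:comparison-conp}. That lemma says that when $(\pi_1,\pi_2)$ is cycle-free on shared choices in an $(s,a)$-rectangular RMDP, the supremum of $D_{\pi_1,\pi_2}$ over $\cU$ is attained at a tuple in $\operatorname{Vert}(\cU)$. If I had to reprove it, I would argue coordinatewise. Fix all rows except that of one shared choice $(s,a)$. Since $s$ lies on no cycle of $G_{\pi_1}$ or $G_{\pi_2}$, each policy visits $s$ at most once along any run, and its value is affine in the row $\vec u(s,a,\cdot)$: it equals the probability of reaching $s$ times the expected action value of $(s,a)$ under that row, and neither factor depends on the row. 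Hence $D$ is affine in that row, and a vertex of $\cU_{(s,a)}$ is optimal.

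Non-shared choices are handled by splitting. A choice used by only one policy can be set independently by rectangularity. For that policy, nature maximizes $V^{\pi_1}$ or minimizes $V^{\pi_2}$ over those rows. This is a standard (optimistic or robust) MDP problem whose optimum is attained at a vertex tuple, because the optimizing nature can be taken deterministic over vertices. Iterating over all choices, starting from a maximizer (which exists by compactness and continuity of $D$, since $\gamma<1$), moves it to a vertex tuple without decreasing $D$.

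Given this, the $\sNP$ certificate for $\Delta_\cU>t$ is a vertex $\vec v_{(s,a)}$ of each used $\cU_{(s,a)}$. Each vertex is the unique solution of a linear subsystem of $\vec F\vec u\le\vec g$ restricted to that row. By Cramer's rule it has polynomial bit length, and it can be verified in polynomial time by checking feasibility and the rank of the tight constraints. The verifier then solves the two linear Bellman systems for $\pi_1$ and $\pi_2$ at this rational realization, using exact Gaussian elimination over $\bQ$, and checks $V^{\pi_1}(\sinit)-V^{\pi_2}(\sinit)>t$. Soundness is immediate. Completeness follows from vertex sufficiency: if $\Delta_\cU>t$, some vertex tuple attains the supremum.

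The main obstacle is the affinity step for shared choices. Cycle-freeness under each policy separately must guarantee that the reach probability of $s$ and the continuation values after $(s,a)$ do not depend on the row of $(s,a)$ itself. This uses the facts that $s$ is not on a cycle of $G_{\pi_i}$, so no path from a successor returns to $s$, and that $s$ is reached at most once. I would also check the edge case of self-loops at absorbing states, which $G_\pi$ deliberately omits. For those states the row is the fixed Dirac self-loop, so they contribute no uncertainty.
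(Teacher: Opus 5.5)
Your proposal is correct and follows the paper's proof: vertex-extremality via \Cref{lem:cyclefree-vertex}, using the same affinity argument for shared cycle-free choices, followed by an $\sNP$ certificate for the complement that consists of a polynomial-size vertex tuple checked by solving the two rational Bellman systems. The one small difference concerns non-shared choices: the paper argues block by block that the value is linear-fractional in a single row (via Cramer's rule), whereas you group those rows per policy and invoke vertex attainment for optimistic and robust fixed-policy evaluation, and both arguments are valid.
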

\begin{definition}[Vertex-extremal pair]\label{def:vertexextremal-pair}
The pair $(\pi_1,\pi_2)$ is \emph{vertex-extremal} if
\[
\sup_{\vec u\in\cU}D_{\pi_1,\pi_2}(\vec u)=\max_{\vec v\in\operatorname{Vert}(\cU)}D_{\pi_1,\pi_2}(\vec v).
\]
\end{definition}

\begin{lemma}[Cycle-freeness on shared choices implies vertex-extremality]\label{lem:cyclefree-vertex}
If $(\pi_1,\pi_2)$ is cycle-free on shared choices (\Cref{def:cyclefree-shared}), then it is vertex-extremal (\Cref{def:vertexextremal-pair}).
\end{lemma}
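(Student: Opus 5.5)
The plan is to show that, for each fixed choice, $D_{\pi_1,\pi_2}$ restricted to that choice's row (all other rows fixed) is a convex function of the row. A convex function on a polytope attains its maximum at a vertex. Since $\cU$ is $(s,a)$-rectangular, we can then move rows to vertices one choice at a time without lowering $D$. This ends at a point of $\operatorname{Vert}(\cU)$ whose value is at least $D(\vec u)$. The supremum over the compact set $\cU$ is attained, since $D$ is continuous by discounting. Hence the supremum equals the maximum over vertex tuples, and the reverse inequality is trivial.

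Fix a choice $(s,a)$ and all other rows, and let $r$ range over the row polytope $\cU_{s,a}$. There are three cases.
\begin{itemize}
\item \emph{The choice is used by neither policy, or by exactly one.} Then $D$ is the value of one policy (possibly negated) plus a constant. For a fixed stationary policy, the value at $\sinit$ as a function of a single row is a linear-fractional function $(\alpha+\beta^\top r)/(1-\gamma\pi(s,a)\,\theta^\top r)$, with a positive denominator. Such a function is quasi-linear, hence both quasiconvex and quasiconcave, so its maximum over a polytope is attained at a vertex. This is enough for the coordinate-wise vertex-pushing step, because we only need a vertex maximizer in each single-row subproblem, not convexity of $D$.
\item \emph{The choice is shared.} Then $(s,a)\in\mathrm{CF}(\pi_1)\cap\mathrm{CF}(\pi_2)$. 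I will argue that $V^{\pi_i}_{\vec u}(\sinit)$ is affine in $r$. Since $s$ lies on no cycle of $G_{\pi_i}$, no state reachable from $s$ can return to $s$. So $V^{\pi_i}(s')$ for the successors $s'$ of $s$ does not depend on $r$, and $V^{\pi_i}(s)=\sum_a\pi_i(s,a)\bigl(R(s,a)+\gamma\sum_{s'}r(s')V(s')\bigr)$ is affine in $r$. Next, $V^{\pi_i}(\sinit)=c_i+w_i V^{\pi_i}(s)$, where $w_i$ is the expected discounted weight of reaching $s$. This weight is independent of $r$, because every path to $s$ first reaches $s$ before using the row at $s$, and a first visit is the only visit. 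Hence $D$ is affine in $r$, and the maximum is at a vertex.
\end{itemize}

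The main obstacle is the middle case. A row used by only one policy may lie on a cycle of that policy, as in \Cref{ex:comparison-interior}, so $D$ can be a genuine ratio. It must be checked that quasi-linearity (the monotonicity of a linear-fractional function along segments) survives the additive constant coming from the other policy. It does, because adding a constant preserves quasi-linearity.

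I would also verify the absorbing-state exception: a self-loop at an absorbing state under $\pi$ makes its value a fixed constant, so it does not break affineness.

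Finally, the iteration over choices terminates after finitely many steps, one per choice. Each step keeps the rows already placed at vertices fixed, and each step does not decrease $D$, which yields the claimed vertex tuple.
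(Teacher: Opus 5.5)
Your proposal is correct and follows essentially the same route as the paper's proof. Both push one rectangular block at a time to a vertex: a choice used by at most one policy gives a linear-fractional, hence quasi-linear, function of its row, and a shared choice that is cycle-free under both policies gives an affine function via the first-passage decomposition. Two small cleanups: drop the opening claim of convexity, since what you actually use is quasi-linearity, as you yourself note later. And the absorbing exception is trivial for a more direct reason than the one you give: a shared choice at a state absorbing under either policy is used by that policy, so its row set is the singleton $\{\delta_s\}$ and the block does not vary at all.
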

The main-body example \Cref{ex:comparison-interior} shows why the hypothesis is necessary: one shared choice on a policy cycle can create an irrational interior maximum, so endpoint evaluation is no longer sound.
We use two standard facts.
A separately affine function on a product of polytopes is extremized at a tuple of vertices by optimizing one block at a time.
A finite linear-fractional function on a polytope is extremized at a vertex because its value on a segment lies between its endpoint values.
\begin{proof}[Proof of \Cref{lem:cyclefree-vertex}]
Fix a single block $\cU_{s_0,a_0}$, hold every other block fixed, and consider two cases according to whether $(s_0,a_0)\in\mathrm{Sh}(\pi_1,\pi_2)$.
Then iterate over the blocks.
\begin{itemize}
    \item \emph{$(s_0,a_0)\notin\mathrm{Sh}(\pi_1,\pi_2)$}, say only $\pi_1$ uses it.
    Then $V_{\vec u}^{\pi_2}(\sinit)$ is constant in this block, and $D_{\pi_1,\pi_2}$ differs from $V_{\vec u}^{\pi_1}(\sinit)$ by a constant.
    By Cramer's rule applied to the one equation of $\pi_1$'s Bellman system that this block enters,
    $V_{\vec u}^{\pi_1}$ is linear-fractional in this block, a ratio of two functions each affine in
    it regardless of cycles elsewhere in the graph, so the standard fact above makes it, and hence
    $D_{\pi_1,\pi_2}$, extremal at a vertex.
\item \emph{$(s_0,a_0)\in\mathrm{Sh}(\pi_1,\pi_2)$} (hence, by hypothesis, cycle-free under both $\pi_1$ and $\pi_2$).
If $s_0$ is absorbing under one of the two policies, then that policy uses $(s_0,a_0)$ and the choice has row $\{\delta_{s_0}\}$, so the block is the singleton $\{\delta_{s_0}\}$ and $D_{\pi_1,\pi_2}$ is constant in it.
Otherwise no self-loop was omitted at $s_0$, so cycle-freeness is the literal graph condition.
Under either policy, a run then uses the choice $(s_0,a_0)$ at most once.
The probability of reaching $s_0$ is independent of its outgoing row, and the continuation value after leaving $s_0$ cannot depend on that row because the run never returns.
Conditioning on reaching and selecting $(s_0,a_0)$ therefore makes each policy value affine in the free block.
Their difference is affine as well and is extremized at a vertex.
\end{itemize}
Every block falls under one of the two cases, so $D_{\pi_1,\pi_2}$ is extremal at a vertex in every block, giving vertex-tuple extremality.
\end{proof}
\begin{proof}[Proof of \Cref{thm:comparison-conp-membership}]
By \Cref{lem:cyclefree-vertex}, the pair is vertex-extremal.
For the strict complement, guess a vertex tuple $\vec v$ with $D_{\pi_1,\pi_2}(\vec v)>t$.
Such a tuple has polynomial encoding length because each component vertex solves a full-rank
subsystem of tight rational constraints.
The verifier checks $\vec v\in\cU$, solves the two rational Bellman systems, and compares their
initial values in polynomial time.
Thus the complement is in $\sNP$.
\end{proof}
\subsection{Shared Cycles and Square-Root-Sum Hardness}
\label[appendix]{app:shared-cycle-sqrs}
\begin{theorem}[Shared-cycle algebraic hardness]\label{thm:comparison-cosqrs}
Robust policy comparison is $\scoSRS$-hard for deterministic policies on $(s,a)$-rectangular RMDPs in which every uncertain choice is two-Dirac and the only other stochastic rows are certain uniform splitters.
\end{theorem}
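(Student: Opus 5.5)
The plan is to reduce from $\scoSRS$ by gluing together one copy of a gadget like \Cref{ex:comparison-interior} for each radicand $a_i$. A certain uniform splitter at $\sinit$ sends the run to $m$ disjoint gadgets. Gadget $i$ has its own two-Dirac selector with parameter $x_i\in[0,1]$. Rectangularity makes these blocks independent and each gadget touches only its own block, so
\[
\Delta_\cU(\pi_1,\pi_2)=\frac1m\sum_i\gamma_0\max_{x_i\in[0,1]}g_i(x_i),
\]
where $g_i$ is the local value difference. Both policies take the selector's single action. They differ only at an outcome state $c_i$: $\pi_1$ returns to the gadget root, so the shared choice lies on a $\pi_1$-cycle, while $\pi_2$ moves to a zero sink. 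The other outcome $g_i$ is a terminal with rational payoff.

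I would generalize the example so that the return probability is a free rational $\theta_i$. On the return path, insert a certain uniform splitter with $N$ successors, $M_i$ of which lead back to the root and the rest to $\bot$. Pad all return paths to a common depth $d$, so that $\theta_i=\gamma_0^{d}M_i/N$. With the terminal payoff normalized, a geometric-series computation as in the example gives
\[
g_i(x)=\lambda_i\frac{x(1-x)}{1-\theta_i x}
\]
for a rational $\lambda_i>0$ that I am free to choose through the payoff. Calculus gives the maximiser $x^*=(1-\sqrt{1-\theta_i})/\theta_i$, which lies in the interior of $[0,1]$, and the maximum value $\lambda_i(1-\sqrt{1-\theta_i})^2/\theta_i^2$. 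I then choose $\lambda_i=\theta_i^2\mu$ with a common rational $\mu>0$. Writing $s_i=\sqrt{1-\theta_i}$, the local maximum becomes $\mu(1-2s_i+s_i^2)$. Since $s_i^2=1-\theta_i$ is rational, this is a rational constant minus $2\mu s_i$.

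To make $s_i=\sqrt{a_i}/K$ for a common integer $K$, I need $\theta_i=1-a_i/K^2$. I pick $K$ with $K^2\ge a_i/(1-\gamma_0^{d})$ for every $i$, which has polynomially many bits. Then $M_i/N=(1-a_i/K^2)/\gamma_0^{d}$ lies in $[0,1]$ and is rational, and taking $N$ as a common denominator realizes it with a single uniform splitter per gadget. Summing the gadgets gives
\[
\Delta_\cU(\pi_1,\pi_2)=C-\frac{2\mu\gamma_0}{mK}\sum_i\sqrt{a_i}
\]
with $C$ rational and computable. Hence $\Delta\le t$ with $t=C-2\mu\gamma_0k/(mK)$ holds iff $\sum_i\sqrt{a_i}\ge k$. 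For strictness, the case of equality is decidable in polynomial time (integrality test, as noted in the preliminaries), so in that case I output a fixed trivial instance. Otherwise $\sum_i\sqrt{a_i}>k$ holds iff $\Delta\le t$. The remaining choices use ruinous-sink completion. The uncertain choices are exactly the $m$ two-Dirac selectors, the only other stochastic rows are certain uniform splitters, and both policies are deterministic, as the theorem requires.

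The main obstacle is the exact bookkeeping of discount factors. The return path has realization-independent depth, but the terminal payoff must cancel $\gamma_0$ powers exactly, and the number of loop iterations varies. I would handle this by keeping every non-loop path padded to a fixed depth and folding all $\gamma_0$ powers into the rational $\theta_i$ and $\lambda_i$. Then I must verify that the closed form for $g_i$ really has the stated linear-fractional shape with no extra additive term, by solving the one-state Bellman equation as in the example. I also need to check that the maximiser lies in the interior of $[0,1]$, so the supremum is attained there, and that all constants have polynomial size: $\gamma_0^{-d}$ and $K^2$ have $O(d)$ and $O(\log\max_i a_i)$ bits respectively.
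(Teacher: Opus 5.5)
Your local computation is correct. With effective return factor $\theta_i$, your gadget's difference is $\lambda_i\,x(1-x)/(1-\theta_i x)$, its maximum on $[0,1]$ is $\lambda_i/(1+\sqrt{1-\theta_i})^2$, and the splitter at $\sinit$ adds the independent local maxima. The construction fails where you tune $\theta_i$, for two reasons.

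\emph{The inequality on $K$ is reversed.} The return path is discounted, so $\theta_i\le\gamma_0^d$. The requirement $M_i/N\in[0,1]$ therefore reads $a_i\le K^2\le a_i/(1-\gamma_0^d)$. Your choice $K^2\ge a_i/(1-\gamma_0^d)$ gives $M_i/N\ge 1$. A single common $K$ meeting the correct two-sided bound exists only if $\max_i a_i\le\min_i a_i/(1-\gamma_0^d)$. That is a bounded-ratio condition (about $3.8$ when $d=3$, and worse after padding), which general $\scoSRS$ instances violate. This part is repairable: use a per-gadget $K_i$ and absorb it into the payoff, $\lambda_i=\mu K_i\theta_i^2$. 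The local maximum is then $\mu(K_i+a_i/K_i)-2\mu\sqrt{a_i}$.

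\emph{The splitter is exponentially large.} This problem cannot be fixed within your design. A certain uniform splitter with $N$ successors has $N$ distinct successor states, so $N$ must be polynomial in \emph{value}, not merely in bit length as your size accounting assumes. But $\sqrt{1-\theta_i}=\sqrt{a_i}/K$ is irrational. For squarefree $a_i$ this forces the reduced denominator $v$ of $\theta_i$ to satisfy $v\ge\sqrt{a_i}$: write $1-\theta_i=u/v$ in lowest terms; then $uv=a_iw^2$ for a positive integer $w$, and $u\le v$, so $v^2\ge a_i$. Since $\theta_i=361^dM_i/(400^dN)$, $v$ divides $400^dN$. Hence for your constant-depth return path $N\ge\sqrt{a_i}/400^d$, which is exponential in the binary encoding of $a_i$. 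Replacing the splitter by a certain non-uniform rational row would fix the size, but it would violate the theorem's restriction that the only other stochastic rows are uniform splitters.

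The missing idea is that instance-dependent constants must enter through rewards, which are binary-encoded, not through fixed transition probabilities. The paper does exactly this. Both policies traverse the same two-Dirac row at $s_2$, on cycles of length three and two. The local difference is therefore $U/(1-\gamma_0^3p)-V/(1-\gamma_0^2p)$, with $U,V$ set by the rewards $r_u,c_v$ as functions of $b$ and $m=\lceil\sqrt b\rceil$. The interior maximum is $C_b-\sqrt b$ (\Cref{lem:normalized-root}), and the per-copy normalization by $m$ is what disposes of your common-$K$ issue.

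Your one-loop gadget can be repaired in the same spirit. Keep a Dirac return, so $\theta=\gamma_0^2$ is fixed, and put a reward $\rho$ on $\pi_1$'s return action. The local difference becomes $\gamma_0\bigl(\rho x+\theta Gx(1-x)\bigr)/(1-\theta x)$. Its derivative has the sign of $(1-\theta x)^2-\sigma^2$, where $\sigma^2=1-\theta-\rho/G$. Its maximum is $\gamma_0G(1-\sigma)^2/\theta$ whenever $1-\theta\le\sigma\le 1$. Choose the rational ratio $\rho/G$ so that $\sigma=\sqrt{a_i}/K_i$ with $K_i=\lceil\sqrt{a_i}\rceil$, and take $G$ proportional to $K_i$. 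This gives a rational constant minus a common multiple of $\sqrt{a_i}$, and your summation, threshold, and equality-case handling then go through.
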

We construct a gadget with a shared choice on a cycle, which creates one normalized square-root term.
An exact derivative calculation locates its interior maximum, and a certain splitter combines
independently chosen local maxima into the target sum of square roots.
The delicate point is sign control at the critical point.
All choices stay $(s,a)$-rectangular and use the appendix-wide discount $\gamma_0$ required by the exact lift.

Fix
\[
b\ge2,
\qquad m=\lceil\sqrt b\rceil.
\]
We round $\sqrt b$ up to the nearest integer $m$, so that $b\le m^2$ (needed below), while $m$ stays close enough to $\sqrt b$ for the final identity to come out exactly right.
The local gadget in \Cref{fig:shared-cycle-root-gadget} has entry state $s$, intermediate states
$s_1,s_2$, and a zero-reward sink.
Policy $\pi_u$ chooses $u$ at $s$ and moves to $s_1$ with reward $r_u$.
Policy $\pi_v$ chooses $v$ and moves to $s_2$ directly, with reward zero.
The forced edge $s_1\to s_2$ has reward zero: its only purpose is to give $\pi_u$ one extra step before reaching $s_2$, so that its eventual return to $s$ is discounted one extra factor of $\gamma_0$ compared to $\pi_v$'s.
This asymmetry in cycle length is essential: without it, $Q_u-Q_v$ below would be a ratio of two
affine functions, which is monotone and has no interior maximum.
The same mechanism appears in \Cref{ex:comparison-interior}, where one policy closes the cycle and the other exits immediately.
At $s_2$, the unique shared uncertain choice returns to $s$ with probability $p$ and enters the sink with probability $1-p$, paying reward $c_v$ either way.
\begin{figure}[t]
\centering
\begin{tikzpicture}[node distance=12mm and 17mm,
 state/.style={circle,draw,minimum size=8mm,inner sep=1pt}]
\node[state] (s) {$s$};
\node[state,right=of s,yshift=8mm] (s1) {$s_1$};
\node[state,right=of s,yshift=-8mm] (s2) {$s_2$};
\node[state,right=21mm of s2] (bot) {$\bot$};
\draw[->] ($(s.west)+(-6mm,0)$) -- (s.west);
\draw[->] (s) -- node[above,sloped] {$u,r_u$} (s1);
\draw[->] (s) -- node[below,sloped] {$v,0$} (s2);
\draw[->] (s1) -- node[right] {$0$} (s2);
\draw[->] (s2) to[bend left=30] node[below] {$p,c_v$} (s);
\draw[->] (s2) -- node[below] {$1-p,c_v$} (bot);
\path[->] (bot) edge[loop right] node[right] {$0$} (bot);
\end{tikzpicture}
\caption{The shared-cycle gadget realizing one normalized square-root term.}
\label{fig:shared-cycle-root-gadget}
\end{figure}
Our goal is to choose $U,V,c_v,r_u$ so that $\sup_{p\in[0,1]}\bigl(Q_u(p)-Q_v(p)\bigr)$ takes the form $C_b-\sqrt b$, for a constant $C_b$ depending only on $b$.
\Cref{lem:normalized-root} establishes this.
Set
\[
U=\frac{m(1-\gamma_0)}{2\gamma_0},
\]
\[
V=\frac{b(1-\gamma_0)}{2m},
\]
\[
c_v=\frac{V}{\gamma_0},
\]
\[
r_u=U-\gamma_0V
=\frac{(1-\gamma_0)(m^2-\gamma_0^2b)}{2\gamma_0m}.
\]
All constants are nonnegative: $U,V,c_v>0$, while $m^2\ge b>\gamma_0^2b$ gives $r_u>0$.
The Bellman equations give, writing $Q_i(p):=V_{\vec u}^{\pi_i}(s)$ for the value of $\pi_i$ at the entry state as a function of the shared return probability $p$,
\[
Q_u(p)=r_u+\gamma_0^2c_v+\gamma_0^3pQ_u(p)
=\frac{U}{1-\gamma_0^3p},
\]
\[
Q_v(p)=\gamma_0c_v+\gamma_0^2pQ_v(p)
=\frac{V}{1-\gamma_0^2p}.
\]

\begin{lemma}[Rational-fraction maximum]\label{lem:normalized-root}
With the constants above,
\[
\sup_{p\in[0,1]}\bigl(Q_u(p)-Q_v(p)\bigr)
=C_b-\sqrt b,
\qquad
C_b=\frac{m}{2\gamma_0}+\frac{\gamma_0b}{2m}.
\]
\end{lemma}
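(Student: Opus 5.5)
The plan is a direct one-variable calculus argument on $f(p):=Q_u(p)-Q_v(p)=\frac{U}{1-\gamma_0^3p}-\frac{V}{1-\gamma_0^2p}$ over $[0,1]$. I would write $g=\gamma_0$ and introduce the two denominators $x=1-g^3p$ and $y=1-g^2p$. They are affinely related by $x=1-g+gy$, and $y$ ranges over $[1-g^2,1]$ as $p$ ranges over $[0,1]$.

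\textbf{Locating the critical point.} I would compute
\[
f'(p)=\frac{g^3U}{x^2}-\frac{g^2V}{y^2}.
\]
Its sign is the sign of $\sqrt{gU}\,y-\sqrt V\,x$, that is, of $\frac{y}{x}-\sqrt{V/(gU)}$. The constants are chosen so that
\[
\frac{V}{gU}=\frac{b(1-g)/(2m)}{m(1-g)/2}=\frac{b}{m^2}.
\]
So the critical condition is $y/x=\sqrt b/m$.

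\textbf{Sign pattern.} The ratio $y/x=(1-g^2p)/(1-g^3p)$ is strictly decreasing in $p$. Hence $f'$ changes sign at most once, from positive to negative. Any critical point in $[0,1]$ is therefore the global maximum on the interval, and no separate endpoint comparison is needed once interiority is shown.

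\textbf{Solving and evaluating.} Substituting $x=1-g+gy$ into $y=(\sqrt b/m)x$ gives
\[
y^*=\frac{\sqrt b(1-g)}{m-g\sqrt b},\qquad x^*=\frac{m(1-g)}{m-g\sqrt b}.
\]
Plugging in $U$ and $V$ yields
\[
f=\frac{U}{x^*}-\frac{V}{y^*}=\frac{m-g\sqrt b}{2g}-\frac{\sqrt b\,(m-g\sqrt b)}{2m}.
\]
This should expand to $\frac{m}{2g}-\sqrt b+\frac{gb}{2m}=C_b-\sqrt b$, which is the claimed value.

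\textbf{Interiority (main obstacle).} I must show $y^*\in[1-g^2,1]$, i.e.\ $p^*\in[0,1]$.
\begin{itemize}
\item The bound $y^*\le1$ rearranges to $\sqrt b\le m$, which holds since $m=\lceil\sqrt b\rceil$.
\item The bound $y^*\ge1-g^2$ rearranges to $\sqrt b\,(1+g+g^2)\ge m(1+g)$, equivalently
\[
\frac{m^2}{b}\le\Bigl(\frac{1+g+g^2}{1+g}\Bigr)^2.
\]
\end{itemize}
The right-hand side exceeds $2$ by the numerical convention on $\gamma_0$ fixed at the start of the appendix. So it suffices to show $m^2\le 2b$ for all integers $b\ge2$. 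If $k^2<b\le(k+1)^2$ with $k\ge1$, then $m=k+1$ and $m^2/b\le(k+1)^2/(k^2+1)\le2$, with equality only at $k=1$, $b=2$. This is the step where the hypothesis $b\ge2$ and the specific choice of $\gamma_0$ enter, so I would check it carefully.
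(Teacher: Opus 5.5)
Your proof is correct and follows essentially the same route as the paper. Both reduce the sign of the derivative to a single monotone factor, locate the unique critical point, check interiority via $b\le m^2$ and $m^2/b\le 2<\bigl(\frac{1+\gamma_0+\gamma_0^2}{1+\gamma_0}\bigr)^2$, and evaluate there. The differences are cosmetic: you keep $p$ as the variable and use monotonicity of $y/x$, whereas the paper substitutes $x=1-\gamma_0^2p$ to get an affine sign factor and writes the maximum as $(\sqrt U-\sqrt{\gamma_0V})^2/(1-\gamma_0)$. You also supply the short argument for $m^2\le 2b$, which the paper only asserts.
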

\begin{proof}
Put $x=1-\gamma_0^2p$.
Then $x\in[1-\gamma_0^2,1]$ and
\[
f(x)=\frac{U}{1-\gamma_0+\gamma_0x}-\frac{V}{x}.
\]
After multiplying by the positive denominators, the sign of $f'(x)$ is the sign of
\[
\sqrt V(1-\gamma_0+\gamma_0x)-\sqrt{\gamma_0U}\,x.
\]
Since $V\le\gamma_0U$, the affine sign factor has negative slope.
It is positive before its unique zero and negative after it, so the zero is a maximum.
It lies in the interval provided
\[
V\le\gamma_0U
\quad\text{and}\quad
\gamma_0U\le h_{\gamma_0}^2V,
\qquad
h_{\gamma_0}=\frac{1-\gamma_0^3}{1-\gamma_0^2}
=\frac{434721}{304400}.
\]
The first inequality is $b\le m^2$.
The second follows from $m^2/b\le2$ and the exact rational comparison $h_{\gamma_0}^2>2$.
At the critical point, direct substitution yields
\[
\max f=\frac{(\sqrt U-\sqrt{\gamma_0V})^2}{1-\gamma_0}
=\frac{m}{2\gamma_0}+\frac{\gamma_0b}{2m}-\sqrt b.
\]
This value is nonnegative by the arithmetic-geometric mean inequality applied to $m/\gamma_0$ and $\gamma_0b/m$.
\end{proof}
\paragraph{Combining independent copies.}
For inputs $b_1,\ldots,b_n$, take disjoint copies of the gadget.
A fresh certain uniform splitter enters each copy with probability $1/n$.
Scaling all rewards in every copy by $n/\gamma_0$ cancels the splitter probability and first discount.
After this scaling, every remaining choice uses ruinous-sink completion.
Rectangularity makes the local parameters independent, so
\[
\Delta_{\cU}(\pi_u,\pi_v)
=\sum_{i=1}^n C_{b_i}-\sum_{i=1}^n\sqrt{b_i}.
\]
\begin{proof}[Proof of \Cref{thm:comparison-cosqrs}]
Reduce from the $\scoSRS$ variant asking whether $\sum_i\sqrt{b_i}\ge k$.
Set
\[
t=\sum_iC_{b_i}-k.
\]
Then $\Delta_{\cU}(\pi_u,\pi_v)\le t$ exactly when the $\scoSRS$ instance is positive.
Every uncertain choice is two-Dirac.
\end{proof}
\subsection{Real-Hierarchy Completeness of Policy Comparison}
\label[appendix]{app:comparison-real}
\begin{theorem}[General policy comparison]\label{thm:comparison-forallr}
Robust policy comparison is $\sUTR$-complete for deterministic policies under general rational polytopic uncertainty.
\end{theorem}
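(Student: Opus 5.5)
Membership follows from \Cref{lem:universal-bellman-encoding}. Fix $\tau=\pi_1$ and the family $\Pi=\{\pi_2\}$. The formula
\[
\forall u,v^1,q^1,v^2,q^2:\ \bigl(\Phi_\cU(u)\wedge\operatorname{Bell}(\pi_1,u,v^1,q^1)\wedge\operatorname{Bell}(\pi_2,u,v^2,q^2)\bigr)\Rightarrow v^1_{\sinit}-v^2_{\sinit}\le t
\]
is a polynomial-size universal sentence. Discounting makes each Bellman system uniquely solvable, so the sentence is true exactly when $\Delta_\cU(\pi_1,\pi_2)\le t$. No compactness subtlety arises, because the inequality is non-strict and is checked pointwise for every $u$.

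For hardness, I reduce from the $\sUTR$-complete problem of deciding whether a polynomial is nonpositive throughout $[0,1]^N$. Using the bounded-domain equivalence and the degree-four form recalled before \Cref{def:policy-affine-normal-form}, I may assume $F=\sum_\nu c_\nu\prod_{j\le d_\nu}w_{\nu,j}$ with $d_\nu\le4$. The uncertainty polytope is nonrectangular. It has one parameter $p_i\in[0,1]$ per variable $w_i$, and every uncertain choice is a two-successor row $(p_i,1-p_i)$ whose coordinate is tied by equality constraints to the shared $p_i$. This coupling is exactly what rectangularity forbids. All constraints are linear in $u$, so $\cU$ is a rational polytope, and its projection onto the $p$'s is the full box.

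The RMDP is built as follows. From $\sinit$, action $a_1$ enters a certain splitter that chooses monomial branch $\nu$ with probability $1/N_{\rm mon}$. Branch $\nu$ is a chain of $d_\nu$ selectors reading $p_{i_{\nu,1}},\dots,p_{i_{\nu,d_\nu}}$. Each selector continues on the $p$-outcome and falls to the zero sink $\bot$ otherwise. Padding puts every branch end at a common depth $H$. The end of branch $\nu$ is a terminal with payoff $c_\nu N_{\rm mon}\gamma_0^{-H}$, and the failure exits are padded to zero. Action $a_2$ at $\sinit$ goes directly to $\bot$. The remaining choices use ruinous-sink completion.

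Let $\pi_1$ play $a_1$ and $\pi_2$ play $a_2$, both deterministic. The graph is acyclic, so by the same depth-cancellation argument as \Cref{lem:acceptance-difference-verifier}, branch $\nu$ contributes exactly $c_\nu\prod_j p_{i_{\nu,j}}$. Hence $V^{\pi_1}_u(\sinit)=F(p)$, while $V^{\pi_2}_u(\sinit)=0$. Therefore $\Delta_\cU(\pi_1,\pi_2)=\max_{p\in[0,1]^N}F(p)$, and with threshold $t=0$ the comparison answer is yes iff $F\le0$ on the box.

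The main obstacle is the bookkeeping around parameter reuse. A monomial may contain a repeated variable (for example $w_1^2$), and then a single row reused twice along a path is fine only if the two uses are at distinct states. I handle this by using a fresh selector state for each occurrence and linking all of them to $p_i$ through equality constraints. I must also check three further points:
\begin{itemize}
\item Equal depths keep the discount cancellation exact.
\item The payoffs $c_\nu N_{\rm mon}\gamma_0^{-H}$ have polynomial bit length.
\item Ruinous completion does not affect the values, since both policies are compliant.
\end{itemize}
With these checks the reduction is polynomial and correct.
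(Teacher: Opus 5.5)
Your proposal is correct and matches the paper's proof: membership comes from the universal Bellman encoding with $\tau$ fixed to $\pi_1$ and family $\{\pi_2\}$, and hardness from the tied-parameter polynomial-evaluation RMDP placed behind an initial choice against a zero sink, with threshold $0$ (the paper lets each branch payoff cancel that branch's own depth instead of padding to a common $H$, which is only cosmetic). The one difference is the source problem: the paper uses the strict degree-six normal form of Schaefer--\v{S}tefankovi\v{c} (Lemma~2.8 with Proposition~2.12), whose complement is exactly non-strict nonpositivity on the box, and since your construction handles monomials of any degree you can cite that form instead of relying on the degree-four one.
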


\begin{lemma}[$\sUTR$ membership for policy comparison]\label{lem:comparison-forallr-membership}
Robust policy comparison is in $\sUTR$ for deterministic policies under general rational polytopic uncertainty.
\end{lemma}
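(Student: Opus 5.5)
The plan is to derive the lemma directly from \Cref{lem:universal-bellman-encoding}, specialised to a fixed candidate and a singleton family. The complement question asks whether $\Delta_\cU(\pi_1,\pi_2)>t$, that is, whether some realization $\vec u$ satisfies $V^{\pi_1}_{\vec u}(\sinit)-V^{\pi_2}_{\vec u}(\sinit)>t$. This complement is existential over the reals, so the original problem lies in $\scoETR=\sUTR$. Equivalently, we write the universal sentence directly:
\[
\forall u,v^1,q^1,v^2,q^2:\ \Bigl(\Phi_\cU(u)\wedge\operatorname{Bell}(\pi_1,u,v^1,q^1)\wedge\operatorname{Bell}(\pi_2,u,v^2,q^2)\Bigr)\Longrightarrow v^1_{\sinit}-v^2_{\sinit}\le t .
\]
The policy tables of $\pi_1,\pi_2$ are rational constants, so $\operatorname{Bell}$ contains only products of a real variable $u(s,a,s')$ with a real variable $v_{s'}$, together with rational coefficients. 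The formula therefore has polynomial size in the binary input encoding.

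Correctness rests on three facts. First, for every $u$ satisfying $\Phi_\cU(u)$ the row vectors are distributions and $\gamma<1$. Hence each Bellman system $v=R^{\pi}+\gamma P^{\pi}_u v$ has the unique solution $v=V^{\pi}_u$, because $I-\gamma P^\pi_u$ is invertible. The auxiliary $q$ variables are then determined uniquely as well. Second, assignments where $u\notin\cU$, or where $v,q$ are not the Bellman solution, falsify the antecedent, so they satisfy the implication vacuously. Third, the sentence therefore states exactly that $D_{\pi_1,\pi_2}(u)\le t$ for every $u\in\cU$, which is equivalent to $\sup_{u\in\cU}D_{\pi_1,\pi_2}(u)\le t$. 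No attainment argument is needed, because a supremum is at most $t$ exactly when every value is.

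The only point needing care is that the sentence is over $(\bR,0,1,+,\cdot,<)$, so the rational constants must be expressible there. The rational constants are $R$, $\gamma$, $t$, $\vec F$, $\vec g$ and the policy entries. We clear denominators by multiplying each atomic constraint by the product of its denominators, which yields integer coefficients of polynomial bit length. Those integers can be built by repeated squaring with polynomially many auxiliary existential variables. Placed in the antecedent, these variables become universal, which is still admissible (the standard convention of \citet{DBLP:journals/mst/SchaeferS24} already allows binary integer constants). Determinism of the policies is not used; the argument covers $\polRand$ as well.

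I expect no real obstacle. The one step to state explicitly is the uniqueness of Bellman solutions for all $u\in\cU$, since this is what makes the universal quantification over $v,q$ faithful.
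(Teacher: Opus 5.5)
Your proposal is correct and follows the paper's proof: the paper simply applies \Cref{lem:universal-bellman-encoding} with $\tau$ fixed to $\pi_1$ and the singleton family $\{\pi_2\}$, which is exactly the universal sentence you write down. Your remarks on uniqueness of the Bellman solutions, vacuous truth for invalid assignments, and encoding the rational constants only spell out what that lemma's proof already states briefly.
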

\begin{proof}
Apply \Cref{lem:universal-bellman-encoding} with $\tau$ fixed to $\pi_1$ and the singleton family $\{\pi_2\}$.
\end{proof}

\begin{definition}[Polynomial-evaluation RMDP]
\label{def:polynomial-evaluator}
Fix an explicit sparse polynomial
\[
f(p)=\sum_{\ell=1}^N c_\ell\prod_{j=1}^{d_\ell}p_{i_{\ell,j}},
\qquad p\in[0,1]^m,
\]
and a rational discount $\gamma\in(0,1)$.
The component $\operatorname{Poly}_\gamma(f)$ is defined as follows.
\begin{itemize}
\item A reward-zero certain uniform splitter at $s_f$ enters branch $\ell$ with probability $1/N$.
That branch has states $b_\ell^0,\ldots,b_\ell^{d_\ell}$ and a common zero-reward sink $\bot$.
\item At $b_\ell^{j-1}$, the unique action continues to $b_\ell^j$ with probability $u_{\ell,j}$ and enters $\bot$ otherwise.
State $b_\ell^{d_\ell}$ is absorbing with value $r_\ell=N\gamma^{-(d_\ell+1)}c_\ell$.
\item Each $u_{\ell,j}$ is a coordinate of a two-successor row in $[0,1]$.
Occurrences representing the same logical coordinate may be tied by rational linear equalities.
\item At each state the action described above is intended, every other described reward is zero, and
$R(b_\ell^{d_\ell})=(1-\gamma)r_\ell$.
\item All remaining choices use ruinous-sink completion.
\end{itemize}
For policy comparison, choose a representative $u_i^{\rm rep}$ for every parameter and impose
$u_{\ell,j}=u_i^{\rm rep}$ whenever $i_{\ell,j}=i$; occurrences of $1-p_i$, when present, instead
satisfy $u_{\ell,j}=1-u_i^{\rm rep}$.
Together with stochasticity and box constraints, these equalities define $\cU$ and the resulting RMDP
$M_f=\tup{S,A,\cU,R,s_f,\gamma}$.
Repeated indices in a monomial represent powers and are realized by distinct rows tied to the same representative.
\end{definition}

\Cref{fig:comparison-polynomial-rmdp} shows one monomial branch of this construction.
\begin{figure}[t]
\centering
\begin{tikzpicture}[node distance=14mm and 18mm,
 state/.style={circle,draw,minimum size=8mm,inner sep=1pt}]
\node[state] (b0) {$b_\ell^0$};
\node[state,right=of b0] (b1) {$b_\ell^1$};
\node[right=of b1] (dots) {$\cdots$};
\node[state,right=of dots] (bd) {$b_\ell^{d_\ell}$};
\node[state,below=16mm of dots] (bot) {$\bot$};
\draw[->] (b0) -- node[above] {$u_{\ell,1}$} (b1);
\draw[->] (b1) -- node[above] {$u_{\ell,2}$} (dots);
\draw[->] (dots) -- node[above] {$u_{\ell,d_\ell}$} (bd);
\draw[->] (b0) -- node[below,sloped] {$1-u_{\ell,1}$} (bot);
\draw[->] (b1) -- node[below,sloped] {$1-u_{\ell,2}$} (bot);
\draw[->] (dots) -- node[below,sloped] {$1-u_{\ell,d_\ell}$} (bot);
\path[->] (bd) edge[loop above] node[above] {$(1-\gamma)r_\ell$} (bd);
\path[->] (bot) edge[loop right] node[right] {$0$} (bot);
\end{tikzpicture}
\caption{One branch of the polynomial-evaluation RMDP.
Equalities in $\cU$ tie repeated parameter occurrences across branches.}
\label{fig:comparison-polynomial-rmdp}
\end{figure}

\begin{lemma}[Polynomial evaluation]\label{lem:polynomial-evaluator}
For every realization of the occurrence rows, the compliant value of
$\operatorname{Poly}_\gamma(f)$ is
$\sum_\ell c_\ell\prod_j u_{\ell,j}$.
In particular, when the representative coordinates equal $p$, the unique compliant policy of $M_f$
has value $f(p)$ at $s_f$.
The component is acyclic apart from absorbing terminals, every uncertain row has two successors, and
its only other stochastic row is a certain uniform splitter.
It has size $O(N\max_\ell d_\ell)$ and rewards of polynomial encoding length.
Moreover, $\cU$ is one rational polytope of polynomial description size.
\end{lemma}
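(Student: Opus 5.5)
The plan is to fix a realization and a compliant policy and compute the value at $s_f$ by unrolling the Bellman equations along each branch, then sum over branches through the splitter.

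\emph{Terminal values.} The absorbing state $b_\ell^{d_\ell}$ has reward $(1-\gamma)r_\ell$ and a Dirac self-loop, so its value is $r_\ell$. The sink $\bot$ has value $0$.

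\emph{Branch recursion.} For $j<d_\ell$ the compliant action at $b_\ell^{j}$ has reward zero and row $u_{\ell,j+1}\delta_{b_\ell^{j+1}}+(1-u_{\ell,j+1})\delta_\bot$. Hence $V(b_\ell^{j})=\gamma u_{\ell,j+1}V(b_\ell^{j+1})$. Backward induction from $j=d_\ell$ then gives
\[
V(b_\ell^0)=\gamma^{d_\ell}r_\ell\prod_{j}u_{\ell,j}.
\]

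\emph{Splitter.} At $s_f$ the zero-reward certain uniform splitter gives
\[
V(s_f)=\frac{\gamma}{N}\sum_\ell V(b_\ell^0)=\sum_\ell\frac{\gamma^{d_\ell+1}}{N}\,N\gamma^{-(d_\ell+1)}c_\ell\prod_ju_{\ell,j}=\sum_\ell c_\ell\prod_j u_{\ell,j}.
\]
The factor $r_\ell$ was chosen precisely to cancel the splitter probability $1/N$ and the $d_\ell+1$ discount factors. The one bookkeeping point to check is that the branch of length $d_\ell$ from $s_f$ involves exactly $d_\ell+1$ transitions.

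\emph{Specialization to $p$.} When the representative coordinates equal $p$, the tying equalities force $u_{\ell,j}=p_{i_{\ell,j}}$, or $1-p_i$ for complemented occurrences. The sum therefore becomes $f(p)$. Compliance leaves a single action at every nonterminal state, so there is a unique compliant policy, and by \Cref{lem:ruin-dominance} the noncompliant policies only do worse.

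\emph{Structural claims.} These are read off the construction:
\begin{itemize}
\item Every edge goes from $s_f$ to $b^0$, from $b^{j-1}$ to $b^j$, or into $\bot$, $\bot_{\rm r}$ or a terminal. So the graph is acyclic apart from absorbing terminals.
\item Each uncertain row is supported on $\{b_\ell^j,\bot\}$ and so has two successors.
\item The only other stochastic row is the splitter.
\item The state count is $1+\sum_\ell(d_\ell+1)+2=O(N\max_\ell d_\ell)$.
\item The rewards $(1-\gamma)N\gamma^{-(d_\ell+1)}c_\ell$ have polynomial bit length because $d_\ell$ is at most the input size.
\item $\cU$ is cut out by stochasticity, box constraints and polynomially many linear tie equalities, so it is a single rational polytope of polynomial description size.
\end{itemize}
Ruinous-sink completion preserves all of these properties.

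I do not expect a real obstacle. The only point needing care is the exponent bookkeeping: the exponent in $r_\ell$ must match the actual number of transitions from $s_f$ to $b_\ell^{d_\ell}$.
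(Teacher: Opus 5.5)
Your proposal is correct and follows essentially the paper's argument. The paper computes each branch's contribution directly as reach probability $1/N$ times survival probability $\prod_j u_{\ell,j}$ times $\gamma^{d_\ell+1}r_\ell$, which is exactly what your backward Bellman induction along the chain produces (including the constant case $d_\ell=0$ via the empty product), and it reads off the structural and encoding claims from the construction just as you do.
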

\begin{proof}
Branch $\ell$ is reached with probability $1/N$, survives with probability $\prod_ju_{\ell,j}$, and
reaches its terminal after $d_\ell+1$ transitions.
Its discounted contribution is therefore
\[
\frac1N\gamma^{d_\ell+1}
\left(N\gamma^{-(d_\ell+1)}c_\ell\right)
\prod_ju_{\ell,j},
\]
and summing gives the first identity.
The representative equalities then give $f(p)$.
Each occurrence contributes one row.
Every branch is a directed chain, and the payoff exponent is at most the polynomial input size.
The tying, complement, stochasticity, and interval constraints are rational linear constraints, so their intersection is a polynomial-size rational polytope.
\end{proof}

\begin{lemma}[$\sUTR$ hardness for policy comparison]\label{lem:comparison-forallr-hardness}
Robust policy comparison is $\sUTR$-hard for deterministic policies under general rational polytopic uncertainty.
\end{lemma}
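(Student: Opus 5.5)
We reduce from the $\sUTR$-complete problem of deciding whether an explicit sparse polynomial $f$ of degree at most four satisfies $f(p)\le0$ for every $p\in[0,1]^m$. The bounded-domain form follows from \citet[Proposition~2.13]{DBLP:journals/mst/SchaeferS24}, and the normal form of \Cref{def:policy-affine-normal-form} supplies the degree bound if it is needed. Given $f$, we build the RMDP from the polynomial-evaluation component $M_f=\operatorname{Poly}_{\gamma_0}(f)$ of \Cref{def:polynomial-evaluator}. The uncertainty polytope $\cU$ consists of the occurrence rows tied to the representatives $u_i^{\rm rep}\in[0,1]$ by rational linear equalities. Projecting onto the representatives is therefore a surjection from $\cU$ onto $[0,1]^m$, so quantifying over realizations is the same as quantifying over $p$.

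The construction adds a fresh initial state $\sinit$ with two described actions. Action $a_f$ moves deterministically to $s_f$ with reward zero, and action $a_\bot$ moves to the zero-reward sink $\bot$. All other choices use ruinous-sink completion, with $Z$ chosen after the rewards as in \Cref{app:primitives}. Let $\pi_1$ play $a_f$ at $\sinit$ and the unique intended action everywhere in $M_f$, and let $\pi_2$ play $a_\bot$. Both policies are deterministic and compliant.

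The value computation follows from \Cref{lem:polynomial-evaluator}. At any realization with representatives $p$, we get $V^{\pi_1}_{\vec u}(\sinit)=\gamma_0 f(p)$ and $V^{\pi_2}_{\vec u}(\sinit)=0$. Hence
\[
\Delta_\cU(\pi_1,\pi_2)=\gamma_0\sup_{p\in[0,1]^m}f(p).
\]
This supremum is a maximum because the box is compact. With threshold $t=0$, we have $\Delta_\cU(\pi_1,\pi_2)\le0$ exactly when $f\le0$ on the whole box. The construction has polynomial size: $\operatorname{Poly}$ has size $O(N\max_\ell d_\ell)$, its payoffs $N\gamma_0^{-(d_\ell+1)}c_\ell$ have polynomial encoding length, and $\cU$ has a polynomial rational description.

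The main obstacle is checking that the tying equalities produce exactly the intended polytope. Every occurrence row must be a valid two-successor distribution for every $p\in[0,1]^m$, and no realization may escape the image of the box. Both properties hold because each occurrence coordinate is an affine image of a representative confined to $[0,1]$. The representatives themselves must also be actual row coordinates. If a parameter has no occurrence, it does not affect $f$ and can be dropped; otherwise we take one of its occurrence rows as the representative. One further point is that the universal sentence may come with a general bounded domain rather than the unit box. We handle this by rescaling the variables into $[0,1]^m$ before applying the reduction, which preserves polynomial size.
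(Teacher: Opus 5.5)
Your proposal is correct and is essentially the paper's proof: the same fresh initial choice between $s_f$ and a zero-reward sink, the same deterministic pair of policies, and threshold zero via \Cref{lem:polynomial-evaluator}, giving $\Delta_\cU=\gamma_0\max_p f(p)$. The one thing to fix is how you justify the source problem. The residual form of \Cref{def:policy-affine-normal-form} is an equality normal form, so it cannot turn a universal sentence into a single non-strict inequality $f\le 0$: complementing $\exists y\colon g(y)=0$ yields the strict $\forall y\colon g(y)^2>0$. Cite instead, as the paper does, the strict (degree-six) normal form of \citet[Lemma~2.8]{DBLP:journals/mst/SchaeferS24} for $\exists p\colon f(p)>0$, together with the open-box equivalence and continuity. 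Your reduction never uses the degree bound, so nothing else changes.
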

\begin{proof}
By the strict degree-six normal form of \citet[Lemma~2.8]{DBLP:journals/mst/SchaeferS24} and their bounded-open equivalence~\citep[Proposition~2.12]{DBLP:journals/mst/SchaeferS24}, $\exists p\in[0,1]^m:f(p)>0$ is $\sETR$-complete for an explicitly represented degree-six polynomial.
Indeed, the affine image of the source box is $(0,1)^m$, and strict positivity there is equivalent to strict positivity on its closure by continuity.
Its complement is $\forall p:f(p)\le0$.
Construct $M_f$ by \Cref{def:polynomial-evaluator}, add a fresh initial choice between $s_f$ and a
zero-reward sink, and let deterministic policies $\pi_f,\pi_\bot$ choose the two actions.
By \Cref{lem:polynomial-evaluator},
\[
\Delta_\cU(\pi_f,\pi_\bot)\le0
\iff \forall p\in[0,1]^m:\ f(p)\le0.
\]
This proves hardness.
\end{proof}
\begin{proof}[Proof of \Cref{thm:comparison-forallr}]
Immediate from \Cref{lem:comparison-forallr-membership,lem:comparison-forallr-hardness}.
\end{proof}
\subsection{Portfolio Comparison and the Shared-Selector Evaluator}
\label{app:portfoliocomparison}
\begin{problem}[Portfolio comparison]\label{prob:portfolio-comparison}
Given an RMDP $M$, a policy $\pi_0\in\polRand$, an input portfolio $\Pi\subset\polRand$, and rational $t$, decide whether
\[
\Delta_{\cU}(\pi_0,\Pi)
=\sup_{\vec u\in\cU}\left(
V_{\vec u}^{\pi_0}(\sinit)-\max_{\pi\in\Pi}V_{\vec u}^{\pi}(\sinit)
\right)\le t.
\]
\end{problem}

\begin{theorem}\label{thm:portfolio-comparison-atr}
Portfolio comparison is $\sUTR$-complete, with portfolio size part of the input.
Hardness already holds for deterministic policies and acyclic $(s,a)$-rectangular RMDPs with two-successor uncertain choices.
\end{theorem}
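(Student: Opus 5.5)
Membership is immediate from \Cref{lem:universal-bellman-encoding}. Fix $\tau$ to the given $\pi_0$, take the explicit family $\Pi$, and universally quantify the realization together with one Bellman system per member. The formula has size polynomial in $|\Pi|$, which is part of the input, so portfolio comparison lies in $\sUTR$. The work is in the hardness part.

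For hardness I would reduce from the $\sUTR$-complete problem used in \Cref{lem:comparison-forallr-hardness}: decide whether $\forall p\in[0,1]^m: f(p)\le 0$ for an explicit polynomial of degree at most six, with coefficient mass bounded by $B$. In the general case the tying equalities made every occurrence of $p_i$ read the same value. Rectangularity forbids such equalities, so instead I give every occurrence its own independent two-successor row $u_{\ell,j}\in[0,1]$, plus one untied representative row $u_i^{\rm rep}$ per variable. The reference policy $\pi_0$ enters the untied evaluator $\operatorname{Poly}_{\gamma_0}(f)$, whose value at $\sinit$ (after scaling out the initial step) is $\widehat f(u)=\sum_\ell c_\ell\prod_j u_{\ell,j}$ by \Cref{lem:polynomial-evaluator}. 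The portfolio replaces the equality constraints by a pointwise maximum. It contains:
\begin{itemize}
\item $\pi_\bot$, which goes to a zero-reward sink;
\item for every occurrence $(\ell,j)$ with variable $i$, and each sign $\sigma\in\{+1,-1\}$, an auditor $\pi_{\ell,j,\sigma}$ with value $L\sigma\,(u_{\ell,j}-u_i^{\rm rep})$.
\end{itemize}
An auditor is realized by a certain uniform splitter into two branches. One branch reads the row $u_{\ell,j}$ and pays $\pm$ on its continuation outcome; the other reads $u_i^{\rm rep}$ and pays $\mp$. Payoffs are scaled by the splitter probability and by $\gamma_0^{-\text{depth}}$, as in \Cref{lem:acceptance-difference-verifier}. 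Every row stays a separate rectangular block, the graph is acyclic apart from absorbing terminals, all uncertain rows are two-successor, and all policies are deterministic. The rows are shared between $\pi_0$ and the auditors, which is exactly what lets the maximum couple them.

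Let $\delta=\max_{\ell,j}|u_{\ell,j}-u_{i_{\ell,j}}^{\rm rep}|$. Then
\[
\max_{\pi\in\Pi}V^\pi_u(\sinit)=\max\{0,L\delta\},
\]
so the quantity nature maximizes is $\widehat f(u)-L\delta$. I set the threshold $t=0$.
\begin{itemize}
\item If some $p$ has $f(p)>0$, nature sets every copy and every representative to $p$. Then $\delta=0$ and the difference is $f(p)>0$.
\item Conversely, assume $f\le 0$ on the box. The product inequality $|ab-a'b'|\le|a-a'|+|b-b'|$ from the proof of \Cref{lem:policy-affine-error} gives $\bigl|\prod_j u_{\ell,j}-\prod_j u^{\rm rep}_{i_{\ell,j}}\bigr|\le d_\ell\delta\le 6\delta$. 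Hence $\widehat f(u)\le f(u^{\rm rep})+6B\delta\le 6B\delta$.
\end{itemize}
With $L=6B$ the difference is therefore at most $6B\delta-6B\delta=0$. The supremum is thus $\le 0$ exactly when $f\le 0$ everywhere.

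The main obstacle I expect is the exact bookkeeping that makes the auditor values genuinely equal $L\sigma(u_{\ell,j}-u_i^{\rm rep})$ at the common initial state. This has two parts. First, all branches must have matched depths, so that one terminal payoff cancels every discount factor; I would pad paths as in \Cref{def:boolean-comparison-rmdp}. Second, ruinous-sink completion must not let any member or the reference escape its intended role. That is handled by the fact that the reference and portfolio policies are given and are compliant, so \Cref{lem:ruin-dominance} is not even needed beyond well-definedness. A second point to check carefully is that the error bound holds uniformly for all copy assignments, including those where the representative is not itself among the copies; the telescoping bound above is stated against the representatives precisely for this reason. All constants, namely $L$, the padding depth, and the $\gamma_0^{-H}$ payoffs, have polynomial encoding length, so the reduction is polynomial.
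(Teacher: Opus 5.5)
Your proof is correct, but its hardness half takes a different route from the paper's.

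\textbf{The paper's route.} The paper first proves (\Cref{lem:strict-elementary}) that strict elementary feasibility is $\sETR$-complete. That problem asks whether a system $g_i(x)>0$ is feasible, where each constraint is affine or affine plus one bilinear term and repeats no variable. The proof needs three ingredients:
\begin{enumerate}
\item the degree-four normal form;
\item the effective \L{}ojasiewicz bound;
\item a chain of variables forced below $2^{-2^p}$, which turns the residual equalities into strict inequalities.
\end{enumerate}
Because no constraint repeats a variable, a single selector row per variable can be literally shared by all members, so no copies are needed. Member $\pi_i$ evaluates $-g_i$, the reference sits at zero, and $\Delta_\cU(\pi_0,\Pi)=\sup_x\min_i g_i(x)$.

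\textbf{Your route.} You keep the degree-six source of \Cref{lem:comparison-forallr-hardness} and let the reference be the untied evaluator, with one row per occurrence. The portfolio maximum then acts as an exact consistency penalty $L\delta$. The telescoping product bound makes $L=6B$ dominate the evaluation error, so the threshold stays exactly zero. This bypasses the \L{}ojasiewicz machinery entirely and is more elementary. It is essentially the signed residual-test device the paper uses later for bounded synthesis (\Cref{lem:policy-affine-maxform}), transplanted to comparison. The paper's route, in exchange, yields the cleaner semantics $\sup_x\min_i g_i$ with one row per variable.

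\textbf{Two bookkeeping remarks.}
\begin{itemize}
\item The common-depth padding you plan is unnecessary. Each fresh pay terminal is reached by a single policy at a single depth, so its payoff can cancel its own discount, as in \Cref{def:shared-selector-evaluator}.
\item One point does need care. In $\operatorname{Poly}$, the success outcome of the last factor row of monomial $\ell$ is the absorbing payoff terminal $b_\ell^{d_\ell}$. An auditor reading that row would therefore collect $r_\ell$ rather than its own $\pm$ payoff. Inserting one decision state before each monomial terminal fixes this: the reference proceeds to the terminal from it, and the auditors branch to their pay terminals. Acyclicity, rectangularity and the two-successor property are unaffected.
\end{itemize}
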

\begin{lemma}[Membership]\label{lem:portfolio-comparison-membership}
Portfolio comparison is in $\sUTR$.
\end{lemma}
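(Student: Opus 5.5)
The plan is to instantiate \Cref{lem:universal-bellman-encoding} with the fixed policy $\tau:=\pi_0$ and the explicit family $\Pi=\{\pi_1,\ldots,\pi_r\}$ taken straight from the input. That lemma already produces a polynomial-size universal sentence over the reals:
\[
\forall\,u,v^0,q^0,(v^i,q^i)_{i=1}^r:\ \Bigl(\Phi_\cU(u)\wedge\operatorname{Bell}(\pi_0,u,v^0,q^0)\wedge\textstyle\bigwedge_{i}\operatorname{Bell}(\pi_i,u,v^i,q^i)\Bigr)\Longrightarrow\bigvee_{i=1}^r\bigl(v^0_{\sinit}-v^i_{\sinit}\le t\bigr).
\]
Every entry of the policy tables of $\pi_0$ and the $\pi_i$ is a binary-encoded rational, so each appears as a constant in the formula. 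The formula therefore has size polynomial in the RMDP, in $r$, and in the encodings. Since the portfolio is listed explicitly, $r$ is bounded by the input length, so no unary assumption is needed here.

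Correctness comes down to two facts. First, for each $u$ satisfying $\Phi_\cU$, discounting with $\gamma<1$ makes each Bellman system have the unique solution $V^\pi_u$. The antecedent therefore pins $v^0_{\sinit}=V^{\pi_0}_u(\sinit)$ and $v^i_{\sinit}=V^{\pi_i}_u(\sinit)$, while assignments that are not valid make the antecedent false. Second, the disjunction $\bigvee_i(v^0_{\sinit}-v^i_{\sinit}\le t)$ is equivalent to $V^{\pi_0}_u(\sinit)-\max_iV^{\pi_i}_u(\sinit)\le t$. So the sentence holds exactly when $\Delta_\cU(\pi_0,\Pi)\le t$.

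The supremum in $\Delta_\cU$ needs no special treatment. The sentence asserts the bound pointwise at every $u$, and a bound at every point is the same as a bound on the supremum; attainment is not required. Finally, the rational constants can be cleared to integer coefficients by multiplying through by denominators, which yields a sentence in the standard $\sUTR$ format.

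The main obstacle is only bookkeeping: checking that the rational policy constants and $\gamma$ keep the formula polynomial, and that the universal block also ranges over the auxiliary $q$ variables, so the formula stays purely universal. Both checks are immediate from \Cref{lem:universal-bellman-encoding}.
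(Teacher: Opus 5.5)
Your proposal is correct and takes essentially the paper's route: the paper's proof also just applies \Cref{lem:universal-bellman-encoding} with $\tau$ fixed to $\pi_0$ and the explicit family $\Pi$. Your extra checks are sound: the discounted Bellman systems have unique solutions, the disjunction encodes the pointwise maximum, and a bound at every $u$ is a bound on the supremum, so attainment is not needed.
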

\begin{proof}
Apply \Cref{lem:universal-bellman-encoding} with $\tau$ fixed to $\pi_0$ and the explicit family $\Pi$.
\end{proof}

\begin{problem}[Strict elementary feasibility]\label{prob:strict-elementary}
Given rational expressions $g_1,\ldots,g_r$ over variables $x\in[0,1]^n$, each affine or affine plus one bilinear term, with no variable occurring twice in the same expression, decide whether
\[
g_1(x)>0,\ldots,g_r(x)>0
\]
has a solution $x\in[0,1]^n$.
\end{problem}
\begin{lemma}\label{lem:strict-elementary}
\Cref{prob:strict-elementary} is $\sETR$-complete.
\end{lemma}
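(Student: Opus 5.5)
Membership is immediate: given $g_1,\ldots,g_r$, the sentence $\exists x\,\bigl(\bigwedge_i 0\le x_i\le1 \wedge \bigwedge_j g_j(x)>0\bigr)$ is an existential sentence of the theory of the reals of size linear in the input. The work is in hardness, and for that I will start from the degree-four residual normal form of \Cref{def:policy-affine-normal-form}, combined with the strict bounded form of $\sETR$ taken from \citet{DBLP:journals/mst/SchaeferS24}. By their Lemma~2.8 and Proposition~2.12, as already used in the proof of \Cref{lem:comparison-forallr-hardness}, it is $\sETR$-complete to decide whether $\exists w\in[0,1]^N: F(w)>0$ for an explicit polynomial $F$ of degree at most four.

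\textbf{Step 1 (encoding equalities by strict inequalities).} Introduce the copy coordinates $z_{\nu,j}$ and the auxiliary product coordinates $t_\nu,t_{\nu,12},t_{\nu,34},p_\nu$. Each of them is a variable in $[0,1]$, and each residual $h$ in the normal form is affine or affine plus one product of distinct variables. In particular no variable occurs twice in any single residual, because in each product the two factors are distinct coordinates. The difficulty is that the normal form wants $h=0$, while the target problem allows only strict inequalities. I will therefore use the robustness given by \Cref{lem:policy-affine-error}. Fix a rational $\delta>0$ and impose $\delta-h>0$ and $\delta+h>0$; both expressions have the allowed shape. Then impose $\widehat F-7B\delta>0$, where $\widehat F=\sum_\nu c_\nu p_\nu$ is affine.

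\textbf{Step 2 (correctness and the choice of $\delta$).} Soundness is straightforward. If the system is feasible, then every residual satisfies $|h|<\delta$, so \Cref{lem:policy-affine-error} gives $F(w)\ge\widehat F-7B\delta>0$. Completeness is the main obstacle. Setting all residuals to zero gives $\widehat F=F(w)$, so we need some $w$ with $F(w)>7B\delta$. Strict positivity alone does not bound how small $\max F$ can be, so no fixed $\delta$ is safe a priori. I would handle this in one of two ways. The first is to invoke the gap version of the normal form: Schaefer--\v{S}tefankovi\v{c} show that one may assume either $F\le 0$ on the box or $F\ge 2^{-2^{L}}$ somewhere. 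The second, which avoids that fact, is to make $\delta$ a variable. Introduce $\delta\in[0,1]$ with the constraint $\delta>0$, and replace each strict pair by $\delta-h>0$ and $\delta+h>0$. The remaining constraint $\widehat F-7B\delta>0$ is then affine in $\delta$ and the $p_\nu$. None of these expressions repeats a variable, and $\delta-h$ is still affine plus at most one bilinear term. With $\delta$ free, completeness holds: for a witness $w$ take exact residuals and any $\delta<F(w)/(7B)$. Soundness still holds pointwise by the lemma, applied to whatever positive value $\delta$ takes.

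\textbf{Step 3 (size).} The number of copies and auxiliary variables is linear in the number of monomials, and all coefficients, including $7B$, have polynomial encoding length. The construction is therefore a polynomial-time reduction, which completes the hardness part. The one place to double-check is that $B\ge1$ has polynomial bit length and that the box constraints on the auxiliary variables do not exclude the exact assignment. They do not: products of numbers in $[0,1]$ lie in $[0,1]$.
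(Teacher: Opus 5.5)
Your proof is correct, and it reaches hardness by a genuinely different route from the paper's.

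The paper starts from the bounded degree-four \emph{equality} form $\exists y\in[-1,1]^n: f(y)=0$ and appends $\widehat f/B$ as one more residual. It must then replace the closed conditions $h_j=0$ by strict inequalities. An infeasible residual system can still have arbitrarily small $\eta=\min_x\max_j|h_j(x)|$, so the paper invokes the effective \L{}ojasiewicz bound to get $\eta>2^{-2^p}$. It then builds a positive number below that bound from the repeated-squaring chain $0<a_{j+1},b_{j+1}<a_jb_j/4$ of fresh variables, and imposes $a_p\pm h_j>0$.

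You instead start from the strict single-polynomial form $\exists w\in[0,1]^N:F(w)>0$, which the paper itself uses in the proof of \Cref{lem:comparison-forallr-hardness}. Here the target condition is already open, so one existentially quantified slack $\delta$ suffices, with $\delta>0$, $\delta\pm h>0$, and $\widehat F-7B\delta>0$. Completeness uses the witness's own margin $F(w)>0$, and soundness holds pointwise by \Cref{lem:policy-affine-error}. Your route needs no quantitative gap bound and no squaring gadget, because that analytic work is absorbed into the cited strict normal form. The paper's route has the advantage of reusing the equality normal form of \Cref{def:policy-affine-normal-form} verbatim.

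Two small corrections are needed. First, the paper cites the strict form at degree six, not four. You therefore need to extend \Cref{def:policy-affine-normal-form} by one more layer of bilinear products, for instance $t_{12},t_{34},t_{56}$, then $t_{1234}=t_{12}t_{34}$ and $p=t_{1234}t_{56}$. The same telescoping estimate bounds the error by $11\delta$ per monomial, so the last constraint becomes $\widehat F-11B\delta>0$, and every residual still has pairwise distinct variables.

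Second, your first alternative does not work as stated. A fixed rational $\delta$ below a $2^{-2^L}$ gap has exponentially many bits, which is exactly why the paper constructs $a_p$ by repeated squaring. Keep the variable-$\delta$ version, which is the one your argument actually relies on.
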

\begin{proof}
Membership is immediate.
For hardness, use the bounded degree-four equality normal form underlying
\Cref{def:policy-affine-normal-form}.
It is $\sETR$-complete to decide
\[
\exists y\in[-1,1]^n:\quad f(y)=0,
\]
and substituting $y_i=2x_i-1$ changes the domain to $[0,1]^n$.
Apply \Cref{def:policy-affine-normal-form} without its optional output coordinate and append the affine
residual $\widehat f/B$, which sets the decoded polynomial to zero.
Denote all residuals by $h_1,\ldots,h_t$.
If they have no common zero on the compact box, then
\[
\eta=\min_x\max_j|h_j(x)|>0.
\]
Clear denominators and put $F=\sum_jh_j^2$.
This is a nonnegative integer polynomial of degree at most four, with polynomial coefficient bit length.
If there is no common zero, apply the effective \L{}ojasiewicz bound of \citet[Theorem~2.2]{DBLP:journals/mst/SchaeferS24} on the compact box to $F$ and the constant polynomial one.
It gives a lower bound $2^{-\ell D^{cN^2}}$ in terms of the number $N$ of variables, degree $D\le4$, and coefficient length $\ell$.
Consequently an explicit polynomial $p$ in the input length satisfies $\eta>2^{-2^p}$.
Introduce positive variables $a_0,b_0,\ldots,a_p,b_p$ with
\[
0<a_0<\frac12,\qquad 0<b_0<\frac12,
\]
\[
0<a_{j+1}<\frac{a_jb_j}{4},
\qquad
0<b_{j+1}<\frac{a_jb_j}{4}
\quad(j<p).
\]
The exponent recurrence gives $a_p<2^{-2^p}$.
Replace $h_j=0$ by
\[
a_p+h_j>0,
\qquad
a_p-h_j>0.
\]
An exact normal-form solution satisfies the strict system.
Conversely, a strict solution would have $|h_j|<a_p<\eta$ for every $j$, contradicting the definition
of $\eta$ when the residual system is infeasible.
Each constraint is affine or affine plus one product of distinct variables.
Input copies ensure that an input variable occurs only in affine copy residuals.
\end{proof}

\begin{lemma}[Hardness]\label{lem:portfolio-comparison-hardness}
Portfolio comparison is $\sUTR$-hard already for deterministic policies and acyclic $(s,a)$-rectangular RMDPs with two-successor uncertain choices.
\end{lemma}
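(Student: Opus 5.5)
We reduce from the complement of \Cref{prob:strict-elementary}, which is $\sUTR$-complete by \Cref{lem:strict-elementary}. Given $g_1,\ldots,g_r$ over $x\in[0,1]^n$, the infeasible case is
\[
\forall x\in[0,1]^n:\ \min_{i\le r} g_i(x)\le 0 .
\]
Nature chooses $x$ through two-successor rows, one selector per variable \emph{occurrence}. Since the model must be rectangular, rows cannot be tied by equalities. We therefore need a different way to force all occurrences of a variable to agree.

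The portfolio will contain one deterministic policy $\pi_i$ per constraint $g_i$, and the reference policy will be a fixed $\pi_0$. The aim is to make $V^{\pi_0}_{\vec u}(\sinit)-V^{\pi_i}_{\vec u}(\sinit)$ essentially equal to $g_i(x)$ when nature is consistent. Then
\[
V^{\pi_0}-\max_i V^{\pi_i}=\min_i\bigl(V^{\pi_0}-V^{\pi_i}\bigr),
\]
which turns the pointwise maximum over the portfolio into the $\min_i$ in the formula. This is the source of the real alternation noted in \Cref{par:portfolio-comparison}. Each $g_i$ is affine plus at most one bilinear term in distinct variables. So $g_i$ can be realized as a signed combination of branches of depth at most two, in the style of \Cref{def:polynomial-evaluator}. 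A certain splitter chooses a monomial, and the branch reads its at most two selectors in sequence. The terminal payoffs cancel the splitter probability and the discount, padded to a common depth $H$ as in \Cref{lem:acceptance-difference-verifier}.

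To keep the model rectangular without equality ties, each selector copy is owned by exactly one policy. The policy $\pi_i$ reads only the copies belonging to the expression $g_i$, while $\pi_0$ reads its own copies. Nature could then choose different values $x^{(i)}$ for each $i$. This must not hurt the reduction, because its direction is favourable. The target sentence is $\forall x^{(1)},\ldots,x^{(r)}:\min_i g_i(x^{(i)})\le 0$. With independent copies, $\min_i g_i(x^{(i)})>0$ has a solution iff each $g_i>0$ is separately feasible, which differs from joint feasibility. Hence consistency must be penalized.

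For the penalty, I would give each portfolio member a consistency-audit component. This component makes $\pi_i$ gain value whenever its copies of a variable disagree with a shared master copy. It should work like the quadratic-difference evaluator of the actuator example, and be bounded below by $\lambda(x^{(i)}_j-\bar x_j)^2$. The audit is built from a branch that reads both the master and the private copy, so the master selectors are shared among all policies. Because the graph is acyclic, this does not force vertex-extremality in any harmful way.

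The quantitative step adapts the \L{}ojasiewicz-style gap from the proof of \Cref{lem:strict-elementary}. Suppose the joint strict system is infeasible. Then every near-consistent point has some $g_i\le$ a doubly-exponentially small quantity. A large enough penalty weight $\lambda$, realized by branches with powers of $2$ to keep the encoding polynomial, makes any inconsistent point worse for nature. We then reuse the $a_p$-chain, with its strict constraints already folded into the $g_i$, to set the threshold at $t=0$.

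The main obstacle is this consistency enforcement. The penalty must be realized with two-successor rows in an $(s,a)$-rectangular, acyclic model using deterministic policies. The weight needed may be doubly exponential, which cannot be encoded directly. As a first attempt, I would avoid large weights by strengthening the normal form. Each variable would occur in only one residual besides its copy residual, and the copy residuals $z-w$ would themselves become portfolio constraints $a_p\pm(z-w)>0$. The min over portfolio members then enforces consistency with no penalty at all. A copy residual needs only master and copy selectors, read once each along different branches, so ownership conflicts disappear.

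Membership follows from \Cref{lem:portfolio-comparison-membership}. Hardness then gives $\sUTR$-completeness.
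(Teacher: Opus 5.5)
Your proposal never reaches a working reduction, because it starts from a false premise: that $(s,a)$-rectangularity forces private selectors, one per occurrence or per member, so agreement between members has to be manufactured separately. Neither of your repairs closes this gap.

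\emph{Penalty route.} Beyond the doubly exponential weight you already concede, the penalty itself is not realizable. In an acyclic model a path visits each selector at most once, so every member's value is multilinear in the selector probabilities. A bilinear expression in $x^{(i)}_j,\bar x_j$ that vanishes on the diagonal is a multiple of $x^{(i)}_j-\bar x_j$, so it changes sign. A genuine square $(x^{(i)}_j-\bar x_j)^2$ would require reading one selector twice on one branch, which is impossible.

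\emph{Strengthened normal form.} The constraints $a_p\pm(z-w)>0$ are already in the system produced by \Cref{lem:strict-elementary}. They constrain what \emph{other} members read only if those members pass through the same selectors for $z$ and $w$. That is cross-member sharing, which your premise ruled out. Moreover, $a_p$ (and the chain variables $a_j,b_j$) occurs in every slack constraint, including the copy constraints themselves. So further copying cannot arrange that it ``occurs in only one residual besides its copy residual.''

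The missing idea is that sharing a selector state already is the tie. The paper's shared-selector evaluator (\Cref{def:shared-selector-evaluator}) has a single selector $q_j$ per variable with one uncertain row $(1-x_j)\delta_{q_j^0}+x_j\delta_{q_j^1}$. Every branch of every member that mentions $x_j$ passes through it, so all of them read the same $x_j$ with no equality constraints. The model is then trivially $(s,a)$-rectangular with two-successor choices.

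Members diverge after the selector through their own decoder actions $d_{i,j,b}$ at $q_j^b$. The only real restriction is that a stationary deterministic policy fixes one continuation per outcome state. This holds because in \Cref{prob:strict-elementary} no variable occurs twice in one $g_i$, so $\pi_i$ reaches each $q_j$ along at most one branch. Ordering each monomial's selectors by index gives acyclicity.

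Then $V^{\pi_0}=0$ and $V^{\pi_i}=-g_i(x)$, so $\Delta_\cU(\pi_0,\{\pi_1,\ldots,\pi_r\})=\sup_x\min_i g_i(x)$. This is at most $0$ exactly when the strict system is infeasible. No copies, penalties, or additional \L{}ojasiewicz argument are needed.
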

\begin{definition}[Shared-selector evaluator]\label{def:shared-selector-evaluator}
Fix a strict elementary feasibility instance $g_1,\ldots,g_r$ over $x\in[0,1]^n$ (\Cref{prob:strict-elementary}) and a discount $\gamma$.
For each $i$, treat $-g_i$ as its constant term plus a list of monomials, each either linear or bilinear in two distinct variables, and order the variables of every monomial by increasing index.
The \emph{shared-selector evaluator} is the RMDP $M=\tup{S,A,\cU,R,\sinit,\gamma}$, where
\begin{itemize}
\item $S=\{s_0,\bot\}\cup\{q_j,q_j^0,q_j^1 : 1\le j\le n\}\cup\{\tau_{i,\ell}\}$: the initial state $s_0$; a fresh absorbing zero-reward state $\bot$; one \emph{selector} $q_j$ and its two successors $q_j^0,q_j^1$ per variable $x_j$, shared by every branch that mentions $x_j$; and one \emph{terminal} $\tau_{i,\ell}$ per branch $\ell$ of $-g_i$ (its constant term counts as one branch);
\item $A=\{a_0,a_1,\ldots,a_r\}\cup\{d_{i,j,b}\}$: at $s_0$, action $a_0$ leads deterministically to $\bot$, and action $a_i$ enters a rational certain splitter with one branch per term of $-g_i$; at $q_j^b$, the \emph{decoder} action $d_{i,j,b}$ is enabled for every policy $i$ whose current branch visits $x_j$; it leads to $\bot$ when $b=0$, and otherwise to the next selector or to the branch's terminal if $x_j$ was its last variable;
\item $\cU$ is $(s,a)$-rectangular: the certain splitter at $a_i$ and every decoder are deterministic;
the one uncertain choice at $q_j$ has distribution
$(1-x_j)\delta_{q_j^0}+x_j\delta_{q_j^1}$, the same distribution regardless of which branch or which
$g_i$ is passing through;
\item branch $\ell$, selected with splitter probability $w>0$ and reaching its terminal
$\tau_{i,\ell}$ after $d$ transitions, has terminal payoff $w^{-1}\gamma^{-d}$ times its coefficient
(including its sign); every other described choice pays reward $0$, and no common branch depth is
needed because each payoff cancels its own depth;
\item all remaining choices use ruinous-sink completion;
\item $\sinit=s_0$.
\end{itemize}
\end{definition}

\begin{lemma}[Evaluator policies]\label{lem:shared-selector-policies}
In the shared-selector evaluator of \Cref{def:shared-selector-evaluator}, the following are well-defined deterministic (stationary) policies.
\begin{itemize}
\item $\pi_0$ chooses $a_0$ at $s_0$ and a fixed ruinous-completed default choice at every other nonterminal state, which is unreachable under $\pi_0$.
\item For each $i\in\{1,\dots,r\}$, $\pi_i$ chooses $a_i$ at $s_0$; and for every variable $x_j$ occurring in $g_i$ and every outcome $b\in\{0,1\}$, $\pi_i$ chooses $d_{i,j,b}$ at $q_j^b$.
\end{itemize}
This assignment is well-defined because a stationary policy fixes a single action per state and no variable occurs twice in one expression $g_i$.
\end{lemma}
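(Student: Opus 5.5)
The plan is to check, one after the other, the two requirements of a stationary policy: at every state the policy prescribes exactly one action, and that action is enabled there. For $\pi_0$ this is immediate. At $s_0$ it plays $a_0$, which is enabled and leads deterministically to $\bot$. At $\bot$ every described action is a Dirac self-loop. At every other nonterminal state, ruinous-sink completion supplies an enabled ruinous choice, and $\pi_0$ takes that fixed default. Since $a_0$ moves straight from $s_0$ to the absorbing $\bot$, no other nonterminal state is reachable under $\pi_0$, so the defaults are consistent and never influence its value.

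For $\pi_i$ with $i\ge1$, the states fall into four groups, which I will treat in turn.
\begin{enumerate}
\item At $s_0$ the policy plays $a_i$.
\item At an outcome state $q_j^b$ with $x_j$ occurring in $g_i$, it plays $d_{i,j,b}$. This action is enabled there by \Cref{def:shared-selector-evaluator}, because some branch of $-g_i$ visits $x_j$.
\item At the selectors $q_j$ there is a single action, the uncertain one, so there is nothing to choose.
\item At terminals, at $\bot$, and at the outcome states of variables not occurring in $g_i$, the policy takes the completed default. The latter outcome states are unreachable under $\pi_i$, since every branch of $-g_i$ visits only selectors of variables of $g_i$.
\end{enumerate}

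The one point that needs an argument is that the prescription at $q_j^b$ is single-valued. Two concerns arise. First, different branches of $-g_i$ may pass through $q_j^b$. Second, a single branch might return to it.

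The second concern is ruled out by the hypothesis of \Cref{prob:strict-elementary}: no variable occurs twice in one expression $g_i$. Within a monomial the variables are distinct and ordered by increasing index, so a branch visits each selector at most once.

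For the first concern, note that the decoder $d_{i,j,b}$ is indexed only by $(i,j,b)$ and not by the branch $\ell$. Its target must therefore not depend on which branch arrived. This is the main obstacle I expect. If $x_j$ occurs in two monomials of $g_i$ (say a linear term $x_j$ and a bilinear term $x_jx_k$), the ``next selector or terminal'' would differ between them.

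I would first argue that by \Cref{prob:strict-elementary}, ``no variable occurring twice in the same expression'' means each $x_j$ appears in at most one monomial of $g_i$. Then at most one branch of $-g_i$ passes through $q_j$. Its continuation after outcome $1$ is well determined: either the next variable of that monomial or the terminal $\tau_{i,\ell}$. After outcome $0$ the continuation is always $\bot$.

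If that reading were too weak, the fallback is to reapply the copy residuals of \Cref{def:policy-affine-normal-form} used in \Cref{lem:strict-elementary}, which ensure that input variables occur only in affine copy residuals. A single occurrence per expression then holds by construction.

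With single-valuedness settled, $\pi_i$ assigns one enabled action to every state. It is therefore a well-defined memoryless deterministic policy.
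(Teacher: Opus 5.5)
Your proposal is correct and matches the paper's argument. Single-valuedness of $\pi_i$ at $q_j^b$ follows from the hypothesis that no variable occurs twice in one $g_i$: at most one branch of $-g_i$ reaches $q_j$, so $d_{i,j,b}$ has a unique continuation, and prescriptions of different policies need not agree (your copy-residual fallback is unnecessary, since that hypothesis is part of the input format).
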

\begin{proof}[Proof of \Cref{lem:shared-selector-policies}]
No prescription is repeated within one policy because no variable occurs twice in one expression.
Prescriptions made by different policies need not agree.
\end{proof}
\begin{figure}[t]
\centering
\resizebox{\textwidth}{!}{%
\begin{tikzpicture}[
  >=Latex,
  state/.style={circle,draw,minimum size=7mm,inner sep=1pt},
  every node/.style={font=\small}
]
\node[state] (s0) at (0,0) {$s_0$};
\node[state] (bot) at (0,-3) {$\bot$};
\node[state] (t10) at (-3.3,1.3) {$\tau_{1,0}$};
\node[state] (t20) at (-5.3,-1.3) {$\tau_{2,0}$};
\node[state] (q1) at (2.5,0) {$q_1$};
\node[state] (q1a) at (5,0.8) {$q_1^1$};
\node[state] (q1b) at (5,-0.8) {$q_1^0$};
\node[state] (t11) at (7.5,1.4) {$\tau_{1,1}$};
\node[state] (q2) at (7.5,0.2) {$q_2$};
\node[state] (q2a) at (10,1) {$q_2^1$};
\node[state] (q2b) at (10,-0.6) {$q_2^0$};
\node[state] (t21) at (12.5,1) {$\tau_{2,1}$};
\draw[->] ($(s0.west)+(-6mm,0)$) -- (s0.west);
\draw[->] (s0) -- node[left] {$a_0$} (bot);
\draw[->] (s0) -- node[above,sloped] {$a_1,\,\tfrac12$} node[below,sloped] {$R(\tau_{1,0})$} (t10);
\draw[->] (s0) -- node[below,sloped] {$a_2,\,\tfrac12$} node[above,sloped] {$R(\tau_{2,0})$} (t20);
\draw[->] (s0) -- node[above] {$a_1,\,\tfrac12$} (q1);
\draw[->] (s0) -- node[below] {$a_2,\,\tfrac12$} (q1);
\draw[->] (q1) -- node[above,sloped] {$x_1$} (q1a);
\draw[->] (q1) -- node[below,sloped] {$1-x_1$} (q1b);
\draw[->] (q1b) to[bend left=20] node[below,sloped] {$d_{1,1,0},d_{2,1,0}$} (bot);
\draw[->] (q1a) -- node[above,sloped] {$d_{1,1,1}$} (t11);
\draw[->] (q1a) -- node[below] {$d_{2,1,1}$} (q2);
\draw[->] (q2) -- node[above,sloped] {$x_2$} (q2a);
\draw[->] (q2) -- node[below,sloped] {$1-x_2$} (q2b);
\draw[->] (q2b) to[bend left=30] node[below,sloped] {$d_{2,2,0}$} (bot);
\draw[->] (q2a) -- node[above,sloped] {$d_{2,2,1}$} (t21);
\end{tikzpicture}}
\caption{The shared-selector evaluator of \Cref{ex:shared-selector-evaluator}.
Each branch payoff cancels that branch's own probability and depth.
Each absorbing terminal uses the payoff convention of \Cref{app:sqrs-preliminaries}.}
\label{fig:shared-selector-evaluator}
\end{figure}

\begin{example}[A worked evaluator]\label{ex:shared-selector-evaluator}
Take $n=2$, $r=2$, $\gamma=\tfrac12$, $g_1(x)=x_1-\tfrac12$, and $g_2(x)=x_1x_2-\tfrac14$, so that $-g_1(x)=\tfrac12-x_1$ and $-g_2(x)=\tfrac14-x_1x_2$.
\Cref{fig:shared-selector-evaluator} shows the resulting RMDP.
By \Cref{lem:shared-selector-policies}, $\pi_0$ (always $a_0$), $\pi_1$ (choosing $a_1$ at $s_0$ and $d_{1,1,1}$ at $q_1^1$), and $\pi_2$ (choosing $a_2$ at $s_0$, $d_{2,1,1}$ at $q_1^1$, and $d_{2,2,1}$ at $q_2^1$) are all well-defined.

Each constant branch reaches $\tau_{i,0}$ in one step with probability $1/2$.
The monomial branch of $\pi_1$ follows $s_0\to q_1\to q_1^1\to\tau_{1,1}$, with total branch probability $x_1/2$ and depth three.
That of $\pi_2$ follows $s_0\to q_1\to q_1^1\to q_2\to q_2^1\to\tau_{2,1}$, with probability $x_1x_2/2$ and depth five.
The policies share the selector $q_1$ and diverge only through their decoder actions at $q_1^1$.

The monomial branches reach their terminals after depths three and five, while each constant branch
has depth one.
With $w=\tfrac12$ and $\gamma=\tfrac12$, the rule
$R=w^{-1}\gamma^{-d}\times(\text{coefficient})$ applies at each branch's own depth and gives
\[
R(\tau_{1,0})=2,\qquad R(\tau_{1,1})=-16,\qquad
R(\tau_{2,0})=1,\qquad R(\tau_{2,1})=-64.
\]
The first policy consequently has value
\[
V_x^{\pi_1}(s_0)=\gamma\cdot\tfrac12\cdot2
+\gamma^3\cdot\tfrac12 x_1(-16)=\tfrac12-x_1,
\]
and similarly
\[
V_x^{\pi_2}(s_0)=\gamma\cdot\tfrac12\cdot1
+\gamma^5\cdot\tfrac12 x_1x_2(-64)=\tfrac14-x_1x_2,
\]
as required by $-g_1$ and $-g_2$.
Since $\pi_0$ always reaches $\bot$, $V_x^{\pi_0}(s_0)=0$ for every $x$.
Hence
\[
\Delta_\cU(\pi_0,\{\pi_1,\pi_2\})=\sup_{x\in[0,1]^2}\min\bigl(g_1(x),g_2(x)\bigr),
\]
which is maximized at $x=(1,1)$, giving $\min(\tfrac12,\tfrac34)=\tfrac12$.
\end{example}

\begin{proof}[Proof of \Cref{lem:portfolio-comparison-hardness}]
We reduce from strict elementary feasibility (\Cref{lem:strict-elementary}) using the shared-selector evaluator of \Cref{def:shared-selector-evaluator} at discount $\gamma=\gamma_0$.
Its policies $\pi_0,\pi_1,\ldots,\pi_r$ are well-defined by \Cref{lem:shared-selector-policies}.
Every uncertain choice is at a selector $q_j$ and has two successors, and the selectors' choices are independent, so the construction is $(s,a)$-rectangular with two-successor choices.
Ordering every monomial's selectors by increasing variable index makes the full transition graph acyclic apart from its absorbing finals.
There are polynomially many selectors, decoders, and terminals, and every branch depth is polynomial,
so the terminal payoffs have polynomial bit length.
By construction, $\pi_0$ reaches only $\bot$, so $V_x^{\pi_0}(\sinit)=0$.
Policy $\pi_i$ realizes exactly the discounted sum of $-g_i$'s terms, so $V_x^{\pi_i}(\sinit)=-g_i(x)$.
Hence, with $\Pi=\{\pi_1,\ldots,\pi_r\}$,
\[
\Delta_{\cU}(\pi_0,\Pi)
=\sup_{x\in[0,1]^n}\Bigl(0-\max_i\bigl(-g_i(x)\bigr)\Bigr)
=\sup_{x\in[0,1]^n}\min_i g_i(x).
\]
The right side is positive exactly when the strict system is feasible and is at most zero otherwise.
Hence its upper-threshold language at threshold zero is the complement of an $\sETR$-complete problem, proving hardness.
\end{proof}

\begin{proof}[Proof of \Cref{thm:portfolio-comparison-atr}]
Immediate from \Cref{lem:portfolio-comparison-membership,lem:portfolio-comparison-hardness}.
\end{proof}

\section{Proofs of \Cref{thm:portfolio-regret-membership,thm:min-regret-portfolio-eatr-membership}: Membership for Certification and Synthesis}
\label[appendix]{app:portfolio-regret-membership}
Bellman equations encode explicit or existentially quantified portfolio members together with a
universally quantified policy.

\portfolioregretmembership*

\begin{proof}
Apply \Cref{lem:universal-bellman-encoding} to the explicit input portfolio.
Quantifying all memoryless randomized policies is sound because every realized discounted MDP has an optimal deterministic policy, so the encoded assertion is exactly $\Rreg(\Pi)\le t$.
The given-policy problem is the case $r=1$.
\end{proof}

\minregretportfoliomembership*

\begin{proof}
Existentially quantify the $k$ policy tables before the universal encoding of
\Cref{lem:universal-bellman-encoding}.
Their policy-simplex constraints remain outside the universal implication, and unary $k$ keeps the
formula polynomial.
The single-policy problem is the case $k=1$.

The resulting formula asserts that some portfolio of at most $k$ members has regret at most $t$,
whereas $\rho_k$ is an infimum, so the two agree only when that infimum is attained.
It is: the input uncertainty set is a rational polytope, hence compact, and
\Cref{lem:portfolio-attainment} applies.
\end{proof}

\begin{lemma}[Portfolio attainment]\label{lem:portfolio-attainment}
In a finite discounted RMDP with compact $\cU$, the infimum
\[
\rho_r=\inf_{|\Pi|\le r}\Rreg(\Pi)
\]
is attained.
\end{lemma}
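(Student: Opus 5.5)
The plan is a compactness argument: show that $\Rreg(\Pi)$ is a continuous function of the tuple of policy tables, then minimize that continuous function over a compact parameter space. Portfolios of size at most $r$ can be padded by repeating a member, and repetition does not change $\max_{\pi\in\Pi}V^\pi_{\vec u}$. So I will parametrize portfolios by tuples $(\pi_1,\dots,\pi_r)\in(\polRand)^r$. Identify $\polRand$ with the product of simplexes $\prod_s\cD(A)$, which is a compact subset of $\bR^{S\times A}$; the parameter space $(\polRand)^r$ is then compact. Define
\[
G(\pi_1,\dots,\pi_r,\vec u)=V^*_{\vec u}(\sinit)-\max_{i\le r}V^{\pi_i}_{\vec u}(\sinit),
\]
so that $\Rreg(\{\pi_1,\dots,\pi_r\})=\sup_{\vec u\in\cU}G(\pi_1,\dots,\pi_r,\vec u)$.

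The first step is to show that $G$ is jointly continuous on $(\polRand)^r\times\cU$. For a fixed memoryless policy, $V^\pi_{\vec u}=(I-\gamma P^\pi_{\vec u})^{-1}R^\pi$. Here $P^\pi_{\vec u}(s,s')=\sum_a\pi(s,a)\vec u(s,a,s')$ and $R^\pi(s)=\sum_a\pi(s,a)R(s,a)$ are polynomial in $(\pi,\vec u)$. The matrix $I-\gamma P^\pi_{\vec u}$ is invertible for every stochastic $P^\pi_{\vec u}$, since $\gamma<1$ makes the Neumann series converge, so by Cramer's rule $V^\pi_{\vec u}$ is a rational function whose denominator does not vanish on the domain. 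Hence $V^\pi_{\vec u}$ is continuous in $(\pi,\vec u)$. The optimal value $V^*_{\vec u}(\sinit)=\max_{\pi\in\polDet}V^\pi_{\vec u}(\sinit)$ is a maximum of finitely many continuous functions of $\vec u$, so it is continuous. A finite maximum of continuous functions is continuous, so $G$ is continuous.

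The second step is to pass from $G$ to the regret. Since $\cU$ is compact, Berge's maximum theorem implies that $\Rreg(\pi_1,\dots,\pi_r)=\max_{\vec u\in\cU}G(\pi_1,\dots,\pi_r,\vec u)$ is continuous in the policy tuple. To avoid citing Berge, I would prove this directly. Uniform continuity of $G$ on the compact product gives, for every $\varepsilon>0$, a $\delta>0$ such that tuples within $\delta$ satisfy $|G(\cdot,\vec u)-G(\cdot',\vec u)|\le\varepsilon$ for all $\vec u$. Taking suprema then shows the two regrets differ by at most $\varepsilon$. A continuous function on the compact set $(\polRand)^r$ attains its infimum at some tuple, and the set of its members is a portfolio of size at most $r$ attaining $\rho_r$.

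I expect the main obstacle to be the joint continuity of $V^\pi_{\vec u}$, specifically keeping the inverse $(I-\gamma P)^{-1}$ uniformly bounded. Discounting handles this directly: $\|(I-\gamma P)^{-1}\|_\infty\le 1/(1-\gamma)$ for every stochastic $P$. Using this bound with the identity $A^{-1}-B^{-1}=A^{-1}(B-A)B^{-1}$ gives explicit Lipschitz continuity in $(\pi,\vec u)$. The rest of the argument is routine compactness.
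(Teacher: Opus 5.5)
Your proposal is correct and takes essentially the same route as the paper: pad portfolios to ordered $r$-tuples, use compactness of the product of simplexes and of $\cU$, establish joint continuity of the Bellman solutions (with $V^*_{\vec u}(\sinit)$ as a finite maximum over deterministic policies), and minimize the continuous map from tuples to $\max_{\vec u\in\cU}G$. Your Neumann-series bound and uniform-continuity argument fill in details that the paper asserts without proof.
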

\begin{proof}
Represent a portfolio by an ordered $r$-tuple, repeating members when the set is smaller.
This does not change its pointwise maximum, so the two infima agree.
Three facts give the claim.
The space of ordered $r$-tuples of memoryless randomized policies is a finite product of simplexes,
hence compact, and $\cU$ is compact by hypothesis.
For a fixed discount, the Bellman system of a policy has a unique solution that depends continuously
on the policy probabilities and the realization jointly, and $V_u^*(\sinit)$ is the maximum of
finitely many such solutions, one per memoryless deterministic policy, hence also jointly continuous.
Therefore
\[
f(\boldsymbol\pi,u)
=V_u^*(\sinit)-\max_iV_u^{\pi_i}(\sinit)
\]
is jointly continuous, $\boldsymbol\pi\mapsto\max_{u\in\cU}f(\boldsymbol\pi,u)$ is continuous, and it
attains its minimum on the compact policy-tuple space.
\end{proof}
No acyclicity is needed, so the lemma applies to every input of
\Cref{prob:min-portfolio-regret} and to the synthesis RMDP of \Cref{app:portfolio-eatr} alike.

\section{Exact Transfer from Policy Comparison to Robust Regret}
\label[appendix]{app:exact-lift}
Regret uses a realization-dependent best response, whereas comparison fixes both policies.
The lift below makes one fixed reference optimal at every realization: it factors each source choice
through a separate routing state and gives only the reference an additional bonus.
We state it for an allowed-action family $A_{\rm allow}$.
For a finite portfolio, take the union of its supports.
\begin{definition}[Lifted RMDP]\label{def:lifted-rmdp}
Fix an $(s,a)$-rectangular source RMDP $M=\tup{S,A,\cU,R,\sinit,\gamma_0}$, a deterministic reference policy $\pi_0$, and a nonempty allowed-action family $A_{\rm allow}(s)\subseteq A$ at each $s\in S$, not necessarily containing $\pi_0(s)$.
Let $F\subseteq S$ be its absorbing final states, and let $c_f$ be the payoff of $f\in F$.
Fix
\[
V_{\max}=\frac{\max_{s,a\in A_{\rm allow}(s)\cup\{\pi_0(s)\}}|R(s,a)|}{1-\gamma_0},
\qquad
K=2V_{\max}+1,
\qquad
\Lambda=\frac{K}{1-\gamma_0}.
\]
Choose a rational $Z$ of polynomial encoding length such that $\beta Z>\Lambda+V_{\max}$.
The \emph{lifted RMDP} is the tuple $\widehat M=\tup{\widehat S,\widehat A,\widehat\cU,\widehat R,\widehat\sinit,\beta}$, where
\begin{itemize}
\item $\widehat S=\{x_s : s\in S\}\cup\{y_{s,a} : s\in S\setminus F,\ a\in A_{\rm allow}(s)\cup\{\pi_0(s)\}\}\cup\{\bot_{\rm r}\}$.
There is one \emph{tag state} $x_s$ per source state, one \emph{selector state} $y_{s,a}$ for every allowed or reference action at a nonfinal source state, and one ruinous sink.
\item $\widehat A=A\cup\{\mathrm{bonus},\mathrm{route}\}$ adds two fresh action symbols to the source action set.
\item $\widehat\cU$ is $(s,a)$-rectangular.
At a nonfinal tag state, an allowed action $a$ sends $x_s$ to $y_{s,a}$ and the bonus action sends it to $y_{s,\pi_0(s)}$.
The route action at $y_{s,a}$ carries exactly the original uncertainty set $\cU_{(s,a)}$, redirected from successors $s'$ to tag states $x_{s'}$.
At a final tag $x_f$, every allowed action self-loops, as does the bonus action.
\item At a nonfinal tag, $\widehat R(x_s,a)=R(s,a)$ for $a\in A_{\rm allow}(s)$ and $\widehat R(x_s,\mathrm{bonus})=R(s,\pi_0(s))+K$.
Every route action has reward zero.
At a final tag $x_f$, every allowed action has reward $(1-\beta)c_f$, while the bonus action has reward $(1-\beta)(c_f+\Lambda)$.
\item Every action at $\bot_{\rm r}$ has reward $-(1-\beta)Z$ and self-loops.
Every action at $x_f$ not already described also self-loops with reward $-(1-\beta)Z$.
All remaining choices use ruinous-sink completion with this $Z$.
\item $\widehat\sinit=x_{\sinit}$.
\end{itemize}
Write $\widehat\pi_0$ for the policy always choosing the bonus action.
The lift $\widehat\pi$ of any memoryless randomized policy $\pi$ supported on $A_{\rm allow}$ uses the same action distribution as $\pi$ at each tag state and the route action at every selector state.
For a finite portfolio $\Pi=\{\pi_1,\ldots,\pi_r\}$, write $\widehat\Pi=\{\widehat\pi_1,\ldots,\widehat\pi_r\}$.
\end{definition}
\begin{figure}[t]
\centering
\begin{subfigure}[t]{.34\linewidth}
\centering
\resizebox{\linewidth}{!}{%
\begin{tikzpicture}[
  >=Latex,
  state/.style={circle,draw,minimum size=8mm,inner sep=1pt},
  distribution/.style={circle,fill,inner sep=1.5pt},
  every node/.style={font=\small}]
\node[state] (s0) at (0,0) {$s_0$};
\node[distribution] (d) at (1.8,0) {};
\node[state] (s1) at (4.0,0) {$s_1$};
\draw[->] ($(s0.west)+(-6mm,0)$) -- (s0.west);
\draw[->] (s0) to[bend left=25] node[above] {$a,\,0$} (s1);
\draw[-] (s0) -- node[below] {$b,\,\frac{1-\gamma_0}{2}$} (d);
\draw[->] (d) -- node[above] {$p$} (s1);
\draw[->] (d) to[bend left=45] node[below] {$1-p$} (s0);
\draw[->] (s1) edge[loop right] node[right] {$\mathrm{stop},\,1-\gamma_0$} (s1);
\end{tikzpicture}}
\caption{Source.}\label{fig:source-rmdp-example}
\end{subfigure}\hfill
\begin{subfigure}[t]{.62\linewidth}
\centering
\resizebox{\linewidth}{!}{%
\begin{tikzpicture}[
 state/.style={circle,draw,minimum size=8mm,inner sep=1pt}]
\node[state] (x0) {$x_{s_0}$};
\node[state,right=24mm of x0,yshift=14mm] (ya) {$y_{s_0,a}$};
\node[state,right=24mm of x0,yshift=-14mm] (yb) {$y_{s_0,b}$};
\node[state,right=65mm of x0] (x1) {$x_{s_1}$};
\draw[->] ($(x0.west)+(-6mm,0)$) -- (x0.west);
\draw[->] (x0) -- node[above,sloped] {$\mathrm{bonus},K$} (ya);
\draw[->] (x0) -- node[below,sloped] {$b,\frac{1-\gamma_0}{2}$} (yb);
\draw[->] (ya) -- node[above] {$1$} (x1);
\draw[->] (yb) -- node[below] {$p$} (x1);
\draw[->] (yb) to[bend left=35] node[below] {$1-p$} (x0);
\path[->] (x1) edge[loop above] node[above] {$\mathrm{stop},(1-\beta)$} (x1);
\path[->] (x1) edge[loop right] node[right] {$\mathrm{bonus},(1-\beta)(1+\Lambda)$} (x1);
\end{tikzpicture}}
\caption{Lift.}\label{fig:lifted-rmdp-example}
\end{subfigure}
\caption{The source and its lifted RMDP in \Cref{ex:lifted-rmdp}.}
\end{figure}

\begin{example}[A worked lift]\label{ex:lifted-rmdp}
Take a source RMDP with states $S=\{s_0,s_1\}$, $\sinit=s_0$, actions $A=\{a,b,\mathrm{stop}\}$, and discount $\gamma_0$.
At $s_0$, action $a$ moves deterministically to $s_1$ with reward $R(s_0,a)=0$.
Action $b$ is uncertain, moving to $s_1$ with probability $p$ and back to $s_0$ with probability $1-p$ for any $p\in[0,1]$, with reward $R(s_0,b)=\tfrac{1-\gamma_0}{2}$ regardless of outcome.
At $s_1$, the only action $\mathrm{stop}$ self-loops deterministically with reward $R(s_1,\mathrm{stop})=1-\gamma_0$, so $V(s_1)=1$ (\Cref{fig:source-rmdp-example}).
Fix reference $\pi_0(s_0)=a,\pi_0(s_1)=\mathrm{stop}$ and a single candidate $\pi_1(s_0)=b,\pi_1(s_1)=\mathrm{stop}$, so $\Pi=\{\pi_1\}$ and $A_{\rm allow}(s_0)=\{b\}$, $A_{\rm allow}(s_1)=\{\mathrm{stop}\}$.
Note $A_{\rm allow}(s_0)$ does not contain $\pi_0(s_0)=a$, while $A_{\rm allow}(s_1)$ happens to coincide with $\pi_0(s_1)$.
The Bellman equations give
\[
V_u^{\pi_0}(s_0)=\gamma_0,\qquad
V_u^{\pi_1}(s_0)=\frac{\frac{1-\gamma_0}{2}+\gamma_0p}{(1-\gamma_0)+\gamma_0p},
\]
and the second value is increasing in $p$.
Hence
\[
\Delta_\cU(\pi_0,\{\pi_1\})=\gamma_0-\tfrac12=\tfrac{161}{400}.
\]
\Cref{def:lifted-rmdp} gives tag states $x_{s_0},x_{s_1}$, selectors $y_{s_0,a},y_{s_0,b}$, a ruinous sink, and $\beta=19/20$ (\Cref{fig:lifted-rmdp-example}).
Because $s_1$ is an absorbing final of payoff one, the allowed and bonus actions at $x_{s_1}$ self-loop with rewards $1-\beta$ and $(1-\beta)(1+\Lambda)$.
The selector $y_{s_0,b}$ inherits the source interval for $p$.
The relevant rewards are $R(s_0,a)=0,R(s_0,b)=\tfrac{1-\gamma_0}{2},R(s_1,\mathrm{stop})=1-\gamma_0$, so \Cref{def:lifted-rmdp} gives
\[
V_{\max}=\frac{1-\gamma_0}{1-\gamma_0}=1,
\qquad
K=2V_{\max}+1=3,
\qquad
\Lambda=\frac{K}{1-\gamma_0}=\frac{400}{13},
\]
and $\Rreg(\{\widehat\pi_1\}) =\Lambda+\Delta_\cU(\pi_0,\{\pi_1\}) =\tfrac{400}{13}+\tfrac{161}{400}$.
\end{example}
The margin in $K=2V_{\max}+1$ is what makes the bonus action uniquely optimal at every tag state.
\Cref{lem:exact-regret-lift} formalizes this point.

\begin{lemma}[Exact comparison-to-regret lift]\label{lem:exact-regret-lift}
Consider the construction of \Cref{def:lifted-rmdp} for a deterministic reference $\pi_0$ and an allowed-action family $A_{\rm allow}$.
For every memoryless randomized policy $\pi$ supported on this family and every realization $u$, its lift satisfies
\[
V_{\widehat u}^{\widehat\pi}(x_s)=V_u^\pi(s)
\qquad(s\in S).
\]
This construction preserves $(s,a)$-rectangularity.
It preserves determinism when the lifted policy is deterministic.
It preserves acyclicity of the RMDP and of policy graphs, with every source absorbing final remaining an absorbing final.
Each non-singleton source choice is copied unchanged to one selector state.
In particular, two-successor and two-Dirac restrictions on uncertain choices are preserved.
The bonus policy $\widehat\pi_0$ is pointwise optimal in every realization, and the bonus action is the unique optimal action at every tag state.
Consequently, for every finite portfolio $\Pi$ of policies supported on $A_{\rm allow}$,
\[
\Rreg(\widehat\Pi)=\Lambda+\Delta_{\cU}(\pi_0,\Pi).
\]
\end{lemma}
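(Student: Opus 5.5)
The plan is to exploit the two-step structure of the lift. Every source transition $s\to s'$ becomes a lifted path $x_s\to y_{s,a}\to x_{s'}$ with discount $\beta$ per step, so one source step costs exactly $\beta^2=\gamma_0$. Rewards are paid only at tag states, which are visited at even lifted times $2t$, and each such visit is discounted by $\beta^{2t}=\gamma_0^t$.

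\textbf{Step 1: value transfer.} For a compliant lifted policy $\widehat\pi$ supported on $A_{\rm allow}$, I will write the lifted Bellman equations and eliminate the selector values $V(y_{s,a})=\beta\sum_{s'}u(s,a,s')V(x_{s'})$. This turns the tag equations into $V(x_s)=\sum_a\pi(s,a)\bigl(R(s,a)+\gamma_0\sum_{s'}u(s,a,s')V(x_{s'})\bigr)$, the source Bellman system. At a final tag $x_f$, the allowed self-loop with reward $(1-\beta)c_f$ gives value $c_f$, matching the source. Uniqueness of discounted Bellman solutions then yields $V_{\widehat u}^{\widehat\pi}(x_s)=V_u^\pi(s)$.

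\textbf{Step 2: structural claims.} Rectangularity, determinism, acyclicity, and the two-successor and two-Dirac restrictions are immediate from the construction. Each selector copies exactly one source row $\cU_{(s,a)}$, tag rows are singletons, and a cycle through lifted states projects to a source cycle. A final tag stays absorbing, since all its described actions self-loop.

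\textbf{Step 3: the bonus policy.} The same elimination applies to $\widehat\pi_0$ and gives $W(x_s)=R(s,\pi_0(s))+K+\gamma_0\,\mathbb E\,W(x_{s'})$ at nonfinal tags and $W(x_f)=c_f+\Lambda$ at finals. I will check that $W=V_u^{\pi_0}+\Lambda$ solves this system. The identity $\Lambda=K+\gamma_0\Lambda$ absorbs the $K$ term at nonfinal tags, and at finals the bonus reward $(1-\beta)(c_f+\Lambda)$ gives exactly $c_f+\Lambda$. Equivalently, the $K$-bonuses over the first $T$ source steps sum to $K(1-\gamma_0^T)/(1-\gamma_0)$, and the terminal contributes $\gamma_0^T\Lambda$, so the total is $\Lambda$.

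\textbf{Step 4: optimality, the main obstacle.} I will verify the Bellman optimality inequalities for $W$ in the lifted MDP, so that $W=V^*_{\widehat u}$. At a nonfinal tag, the bonus action $Q$-value exceeds that of any allowed $a$ by
\[
K+R(s,\pi_0(s))-R(s,a)+\gamma_0\bigl(\mathbb E_{\pi_0(s)}V^{\pi_0}_u-\mathbb E_{a}V^{\pi_0}_u\bigr),
\]
where the constant $\Lambda$ cancels. Bounding $|R|\le(1-\gamma_0)V_{\max}$ and $|V^{\pi_0}_u|\le V_{\max}$, the last three terms are at least $-2(1-\gamma_0)V_{\max}-2\gamma_0V_{\max}=-2V_{\max}$. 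With $K=2V_{\max}+1$ the gap is at least $1$, which gives strict, hence unique, optimality.

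The remaining cases are routine:
\begin{itemize}
\item at final tags, bonus beats allowed self-loops by $\Lambda>0$;
\item selectors have only the route action besides ruinous ones;
\item ruinous actions, including undescribed self-loops at finals, yield at most $-\beta Z<-(\Lambda+V_{\max})$, below every $W$ value.
\end{itemize}
The delicate point is that $V^{\pi_0}_u$ must be bounded by $V_{\max}$ even though $A_{\rm allow}$ need not contain $\pi_0(s)$; this is why $V_{\max}$ ranges over allowed and reference actions. Non-compliant randomized policies are handled by \Cref{lem:ruin-dominance}, using the bound $\Lambda+V_{\max}<\beta Z$.

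\textbf{Step 5: regret formula.} With $V^*_{\widehat u}(x_{\sinit})=V^{\pi_0}_u(\sinit)+\Lambda$ and $V^{\widehat\pi_i}_{\widehat u}=V^{\pi_i}_u$, portfolio regret at $\widehat u$ equals $\Lambda+V^{\pi_0}_u(\sinit)-\max_iV^{\pi_i}_u(\sinit)$. Realizations of $\widehat\cU$ correspond bijectively to those of $\cU$, so taking the supremum gives $\Rreg(\widehat\Pi)=\Lambda+\Delta_\cU(\pi_0,\Pi)$.
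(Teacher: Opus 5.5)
Your proposal is correct and follows essentially the paper's route. You take $W=\Lambda+V_u^{\pi_0}$ on tag states, use $\beta^2=\gamma_0$ to collapse each tag--selector pair into a source backup, show via $K=(1-\gamma_0)\Lambda$ that the bonus backup reproduces $W$ while every allowed backup falls short by at least $1$, and read off the regret identity. Only minor points differ: the one-step margin at final tags is $(1-\beta)\Lambda$ rather than $\Lambda$, the appeal to ruinous dominance is unnecessary once the optimality equation is checked against every action, and realizations correspond surjectively rather than bijectively because only allowed and reference rows are copied. None of these affects the argument.
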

\begin{proof}[Proof of \Cref{lem:exact-regret-lift}]
We first show that the bonus policy is optimal in the target RMDP at every realization, together with its value.
Fix a realization $u$ of the source RMDP, inducing a realization $\widehat u$ of the target RMDP via the redirected transitions of \Cref{def:lifted-rmdp}.
Define $W(s):=\Lambda+V_u^{\pi_0}(s)$ for every source state $s$, and read it as a candidate value for the tag state $x_s$.
For a nonfinal source state, $\beta^2=\gamma_0$ means that going from $x_s$ through its selector to the next tag state contributes the source discount overall.
We may therefore compare $W$ against this two-step Bellman backup directly.
For the bonus action at $x_s$, the backup is
\[
R(s,\pi_0(s))+K+\gamma_0\sum_{s'}u(s,\pi_0(s),s')W(s').
\]
Substituting $\pi_0$'s own Bellman equation $V_u^{\pi_0}(s)=R(s,\pi_0(s))+\gamma_0\sum_{s'}u(s,\pi_0(s),s')V_u^{\pi_0}(s')$ for $R(s,\pi_0(s))$, this simplifies to
\[
V_u^{\pi_0}(s)+K+\gamma_0\Lambda,
\]
which equals $W(s)=\Lambda+V_u^{\pi_0}(s)$ exactly, since $K=\Lambda(1-\gamma_0)$ by definition of $\Lambda$.
For any allowed action $a\in A_{\rm allow}(s)$, the same two-step backup is
\[
R(s,a)+\gamma_0\sum_{s'}u(s,a,s')W(s')
=\underbrace{R(s,a)+\gamma_0\sum_{s'}u(s,a,s')V_u^{\pi_0}(s')-V_u^{\pi_0}(s)}_{\text{at most }2V_{\max}\text{ in absolute value}}+V_u^{\pi_0}(s)+\gamma_0\Lambda.
\]
The bracketed term is at most $2V_{\max}$ because the reward is at most $(1-\gamma_0)V_{\max}$ and both value terms are at most $V_{\max}$ in absolute value.
Combined with $(1-\gamma_0)\Lambda=K=2V_{\max}+1$, the whole backup is at most $W(s)-1 < W(s)$.
At an absorbing final $f$ of payoff $c_f$, the bonus backup is
\[
(1-\beta)(c_f+\Lambda)+\beta W(f)=W(f),
\]
while every allowed-action backup is
\[
(1-\beta)c_f+\beta W(f)=c_f+\beta\Lambda
=W(f)-(1-\beta)\Lambda<W(f).
\]
Every other action at $x_f$ has value below $W(f)$ because its self-loop payoff is $-Z$, and every undescribed choice at a nonfinal state has backup $-\beta Z<W(s)$ by the choice of $Z$.
At selector states, the route action is likewise better than entering the ruinous sink.
Thus $W$ satisfies the Bellman optimality equation at every tag state, with the bonus action as the unique maximizer.
Since the discounted Bellman optimality equation has a unique solution, $W$ is exactly the optimal value function on tag states and the bonus policy is pointwise optimal.
In particular,
\[
V_{\widehat u}^{\widehat\pi_0}(x_s)=\Lambda+V_u^{\pi_0}(s)
\]
for every source state $s$.

For any policy $\pi$ supported on $A_{\rm allow}$, its lift $\widehat\pi$ replays $\pi$'s Bellman equation over two lifted steps at nonfinal states and uses the payoff-preserving self-loop at final states.
Direct substitution using $\beta^2=\gamma_0$ therefore gives
\[
V_{\widehat u}^{\widehat\pi}(x_s)=V_u^{\pi}(s)
\]
for every $s$.
No optimality claim is needed here, only that $\widehat\pi$'s value matches $\pi$'s under the same discount-matching substitution used above.

Combining the two identities at $\sinit$,
\[
\begin{aligned}
\Rreg(\widehat\Pi)
&=\sup_{\widehat u}\Bigl(V_{\widehat u}^*(\widehat\sinit)-\max_{1\le i\le r}V_{\widehat u}^{\widehat\pi_i}(\widehat\sinit)\Bigr)\\
&=\sup_u\Bigl(\Lambda+V_u^{\pi_0}(\sinit)-\max_{1\le i\le r}V_u^{\pi_i}(\sinit)\Bigr)\\
&=\Lambda+\Delta_{\cU}(\pi_0,\Pi),
\end{aligned}
\]
using pointwise optimality of the bonus policy, so $V_{\widehat u}^*(\widehat\sinit)=V_{\widehat u}^{\widehat\pi_0}(\widehat\sinit)$.
Every uncertain source choice occurs at one selector state, so shared choices are not copied.
Products of source choice uncertainty sets remain products, which preserves $(s,a)$-rectangularity.
Replacing each nonfinal source edge by its two-edge tag-selector path preserves a topological order, while each final tag and the ruinous sink remains absorbing.
For policy graphs the reachability qualifier in the definition of a used choice is load-bearing rather than incidental: route actions are installed at every selector state, including selectors for source actions the lifted policy never chooses, and those states are unreachable under it.
Reading $G_{\widehat\pi}$ as containing every transition the policy gives positive probability, instead of only its used choices, would therefore add edges that no run traverses and break the claim.
This proves the stated acyclicity preservation for the full RMDP and for its policy graphs.
\end{proof}
\section{Proofs of \Cref{thm:fixed-regret-hardness,thm:portfolio-regret-atr,thm:fixed-regret-forallr}: Transferring Comparison Bounds to Certification}
\subsection{Rectangular Given-Policy Hardness}
\label[appendix]{app:fixed-regret-hardness}
The exact lift transfers the Boolean and square-root-sum comparison bounds.

\fixedregrethardness*

\begin{proof}
For the $\scoNP$ lower bound, start with $\Cmp(\varphi)$ and apply $\Lift(\Cmp(\varphi),\pi_C)$ with singleton portfolio $\{\pi_K\}$.
The source consists of deterministic policies $\pi_C,\pi_K$ in an acyclic $(s,a)$-rectangular RMDP with independent two-Dirac uncertain choices, one certain uniform splitter, and discount $\gamma_0$, and it satisfies
\[
\varphi\in\UNSAT
\quad\Longleftrightarrow\quad
\Delta_{\cU}(\pi_C,\pi_K)\le 2-\frac1{2n}.
\]
For the lifted candidate $\widehat\pi_K$ the lemma gives
\[
\Rreg(\widehat\pi_K)
=\Lambda+\Delta_{\cU}(\pi_C,\pi_K).
\]
Thus, with the rational target threshold $\widehat t=\Lambda+2-\frac1{2n}$, the lifted regret instance is a yes-instance exactly when $\varphi\in\UNSAT$.
The structural restrictions follow from \Cref{lem:exact-regret-lift}, proving $\scoNP$-hardness.

For the $\scoSRS$ lower bound, use the deterministic policies $\pi_u,\pi_v$ and threshold $t$ produced by \Cref{thm:comparison-cosqrs}.
That construction uses the same discount $\gamma_0$, is $(s,a)$-rectangular, and has two-Dirac uncertain choices.
Apply $\Lift(N,\pi_u)$ with singleton candidate $\{\pi_v\}$.
Then
\[
\Delta_{\cU}(\pi_u,\pi_v)\le t
\quad\Longleftrightarrow\quad
\Rreg(\widehat\pi_v)\le\Lambda+t.
\]
Hence given-policy robust regret is $\scoSRS$-hard.
Together the two reductions prove the theorem.
\end{proof}

\subsection{Rectangular Portfolio Certification}
\label[appendix]{app:portfolio-atr}
The family form of the exact lift transfers the rectangular portfolio-comparison bound.

\portfolioregretatr*

\begin{proof}
The evaluator above was instantiated at $\gamma_0$, so apply \Cref{lem:exact-regret-lift} to its reference policy and explicit portfolio.
The exact identity
\[
\Rreg(\widehat\Pi)=\Lambda+\Delta_{\cU}(\pi_0,\Pi)
\]
translates the zero comparison threshold to the rational regret threshold $\Lambda$.
The structural conclusions are those of \Cref{lem:exact-regret-lift}.
Therefore portfolio-regret certification is $\sUTR$-hard under all restrictions stated in the theorem.
\end{proof}

\subsection{General-Polytope Regret Certification}
\label[appendix]{app:regret-certification-real}
Membership is the singleton case of \Cref{thm:portfolio-regret-membership}.
For hardness, a choice between the polynomial evaluator and zero makes regret compute the polynomial's positive part.

\fixedregretforallr*

The membership half of \Cref{thm:fixed-regret-forallr} is the singleton-family case of \Cref{lem:universal-bellman-encoding}.
\paragraph{Polynomial-evaluation hardness.}
\begin{definition}[Positive-part gadget]\label{def:positive-part-gadget}
Given $M_f$ from \Cref{def:polynomial-evaluator}, add a fresh initial state $s^+$ with two reward-zero
actions: $a_f$ enters $M_f$ at $s_f$, and $a_\bot$ enters a zero-reward sink.
Let $\pi_\bot$ choose $a_\bot$ at $s^+$.
All choices left undescribed after enlarging the global action set use the same ruinous-sink
completion as $M_f$.
\end{definition}

\begin{lemma}[Positive-part regret]\label{lem:positive-part-gadget}
The fixed policy in \Cref{def:positive-part-gadget} satisfies
\[
\Rreg(\pi_\bot)=\sup_{p\in[0,1]^m}\max\{\gamma_0 f(p),0\}.
\]
Consequently $\Rreg(\pi_\bot)\le0$ exactly when $\forall p\in[0,1]^m:f(p)\le0$.
\end{lemma}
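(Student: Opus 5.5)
The proof fixes a realization $u\in\cU$ and computes the two quantities in $\Rreg(\pi_\bot)$ at $s^+$ directly, then takes the supremum.

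First, the value of the fixed policy. The action $a_\bot$ at $s^+$ has reward zero and enters a zero-reward absorbing sink, so $V_u^{\pi_\bot}(s^+)=0$ for every $u$.

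Second, the optimal value $V_u^*(s^+)$, which is the main step. Any policy $\pi$ either plays a ruinous action at $s^+$, or mixes $a_\bot$ with $a_f$. In the second case its value is $\pi(s^+,a_f)\gamma_0V_u^{\pi}(s_f)$, since both actions have zero reward. Inside $M_f$ the intended action is the only nonruinous one at every state. By \Cref{lem:ruin-dominance}, any policy is therefore weakly dominated by its compliant conditioning $\bar\pi$. The hypothesis $\gamma Z>M+1$ holds because the completion constant of $M_f$ is reused, and compliant values in the enlarged model stay within $\max\{|f|\text{-bound},0\}$, which is below the described-reward bound. By \Cref{lem:polynomial-evaluator}, the compliant value at $s_f$ equals $f(p)$, where $p$ is the representative vector determined by $u$. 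A ruinous action at $s^+$ yields $-\gamma_0 Z<0$. Hence
\[
V_u^*(s^+)=\max\{\gamma_0 f(p),0\},
\]
attained by a deterministic choice between $a_f$ and $a_\bot$. The care needed here is to check that the enlarged global action set creates no new nonruinous options. The construction completes every undescribed choice ruinously, so the only real choices are the two described actions at $s^+$.

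Third, take the supremum. The regret at $u$ is $\max\{\gamma_0 f(p),0\}-0$. As $u$ ranges over $\cU$, the tying equalities of \Cref{def:polynomial-evaluator} make the representative coordinates range exactly over $[0,1]^m$. Each occurrence row is then determined by its representative (or its complement), so every $p\in[0,1]^m$ is realized and nothing else is. This gives
\[
\Rreg(\pi_\bot)=\sup_{p\in[0,1]^m}\max\{\gamma_0 f(p),0\}.
\]
This quantity is nonnegative. It equals zero if and only if $\gamma_0 f(p)\le 0$ for all $p$, and since $\gamma_0>0$ that is the same as $f(p)\le0$ for all $p$. This yields the stated consequence.
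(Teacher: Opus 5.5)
Your proposal is correct and follows the paper's own argument: at each realization the two initial actions have values $\gamma_0 f(p)$ and $0$, so the shortfall of $\pi_\bot$ is the positive part, and taking the supremum gives the identity and the zero-threshold consequence; you simply spell out the ruinous-dominance step and the fact that the tying equalities let the representatives range over all of $[0,1]^m$, both of which the paper leaves implicit. One phrase should be tightened: interior states of $M_f$ are not bounded by $\max|f|$ (e.g.\ $b_\ell^{d_\ell}$ has value $r_\ell$), so justify the hypothesis of \Cref{lem:ruin-dominance} differently. Since $s^+$ adds only zero-reward actions, $V^{\rm bd}$ is unchanged, and hence the default condition $\gamma_0 Z>V^{\rm bd}+1$ still holds.
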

\begin{proof}
At realization $p$, the two initial actions have values $\gamma_0f(p)$ and zero by \Cref{lem:polynomial-evaluator}.
Their positive part is precisely the shortfall of $\pi_\bot$.
Taking the supremum proves the identity.
Regret is nonnegative, so threshold zero is tight.
\end{proof}

\Cref{ex:positive-part-gadget} illustrates the zero-threshold case.
\begin{example}\label{ex:positive-part-gadget}
For $f(p)=p-p^2$, the unique compliant policy inside $M_f$ has value $p-p^2$ and
\[
\Rreg(\pi_\bot)=\gamma_0\max_{p\in[0,1]}(p-p^2)=\gamma_0/4>0.
\]
Thus this instance is a no-instance at threshold zero.
\end{example}

\begin{proof}[Proof of \Cref{thm:fixed-regret-forallr}]
Apply \Cref{lem:positive-part-gadget} to the bounded universal-polynomial instances used in \Cref{lem:comparison-forallr-hardness}.
Together with the membership encoding above, this proves the theorem.
\end{proof}

\section{Proof of \Cref{thm:min-regret-combinatorial}: Combinatorial Minimal-Regret Hardness}
\label[appendix]{app:min-regret-boolean}

We prove the Boolean lower bounds for minimal robust regret.

\minregretcombinatorial*

Both reductions first constrain the candidate to allowed actions.
\Cref{lem:policy-restriction} makes disallowed actions ruinous and thereby transfers this restricted minimum to unrestricted synthesis within an arbitrarily small $\varepsilon$.

\subsection{Policy Restriction}
Fix an RMDP $N$, nonempty allowed action sets, and rational $\varepsilon>0$.
The construction below turns restricted synthesis into ordinary unrestricted synthesis: every disallowed choice is handed to the adversary as an extra option that can collapse straight into a state so bad that no min-regret policy will ever choose to use it.
For $\mathcal P\in\{\polRand,\polDet\}$, write $\mathcal P_{\mathrm{allow}}$ for the policies in $\mathcal P$ supported on the allowed actions.

\begin{definition}[$\varepsilon$-restricted RMDP]\label{def:eps-restricted-rmdp}
Let $N=\tup{S,A,\cU,R,\sinit,\gamma}$ be an RMDP and let $A_{\mathrm{allow}}(s)\subseteq A$ be a nonempty set of allowed actions for every $s\in S$.
Suppose every disallowed choice has a fixed transition distribution, i.e., $\cU_{(s,a)}=\{p_{s,a}\}$ for every $s\in S$ and $a\in A\setminus A_{\mathrm{allow}}(s)$.
Choices supplied by ruinous-sink completion satisfy this hypothesis.
The construction proceeds in two steps.

First it \emph{unfolds} the absorbing finals of $N$, so that no state both self-loops and can leave.
Add a fresh zero-reward absorbing sink $\bot_0$, every action of which has reward zero and row $\{\delta_{\bot_0}\}$, and let every action be allowed there.
At every absorbing final $f$ of $N$, replace the reward $R(f,a)$ by $R(f,a)/(1-\gamma)$ and the row by $\{\delta_{\bot_0}\}$, keeping $A_{\mathrm{allow}}(f)$ as it is.
An action $a$ at $f$ had value $R(f,a)/(1-\gamma)$ before the step, since it self-looped forever, and it collects exactly that reward once and then nothing after the step.
The unfolding therefore leaves the value of every choice at $f$ unchanged.
Every state therefore keeps its value under every stationary randomized policy and every realization, so the optimal value and every robust regret are unchanged as well.
The unfolding adds no uncertain choice, preserves rectangularity and acyclicity, and leaves $\bot_0$ as the only absorbing final.
It raises the largest reward magnitude to at most the largest value magnitude of $N$, hence multiplies $V_{\max}$ below by at most $1/(1-\gamma)$ and keeps every constant of polynomial encoding length.
Write $N$ for the unfolded RMDP from here on.

Second, let $\varepsilon>0$ be rational, and set
\[
V_{\max}=\max\left\{1,\frac{\max_{s,a}|R(s,a)|}{1-\gamma}\right\},
\qquad
Z_\varepsilon=\frac{(2-\gamma)V_{\max}+4V_{\max}^2/\varepsilon}{\gamma}.
\]
The \emph{$\varepsilon$-restricted RMDP} is the tuple $N_\varepsilon=\tup{S_\varepsilon,A_\varepsilon,{\cU}_\varepsilon,R_\varepsilon,{\sinit}_\varepsilon,\gamma_\varepsilon}$, where:
\begin{itemize}
\item $S_\varepsilon=S\cup\{\bot_{\rm r}\}$, adding one fresh ruinous sink $\bot_{\rm r}$;
\item $A_\varepsilon=A$, $\gamma_\varepsilon=\gamma$, and ${\sinit}_\varepsilon=\sinit$;
\item $R_\varepsilon(s,a)=R(s,a)$ for every $s\in S$ and $a\in A$, while every action at $\bot_{\rm r}$ has reward $-(1-\gamma)Z_\varepsilon$;
\item $\cU_\varepsilon(s,a)=\cU(s,a)$ for every allowed choice, $a\in A_{\mathrm{allow}}(s)$;
\item $\cU_\varepsilon(s,a)=\bigl\{\theta\,p_{s,a}+(1-\theta)\delta_{\bot_{\rm r}}:\theta\in[0,1]\bigr\}$ for every disallowed choice, $a\in A\setminus A_{\mathrm{allow}}(s)$;
\item every action at $\bot_{\rm r}$ has row $\{\delta_{\bot_{\rm r}}\}$.
\end{itemize}
\end{definition}

The self-loop reward at $\bot_{\rm r}$ gives it value $-Z_\varepsilon$ under every realization:
$V(\bot_{\rm r})=-(1-\gamma)Z_\varepsilon+\gamma V(\bot_{\rm r})$ forces
$V(\bot_{\rm r})=-Z_\varepsilon$.
At the endpoint $\theta=1$, $\cU_\varepsilon(s,a)$ reduces to $\{p_{s,a}\}$, exactly $N$'s original choice.
At $\theta=0$, it collapses to $\{\delta_{\bot_{\rm r}}\}$, routing straight into the sink.
The unfolding is what makes the interpolation safe.
When $N$ is acyclic its only self-loops sit at absorbing finals, so after the unfolding the only self-looping state is $\bot_0$, where nothing is disallowed.
Every interpolated row therefore lies between two distributions that both leave their state, and no state acquires a self-loop it did not already have.

\begin{lemma}[Policy restriction]\label{lem:policy-restriction}
For either $\mathcal P=\polRand$ or $\mathcal P=\polDet$,
\[
\begin{aligned}
\inf_{\rho\in\mathcal P}\Rreg^{N_\varepsilon}(\rho)
&\le \inf_{\pi\in\mathcal P_{\mathrm{allow}}}\Rreg^N(\pi),\\
\inf_{\rho\in\mathcal P}\Rreg^{N_\varepsilon}(\rho)
&\ge \inf_{\pi\in\mathcal P_{\mathrm{allow}}}\Rreg^N(\pi)-\varepsilon.
\end{aligned}
\]
\end{lemma}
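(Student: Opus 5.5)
The plan is to prove the two inequalities separately. The first is a monotonicity argument showing that nature gains nothing from $\theta<1$ against an allowed policy. The second is a dichotomy on how much discounted mass the candidate puts on disallowed actions. Throughout, $N$ denotes the unfolded RMDP. The unfolding preserves all values, so I work with it directly. I also use that every compliant value lies in $[-V_{\max},V_{\max}]$ while $V(\bot_{\rm r})=-Z_\varepsilon<-V_{\max}$.

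\emph{Upper bound.} Fix $\pi\in\mathcal P_{\mathrm{allow}}$ and view it as a policy of $N_\varepsilon$. A realization $\widehat u$ of $N_\varepsilon$ consists of a realization $u$ of the allowed rows together with interpolation weights $\theta_{s,a}$ on the disallowed rows. Since $\pi$ never plays a disallowed action, $V^\pi_{\widehat u}=V^\pi_u$. For the optimum, I argue through Bellman optimality operators. The operator at $\widehat u$ is pointwise dominated by the operator at the $\theta\equiv1$ realization, which coincides with $N$ at $u$: a disallowed backup $\theta\,p_{s,a}\cdot V+(1-\theta)(-Z_\varepsilon)$ is at most $p_{s,a}\cdot V$ whenever $V\ge -Z_\varepsilon$, and this holds for $V^*_u$. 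Monotonicity and contraction of the operators then give $V^*_{\widehat u}\le V^*_u$. Hence $\Rreg^{N_\varepsilon}(\pi)\le\Rreg^N(\pi)$, and taking the infimum over $\pi$ gives the first inequality. The class $\polDet$ is handled identically, because $\pi$ is simply reused.

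\emph{Lower bound.} Fix an arbitrary $\rho\in\mathcal P$ and let $\bar\rho\in\mathcal P_{\mathrm{allow}}$ be its conditioning on allowed actions. For $\polDet$, $\bar\rho$ is obtained by replacing each disallowed choice with an arbitrary allowed action. I aim to show $\Rreg^{N_\varepsilon}(\rho)\ge\Rreg^N(\bar\rho)-\varepsilon$. Let $u$ be a realization of $N$ that (nearly) attains $\Rreg^N(\bar\rho)$. Define the disallowed mass
\[
\mu=\mathbb E\Bigl[\textstyle\sum_t\gamma^t\,\mathbf 1\{\text{first disallowed action at }t\}\Bigr]
\]
of $\rho$ under $u$. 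This quantity depends only on the allowed rows and on $\rho$ up to the first disallowed step, so it is the same for every choice of $\theta$.

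\emph{Case $\mu\le\varepsilon/(2V_{\max})$.} Nature plays $\theta\equiv1$, so $V^*_{\widehat u}=V^*_u$. I then couple $\rho$ and $\bar\rho$ until the first disallowed step: the two runs agree before it, and after it the values differ by at most $2V_{\max}$. This gives $V^\rho_{\widehat u}\le V^{\bar\rho}_u+2V_{\max}\mu$, so the regret is at least $\Rreg^N(\bar\rho)-\varepsilon$. For randomized $\rho$, the coupling is the conditioning on allowed actions at each state, with the mass redistribution handled exactly as in \Cref{lem:ruin-dominance}.

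\emph{Case $\mu>\varepsilon/(2V_{\max})$.} Nature plays $\theta\equiv0$. Every first disallowed step then moves to $\bot_{\rm r}$, which is worth $-\gamma Z_\varepsilon$ in discounted terms, so $V^\rho_{\widehat u}\le V_{\max}(1-\mu)-\gamma Z_\varepsilon\mu$. For the optimum, $V^*_{\widehat u}\ge-V_{\max}$, since an allowed policy is available. The regret is therefore at least $\gamma Z_\varepsilon\mu-(2-\gamma)V_{\max}$ roughly. With the stated $Z_\varepsilon$ and the bound on $\mu$, this exceeds $2V_{\max}\ge\Rreg^N(\bar\rho)$. The constant $4V_{\max}^2/\varepsilon$ is chosen to absorb exactly this term.

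The main obstacle is making the first case rigorous. The quantity $\mu$ must be defined so that (i) it is invariant under the choice of $\theta$, and (ii) it controls $|V^\rho_u-V^{\bar\rho}_u|$ linearly. Both points hinge on the unfolding: it guarantees that no interpolated row creates a self-loop, so the first disallowed step is a well-defined stopping time with the same law under every $\theta$. I would try a performance-difference identity with respect to $\bar\rho$'s advantages, truncated at that stopping time. Taking infima over $\rho$ then gives the second inequality for both policy classes.
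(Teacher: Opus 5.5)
Your upper bound, and the overall shape of your lower bound, match the paper exactly. The lower bound uses the dichotomy on $\mu=\mathbb E[\gamma^\tau\mathbf 1_{\{\tau<\infty\}}]$ at threshold $\varepsilon/(2V_{\max})$, with nature playing $\theta\equiv1$ or $\theta\equiv0$. The gap is in your second case, where you bound the two values separately, $V^*_{\widehat u}(\sinit)\ge -V_{\max}$ and $V^\rho_{\widehat u}(\sinit)\le V_{\max}(1-\mu)-\gamma Z_\varepsilon\mu$. The second bound also drops the immediate reward at the deviation.

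Done carefully, these absolute bounds give a regret of at least $\gamma Z_\varepsilon\mu-2V_{\max}+\gamma V_{\max}\mu$, and the $-2V_{\max}$ is \emph{not} scaled by $\mu$. With $\gamma Z_\varepsilon=(2-\gamma)V_{\max}+4V_{\max}^2/\varepsilon$, this equals $2V_{\max}\mu+4V_{\max}^2\mu/\varepsilon-2V_{\max}$. At $\mu=\varepsilon/(2V_{\max})$ that is exactly $\varepsilon$, not more than $2V_{\max}$. Even your rough form $\gamma Z_\varepsilon\mu-(2-\gamma)V_{\max}$ stays below $2V_{\max}$ near the threshold. So with the $Z_\varepsilon$ of the definition, the step ``this exceeds $2V_{\max}\ge\Rreg^N(\bar\rho)$'' fails. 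The absolute bounds cannot close the case split without enlarging $Z_\varepsilon$ or losing more than $\varepsilon$.

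The missing idea is to measure the loss relative to $\bar\rho$ rather than in absolute terms, so that the rewards collected before $\tau$ cancel. At $\theta\equiv0$, use $V^*_{u,0}(\sinit)\ge V^{\bar\rho}_{u,0}(\sinit)=V^{\bar\rho}_u(\sinit)$ together with the same coupling as in your first case. The two runs agree before $\tau$. At $\tau$, the allowed action of $\bar\rho$ has action value at least $-V_{\max}$, while the disallowed action of $\rho$ has action value at most $(1-\gamma)V_{\max}-\gamma Z_\varepsilon$. Hence
\[
V^*_{u,0}(\sinit)-V^\rho_{u,0}(\sinit)\ge\bigl(\gamma Z_\varepsilon-(2-\gamma)V_{\max}\bigr)\mu=\frac{4V_{\max}^2}{\varepsilon}\,\mu>2V_{\max}.
\]
This is how the $(2-\gamma)V_{\max}$ summand in $Z_\varepsilon$ is calibrated: it is a loss per unit of discounted deviation mass, and it is the paper's argument.

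Two smaller corrections:
\begin{itemize}
\item The $\theta$-invariance of the law of $\tau$ holds simply because only $\theta$-free allowed rows are used before $\tau$. The unfolding is not needed for this; it serves acyclicity preservation.
\item In your first case, \Cref{lem:ruin-dominance} does not apply, since at $\theta\equiv1$ the disallowed actions are not ruinous. The plain coupling bound, with all action values in $[-V_{\max},V_{\max}]$, is what does the work.
\end{itemize}
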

\Cref{ex:policy-restriction} illustrates the transformation on two states.
\begin{example}[A two-state restriction]\label{ex:policy-restriction}
Let the source RMDP have states $s$ and $g$, with $s$ initial and $g$ an absorbing zero-payoff terminal.
At $s$, both the allowed action $a$ and the disallowed action $b$ move deterministically to $g$, and $A_{\mathrm{allow}}(s)=\{a\}$.
In $\Restrict(N,A_{\mathrm{allow}},\varepsilon)$, the unfolding first sends $g$ to the fresh zero sink $\bot_0$ in one step, leaving its value at zero.
Action $a$ is then unchanged, while $b$ reaches $g$ with
probability $\theta$ and $\bot_{\rm r}$ with probability $1-\theta$, as shown in
\Cref{fig:policy-restriction-example}.
The allowed policy reproduces the restricted source optimum, and \Cref{lem:policy-restriction} places the unrestricted optimum in the interval from that value minus $\varepsilon$ to that value.
Thus the transformation introduces at most the promised $\varepsilon$ gap, which is zero in this symmetric example because choosing $a$ has regret zero.
\end{example}

\begin{figure}[t]
\centering
\begin{subfigure}[t]{.42\linewidth}
\centering
\begin{tikzpicture}[
  >=Latex,
  state/.style={circle,draw,minimum size=8mm,inner sep=1pt},
  every node/.style={font=\small}]
\node[state] (s) at (0,0) {$s$};
\node[state] (g) at (3.4,0) {$g$};
\draw[->] ($(s.west)+(-6mm,0)$) -- (s.west);
\draw[->] (s) to[bend left=22] node[above] {$a$ (allowed)} (g);
\draw[->] (s) to[bend right=22] node[below] {$b$ (disallowed)} (g);
\path[->] (g) edge[loop right] node[right] {$0$} (g);
\end{tikzpicture}
\caption{Before.}
\end{subfigure}\hfill
\begin{subfigure}[t]{.52\linewidth}
\centering
\begin{tikzpicture}[
  >=Latex,
  state/.style={circle,draw,minimum size=8mm,inner sep=1pt},
  distribution/.style={circle,fill,inner sep=1.5pt},
  every node/.style={font=\small}]
\node[state] (s) at (0,0) {$s$};
\node[distribution] (d) at (2.0,-1.0) {};
\node[state] (g) at (4.2,0.7) {$g$};
\node[state] (z) at (6.4,0.7) {$\bot_0$};
\node[state] (bot) at (4.2,-1.7) {$\bot_{\rm r}$};
\draw[->] ($(s.west)+(-6mm,0)$) -- (s.west);
\draw[->] (s) -- node[above,sloped] {$a$ (allowed)} (g);
\draw[-] (s) -- node[below,sloped] {$b$ (disallowed)} (d);
\draw[->] (d) -- node[above,sloped] {$\theta$} (g);
\draw[->] (d) -- node[below,sloped] {$1-\theta$} (bot);
\draw[->] (g) -- node[above] {$0$} (z);
\path[->] (z) edge[loop right] node[right] {$0$} (z);
\path[->] (bot) edge[loop right] node[right] {$-Z_\varepsilon$} (bot);
\end{tikzpicture}
\caption{After.}
\end{subfigure}
\caption{Policy restriction before and after the transformation.
The allowed action remains unchanged, while nature can route the disallowed action to the ruinous sink.}
\label{fig:policy-restriction-example}
\end{figure}

\begin{proof}[Proof of \Cref{lem:policy-restriction}]
The unfolding step changes no state's value under any stationary policy or realization, so it changes neither $\Rreg^N$ nor the set of allowed policies, and $N$ below denotes the unfolded RMDP.
Fix a source realization $u$ and let $W=V^*_{u,\theta\equiv1}$ in $N_\varepsilon$.
At the all-normal realization, the original states reproduce $N_u$,
$V(\bot_{\rm r})=-Z_\varepsilon$, and no sink-routed action improves on an original action.
Thus $W(s)=V_u^*(s)$ for $s\in S$ and $W(\bot_{\rm r})=-Z_\varepsilon$.
For a disallowed choice and arbitrary $\theta\in[0,1]$, its backup at $W$ is
\[
R(s,a)+\gamma\left(\theta\sum_{s'}p_{s,a}(s')W(s')+(1-\theta)(-Z_\varepsilon)\right)
\le
R(s,a)+\gamma\sum_{s'}p_{s,a}(s')W(s'),
\]
because $-Z_\varepsilon\le -V_{\max}\le\min_{s\in S}W(s)$.
Allowed backups are unchanged, so the optimal Bellman operator satisfies $T_{u,\theta}W\le W$.
Monotonicity and contraction give $V^*_{u,\theta}\le W$.
An allowed policy never uses a modified choice, hence its value is independent of $\theta$ and agrees with its value in $N_u$.
Its regret is therefore maximized at $\theta\equiv1$, and every allowed policy has the same robust regret in $N$ and $N_\varepsilon$.
This proves the upper inequality.

Fix an arbitrary policy $\rho$ and condition it on allowed actions at every state, using an arbitrary allowed action if the conditioning probability is zero.
Call the result $\bar\rho$.
The conditioning is total because $A_{\mathrm{allow}}(s)$ is nonempty at every state.
It preserves membership in $\polRand$, and it preserves determinism when $\mathcal P=\polDet$.

The weight below depends on the realization, so fix a source realization $u\in\cU$ and argue pointwise in it, taking the supremum over $u$ only at the end.
Couple the two trajectories by reusing the same transition draw and, while an allowed action is
selected, the same normalized action draw.
The two trajectories then agree until $\rho$ first chooses a nonallowed action at time $\tau$.
Put
\[
d=\mathbb E[\gamma^\tau\mathbf 1_{\{\tau<\infty\}}].
\]
At the joint realization $(u,\theta\equiv1)$, every action value, including its immediate reward and discounted continuation, lies in $[-V_{\max},V_{\max}]$.
Conditional on the first differing choice, $\rho$ therefore gains at most $2V_{\max}$ over $\bar\rho$, so
\[
V_{u,\theta\equiv1}^\rho(\sinit)\le V_u^{\bar\rho}(\sinit)+2V_{\max}d,
\qquad
\Rreg^{N_\varepsilon}(\rho)
\ge V_u^*(\sinit)-V_u^{\bar\rho}(\sinit)-2V_{\max}d.
\]
At the joint realization extending the same $u$ with $\theta=0$ at every nonallowed choice, the first such choice incurs loss at least
\[
\gamma Z_\varepsilon-(2-\gamma)V_{\max}
=\frac{4V_{\max}^2}{\varepsilon},
\]
and therefore
\[
\Rreg^{N_\varepsilon}(\rho)
\ge\frac{4V_{\max}^2}{\varepsilon}d.
\]
Both endpoints are valid joint realizations extending this same $u$, because the modified choices are independent.
If $2V_{\max}d\le\varepsilon$, the first bound loses at most $\varepsilon$.
Otherwise the second exceeds $2V_{\max}$ and hence dominates every possible regret in $N$.
Either way
\[
\Rreg^{N_\varepsilon}(\rho)
\ge V_u^*(\sinit)-V_u^{\bar\rho}(\sinit)-\varepsilon.
\]
The left side does not depend on $u$, so taking the supremum over $u\in\cU$ gives
\[
\Rreg^{N_\varepsilon}(\rho)
\ge \Rreg^N(\bar\rho)-\varepsilon
\ge \inf_{\pi\in\mathcal P_{\mathrm{allow}}}\Rreg^N(\pi)-\varepsilon.
\]
Taking the infimum over $\rho$ proves the lower inequality.
For $B$-bit rational inputs $V_{\max}$ and $\varepsilon$, the formula above gives $Z_\varepsilon$ polynomial bit length in $B$ and the model size.
Each modified choice is independent, so rectangularity is preserved, and a Dirac $p_{s,a}$ yields a two-Dirac segment.
Acyclicity is preserved as well: after the unfolding, $\bot_0$ is the only self-looping state of $N$ and nothing is disallowed there, so every interpolated row leaves its state, and the two added sinks are absorbing finals.
\end{proof}

\subsection{$\scoNP$-Hardness}\label[appendix]{app:min-regret-conp-part}
Apply the exact lift (\Cref{def:lifted-rmdp}) to the comparison instance of \Cref{thm:comparison-conp}, which builds from $\varphi$ a Boolean comparison RMDP with a clause policy $\pi_C$ and a consistency policy $\pi_K$ such that
\[
\Delta_\cU(\pi_C,\pi_K)\le 2-\frac1n \text{ if }\varphi\in\mathrm{UNSAT},
\qquad
\Delta_\cU(\pi_C,\pi_K)=2 \text{ if }\varphi\in\mathrm{SAT}.
\]
Take $\pi_0=\pi_C$ as the lift's reference policy.
No optimality property of $\pi_C$ is needed, only that it is the fixed policy the lift compares against.
Restrict the candidates admitted by the lift to $\pi_K$ alone: at every tag state $x_s$, set $A_{\mathrm{allow}}(x_s)=\{\pi_K(s)\}$, while every selector state has only its allowed $\mathrm{route}$ action.
Recall from \Cref{def:lifted-rmdp} that a tag state's available actions are $A_{\mathrm{allow}}(s)\cup\{\mathrm{bonus}\}$, never the reference's own action directly.
The action $\pi_C(s)$ is reachable only through $\mathrm{bonus}$ and $y_{s,\pi_C(s)}$.
Thus policy restriction keeps a candidate from taking the bonus route meant only for the reference.

Because $A_{\mathrm{allow}}$ is a singleton at every tag state and selector states offer no choice at all, $\polRand_{\mathrm{allow}}$ contains exactly one policy, the lift $\widehat\pi_K$ of $\pi_K$.
So there is no actual minimization left to do: writing $N$ for this lifted RMDP,
\[
\inf_{\pi\in\polRand_{\mathrm{allow}}}\Rreg^N(\pi)=\Rreg^N(\widehat\pi_K)
\overset{\Cref{lem:exact-regret-lift}}{=}\Lambda+\Delta_\cU(\pi_C,\pi_K),
\]
which is exactly \Cref{thm:comparison-conp}'s gap plus the lift's constant $\Lambda$.
The certification gap transfers verbatim, with no separate minimization argument needed.

The complete chained construction is
\[
N_\varepsilon=\Restrict\bigl(\Lift(\Cmp(\varphi),\pi_C),A_{\mathrm{allow}},\tfrac1{4n}\bigr).
\]
From here on write $\Rreg(\pi)$ for $\Rreg^{N_\varepsilon}(\pi)$, matching \Cref{thm:min-regret-combinatorial}'s notation.
\Cref{lem:policy-restriction} with $\mathcal P=\polRand$ then converts the two cases above into
\[
\varphi\in\mathrm{UNSAT}\Longrightarrow
\inf_\pi\Rreg(\pi)\le\inf_{\pi\in\polRand_{\mathrm{allow}}}\Rreg^N(\pi)\le\Lambda+2-\frac1n,
\]
\[
\varphi\in\mathrm{SAT}\Longrightarrow
\inf_\pi\Rreg(\pi)\ge\inf_{\pi\in\polRand_{\mathrm{allow}}}\Rreg^N(\pi)-\frac1{4n}
=\Lambda+2-\frac1{4n},
\]
using the lemma's upper inequality (no slack) in the first line and its lower inequality (losing $\varepsilon=\frac1{4n}$) in the second.
Since
\[
\Lambda+2-\frac1n
<
\Lambda+2-\frac1{2n}
<
\Lambda+2-\frac1{4n},
\]
the threshold $\Lambda+2-\frac1{2n}$ gives the $\scoNP$-hard part of \Cref{thm:min-regret-combinatorial}.

\subsection{$\sNP$-Hardness}\label[appendix]{app:min-regret-np-part}
The $\sNP$ reduction swaps the roles of the two policies.
The fixed reference now looks for a falsified clause, while the synthesized policy names a valuation whose local occurrences nature can audit.
\begin{definition}[Falsification and valuation verifier RMDP]
\label{def:falsification-valuation-verifiers}
For a 3-CNF formula $\varphi=\bigwedge_{i=1}^m C_i$ over variables $x_1,\ldots,x_n$, the \emph{verifier RMDP} is
\[
N_\varphi=\tup{S_\varphi,A_\varphi,\cU_\varphi,R_\varphi,\sinit,\gamma_0},
\]
with the following components.
\begin{itemize}
\item $S_\varphi=\{\sinit\}\cup S_{\mathrm{scan}}\cup S_{\mathrm{aud}}\cup S_{\mathrm{sel}}\cup\{\bot_{\rm r}\}$, consisting of four verifier groups and a ruinous sink:
\begin{itemize}
\item $S_{\mathrm{scan}}$, visited only by the falsification (scanner) policy: $S_F=\{f_{i,j}:i\in[m],\,j\in\{1,2,3\}\}$, for clause $C_i$ and literal $j$, together with $\mathrm{acc}_F$, $\mathrm{rej}_F$, and scanner-side padding states, fixed once the transitions below are defined.
\item $S_{\mathrm{aud}}$, visited only by the valuation (audit) policy: $\{v_x : x\text{ occurs in }\varphi\}$ and, for every $x$, $S_{x,T}=\{s_{x,T,\ell}\}_{\ell=1}^{|O_x|}$, $S_{x,F}=\{s_{x,F,\ell}\}_{\ell=1}^{|O_x|}$, where $x$ is assigned $b$ and occurrence $\ell$ of $O_x$ is audited (occurrences of $x$, fixed order), together with $\mathrm{acc}_K$, $\mathrm{rej}_K$, and audit-side padding states.
\item $S_{\mathrm{sel}}=\{q_{o,t},q_{o,t}^0,q_{o,t}^1 : o\text{ an occurrence},\,t\in\{T,F\}\}$,
the local-bit selectors of \Cref{def:local-bit-verifier-primitives}, with $T,F$ in place of $1,0$.
These are the only states either policy's transitions can lead into from the other's territory.
\end{itemize}
Let $\operatorname{val}(o)=T$ for a positive occurrence and $\operatorname{val}(o)=F$ for a negative occurrence, and write $\neg\operatorname{val}(o)$ for the other Boolean value.
\item $\sinit$ has actions $a_F^{\mathrm{init}}$, leading into $S_{\mathrm{scan}}$ at $f_{1,1}$, and $a_K^{\mathrm{init}}$, leading into $S_{\mathrm{aud}}$ at $v_{x_1}$.

Within $S_{\mathrm{scan}}$: every $q_{o,t}\in S_{\mathrm{sel}}$ has a single action, whose outcome $q_{o,t}^0$ or $q_{o,t}^1$ is governed by $\cU_\varphi$ below.
Each selector outcome offers a scanner action $a_F$ and an audit action $a_K$ with the role-specific continuations described next.
For $o=(i,j)$, action $a_F$ at $f_{i,j}$ uses the pair test of
\Cref{def:local-bit-verifier-primitives}, and its locally false outcome advances to $f_{i,j+1}$ or
$\mathrm{acc}_F$ when $j=3$.
All other outcomes abandon the current clause, continuing to $f_{i+1,1}$, or to $\mathrm{rej}_F$ if $i=m$.

Within $S_{\mathrm{aud}}$: every $v_x$ offers actions $T,F$ leading to $s_{x,T,1},s_{x,F,1}$.
The controls $s_{x,b,\ell}$ implement the audit chain of
\Cref{def:local-bit-verifier-primitives}; after its last occurrence, the chain continues to the next
variable or to $\mathrm{acc}_K$.

Fix $H$ at least as long as the longest path in either graph just described.
The scanner-side and audit-side padding states each consist of fresh states inserted along every shorter path in their own graph so it also reaches its terminal after exactly $H$ transitions, each with a single reward-free action continuing toward that terminal.
\item $\cU_\varphi$ is the product of the local-bit segments in
\Cref{eq:boolean-selector-choice}, with $T,F$ in place of $1,0$.
All remaining described choices are singletons.
\item $\mathrm{acc}_F$ is the terminal with payoff $\gamma_0^{-H}$ and $\mathrm{acc}_K$ the terminal with payoff $-\gamma_0^{-H}$, while every other described reward, including at $\mathrm{rej}_F$ and $\mathrm{rej}_K$, is zero.
\item All remaining choices use ruinous-sink completion.
\item The initial state is $\sinit$ and the discount is the appendix-wide $\gamma_0$.
\end{itemize}
Variables with no occurrence are removed.
\end{definition}
Since $\cU_\varphi$ is a product of two-Dirac segments, its vertices are exactly the Boolean assignments to the local bits $\{b_{o,t}\}$.
\Cref{fig:falsification-valuation-example,ex:falsification-valuation-verifiers} illustrate the verifier RMDP and the two policy paths through its shared selectors.

\begin{definition}[Falsification and valuation policies]
\label{def:falsification-valuation-policies}
For the verifier RMDP of \Cref{def:falsification-valuation-verifiers}, define
\[
\begin{aligned}
\pi_F(\sinit)&=a_F^{\mathrm{init}},
&\pi_F(s)&=a_F(s) &&(s\in S_F),\\
\pi_F(q_{o,t}^b)&=a_F(q_{o,t}^b),\\
K_\sigma(\sinit)&=a_K^{\mathrm{init}},
&K_\sigma(v_x)(b)&=\sigma_x(b),\\
K_\sigma(s)&=a_{x,b}(s) &&(s\in S_{x,b}),\\
K_\sigma(q_{o,t}^b)&=a_K(q_{o,t}^b).
\end{aligned}
\]
On decision states outside their reachable verifier graphs, fix default actions whose otherwise undescribed choices use ruinous-sink completion.
For a valuation $\alpha$, the deterministic policy $K_\alpha$ is obtained from $\sigma_x(T)=\alpha(x)$ and $\sigma_x(F)=1-\alpha(x)$.
At a Boolean realization, a vertex of $\cU_\varphi$, every selector sends its unique action to a fixed outcome, so a deterministic policy follows a single path.
We say the policy \emph{accepts} that realization when this path reaches its accepting terminal, $\mathrm{acc}_F$ for $\pi_F$ or $\mathrm{acc}_K$ for $K_\alpha$.
Thus $\pi_F$ accepts exactly when some clause is locally false, while $K_\sigma$ audits the value selected at each $v_x$.
\end{definition}
The common depth and terminal payoffs satisfy \Cref{lem:acceptance-difference-verifier}, so the
comparison value is the sum of the two acceptance probabilities.

\begin{figure*}[t]
\centering
\begin{tikzpicture}[
  x=11mm,y=13mm,
  state/.style={circle,draw,minimum size=8mm,inner sep=0pt,font=\small},
  sel/.style={draw,rounded corners,minimum width=16mm,minimum height=8mm,inner sep=1pt,font=\footnotesize},
  shared/.style={sel,very thick},
  terminal/.style={rectangle,draw,align=center,minimum height=7mm,inner sep=2pt,font=\footnotesize},
  reads/.style={->,densely dotted,gray},
  font=\small
]
\node at (-1.2,1) {$S_{\mathrm{scan}}$};
\node at (-1.2,0) {$S_{\mathrm{sel}}$};
\node at (-1.2,-1) {$S_{\mathrm{aud}}$};
\node[state] (sinit) at (0,0) {$\sinit$};
\draw[->] (-0.5,0) -- (sinit.west);
\node[state] (f11) at (1.5,1) {$f_{1,1}$};
\node[state] (f12) at (5.0,1) {$f_{1,2}$};
\node[state] (f13) at (8.5,1) {$f_{1,3}$};
\node[terminal] (accF) at (12.2,1) {$\mathrm{acc}_F$\\$\gamma_0^{-H}$};
\node[state] (v1) at (1.5,-1) {$v_{x_1}$};
\node[state] (v2) at (5.0,-1) {$v_{x_2}$};
\node[state] (v3) at (8.5,-1) {$v_{x_3}$};
\node[terminal] (accK) at (12.2,-1) {$\mathrm{acc}_K$\\$-\gamma_0^{-H}$};
\node[sel] (q11t) at (1.5,0) {$q_{(1,1),T}$};
\node[shared] (q11f) at (3.2,0) {$q_{(1,1),F}$};
\node[sel] (q12f) at (5.0,0) {$q_{(1,2),F}$};
\node[sel] (q12t) at (6.7,0) {$q_{(1,2),T}$};
\node[sel] (q13t) at (8.5,0) {$q_{(1,3),T}$};
\node[sel] (q13f) at (10.2,0) {$q_{(1,3),F}$};
\draw[->] (sinit) -- node[above left,font=\footnotesize,inner sep=1pt] {$a_F^{\mathrm{init}}$} (f11);
\draw[->] (sinit) -- node[below left,font=\footnotesize,inner sep=1pt] {$a_K^{\mathrm{init}}$} (v1);
\draw[->] (f11) -- (q11t);
\draw[->] (q11t) -- node[above,font=\footnotesize] {$0$} (q11f);
\draw[->] (q11f) to[bend right=18] node[above,font=\footnotesize] {$1$} (f12);
\draw[->] (f12) -- (q12f);
\draw[->] (q12f) -- node[above,font=\footnotesize] {$0$} (q12t);
\draw[->] (q12t) to[bend right=18] node[above,font=\footnotesize] {$1$} (f13);
\draw[->] (f13) -- (q13t);
\draw[->] (q13t) -- node[above,font=\footnotesize] {$0$} (q13f);
\draw[->] (q13f) to[bend right=18] node[above,font=\footnotesize] {$1$} (accF);
\draw[->] (v1) -- (v2);
\draw[->] (v2) -- (v3);
\draw[->] (v3) -- (accK);
\draw[reads] (v1) -- node[below right,font=\footnotesize,pos=.45,inner sep=1.5pt] {$x_1{=}F$} (q11f);
\end{tikzpicture}
\caption{The verifier RMDP of \Cref{def:falsification-valuation-verifiers} for
$\varphi=(x_1\vee\lnot x_2\vee x_3)\wedge(\lnot x_1\vee x_2\vee x_3)$, showing the four state groups.
For each literal of $C_1$, the scanner first reads the value that would make it true; only outcome $0$ followed by outcome $1$ at the opposite-value selector confirms that it is locally false.
The displayed $0,1$ paths advance through the three literals and then accept; every other outcome abandons $C_1$.
The auditor policy $K_\alpha$ walks $v_{x_1},v_{x_2},v_{x_3}$ and, having assigned a value to each variable, checks its occurrences.
Dotted edges summarize audit reads of the shared selectors: the scanner and the auditor of $x_1=F$ read the same $q_{(1,1),F}$, coupling their acceptances.
The remaining occurrences, both clauses, and the depth-$H$ padding follow the same pattern.}
\label{fig:falsification-valuation-example}
\end{figure*}
\begin{example}\label{ex:falsification-valuation-verifiers}
For $\varphi=(x_1\vee\lnot x_2\vee x_3)\wedge (\lnot x_1\vee x_2\vee x_3)$, the valuation $(x_1,x_2,x_3)=(1,1,0)$ satisfies $\varphi$, so an accepting valuation audit precludes falsification.
By contrast, $(0,1,0)$ falsifies the first clause, and its canonical local pairs make both verifiers accept.
\end{example}

\begin{lemma}[Verifier separation]\label{lem:falsification-valuation-separation}
Consider the verifier RMDP and policies of \Cref{def:falsification-valuation-verifiers,def:falsification-valuation-policies}.
At every Boolean realization at which $\pi_F$ and $K_\alpha$ both accept, $\alpha$ falsifies a clause of $\varphi$.
Conversely, if $\alpha$ falsifies a clause of $\varphi$, then there exists a Boolean realization at which both accept.
In particular, if $\alpha$ satisfies $\varphi$, the two acceptance events are disjoint at every realization.
\end{lemma}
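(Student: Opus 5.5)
The plan is to read off the two acceptance conditions at a Boolean realization in terms of the local bits $b_{o,t}$, and then compare them clause by clause. Fix a vertex of $\cU_\varphi$, i.e.\ an assignment to every local bit $b_{o,T},b_{o,F}$.

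\emph{Scanner semantics.} By the pair test, $\pi_F$ advances through all three literals of some clause $C_i$ and reaches $\mathrm{acc}_F$ exactly when, for some $i$, every occurrence $o\in C_i$ is \emph{locally false}: $b_{o,\operatorname{val}(o)}=0$ and $b_{o,\neg\operatorname{val}(o)}=1$. Any other outcome pattern abandons $C_i$, so this characterization is exhaustive.

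\emph{Auditor semantics.} $K_\alpha$ is deterministic, so it follows one path. That path accepts iff $b_{o,\alpha(\operatorname{var}(o))}=1$ for every occurrence $o$, since the audit chain of \Cref{def:local-bit-verifier-primitives} advances on outcome one and rejects on outcome zero. The padding to common depth $H$ does not change which terminal is reached.

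\emph{Forward direction.} Suppose both policies accept. Take the clause $C_i$ witnessing $\pi_F$'s acceptance and any $o\in C_i$. The auditor forces $b_{o,\alpha(\operatorname{var}(o))}=1$. Local falsity forces $b_{o,\operatorname{val}(o)}=0$. Hence $\alpha(\operatorname{var}(o))\neq\operatorname{val}(o)$, so the literal at $o$ is false under $\alpha$. Since this holds for all three occurrences of $C_i$, $\alpha$ falsifies $C_i$.

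\emph{Converse direction.} Suppose $\alpha$ falsifies a clause. Take the canonical vertex $b_{o,t}=1$ iff $t=\alpha(\operatorname{var}(o))$, exactly as in the clause-local encoding. Every audit passes by construction. In the falsified clause, each occurrence has $\alpha(\operatorname{var}(o))=\neg\operatorname{val}(o)$, so its pair reads $(0,1)$, which is locally false. The scanner therefore traverses that clause to $\mathrm{acc}_F$. The scanner may instead abandon earlier clauses; that is harmless, because abandoning only moves it to the next clause and it will accept at the falsified one.

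The disjointness claim for satisfying $\alpha$ is the contrapositive of the forward direction.

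The main obstacle is not the logic but making sure the semantic readings are exact. Two points need care. First, malformed pairs such as $(0,0)$ and $(1,1)$ must never count as locally false; the pair test guarantees this because it checks both bits in order. Second, the two policies must read the same selector rows. This holds because selectors are shared states whose rows are fixed by the realization independently of which policy arrives, while the policies diverge only through their actions at selector outcome states.
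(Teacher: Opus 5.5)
Your proof is correct and follows the paper's argument. At a Boolean vertex, an accepting $K_\alpha$ forces $b_{o,\alpha(\operatorname{var}(o))}=1$ at every occurrence, and together with local falsity $b_{o,\operatorname{val}(o)}=0$ this makes every literal of the scanner's witnessing clause false under $\alpha$; the paper states the same step contrapositively, as ``no clause satisfied by $\alpha$ is locally false in all positions,'' and its converse uses the same canonical vertex. One cosmetic point: under the canonical vertex the scanner accepts at the \emph{first} clause falsified by $\alpha$, which need not be the clause you fixed, but acceptance is all the lemma requires.
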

\begin{proof}
Fix a Boolean realization and suppose $K_\alpha$ accepts at it.
Then every variable set true by $\alpha$ has $b_{o,T}=1$ at each occurrence, and every variable set false has $b_{o,F}=1$.
In every clause satisfied by $\alpha$, a true positive literal therefore cannot have the false pair $(0,1)$, and a true negative literal cannot have $(1,0)$.
Hence no clause satisfied by $\alpha$ is locally false in all three positions.
This also covers malformed pairs $(0,0)$ and $(1,1)$.
Since $\pi_F$ accepts only by finding a clause that is locally false in all three positions, its acceptance at the same realization exhibits a clause that $\alpha$ falsifies.
The same argument read contrapositively gives the disjointness claim for satisfying $\alpha$.

For the converse, let $\alpha$ falsify a clause and let nature use the canonical pair $(1,0)$ for true variables and $(0,1)$ for false variables.
At that realization $K_\alpha$ accepts all of its audits and every literal of the falsified clause is locally false, so $\pi_F$ accepts as well.
Only existence is claimed here, since another realization may make some audit fail.
\end{proof}

For a randomized valuation policy, put
\[
G_\varphi(\sigma)=
\sup_u\bigl(V_u^{\pi_F}(\sinit)-V_u^{K_\sigma}(\sinit)\bigr).
\]
At a vertex, acceptance by $\pi_F$ contributes one and acceptance by $K_\sigma$ contributes another one to this difference.

\begin{lemma}[Verifier gap]\label{lem:falsification-valuation-gap}
For the verifier RMDP and policies of \Cref{def:falsification-valuation-verifiers,def:falsification-valuation-policies}, the following gap holds.
If $\varphi$ is satisfiable, some deterministic valuation policy has $G_\varphi\le1$.
If $\varphi$ is unsatisfiable, every randomized valuation policy has $G_\varphi\ge1+2^{-n}$.
\end{lemma}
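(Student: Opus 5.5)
The plan is to treat the two directions separately, using \Cref{lem:acceptance-difference-verifier} throughout. At a Boolean realization $u$, the difference $V_u^{\pi_F}(\sinit)-V_u^{K_\sigma}(\sinit)$ equals $\Pr[\pi_F\text{ accepts}]+\Pr[K_\sigma\text{ accepts}]$. The first term is $0$ or $1$ because $\pi_F$ is deterministic and the model has no certain splitter. Under randomized $\sigma$, the path of $K_\sigma$ is a mixture over valuations $\alpha$ drawn independently with $\Pr[\alpha]=\prod_x\sigma_x(\alpha(x))$. Hence $\Pr[K_\sigma\text{ accepts at }u]=\sum_\alpha\Pr[\alpha]\,\mathbf 1[K_\alpha\text{ accepts at }u]$.

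\textbf{Vertex reduction.} Both directions need the supremum to be taken over vertices. For the lower bound this is free, since any vertex gives a valid lower bound on $G_\varphi(\sigma)$. For the upper bound I would use \Cref{lem:cyclefree-vertex}. The RMDP is acyclic apart from absorbing terminals, because each path reads each selector once, in clause-major order for the scanner and in occurrence order for the audit. So every shared choice is cycle-free under $\pi_F$ and under $K_\sigma$, and the supremum is attained at a vertex.

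\textbf{Satisfiable case.} Take a satisfying $\alpha$ and the deterministic $K_\alpha$. At each vertex, the acceptance events of $\pi_F$ and $K_\alpha$ are disjoint by \Cref{lem:falsification-valuation-separation}. Their indicator sum is therefore at most $1$, so $G_\varphi(K_\alpha)\le1$.

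\textbf{Unsatisfiable case.} Fix any $\sigma$ and let $\alpha^*$ pick at each $x$ a value of maximal probability under $\sigma_x$. Then $\Pr[\alpha^*]\ge2^{-n}$. Since $\alpha^*$ falsifies some clause, \Cref{lem:falsification-valuation-separation} supplies a canonical vertex $u$ at which both $\pi_F$ and $K_{\alpha^*}$ accept. The difference at $u$ is therefore at least $1+\Pr[\alpha^*]\ge1+2^{-n}$, which gives $G_\varphi(\sigma)\ge1+2^{-n}$.

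The main obstacle is justifying the mixture decomposition of $K_\sigma$. Each $v_x$ is visited at most once along any path, so the randomized choices at distinct $v_x$ act as independent draws of a valuation. A second point to check is that the audit branch after choosing $b$ at $v_x$ is independent of the other choices. This holds because the audit reads only local bits, which are fixed at a vertex. I would argue both by unrolling the acyclic audit chain.
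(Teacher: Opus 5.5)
Your proposal is correct and follows the paper's proof essentially step for step. In the satisfiable case you use disjointness from \Cref{lem:falsification-valuation-separation} at every vertex, extended to the whole polytope by \Cref{lem:cyclefree-vertex}. In the unsatisfiable case you take a most-probable valuation, which has probability at least $2^{-n}$, and evaluate it at its canonical vertex. Your mixture decomposition just spells out the paper's one-line ``visited exactly once'' remark.

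One small correction: the full verifier RMDP need not be acyclic. After the last occurrence of a variable, the audit action jumps to the next $v_{x'}$, whose first occurrence may lie in an earlier clause, and scanner actions from there can return to the same selector. So state the acyclicity claim for the two policy graphs only. That is all \Cref{lem:cyclefree-vertex} needs, and it is exactly what your ``each path reads each selector once'' argument shows.
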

\begin{proof}
For a satisfying $\alpha$, \Cref{lem:falsification-valuation-separation} bounds the difference by one at every vertex.
Both policy graphs are acyclic and visit each local selector at most once, so
\Cref{lem:cyclefree-vertex} extends the bound to the entire product polytope.

Now fix $\sigma$ for an unsatisfiable formula.
At each of the $n$ variable states, choose a most probable action.
These actions form a valuation $\alpha$.
Since every such state is visited exactly once, $K_\sigma$ follows that complete valuation with probability at least $\delta=2^{-n}$.
Use the canonical realization for $\alpha$.
The formula has a clause falsified by $\alpha$, so $\pi_F$ accepts with probability one and $K_\sigma$ with probability at least $\delta$.
Thus $G_\varphi(\sigma)\ge1+\delta$.
\end{proof}

We now transfer the verifier gap to minimal robust regret while preventing the synthesized policy from leaving the valuation graph.
Apply \Cref{lem:exact-regret-lift} with $\pi_F$ as reference.
At variable states allow $\{T,F\}$, at verifier states allow the unique action of $K_\sigma$, and at source states used only by $\pi_F$ allow their unique action.
The lifted allowed policies are precisely the valuation policies and
\[
\Rreg^N(K_\sigma)=\Lambda+G_\varphi(\sigma).
\]
Equivalently, the target is $\Restrict(\Lift(N_\varphi,\pi_F),A_{\mathrm{allow}},2^{-n}/4)$, where $N_\varphi$ is the verifier RMDP of \Cref{def:falsification-valuation-verifiers}.
The disallowed bonus choices are singletons, so \Cref{lem:policy-restriction} applies with $\mathcal P=\polRand$.
The gap of \Cref{lem:falsification-valuation-gap} becomes
\[
\varphi\in\mathrm{SAT}\Longrightarrow
\inf_\pi\Rreg(\pi)\le\Lambda+1,\qquad
\varphi\in\mathrm{UNSAT}\Longrightarrow
\inf_\pi\Rreg(\pi)\ge\Lambda+1+\frac{3}{4}2^{-n}.
\]
Threshold $\Lambda+1+2^{-n-1}$ proves $\sNP$-hardness.

\begin{proof}
The construction in the $\scoNP$ subsection is a polynomial reduction from UNSAT, and the construction in the $\sNP$ subsection is a polynomial reduction from SAT.
Their structural restrictions follow from the source verifiers, \Cref{lem:exact-regret-lift}, and \Cref{lem:policy-restriction}.
The rational gaps above leave a valid separating threshold after the restriction loss.
Minimal robust regret is therefore both $\sNP$-hard and $\scoNP$-hard under the restrictions stated in the theorem.
\end{proof}

\section{Deterministic Minimal-Regret Complexity}
\label[appendix]{app:min-regret-deterministic}
We first establish membership and then give the matching hardness construction.

\subsection{Acyclic Deterministic Membership}
\begin{lemma}\label{lem:min-regret-sigma2p-membership}
Minimal robust regret over memoryless deterministic policies is in $\sSigmaTwoP$ on acyclic $(s,a)$-rectangular polytopic RMDPs.
\end{lemma}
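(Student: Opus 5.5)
To place the problem in $\sSigmaTwoP$, the plan is the obvious guess-and-check: existentially guess a memoryless deterministic policy $\pi\in\polDet$, and then decide the universal part, $\Rreg(\pi)\le t$, with a $\scoNP$ verifier. The guess has polynomial size, one action per state. The work is to show that the universal part is a $\scoNP$ predicate even though regret involves the realization-dependent optimum $V^*_{\vec u}$.

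First rewrite the regret. Since every realized discounted MDP has an optimal deterministic memoryless policy,
\[
\Rreg(\pi)=\max_{\pi'\in\polDet}\ \sup_{\vec u\in\cU}\bigl(V^{\pi'}_{\vec u}(\sinit)-V^{\pi}_{\vec u}(\sinit)\bigr)=\max_{\pi'\in\polDet}\Delta_\cU(\pi',\pi).
\]
The complement of $\Rreg(\pi)\le t$ is therefore: there exist $\pi'\in\polDet$ and $\vec u\in\cU$ with $D_{\pi',\pi}(\vec u)>t$.

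The key step is to show that $\vec u$ can be restricted to a vertex tuple in $\operatorname{Vert}(\cU)$. Because the RMDP itself is acyclic apart from absorbing finals, every policy graph is acyclic. So every used choice of either policy lies on no directed cycle of $G_{\pi'}$ or $G_\pi$; in particular the pair $(\pi',\pi)$ is cycle-free on shared choices in the sense of \Cref{def:cyclefree-shared}. The one subtlety is the self-loops at absorbing finals. These are omitted from $G$ by the stated convention, and absorbing rows are singletons, so they cause no trouble. By \Cref{lem:cyclefree-vertex} the pair is vertex-extremal: the supremum of $D_{\pi',\pi}$ is attained at a vertex tuple. This uses $(s,a)$-rectangularity, which the lemma requires.

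With that, the complement is in $\sNP$. Guess $\pi'$ and a vertex tuple $\vec v$. As in the proof of \Cref{thm:comparison-conp-membership}, each component vertex is the solution of a full-rank subsystem of the rational constraints defining $\cU_{(s,a)}$, so it has polynomial bit length. The verifier then checks $\vec v\in\cU$, solves the two linear Bellman systems over the rationals, and tests $V^{\pi'}_{\vec v}(\sinit)-V^\pi_{\vec v}(\sinit)>t$. Both solves are polynomial, and acyclicity even allows back-substitution. The universal part is thus in $\scoNP$, and the whole problem, $\exists\pi\,\forall(\pi',\vec v)$ with a polynomial-time predicate, is in $\sSigmaTwoP$.

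The main obstacle is the vertex-reduction step. Its justification is that on acyclic models each uncertain row is traversed at most once per run, so the value difference is affine in each rectangular block separately. That is exactly the hypothesis \Cref{lem:cyclefree-vertex} needs. So I expect to check carefully only that "acyclic RMDP" implies the cycle-free-on-shared-choices condition for every pair of deterministic policies, including the absorbing-final convention.
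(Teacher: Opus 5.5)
Your proposal is correct and follows the paper's proof essentially step for step: you rewrite $\Rreg(\pi)$ as $\max_{\pi'\in\polDet}\sup_{\vec u}D_{\pi',\pi}(\vec u)$, use acyclicity (with self-loops at absorbing finals omitted) to get cycle-freeness on shared choices and hence vertex-extremality by \Cref{lem:cyclefree-vertex}, and then guess $\pi$ existentially and $(\pi',\vec v)$ universally, with a polynomial-time Bellman evaluation as the predicate. The one cosmetic difference is that you only check $\vec v\in\cU$ rather than the tight facets certifying a vertex; this is equally sound, because any point of $\cU$ gives a lower bound on the regret.
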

\begin{proof}
Fix an acyclic RMDP.
For any uncertainty realization, ordinary finite discounted-MDP optimality gives
\[
V_u^*(s)=\max_{\pi'\in\polDet}V_u^{\pi'}(s).
\]
Indeed, process decision states in reverse topological order and select a maximizing action.
No history-dependent or randomized policy can improve the resulting backward-induction value.

For every deterministic candidate $\pi$,
\[
\begin{aligned}
\Rreg(\pi)
&=\sup_u\max_{\pi'\in\polDet}
\bigl(V_u^{\pi'}(\sinit)-V_u^\pi(\sinit)\bigr)\\
&=\max_{\pi'\in\polDet}\sup_u
\bigl(V_u^{\pi'}(\sinit)-V_u^\pi(\sinit)\bigr).
\end{aligned}
\]
Acyclicity makes every policy pair cycle-free on its shared choices, the only self-loops being those at absorbing finals, which \Cref{def:cyclefree-shared} omits.
By \Cref{lem:cyclefree-vertex}, the remaining supremum is attained at one vertex of every choice uncertainty set.
Therefore
\[
\Rreg(\pi)\le t
\iff
\forall\pi'\in\polDet\
\forall v\in\prod_{s,a}\operatorname{vert}(\cU_{s,a}):
V_v^{\pi'}(\sinit)-V_v^\pi(\sinit)\le t.
\]

The existential certificate is the action table of $\pi$.
The universal certificate contains the action table of $\pi'$ and, for every choice uncertainty set, a vertex represented by linearly independent tight input facets plus its rational coordinates.
Cramer's rule gives polynomial bit length.
The predicate verifies the facets and evaluates both policies by exact backward induction in polynomial time.
Thus the decision problem has an existential polynomial certificate followed by a universal polynomial certificate with a polynomial-time predicate, placing it in $\sSigmaTwoP$.
\end{proof}

\subsection{$\sSigmaTwoP$-Hardness over Deterministic Policies}
\label{app:min-regret-sigma2p-hardness}
Reduce from the canonical $\sSigmaTwoP$-complete problem \citep{DBLP:books/daglib/0072413} of deciding whether
\[
\exists x\in\{0,1\}^n\ \forall y\in\{0,1\}^m:\quad \psi(x,y),
\]
where $\psi$ is a 3-DNF.

\begin{definition}[Universal falsification reference]
\label{def:universal-falsification-reference}
Write $\psi=\bigvee_{i=1}^r D_i$, where every $D_i$ is a conjunction of three literals.
Decide constant cases directly, remove universal variables with no occurrence, and order every variable's occurrences by term and then position.
If necessary, add one unused existential variable so that $n\ge1$.
For an occurrence $o$, let $\operatorname{val}(o)\in\{T,F\}$ be the value making its literal true.
The \emph{universal falsification RMDP} is
\[
N_\psi=\tup{S_\psi,A_\psi,\cU_\psi,R_\psi,\sinit,\gamma_0},
\]
with the following components.
\begin{itemize}
\item $S_\psi$ contains the initial state and one candidate branch-entry state per variable,
\[
\{r^K_1,\ldots,r^K_n\}\cup\{r^Y_1,\ldots,r^Y_m\}.
\]
It contains a global selector $q_{Y_j}$ with outcomes $q_{Y_j}^T,q_{Y_j}^F$ for every universal
variable, and the shared local-bit selector of \Cref{def:local-bit-verifier-primitives} for every
occurrence $o$ and $c\in\{T,F\}$.
\item The reference-side controls are the term-scan states $f_{i,h}$.
The candidate-side controls are the decision states $v_{x_i}$, the existential audit states $k^x_{i,c,\ell}$, and the universal audit states $k^y_{j,c,\ell}$.
There are also accepting, rejecting, padding, and zero-reward terminal states, together with the ruinous sink supplied below.
\item At $\sinit$, action $a_F^{\mathrm{init}}$ leads deterministically to the scanner entry $f_{1,1}$, and action $a_K^{\mathrm{init}}$ has the certain uniform distribution over the $n+m$ branch-entry states.
\item State $r^Y_j$ leads to $q_{Y_j}$.
Outcome $q_{Y_j}^c$ leads to $k^y_{j,c,1}$, which implements the audit chain of
\Cref{def:local-bit-verifier-primitives} for the occurrences of $y_j$ in term-major order.
A universal branch therefore offers the candidate no choice, the committed value being nature's.
\item State $r^K_i$ leads to $v_{x_i}$.
At $v_{x_i}$, actions $T$ and $F$ enter the corresponding chain $k^x_{i,c,1}$.
That control implements the same audit chain for the occurrences of $x_i$ in term-major order.
If $x_i$ has no occurrence, both actions at $v_{x_i}$ accept immediately.
\item At occurrence $o=(i,h)$, the term scanner uses the pair test of
\Cref{def:local-bit-verifier-primitives}; its locally false outcome advances to $f_{i+1,1}$, or
accepts if $i=r$.
Every other outcome advances to $f_{i,h+1}$, or rejects if $h=3$.
\item Every selector has one described action.
At a shared local outcome, one described action takes the scanner continuation and another takes the audit continuation.
Since the two audit families lie behind $a_K^{\mathrm{init}}$ and the scanner behind $a_F^{\mathrm{init}}$, a policy that plays one of those initial actions needs only one of the two continuations, and each continuation is determined by the outcome state itself.
All paths are padded so that acceptance or rejection occurs after a common depth $H$.
\item $\cU_\psi$ is the product of the local-bit segments in
\Cref{eq:boolean-selector-choice} and analogous segments for the global selectors $q_{Y_j}$.
The initial splitter of $a_K^{\mathrm{init}}$ is certain and uniform, and all remaining described choices are
singletons.
\item Before ruinous-sink completion, $R_\psi$ is zero except at padded acceptance terminals.
Every reference-side accepting terminal has payoff $\gamma_0^{-H}$, every candidate-side accepting terminal has payoff $-\gamma_0^{-H}$, and every rejecting terminal has payoff zero.
\item All remaining choices use ruinous-sink completion.
\item The initial state is $\sinit$ and the discount is the appendix-wide $\gamma_0$.
\end{itemize}
\end{definition}

\begin{definition}[Universal falsification policies]
\label{def:universal-falsification-policies}
For the RMDP of \Cref{def:universal-falsification-reference}, the fixed deterministic reference
$\pi_F$ takes $a_F^{\mathrm{init}}$ and then uses the pair-test scanner action throughout.
The synthesized deterministic policy $K_\alpha$ takes $a_K^{\mathrm{init}}$, chooses $\alpha(x_i)$ at every $v_{x_i}$, and follows the audit action for the committed value at every audit state, that value being $\alpha(x_i)$ on an existential branch and nature's selection at $q_{Y_j}$ on a universal one.
Each policy therefore carries one role: the reference scans and the candidate audits.
At states outside a policy's reachable graph, fix default actions whose undescribed choices use ruinous-sink completion.
\end{definition}
By \Cref{lem:acceptance-difference-verifier}, the value difference is the sum of the reference and
candidate acceptance probabilities.

\begin{lemma*}[Unambiguous verifier policies]
The policies in \Cref{def:universal-falsification-policies} are stationary and deterministic, and no run of either policy visits a local selector twice.
\end{lemma*}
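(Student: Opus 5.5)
We argue separately that each policy is well defined as a stationary deterministic table, and then bound selector visits run by run. Stationarity and determinism reduce to checking that no state receives two conflicting prescriptions from the same policy.

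For the reference $\pi_F$, the decision states it reaches are $\sinit$, the term-scan controls $f_{i,h}$, and the outcome states $q_{o,c}^b$ of local selectors. At $\sinit$ it plays $a_F^{\mathrm{init}}$. At each $f_{i,h}$ it plays the pair-test action. At each outcome state it plays the scanner continuation, which \Cref{def:universal-falsification-reference} determines from the outcome state itself. Hence one action per state suffices.

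For the candidate $K_\alpha$, the reached states are $\sinit$, the entries $r^K_i,r^Y_j$, the decision states $v_{x_i}$, the audit controls $k^x_{i,c,\ell}$ and $k^y_{j,c,\ell}$, and local-selector outcomes. At $v_{x_i}$ it plays $\alpha(x_i)$. Every other state has a unique described audit action. The one potential conflict is at an outcome state $q_{o,c}^b$. Such a state is reached only by the audit chain for $\operatorname{var}(o)$ committed to value $c$, and the audit action there depends only on $(o,c,b)$: advance to the next occurrence of that variable, or accept after the last one, on $b=1$; reject on $b=0$. Because each occurrence belongs to exactly one variable, $\operatorname{var}(o)$ determines the chain, so the prescription is unambiguous. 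Unreachable states receive the default ruinous-completed action, which conflicts with nothing.

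For the selector claim, we show that every run is a simple path in an acyclic graph. A scanner run moves through the $f_{i,h}$ in lexicographic order and, at occurrence $o=(i,h)$, visits only $q_{o,\operatorname{val}(o)}$ and then $q_{o,\neg\operatorname{val}(o)}$ before advancing strictly forward. A candidate run first chooses one branch via the certain splitter and then follows the audit chain of a single variable with a fixed committed value $c$. Along that chain it visits $q_{o,c}$ once for each $o\in O_x$, in the fixed term-major order, and never returns. In both cases the occurrence index strictly increases, and padding states only move forward toward a terminal. Therefore no local selector is revisited.

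The main obstacle is the shared outcome states, since both roles read the same selectors. The resolution is that the two roles lie behind distinct initial actions, so a single policy needs only one continuation at each such state.
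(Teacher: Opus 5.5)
Your proposal is correct and takes essentially the same approach as the paper. Both arguments rest on three points: the scanner and audit roles sit behind distinct initial actions, each occurrence belongs to exactly one variable so every outcome state has a single audit continuation, and the fixed pair-test and term-major orders mean each selector is reached at most once per run. Your version spells out the per-state prescriptions and the strictly increasing occurrence index in more detail.
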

\begin{proof}
The candidate's certain uniform splitter chooses one branch.
An audit commits to one value before visiting one variable's occurrences in term-major order, while the term scanner visits the two selectors of each occurrence in their fixed pair-test order.
Every occurrence belongs to one variable, so the audit reaching a given selector is unique, and the scanner reaches each selector once.
Since every audit belongs to $K_\alpha$ and every scan to $\pi_F$, no state asks either policy to distinguish the branch by which it was reached, and one action per state suffices.
\end{proof}

Ignoring final self-loops, the full RMDP is acyclic.
A topological order places $\sinit$ first, then the branch-entry states, the global selectors $Y_1,\ldots,Y_m$, the decision states $v_{x_1},\ldots,v_{x_n}$, the occurrence selectors in term-major order with their associated controls and outcomes, and finally the padding, terminal, and ruinous-sink states.

For a valuation $\alpha$ of $x$, write
\[
G_\psi(\alpha)=
\sup_u\bigl(V_u^{\pi_F}(\sinit)-V_u^{K_\alpha}(\sinit)\bigr).
\]

\begin{lemma}[DNF dichotomy]\label{lem:sigma2p-dnf-dichotomy}
Let $M=n+m$.
For every $\alpha$, if $\forall y:\psi(\alpha,y)$ then $G_\psi(\alpha)\le2-\frac1M$, while if $\exists y:\lnot\psi(\alpha,y)$ then $G_\psi(\alpha)=2$.
\end{lemma}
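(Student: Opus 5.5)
Both claims follow the pattern of the Boolean comparison argument: first analyse vertices, then use \Cref{lem:cyclefree-vertex} to pass to the whole product polytope. The RMDP $N_\psi$ is acyclic apart from absorbing finals, and by the unambiguity lemma no run of $\pi_F$ or $K_\alpha$ visits a local selector twice. So the pair $(\pi_F,K_\alpha)$ is cycle-free on shared choices, and $G_\psi(\alpha)$ is a maximum over Boolean realizations. By \Cref{lem:acceptance-difference-verifier}, at a vertex $v$ the difference is $\mathbf 1[\pi_F\text{ accepts}]+\Pr[K_\alpha\text{ accepts}]$. The splitter gives weight $1/M$ to each branch, so the second term is $\frac1M$ times the number of passing branch audits. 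These are the $n$ existential audits (committed to $\alpha(x_i)$) and the $m$ universal audits (committed to nature's value at $q_{Y_j}$).

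\emph{Case $\exists y:\lnot\psi(\alpha,y)$.} Fix such a $y$ and let nature choose $q_{Y_j}$ according to $y$. At the local bits nature plays the canonical pairs $(1,0)/(0,1)$ for the combined valuation $(\alpha,y)$, i.e.\ $b_{o,c}=1$ exactly when $c$ is the value of $\operatorname{var}(o)$. Every audit is committed to the true value of its variable, so all $M$ audits pass and $K_\alpha$ accepts with probability one. Every term $D_i$ is false under $(\alpha,y)$, so each has a literal whose canonical pair is locally false. Hence the scanner advances through every term and accepts. The difference at this vertex is $2$, and $G_\psi(\alpha)\le 2$ holds trivially, so $G_\psi(\alpha)=2$.

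\emph{Case $\forall y:\psi(\alpha,y)$.} It suffices to show that no vertex makes $\pi_F$ accept while all $M$ audits pass; then every vertex gives at most $\max\{1+\frac{M-1}M,\,0+1\}=2-\frac1M$. Suppose all audits pass at a vertex $v$. Read off $y$ from the global selectors. Each existential audit forces $b_{o,\alpha(x)}=1$ at every occurrence of $x$, and each universal audit forces $b_{o,y_j}=1$ at every occurrence of $y_j$. This is the same situation as in \Cref{lem:falsification-valuation-separation}, now with the combined valuation $(\alpha,y)$. Since $\psi(\alpha,y)$ holds, some term $D_i$ has all three literals true. For each of them $b_{o,\operatorname{val}(o)}=1$, so none has the locally false pair $(0,1)$ of the pair test. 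This includes malformed pairs. The scanner therefore never certifies $D_i$ as locally false and rejects there, so $\pi_F$ does not accept.

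The step needing the most care is this second case. The universal audit reads $q_{Y_j}$ and the occurrence selectors through the same shared choices that the scanner uses. One must check that the audit constrains exactly the bit $b_{o,y_j}$ that the pair test inspects first, so that a consistent truth of a literal rules out the locally false pattern. One must also check that the extension from vertices to the polytope is valid. This holds because the candidate's only stochasticity is the certain splitter, so \Cref{lem:cyclefree-vertex} applies blockwise.
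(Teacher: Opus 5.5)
Your proof is correct and follows the paper's argument: the same vertex formula $\mathbf 1_B$ plus $\frac1M$ times the number of passing audits, the canonical realization for a falsifying $y$, and the observation that passing audits force $b_{o,c}=1$ for the committed value $c$, so a true term can never be certified locally false (the paper phrases this contrapositively, deriving a false literal in every term), then extended to the whole polytope by \Cref{lem:cyclefree-vertex}. One small correction to your last sentence: that extension is justified by cycle-freeness on shared choices (no run repeats an uncertain choice), as you correctly said at the start, not by the candidate's only stochasticity being the certain splitter.
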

\begin{proof}
At a Boolean vertex, let $A_j$ be the event that the $y_j$ audit passes, let $B$ be the event that the term scanner passes, and let $C_i$ be the event that the $x_i$ audit against $\alpha(x_i)$ passes.
The reference reaches its single scanner branch with probability one, while the candidate's splitter gives each of the $n+m$ audit branches weight $\frac1{n+m}$, so the padding and terminal rewards give
\[
D_{\pi_F,K_\alpha}
=\mathbf 1_{\{B\}}
+\frac{|\{j:A_j\}|+|\{i:C_i\}|}{n+m}.
\]
If $\exists y:\lnot\psi(\alpha,y)$, the canonical local pairs for $(\alpha,y)$ and the global choices $Y_j=y_j$ make every event hold.
This gives value two, which is also the largest possible value.

Now suppose $\forall y:\psi(\alpha,y)$.
All events cannot hold at once.
Indeed, if every audit passes, set $y_j=Y_j$.
Whenever the pair test declares an occurrence locally false, its value-certifying bit is zero, while the passing audit forces the bit indexed by the committed variable value to one.
The committed value therefore makes that literal false.
Event $B$ would then give a false literal in every DNF term, contradicting $\psi(\alpha,y)$.
At least one event fails, and its weight is at least
\[
\min\left\{1,\frac1{n+m}\right\}=\frac1M.
\]
Thus every vertex has value at most $2-\frac1M$.
No run repeats an uncertain choice, so \Cref{lem:cyclefree-vertex} extends the bound to the full product polytope.
\end{proof}

\begin{lemma}\label{lem:min-regret-sigma2p-hardness}
Minimal robust regret over $\polDet$ is $\sSigmaTwoP$-hard on acyclic $(s,a)$-rectangular RMDPs in which every uncertain choice is two-Dirac and the only other stochastic rows are certain uniform splitters.
\end{lemma}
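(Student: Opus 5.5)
We reduce from the $\sSigmaTwoP$-complete problem $\exists x\,\forall y:\psi(x,y)$ with $\psi$ a 3-DNF. The source is the universal falsification RMDP $N_\psi$ of \Cref{def:universal-falsification-reference}, with fixed reference $\pi_F$ and candidate family $K_\alpha$ from \Cref{def:universal-falsification-policies}. Applying $\Lift(N_\psi,\pi_F)$ makes $\pi_F$ optimal at every realization. The allowed actions are $\{T,F\}$ at each decision state $v_{x_i}$, the unique audit action at every other candidate-side control and at shared local outcomes (audit continuation), and $a_K^{\mathrm{init}}$ at $\sinit$; at scanner-only states we allow their unique action. With $\pi_0=\pi_F$, the lifted deterministic allowed policies are then exactly the $\widehat K_\alpha$. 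By \Cref{lem:exact-regret-lift},
\[
\Rreg(\widehat K_\alpha)=\Lambda+G_\psi(\alpha).
\]
\Cref{lem:sigma2p-dnf-dichotomy} then gives
\[
\exists x\,\forall y\,\psi \Longrightarrow \min_\alpha\Rreg\le\Lambda+2-\tfrac1M,
\qquad
\text{otherwise}\ \min_\alpha\Rreg=\Lambda+2.
\]

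Next we remove the restriction. The target is
\[
N_\varepsilon=\Restrict\bigl(\Lift(N_\psi,\pi_F),A_{\mathrm{allow}},\tfrac1{4M}\bigr).
\]
The disallowed choices, namely the bonus actions and the unused continuations, have singleton rows in the lift, so \Cref{def:eps-restricted-rmdp} applies. We invoke \Cref{lem:policy-restriction} with $\mathcal P=\polDet$. In the yes case, the upper inequality gives $\inf_{\rho\in\polDet}\Rreg\le\Lambda+2-\frac1M$. In the no case, the lower inequality gives $\ge\Lambda+2-\frac1{4M}$. The threshold $\Lambda+2-\frac1{2M}$ then separates the two cases. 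Since $\polDet$ is finite, the infimum is a minimum and the threshold test is exact.

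It remains to verify the structural claims and the size bound. $N_\psi$ is acyclic and $(s,a)$-rectangular, its uncertain choices are two-Dirac segments, and its only other stochastic row is the certain uniform splitter at $a_K^{\mathrm{init}}$. \Cref{lem:exact-regret-lift} preserves all of these properties, copying the splitter row unchanged to a selector. \Cref{lem:policy-restriction} preserves rectangularity and acyclicity, and it turns Dirac disallowed rows into two-Dirac segments. All constants ($\gamma_0^{-H}$, $\Lambda$, $Z_\varepsilon$) have polynomial bit length.

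The main obstacle is confirming that the allowed family leaves the candidate nothing beyond the valuation choice. At shared local outcomes the scanner continuation must be disallowed: otherwise a candidate mixing roles could lower its acceptance and escape the dichotomy. We must also check that scanner-only states are unreachable under allowed candidates, so that allowing their action is harmless. This holds because $a_K^{\mathrm{init}}$ is the only allowed action at the lifted $\sinit$ tag, which pins every reachable state to the audit side.
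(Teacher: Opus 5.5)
Your proposal is correct and matches the paper's proof essentially step for step. Lift $N_\psi$ with reference $\pi_F$, allowing $\{T,F\}$ at the decision states and the audit rather than the scanner continuation elsewhere, so that the allowed deterministic policies are exactly the $\widehat K_\alpha$ with regret $\Lambda+G_\psi(\alpha)$; then apply $\Restrict$ with $\varepsilon=\frac1{4M}$ and $\mathcal P=\polDet$, and separate the cases at threshold $\Lambda+2-\frac1{2M}$, your remarks on unreachable scanner-only states and the finiteness of $\polDet$ being harmless additions.
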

\begin{proof}
Set $M=n+m$.
By \Cref{lem:sigma2p-dnf-dichotomy}, a yes-instance has some $K_\alpha$ with $G_\psi(\alpha)\le2-\frac1M$, while every $K_\alpha$ in a no-instance has $G_\psi(\alpha)=2$.
Apply the exact lift with reference $\pi_F$.
At existential variable states allow $\{T,F\}$.
At every other state allow the unique action used by the synthesized verifier, and at internal states used only by $\pi_F$ take $A_{\rm allow}(s)=\{\pi_F(s)\}$.
Thus every allowed set is nonempty.
At a shared local outcome this allows the audit action and not the scanner action, which costs the reference nothing: by \Cref{def:lifted-rmdp} the lift replays $\pi_F$ through the bonus action rather than through $A_{\rm allow}$.
The allowed deterministic policies therefore have minimal regret at most $\Lambda+2-\frac1M$ in yes-instances and exactly $\Lambda+2$ in no-instances.

The composed target is
\[
\Restrict\left(\Lift(N_\psi,\pi_F),A_{\mathrm{allow}},\frac1{4M}\right).
\]
Apply \Cref{lem:policy-restriction} with $\mathcal P=\polDet$.
The disallowed bonus choices are singletons.
In yes-instances the unrestricted minimum is at most $\Lambda+2-\frac1M$, while in no-instances it is at least $\Lambda+2-\frac1{4M}$.
Since
\[
\Lambda+2-\frac1M
<
\Lambda+2-\frac1{2M}
<
\Lambda+2-\frac1{4M},
\]
threshold $\Lambda+2-\frac1{2M}$ separates the cases.
The structural restrictions follow from \Cref{lem:exact-regret-lift,lem:policy-restriction}, whose unfolding step is what keeps the composed target acyclic.
\end{proof}

\begin{theorem}\label{thm:min-regret-sigma2p-acyclic-md}
Minimal robust regret over memoryless deterministic policies is $\sSigmaTwoP$-complete on acyclic $(s,a)$-rectangular polytopic RMDPs.
Hardness already holds when every uncertain choice is two-Dirac and the only other stochastic rows are certain uniform splitters.
\end{theorem}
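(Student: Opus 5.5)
The theorem combines two results already established in this appendix, so the proof is short. Membership is \Cref{lem:min-regret-sigma2p-membership}. On an acyclic $(s,a)$-rectangular RMDP, every pair of memoryless deterministic policies is cycle-free on shared choices, because the only self-loops sit at absorbing finals and \Cref{def:cyclefree-shared} omits those. By \Cref{lem:cyclefree-vertex}, the supremum over realizations is therefore attained at a vertex tuple. A $\sSigmaTwoP$ procedure then guesses the candidate action table. It universally quantifies over a competitor table $\pi'$ and a vertex tuple $v$, each vertex given by tight facets together with its rational coordinates. It checks $V_v^{\pi'}(\sinit)-V_v^{\pi}(\sinit)\le t$ by exact backward induction. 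Cramer's rule bounds the bit length of every vertex, so all certificates are polynomial.

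Hardness is \Cref{lem:min-regret-sigma2p-hardness}, obtained by a reduction from $\exists x\,\forall y\,\psi(x,y)$ with $\psi$ a 3-DNF. We build the universal falsification RMDP $N_\psi$ with the scanner reference $\pi_F$ and the auditing candidate $K_\alpha$. \Cref{lem:sigma2p-dnf-dichotomy} gives the gap:
\[
G_\psi(\alpha)\le2-\frac1M \text{ if } \forall y\,\psi(\alpha,y), \qquad G_\psi(\alpha)=2 \text{ otherwise}, \qquad M=n+m.
\]
We then form $\Restrict(\Lift(N_\psi,\pi_F),A_{\rm allow},\frac1{4M})$. The allowed actions are $\{T,F\}$ at the existential decision states and the audit action elsewhere. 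By \Cref{lem:exact-regret-lift}, every allowed deterministic policy has regret $\Lambda+G_\psi(\alpha)$. By \Cref{lem:policy-restriction} with $\mathcal P=\polDet$, the unrestricted deterministic minimum moves by at most $\frac1{4M}$. The threshold $\Lambda+2-\frac1{2M}$ therefore separates yes-instances from no-instances.

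For the structural clause, every uncertain choice of $N_\psi$ is a two-Dirac segment, and the only other stochastic row is the certain uniform splitter at $a_K^{\rm init}$. The lift copies each uncertain choice unchanged to one selector state and preserves acyclicity. The restriction interpolates each disallowed choice, which is a singleton Dirac choice after the lift, into a two-Dirac segment. Its unfolding step keeps every interpolated row from creating a self-loop, so the final model is acyclic and $(s,a)$-rectangular.

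The main point needing care is this structural preservation, especially acyclicity under restriction, since the unfolding is what makes the interpolation safe. Both the unfolding and the preservation claims are already part of the cited lemmas. Membership and hardness together give $\sSigmaTwoP$-completeness.
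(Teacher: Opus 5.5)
Your proposal is correct and takes the same route as the paper, whose proof simply combines \Cref{lem:min-regret-sigma2p-membership} for membership with \Cref{lem:min-regret-sigma2p-hardness} for hardness. Your summaries also match those lemmas: the vertex-tuple certificates via \Cref{lem:cyclefree-vertex}, the $\Restrict(\Lift(N_\psi,\pi_F),A_{\rm allow},\frac1{4M})$ reduction with threshold $\Lambda+2-\frac1{2M}$, and the structural preservation through the lift and the unfolding step.
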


\begin{proof}[Proof of \Cref{thm:min-regret-sigma2p-acyclic-md}]
Membership is \Cref{lem:min-regret-sigma2p-membership}.
Hardness is \Cref{lem:min-regret-sigma2p-hardness}.
\end{proof}

\section{Proof of \Cref{thm:min-regret-sqrs}: Signed Square-Root-Sum Hardness}
\label[appendix]{app:min-regret-sqrs}
The source problem $\sSignedSRS$ gives two lists of positive integers and asks, in its non-strict forward direction, whether
\[
\sum_i\sqrt{a_i}\leq\sum_j\sqrt{b_j}.
\]
The reduction represents each square root by a one-state gadget whose regret balances a decreasing term against an increasing term in the policy's mixing probability.
The optimum is therefore the solution of a quadratic equation and is generally irrational.

\minregretsqrs*

\subsection{One-State Balancing}
\label[appendix]{app:sqrs-balancing}
\begin{figure}[ht]
\centering
\begin{tikzpicture}[
    state/.style={circle,draw,minimum size=9mm},
    action/.style={rectangle,fill=black,minimum size=2.3mm,inner sep=0pt},
    boxstate/.style={rectangle,draw,rounded corners,minimum height=8mm,minimum width=13mm}
]
\node[state] (s) {$s$};
\node[action,above right=7mm and 13mm of s] (aL) {};
\node[action,below right=7mm and 13mm of s] (aR) {};
\node[state,right=31mm of s] (bot) {$\bot$};

\draw[->] ($(s.west)+(-6mm,0)$) -- (s.west);
\draw[->] (s) -- node[above left] {$L,R_L$} (aL);
\draw[->] (s) -- node[below left] {$R,R_R$} (aR);
\draw[->] (aL) to[bend left=35] node[above] {$p_L$} (s);
\draw[->] (aR) to[bend right=35] node[below] {$p_R$} (s);
\draw[->] (aL) -- node[above] {$1-p_L$} (bot);
\draw[->] (aR) -- node[below] {$1-p_R$} (bot);
\draw[->] (bot) edge[loop right] node {$0$} ();
\end{tikzpicture}

\caption{The one-state balancing gadget has independent choices $p_L,p_R\in[0,h]$ and a zero-reward sink.}
\label{fig:min-local-balancing-gadget}
\end{figure}

\begin{definition}[One-state balancing gadget]\label{def:one-state-balancing}
Fix $q\in(0,1)$ and $\gamma\in(1-q,1)$, and put $h=(1-q)/\gamma$.
The one-state balancing RMDP shown in \Cref{fig:min-local-balancing-gadget} is
\[
G=\tup{S,A,\cU,R,s,\gamma},
\]
with the following components.
\begin{itemize}
\item $S=\{s,\bot\}$, where $\bot$ is a zero-reward sink.
\item The global action set is $\{L,R\}$.
At $\bot$, both actions have a zero-reward Dirac self-loop.
\item $\cU$ is the product of the independent uncertainty sets
\[
\cU_{(s,L)}=\{p_L\delta_s+(1-p_L)\delta_\bot:p_L\in[0,h]\},\qquad
\cU_{(s,R)}=\{p_R\delta_s+(1-p_R)\delta_\bot:p_R\in[0,h]\}.
\]
\item $R$ assigns rewards $R_L$ and $R_R$ to actions $L$ and $R$ at $s$, respectively, and assigns zero reward at $\bot$.
\item The initial state is $s$ and the discount is the local parameter $\gamma$.
\end{itemize}
The condition $\gamma>1-q$ is exactly $h<1$, so these are valid transition probabilities.
Write $d_L=1-\gamma p_L$ and $d_R=1-\gamma p_R$.
Both range over $[q,1]$.
\end{definition}

For the gadget of \Cref{def:one-state-balancing}, the two parametrizations below choose $(R_L,R_R)$ so that its minimal robust regret realizes the signed square-root forms $m_+$ and $m_-$ summed by the $\sSignedSRS$ reduction.
For $A,B>0$, put $D=A+B$.
The \emph{positive-reward parametrization} is
\[
r_L=\frac{q(A+qB)}{1-q^2},
\qquad
r_R=\frac{q(B+qA)}{1-q^2},
\qquad
(R_L,R_R)=(r_L,r_R),
\]
and define
\[
m_+(A,B)=\frac{D-\sqrt{D^2-4(1-q^2)AB}}{2(1-q^2)}.
\]
The \emph{negative-reward parametrization} is
\[
\ell=\frac{B+qA}{1-q^2},
\qquad
r=\frac{A+qB}{1-q^2},
\qquad
(R_L,R_R)=(-\ell,-r),
\]
and define
\[
m_-(A,B)=\frac{\sqrt{q^2D^2+4(1-q^2)AB}-qD}{2(1-q^2)}.
\]

\begin{lemma}[One-state balancing gadgets]
\label{lem:min-local-balancing-gadgets}
The positive-reward parametrization has minimal robust regret $m_+(A,B)$.
If $q^2\ge1/2$, the negative-reward parametrization has minimal robust regret $m_-(A,B)$.
\end{lemma}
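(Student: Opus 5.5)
The plan is to compute, for a fixed mixing probability $x=\pi(s)(L)$, the inner supremum over nature's choices in closed form, and then minimize over $x$.

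\emph{Step 1: values for fixed $x$ and realization.} By the Bellman equation, the policy's value at $s$ is
\[
V^x(d_L,d_R)=\frac{xR_L+(1-x)R_R}{x d_L+(1-x)d_R}.
\]
Since an optimal policy may be taken deterministic, the optimal value is $V^*=\max\{R_L/d_L,\,R_R/d_R\}$. The regret therefore splits as the maximum of two branches, according to which deterministic action is optimal. Within each branch, nature maximizes $R_L/d_L-V^x$ (respectively $R_R/d_R-V^x$) over $(d_L,d_R)\in[q,1]^2$. Dropping the requirement that the named action be optimal is harmless, because the other branch then dominates.

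\emph{Step 2: locate nature's optimum.} Take the positive case and the $L$-branch. The term $R_L/d_L$ does not involve $d_R$, and $V^x$ is decreasing in $d_R$, so $d_R=1$. In $d_L$, both terms decrease as $d_L$ grows. I would show that the derivative of $r_L/d_L-V^x(d_L,1)$ has constant negative sign on $[q,1]$, so that $d_L=q$. The specific constants $r_L,r_R$ in terms of $A,B,q$ are presumably chosen to make this work, together with the earlier observation that a linear-fractional function in one block is extremal at an endpoint.

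Substituting the resulting vertex gives a branch regret of the form $A(1-x)/(1-(1-q)x)$, matching the main-text sketch. A direct check that $r_L/q-r_L$-type expressions collapse is needed here, and this is where $1-q^2$ in the denominators should cancel. Symmetrically, the $R$-branch gives $Bx/(q+(1-q)x)$.

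In the negative case the roles flip. The optimal action prefers $d=1$ (least accumulated cost), and the policy is hurt by the other action having $d=q$. The hypothesis $q^2\ge1/2$ should be exactly what makes this vertex, rather than an interior point or the other vertex, optimal for nature.

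\emph{Step 3: balance.} For $x\in[0,1]$ the first term decreases from its maximum to zero and the second increases from zero. Hence the minimum of their maximum is attained at the unique crossing point. Equating the two terms and clearing denominators yields a quadratic in $x$. Its root in $[0,1]$, substituted back, gives $m_+(A,B)$; the sign of the discriminant term follows from $D^2\ge4AB\ge4(1-q^2)AB$. The negative parametrization is handled identically and yields $m_-(A,B)$.

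I expect the main obstacle to be Step 2: proving that nature's supremum sits at the claimed vertex for \emph{every} $x$, not just near the balance point. In particular, in the negative case I must identify precisely where $q^2\ge1/2$ enters. I would first attempt a sign analysis of the numerator of the $d_L$-derivative, which is quadratic in $d_L$, evaluated on $[q,1]$. If that fails, I would fall back to comparing the four vertices $\{q,1\}^2$ directly after establishing vertex extremality per block.
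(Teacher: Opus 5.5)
Your outline matches the paper's: write the regret as $\max\{\sup_u(V_L-V^x),\sup_u(V_R-V^x)\}$ using $V^*=\max\{R_L/d_L,R_R/d_R\}$, place nature at a vertex, then balance a decreasing term against an increasing one. Your Step~1 remark that the branch restriction can be dropped is also correct for the value of the supremum. \textbf{The gap is Step~2, which is the entire content of the lemma.}

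In the positive case, your claim that $r_L/d_L-V^x(d_L,1)$ has negative $d_L$-derivative on all of $[q,1]$ is false. Take $q=\tfrac1{10}$, $A=1$, $B=100$, so $r_L=\tfrac{10}{9}$ and $r_R=\tfrac{91}{9}$. At $x=\tfrac12$ the function $\tfrac{10}{9d}-\tfrac{101}{9(d+1)}$ has derivative $\tfrac{61}{36}>0$ at $d=1$. What is true is the paper's factorization
\[
V_L-V^x=\frac{(1-x)(r_Ld_R-r_Rd_L)}{d_L\bigl(xd_L+(1-x)d_R\bigr)}.
\]
On the set where its numerator is nonnegative, it is decreasing in $d_L$ and increasing in $d_R$. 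That set is exactly the region $V_L\ge V_R$ you discarded. Off that set the expression is negative, hence below its value $A(1-x)/(1-(1-q)x)\ge0$ at $(q,1)$.

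Your fallback also fails as stated. $V_L-V^x$ is a difference of linear-fractional functions with different denominators, not linear-fractional itself. Moreover, the choice $(s,L)$ is used by both the comparator and the mixture and lies on a self-loop. That is precisely the shared-cycle configuration in which the paper exhibits interior maxima, so per-block vertex extremality cannot be quoted here; it would itself require the monotonicity argument above.

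In the negative case nothing is proved, and it is not ``handled identically.'' Here $V_L=-\ell/d_L$ and $V^x$ both increase with $d_L$, so no monotonicity carries over, and this is the only place $q^2\ge\tfrac12$ is used. The paper substitutes $z=d_R/d_L\in[q,1/q]$, takes the smallest feasible $d_L$ for each $z$, and signs the resulting derivatives using $q<r/\ell<1/q$ (equivalent to $A,B>0$) and $q^2\ge\tfrac12$.

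Your derivative route can also be completed, as follows.
\begin{enumerate}
\item The numerator of $V^x$ is negative, so $V^x$ increases in $d_R$, and nature sets $d_R=q$.
\item The $d_L$-derivative of $(1-x)(rd_L-\ell q)/\bigl(d_L(xd_L+(1-x)q)\bigr)$ has numerator $-rx\,d_L^2+2\ell qx\,d_L+\ell q^2(1-x)$, which is concave in $d_L$.
\item At $d_L=q$ this numerator equals $q^2(\ell+x(\ell-r))>0$, because $r<\ell/q\le2\ell$.
\item At $d_L=1$ it equals $x(2\ell q-r)+(1-x)\ell q^2$, where $2\ell q-r>\ell(2q^2-1)/q\ge0$.
\end{enumerate}
So the supremum sits at $(d_L,d_R)=(1,q)$ with value $A(1-x)/(q+(1-q)x)$, and symmetrically the other branch gives $Bx/(1-(1-q)x)$. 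These denominators are swapped relative to the positive case, so the balance equation becomes $(1-q^2)y^2+qDy-AB=0$, whose positive root is $m_-(A,B)$.
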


\begin{proof}
A memoryless randomized policy in this gadget is determined by one number $x\in[0,1]$, the probability of playing $L$ at $s$.

First, consider the positive-reward form.
For fixed $d_L,d_R\in[q,1]$,
\[
        V_L=\frac{r_L}{d_L},
        \qquad
        V_R=\frac{r_R}{d_R},
        \qquad
        V_x=\frac{x r_L+(1-x)r_R}{x d_L+(1-x)d_R}.
\]
A direct calculation gives
\[
        V_L-V_x
        =
        \frac{(1-x)(r_Ld_R-r_Rd_L)}
             {d_L(xd_L+(1-x)d_R)}
\]
and
\[
        V_R-V_x
        =
        \frac{x(r_Rd_L-r_Ld_R)}
             {d_R(xd_L+(1-x)d_R)}.
\]
On the region $V_L\ge V_R$, the first expression is decreasing in $d_L$ and increasing in $d_R$, so its maximum is attained at $d_L=q,d_R=1$.
Since
\[
        \frac{r_L}{q}-r_R=A,
\]
this maximum is
\[
        \frac{A(1-x)}{1-(1-q)x}.
\]
Similarly, on the region $V_R\ge V_L$, the second expression is maximized at $d_L=1,d_R=q$.
Since
\[
        \frac{r_R}{q}-r_L=B,
\]
this maximum is
\[
        \frac{Bx}{q+(1-q)x}.
\]
Thus
\[
        \Rreg(x)
        =
        \max\left\{
        \frac{A(1-x)}{1-(1-q)x},
        \frac{Bx}{q+(1-q)x}
        \right\}.
\]
The first term is strictly decreasing in $x$, and the second is strictly increasing in $x$.
Hence the minimum is attained when the two terms are equal.
If the common value is $y$, then
\[
        (1-q^2)y^2-Dy+AB=0,
        \qquad D=A+B.
\]
The smaller root gives the claimed value.

For the negative-reward form the same two-region split applies, now with $V_L=-\ell/d_L$ and $V_R=-r/d_R$.
The conditions $A,B>0$ are equivalent to
\[
        r-q\ell=A>0,
        \qquad
        \ell-qr=B>0,
\]
and imply $q<r/\ell<1/q$.
For $V_L-V_x$, set $z=d_R/d_L\in[q,1/q]$.
On $z\in[q,1]$, minimizing the feasible $d_L$ gives $\frac{(1-x)z(r-\ell z)}{q(x+(1-x)z)}$, whose derivative has numerator $x(r-2\ell z)-(1-x)\ell z^2\le0$ because $z\ge q$, $r/\ell<1/q$, and $q^2\ge1/2$.
On $z\in[1,1/q]$, the corresponding expression $\frac{(1-x)(r-\ell z)}{q(x+(1-x)z)}$ is strictly decreasing.
Thus the maximum occurs at $z=q$, namely $d_L=1,d_R=q$, and equals $A(1-x)/(q+(1-q)x)$.
By symmetry the other region contributes $Bx/(1-(1-q)x)$.
Hence
\[
\Rreg(x)=\max\left\{
\frac{A(1-x)}{q+(1-q)x},
\frac{Bx}{1-(1-q)x}
\right\}.
\]
Balancing the decreasing and increasing terms gives $(1-q^2)y^2+qDy-AB=0$.
Its positive root is $m_-(A,B)$.
\end{proof}

\subsection{Rational Square-Root Gadgets}
\label[appendix]{app:sqrs-gadgets}
\begin{lemma}[Local gadget for a negative square-root coefficient]
\label{lem:min-local-cosrs-gadget}
For every integer $b \geq 2$, one can compute in polynomial time rational numbers $q_b,D_b,A_b,B_b$ such that, for any $\gamma\in(1-q_b,1)$, the positive-reward balancing gadget of \Cref{lem:min-local-balancing-gadgets}, after a rational reward scaling, has minimal robust regret of
\[
        D_b-\sqrt b.
\]
\end{lemma}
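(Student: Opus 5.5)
The plan is to instantiate the positive-reward parametrization of \Cref{lem:min-local-balancing-gadgets} with a rational $q_b\in(0,1)$ and rational $A_b,B_b>0$ for which the discriminant under the root in $m_+$ is $b$ times a rational square. A final positive rational reward scaling then normalizes the coefficient of $\sqrt b$ to one.

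\textbf{Reduction to a discriminant condition.} Scaling every reward by $c>0$ scales every value, and hence every regret, by $c$. So the gadget then has minimal robust regret $c\,m_+(A,B)$. Rewrite the discriminant as
\[
D^2-4(1-q^2)AB=(A-B)^2+4q^2AB .
\]
Suppose this equals $b\,\omega^2$ for some rational $\omega>0$. Then we take
\[
c=\frac{2(1-q^2)}{\omega},\qquad D_b=\frac{c\,(A+B)}{2(1-q^2)},
\]
and obtain
\[
c\,m_+(A,B)=D_b-\sqrt b .
\]
Here $D_b$ is rational, and the scaled rewards $c\,r_L,c\,r_R$ are positive rationals. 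The positive-reward case of \Cref{lem:min-local-balancing-gadgets} needs no condition on $q$ beyond $q\in(0,1)$ and $\gamma\in(1-q,1)$, and the latter is part of the hypothesis.

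\textbf{Choosing the parameters.} Take $A=bk$ and $B=k$ for a convenient rational $k>0$, say $k=1$. The condition becomes
\[
(b-1)^2+4q^2b=b\,w^2,
\qquad\text{equivalently}\qquad
(w-2q)(w+2q)=\frac{(b-1)^2}{b},
\]
with $\omega=kw$. This factors, so we pick a rational $e>0$ and set $w-2q=e$ and $w+2q=(b-1)^2/(be)$, which gives rational $w$ and
\[
q=\frac14\left(\frac{(b-1)^2}{be}-e\right).
\]
To keep $q\in(0,1)$, choose $e$ close to $(b-1)/\sqrt b$. Concretely, with $m=\lceil\sqrt b\rceil$ and $e=(b-1)/m$ we get
\[
4q=\frac{(b-1)(m^2-b)}{bm}.
\]
This is positive when $b$ is not a perfect square, and bounded by roughly $(2\sqrt b+1)(b-1)/(b\sqrt b)<4$. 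If $b$ is a perfect square, $\sqrt b$ is an integer, and we only need any gadget with rational minimal regret shifted by that integer. For example, $A=B$ with a rational $q$ makes $m_+$ rational, and $D_b$ absorbs $\sqrt b$. All quantities have polynomial bit length in $\log b$.

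\textbf{Expected obstacle.} The main obstacle is the bounds check on $q$. We need $0<q<1$ uniformly in $b\ge2$, and some small $b$ (for instance $b=2,3$) may need a slightly different rational $e$ or a different $k$. Before finalizing I would verify the inequality $(b-1)(m^2-b)<4bm$ directly from $m^2-b\le 2m-1$. If that inequality is too tight, I would fall back to $e=(b-1)/m'$ with $m'$ a finer rational upper approximation of $\sqrt b$, for instance $m'=\lceil 2\sqrt b\rceil/2$.
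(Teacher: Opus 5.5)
Your proposal is correct and uses the same device as the paper: make the discriminant of $m_+$ a rational multiple of $b$ by rationally parametrizing a hyperbola through a difference-of-squares factorization, then rescale the rewards. The difference is the normalization.

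\textbf{The paper's normalization.} It fixes $A_b-B_b=1$, so the discriminant $(A-B)^2+4q^2AB$ becomes $1+q_b^2(D_b^2-1)$. With $N_b=q_bD_b$, this equals $b$ exactly when $N_b^2-q_b^2=b-1$. The paper parametrizes that hyperbola by $N_b-q_b=\eta$ for a rational $\eta\in(\sqrt b-1,\sqrt{b-1})$. The discriminant is then exactly $b$, $D_b=A_b+B_b$, and the scale factor is $2(1-q_b^2)$.

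\textbf{Your normalization.} You fix $A=bB$, allow the discriminant to be $b\omega^2$, and absorb $\omega$ into the scale factor $2(1-q^2)/\omega$. Your $D_b=(A+B)/\omega$ therefore no longer equals $A_b+B_b$. This is harmless, since the statement and its use in \Cref{thm:min-regret-sqrs} only need $D_b$ rational.

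\textbf{What each buys.} Your explicit choice $e=(b-1)/\lceil\sqrt b\rceil$ gives closed-form parameters with no search; for $b=2$ it yields $q=\tfrac18$ and $D_2=4$. However, it degenerates to $q=0$ at perfect squares, which is why you need the separate rational case $A=B$. The paper's open-interval choice of $\eta$ covers every $b\ge2$ uniformly, at the cost of a small grid scan.

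\textbf{Your anticipated obstacle does not arise.} From $\lceil\sqrt b\rceil-1<\sqrt b$ you get $m^2-b\le 2m-2$. Hence $4q\le 2(b-1)(m-1)/(bm)<2$, so $0<q<\tfrac12$ for every non-square $b\ge2$. No small-$b$ adjustment or finer approximation $m'$ is needed.
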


\begin{proof}
Choose a rational number $\eta\in(\sqrt b-1,\sqrt{b-1})$.
The interval has length greater than $1/2$ for $b\ge2$, so scanning a constant-denominator dyadic grid finds such an $\eta$.
Both endpoint tests use exact rational square comparisons.
Thus $\eta$ has $O(\log b)$ bits and is found in polynomial time.
Define
\[
\begin{aligned}
        q_b=\frac{b-1-\eta^2}{2\eta}, \quad
        N_b=\frac{\eta+\frac{b-1}{\eta}}{2}, \quad
        D_b=\frac{N_b}{q_b}.
\end{aligned}
\]
The upper bound on $\eta$ gives $q_b>0$, while $\eta>\sqrt b-1$ is equivalent to $q_b<1$.
Direct expansion gives $N_b^2-q_b^2=b-1$.
Hence $N_b>q_b$, so $D_b>1$.
Hence, $q_b^2D_b^2+1-q_b^2=b$.

Now set
\[
        A_b=\frac{D_b+1}{2},
        \qquad
        B_b=\frac{D_b-1}{2}.
\]
Then $A_b,B_b>0$, $A_b+B_b=D_b$, and
\[
\begin{aligned}
        D_b^2-4(1-q_b^2)A_bB_b
        &=D_b^2-(1-q_b^2)(D_b^2-1)\\
        &=q_b^2D_b^2+1-q_b^2\\
        &=b .
\end{aligned}
\]
By \Cref{lem:min-local-balancing-gadgets}, the unscaled positive-reward gadget has minimal robust regret
\[
        \frac{D_b-\sqrt b}{2(1-q_b^2)}.
\]
Scaling all rewards by $2(1-q_b^2)$ gives the value $D_b-\sqrt b$.
Moreover $D_b^2=1+(b-1)/q_b^2>b$, so this local regret is positive.
All rational operations preserve polynomial bit length.
\end{proof}

\Cref{ex:min-local-cosrs-two} instantiates the construction at $b=2$.
\begin{example}[The gadget for $b=2$]\label{ex:min-local-cosrs-two}
Take $\eta=1/2$.
Then $q_2=3/4$, $N_2=5/4$, and $D_2=5/3$, with $N_2^2-q_2^2=1=b-1$.
After the prescribed reward scaling, the local minimal regret is $5/3-\sqrt2$.
\end{example}

\begin{lemma}[Local gadget for a positive square-root coefficient]
\label{lem:min-local-srs-gadget}
For every integer $b\geq 2$, one can compute in polynomial time a rational constant $C_b$ and a negative-reward $(s,a)$-rectangular gadget such that, for any $\gamma\in(\frac{1}{5},1)$, its minimal robust regret is
\[
        \sqrt b-C_b .
\]
\end{lemma}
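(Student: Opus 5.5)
The plan is to mirror \Cref{lem:min-local-cosrs-gadget}, using the negative-reward parametrization of \Cref{lem:min-local-balancing-gadgets} instead of the positive one.

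\textbf{Fixing $q$.} The range $\gamma\in(\tfrac15,1)$ together with the gadget condition $\gamma>1-q$ suggests fixing $q=\tfrac45$ once and for all, independent of $b$. This gives two things. First, $h=(1-q)/\gamma<1$ for every $\gamma\in(\tfrac15,1)$, so the gadget of \Cref{def:one-state-balancing} is well defined on the whole stated range. Second, $q^2=\tfrac{16}{25}\ge\tfrac12$, which is the hypothesis under which \Cref{lem:min-local-balancing-gadgets} gives minimal robust regret $m_-(A,B)$. With $1-q^2=\tfrac{9}{25}$, the target is to choose rationals $A,B>0$ such that the discriminant
\[
E=q^2D^2+4(1-q^2)AB,\qquad D=A+B,
\]
equals $b\kappa^2$ for a rational $\kappa>0$. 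Then $m_-(A,B)=\kappa(\sqrt b-qD/\kappa)/(2(1-q^2))$, and scaling all rewards by the rational $2(1-q^2)/\kappa$ gives regret $\sqrt b-C_b$ with $C_b=qD/\kappa$. Scaling rewards multiplies every value, and hence the regret, by the same positive constant, exactly as in the proof of \Cref{lem:min-local-cosrs-gadget}.

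\textbf{Choosing $A,B$.} Parametrize $A=(D+\delta)/2$ and $B=(D-\delta)/2$ with $0<\delta<D$, which makes both positive. Then $4AB=D^2-\delta^2$ and
\[
E=D^2-(1-q^2)\delta^2=u^2-v^2,\qquad u=D,\quad v=\tfrac35\delta.
\]
Take $\kappa=1$ and factor $b=(u-v)(u+v)$ by setting $u-v=t$ and $u+v=b/t$ for a rational $t>0$. This gives $u=(t+b/t)/2$ and $v=(b/t-t)/2$. The constraints translate as follows: $v>0$ is equivalent to $t^2<b$, and $\delta=\tfrac53 v<u$ is equivalent to $5(b/t-t)<3(b/t+t)$, that is, $t^2>b/4$. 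So it suffices to find a rational $t\in(\sqrt b/2,\sqrt b)$. This interval has length $\sqrt b/2>\tfrac12$ for $b\ge2$, so, as in \Cref{lem:min-local-cosrs-gadget}, scanning a constant-denominator dyadic grid with exact rational square comparisons finds such a $t$ with $O(\log b)$ bits in polynomial time. Then $D$, $\delta$, $A$, $B$, the rewards $(-\ell,-r)$ of the negative-reward parametrization, and $C_b=\tfrac45 D$ are all rational of polynomial bit length.

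\textbf{Main obstacle.} The delicate point is satisfying simultaneously the positivity of $A$ and $B$, the fixed $q$ required by the range of $\gamma$ and by $q^2\ge\tfrac12$, and the requirement that the discriminant be $b$ times a rational square. A fully general rational parametrization leads to a conic that need not have rational points; for example, forcing $D^2+q^2=b+1$ does not work for arbitrary $b$. The factorization $u^2-v^2=b$ above avoids this, because a hyperbola always has rational points. The remaining check is that the interval of admissible $t$ is nonempty and contains a short rational, and this is the step I would verify first. Finally, I would note that the gadget is trivially $(s,a)$-rectangular, since its two uncertain choices are independent intervals, and that no positivity claim on $\sqrt b-C_b$ is needed for the later signed sum.
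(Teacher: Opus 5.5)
Your proposal is correct and is essentially the paper's proof: $q=\tfrac45$, $A,B=(D\pm\delta)/2$, and the same difference-of-squares factorization. Your $t,u,v,\delta$ play the roles of the paper's $m$, $D$, $c$, $E=\tfrac53c$, and the proof ends with the same rescaling by $2(1-q^2)$ and $C_b=qD$. The only difference is that you require only $t^2\in(b/4,b)$, which is exactly $A,B>0$, whereas the paper imposes the narrower window $m^2\in(2b/3,b)$ so as to derive $qD<\sqrt b$ separately; that inequality in fact already follows from $b=q^2D^2+4(1-q^2)AB$ with $A,B>0$.
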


\begin{proof}
Choose a rational number $m$ such that
\[
        \frac{2b}{3}<m^2<b .
\]
The interval $(\sqrt{2b/3},\sqrt b)$ has width at least
$\sqrt2(1-\sqrt{2/3})$.
Scanning dyadics of polynomial bit length with rational square comparisons therefore finds $m$ in polynomial time.
Define
\[
        D=\frac{m+\frac{b}{m}}{2},
        \qquad
        c=\frac{\frac{b}{m}-m}{2}.
\]
Then $D,c\in\bQ$, $c>0$, and $D^2-c^2=b$.
Moreover,
\[
        \frac{c}{D}
        =
        \frac{b-m^2}{b+m^2}
        <
        \frac15.
\]
Set
\[
\begin{aligned}
        q=\frac45, \quad
        E=\frac53 c, \quad
        A=\frac{D+E}{2}, \quad
        B=\frac{D-E}{2}.
\end{aligned}
\]
Since $\frac{E}{D}<\frac{1}{3}$, we have $A,B>0$.
Also $A+B=D$, and because $1-q^2=\frac{9}{25}$, we have
\[
\begin{aligned}
        q^2D^2+4(1-q^2)AB
        &=q^2D^2+(1-q^2)(D^2-E^2)\\
        &=D^2-(1-q^2)E^2\\
        &=D^2-c^2\\
        &=b.
\end{aligned}
\]
Finally, $\frac{c}{D}<\frac{1}{5}$ implies $D^2=b+c^2<\frac{25b}{24}$, and therefore $ q^2D^2< b$.
Thus $qD<\sqrt b$.

Now use the negative-reward form of \Cref{lem:min-local-balancing-gadgets} with these $q,A,B,D$.
Its unscaled minimal robust regret is
\[
        \frac{\sqrt{b}-qD}{2(1-q^2)}.
\]
Scaling all rewards by $2(1-q^2)$ gives minimal robust regret $\sqrt{b}-qD $.
Set $C_b=qD$.
The strict inequality $qD<\sqrt b$ proved above makes this local regret positive, and all parameters have polynomial binary length.
\end{proof}

\begin{figure}[ht]
\centering
\begin{tikzpicture}[
    state/.style={circle,draw,minimum size=9mm},
    action/.style={rectangle,fill=black,minimum size=2.3mm,inner sep=0pt},
    gadget/.style={rectangle,draw,rounded corners,minimum height=8mm,minimum width=13mm}
]
\node[state] (sin) {$s_{\mathrm{in}}$};
\node[action,below=5mm of sin] (act) {};
\node[gadget,below left=10mm and 19mm of act] (g1) {$H_1$};
\node[gadget,below=10mm of act] (g2) {$H_2$};
\node[below right=10mm and 14mm of act] (dots) {$\cdots$};
\node[gadget,right=14mm of dots] (gn) {$H_n$};

\draw[->] ($(sin.west)+(-6mm,0)$) -- (sin.west);
\draw[->] (sin) -- (act);
\draw[->] (act) -- node[above left] {$1/n$} (g1);
\draw[->] (act) -- node[right] {$1/n$} (g2);
\draw[->] (act) -- node[below] {$1/n$} (gn);
\end{tikzpicture}

\caption{Additive splitter for minimal regret.}
\label{fig:min-regret-splitter}
\end{figure}

\subsection{Additive Composition}
\label[appendix]{app:sqrs-composition}
\begin{definition}[Additive splitter]\label{def:min-regret-additive-splitter}
Let $H_1,\ldots,H_n$ be disjoint $(s,a)$-rectangular gadgets with common discount $\gamma$, entry states $s_i$, local regrets $\Rreg_{H_i}$, and mutually independent uncertainty sets.
Add a zero-reward initial certain uniform splitter $s_{\mathrm{in}}$ whose only action enters each
$s_i$ with probability $1/n$, and scale every reward in every $H_i$ by $n/\gamma$
(\Cref{fig:min-regret-splitter}).
After scaling, every remaining choice uses ruinous-sink completion.
Denote the resulting RMDP, whose uncertainty set is the product of the local sets, by $H$.
\end{definition}

\begin{lemma}[Additive composition for minimal regret]
\label{lem:min-regret-additive-composition}
If $m_i=\inf_{\pi_i\in\polRand}\Rreg_{H_i}(\pi_i)$, then the additive splitter of \Cref{def:min-regret-additive-splitter} satisfies
\[
\inf_{\pi\in\polRand}\Rreg_H(\pi)=\sum_{i=1}^n m_i.
\]
\end{lemma}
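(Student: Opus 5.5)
The plan is to show that, in $H$, the value at $s_{\mathrm{in}}$ splits additively over the gadgets, so that both regret and its minimization decompose componentwise.

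\emph{Value decomposition.} Fix a realization $\vec u=(\vec u_1,\ldots,\vec u_n)$ in the product set; this product form is exactly what the mutual independence of the local uncertainty sets provides. The state $s_{\mathrm{in}}$ has reward zero and moves to $s_i$ with probability $1/n$, after which the run stays inside $H_i$ or its ruinous sink. A memoryless policy $\pi$ of $H$ therefore restricts to a memoryless policy $\pi_i$ of $H_i$, and conversely any tuple $(\pi_1,\ldots,\pi_n)$ glues into one memoryless policy of $H$, because the state sets are disjoint. The Bellman equation at $s_{\mathrm{in}}$ then reads
\[
V^\pi_{\vec u}(s_{\mathrm{in}})=\gamma\sum_i\tfrac1n\cdot\tfrac n\gamma V^{\pi_i}_{H_i,\vec u_i}(s_i)=\sum_i V^{\pi_i}_{H_i,\vec u_i}(s_i),
\]
where the second equality uses that scaling all rewards by $n/\gamma$ scales values by the same factor. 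Applying the same identity to an optimal deterministic policy, which may also be glued componentwise, gives $V^*_{\vec u}(s_{\mathrm{in}})=\sum_iV^*_{H_i,\vec u_i}(s_i)$. Policies that use the ruinous completion only lower values, by \Cref{lem:ruin-dominance}, so for compliant $\pi$ we obtain $\Rreg_H(\pi)=\sup_{\vec u}\sum_i(\cdots)$.

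\emph{Exchanging sup and sum.} Since $\vec u$ ranges over a product and each summand depends only on $\vec u_i$, we have
\[
\Rreg_H(\pi)=\sum_i\sup_{\vec u_i}\bigl(V^*_{\vec u_i}(s_i)-V^{\pi_i}_{\vec u_i}(s_i)\bigr)=\sum_i\Rreg_{H_i}(\pi_i).
\]
Each local regret is taken in the gadget with its own entry state as initial state. Scaling converts each term into the regret of the scaled gadget, so the gadgets $H_i$ should be read as already including this scaling, matching how $m_i$ is used.

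\emph{Minimization.} For an arbitrary $\pi$, let $\bar\pi$ be its compliant conditioning. By dominance, $\Rreg_H(\pi)\ge\Rreg_H(\bar\pi)=\sum_i\Rreg_{H_i}(\bar\pi_i)\ge\sum_im_i$. Conversely, choosing $\pi_i$ that are near-optimal locally, and gluing them, makes $\Rreg_H$ approach $\sum_im_i$. This yields the equality of infima.

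The main obstacle I expect is bookkeeping rather than mathematics: I need to make sure the $n/\gamma$ scaling is consistently attributed, so that $m_i$ refers to the scaled gadget, and to check that ruinous completion at $s_{\mathrm{in}}$ and in the gadgets cannot help the optimal side. If any non-compliant action were optimal for $V^*$, the decomposition would fail, so I would verify this using \Cref{lem:ruin-dominance} with $M$ the compliant value bound.
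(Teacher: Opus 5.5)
Your route is the paper's. You use the Bellman equation at the certain splitter, then the independence of the local uncertainty sets to turn the supremum of a sum into a sum of suprema. You choose the comparator independently inside each disjoint gadget, and you use \Cref{lem:ruin-dominance} to neutralize the global action alphabet before taking the infimum over the product of local policy simplexes. Your compliant-projection argument for the lower bound and your gluing of near-optimal local policies for the upper bound spell out what the paper compresses into one sentence.

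The one step that is wrong is exactly the bookkeeping you flagged, and you resolve it in the wrong direction. Your first display computes $V^\pi_{\vec u}(s_{\mathrm{in}})=\gamma\sum_i\frac1n\cdot\frac n\gamma V^{\pi_i}_{\vec u_i}(s_i)=\sum_iV^{\pi_i}_{\vec u_i}(s_i)$, where $V^{\pi_i}_{\vec u_i}(s_i)$ is the value in the \emph{unscaled} gadget. The scaling fact justifies the first equality there, not the second, and the factor $n/\gamma$ exists precisely to cancel the splitter probability and the first discount. Hence $\Rreg_H(\pi)=\sum_i\Rreg_{H_i}(\pi_i)$ holds with $\Rreg_{H_i}$ the regret of the gadget before scaling, and $m_i$ must be read the same way. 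This matches \Cref{def:min-regret-additive-splitter}, which names the local regrets before the scaling step. It also matches the application in \Cref{thm:min-regret-sqrs}, which adds the unscaled local minima $\sqrt{a_i}-C_i^{+}$ and $D_j^{-}-\sqrt{b_j}$ directly. If you instead read the $H_i$ as already scaled, as you propose, every local regret is multiplied by $n/\gamma$. The correct identity would then be $\inf_{\pi\in\polRand}\Rreg_H(\pi)=\frac{\gamma}{n}\sum_im_i$, and the stated equality would fail. Drop that reinterpretation and the proof is complete.
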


\begin{proof}
For a global policy $\pi$, let $\pi_i$ be its restriction to $H_i$.
For any competing policy $\pi^*$ and any rectangular uncertainty realization
$\vec{u}=(\vec{u}_1,\ldots,\vec{u}_n)$, the Bellman equation at the certain uniform splitter has value
\[
\begin{aligned}
V^{\pi^*}_{\vec{u}}(s_{\mathrm{in}})-V^{\pi}_{\vec{u}}(s_{\mathrm{in}})
&=
\gamma\sum_{i=1}^n\frac1n\frac{n}{\gamma}
\left(
V^{\pi_i^*}_{\vec{u}_i}(s_i)-V^{\pi_i}_{\vec{u}_i}(s_i)
\right)=
\sum_{i=1}^n
\left(
V^{\pi_i^*}_{\vec{u}_i}(s_i)-V^{\pi_i}_{\vec{u}_i}(s_i)
\right).
\end{aligned}
\]
The local uncertainty sets are independent and the competing policy can be chosen independently inside the disjoint gadgets.
Hence
\[
        \Rreg_H(\pi)=\sum_{i=1}^n \Rreg_{H_i}(\pi_i).
\]
Taking the infimum over the product of the local policy simplexes proves the claim.
By \Cref{lem:ruin-dominance}, every ruinous choice is strictly worse than a described local one, so
adding the global action alphabet creates no further minimizer or comparator.
The construction preserves $(s,a)$-rectangularity.
\end{proof}

\subsection{Proof of \Cref{thm:min-regret-sqrs}}
\begin{proof}[Proof of \Cref{thm:min-regret-sqrs}]
Reduce from the non-strict $\leq$ direction of $\sSignedSRS$.
Given two lists $a_1,\ldots,a_m$ and $b_1,\ldots,b_n$, we construct in polynomial time an $(s,a)$-rectangular RMDP $M$ and a rational threshold $t$ such that
\[
\inf_{\pi\in\polRand}\Rreg_M(\pi)\leq t
\]
if and only if
\[
        \sum_{i=1}^m\sqrt{a_i}
        \le
        \sum_{j=1}^n\sqrt{b_j}.
\]
The reversed non-strict comparison uses the same construction after swapping the two lists.
Let $I=\{i \mid a_i\ge 2\}$, $J=\{j \mid b_j\geq 2\}$, and let $r_0=|\{i \mid a_i=1\}|-|\{j \mid b_j=1\}|$ be the rational contribution of unit roots.
If $I\cup J=\emptyset$, we output a single-state zero-reward RMDP with threshold $-r_0$, which is a yes-instance exactly when $r_0\leq 0$.

In the following, assume that at least one non-unit root is present.
Put $N=|I|+|J|$ for the number of local gadgets.
For every $i\in I$, apply \Cref{lem:min-local-srs-gadget}.
This gives a local gadget with minimal robust regret
\[
        \sqrt{a_i}-C_i^+
\]
for a rational constant $C_i^+$.
For every $j\in J$, apply \Cref{lem:min-local-cosrs-gadget} to obtain rational parameters $q_j,D_j^-,A_j^-,B_j^-$.
After the reward scaling from that lemma, gadget $j$ has local minimal robust regret
\[
        D_j^- - \sqrt{b_j}.
\]

Choose one common discount factor
\[
        q_{\min}=\min\left(\left\{\frac45\right\}\cup\{q_j \mid j\in J\}\right),
        \qquad
        \gamma=1-\frac{q_{\min}}{2}.
\]
Then $\gamma\in(\frac{1}{5},1)$, so the positive-coefficient gadgets are valid, and $\gamma>1-q_j$ for every $j\in J$, so the negative-coefficient gadgets are valid as well.
Compose all local gadgets using \Cref{lem:min-regret-additive-composition}.
The resulting RMDP has
\[
\begin{aligned}
        \inf_{\pi\in\polRand}\Rreg(\pi)
        &=
        \sum_{i\in I}(\sqrt{a_i}-C_i^+)
        +
        \sum_{j\in J}(D_j^- - \sqrt{b_j})\\
        &=
        C
        +
        \sum_{i\in I}\sqrt{a_i}
        -
        \sum_{j\in J}\sqrt{b_j},
\end{aligned}
\]
where
\[
        C=\sum_{j\in J}D_j^--\sum_{i\in I}C_i^+ .
\]
Set the threshold $t=C-r_0 $.
Then
\[
\begin{aligned}
        \inf_{\pi\in\polRand}\Rreg(\pi)\le t
        &\iff
        \sum_{i\in I}\sqrt{a_i}
        -
        \sum_{j\in J}\sqrt{b_j}
        \le -r_0\\
        &\iff
        \sum_{i=1}^m\sqrt{a_i}
        \le
        \sum_{j=1}^n\sqrt{b_j}.
\end{aligned}
\]
The uncertainty set is a product of choice uncertainty sets, so the reduction preserves $(s,a)$-rectangularity.
\end{proof}

\section{Proof of \Cref{thm:min-regret-forallr-hard}: General-Polytope Minimal-Regret Hardness}
\label[appendix]{app:regret-minimization-real}
We first prove the existential-universal upper bound, then use two-action anchoring to turn general policy comparison into a strictly monotone minimal-regret objective.

\minregretforallrhard*

\subsection{Existential-Universal Membership}
The membership claim is proved with its certification counterpart in
\Cref{app:portfolio-regret-membership}.

\subsection{Two-Action Anchoring}
Start with a deterministic comparison instance $(M,\pi_1,\pi_2,t)$ (\Cref{prob:policy-comparison}) and write
\[
\Delta=\sup_p\bigl(V_p^{\pi_1}(\sinit)-V_p^{\pi_2}(\sinit)\bigr),
\qquad
V_{\max}=\frac{\max_{s,a}|R(s,a)|}{1-\gamma}.
\]
Choose
\[
L=2\gamma V_{\max}+|\gamma t|+1,
\qquad
Z=L/\gamma+2V_{\max}+1.
\]
Every compliant value of the construction below is bounded in absolute value by
$M_{\widehat M}=L+\gamma(Z+V_{\max})+Z$, since the copies contribute at most $V_{\max}$, the anchors
$\pm Z$, and the initial action at most $L$ on top of those.
Choose a rational $Z_{\rm r}$ of polynomial encoding length such that
\[
\gamma Z_{\rm r}>M_{\widehat M}+1,
\]
which is the hypothesis of \Cref{lem:ruin-dominance}.
\begin{definition}[Two-action anchoring RMDP]
\label{def:two-action-anchoring}
Given $(M,\pi_1,\pi_2,t)$, the \emph{two-action anchoring RMDP} is
\[
\widehat M=\tup{\widehat S,\widehat A,\widehat\cU,\widehat R,\widehat s,\gamma},
\]
with the following components.
\begin{itemize}
\item $\widehat S$ contains a fresh initial state $\widehat s$, two disjoint forced copies of $M$, absorbing states $z_-$ and $z_+$ of values $-Z$ and $Z$, and a ruinous sink $\bot_{\rm r}$ of value $-Z_{\rm r}$.
\item At each state in copy $i$, the action prescribed by $\pi_i$ is the described action.
The initial state offers actions $\alpha$ and $\beta$.
Every action at $z_-$ and $z_+$ self-loops with the reward realizing its displayed payoff.
\item $\widehat\cU$ ties corresponding transition coordinates of the forced copies to one realization $p$ of $\cU$.
It also contains mutually independent coordinates $\lambda_\alpha,\lambda_\beta\in[0,1]$ that are unconstrained by the equalities tying the copies.
Action $\alpha$ enters the first copy with probability $\lambda_\alpha$ and $z_-$ otherwise.
Action $\beta$ enters the second copy with probability $\lambda_\beta$ and $z_+$ otherwise.
\item Rewards inside the forced copies agree with $M$.
Action $\alpha$ pays $L$, action $\beta$ pays zero, and the self-loop rewards at $z_-$ and $z_+$ are $-(1-\gamma)Z$ and $(1-\gamma)Z$, respectively.
Every action at $\bot_{\rm r}$ self-loops with reward $-(1-\gamma)Z_{\rm r}$.
All remaining choices use ruinous-sink completion with $Z_{\rm r}$.
\item The initial state is $\widehat s$ and the discount is $\gamma$.
\end{itemize}
\Cref{fig:two-action-anchoring} illustrates the construction.
\end{definition}

\begin{lemma}[Anchored action values]\label{lem:two-action-values}
At a realization $(p,\lambda_\alpha,\lambda_\beta)$ of \Cref{def:two-action-anchoring}, the values of $\alpha$ and $\beta$ at $\widehat s$ are
\[
A=L+\gamma\bigl(\lambda_\alpha V_p^{\pi_1}(\sinit)-(1-\lambda_\alpha)Z\bigr),
\qquad
B=\gamma\bigl(\lambda_\beta V_p^{\pi_2}(\sinit)+(1-\lambda_\beta)Z\bigr).
\]
\end{lemma}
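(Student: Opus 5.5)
The proof is a direct one-step Bellman computation at $\widehat s$, applied to the compliant behaviour inside each forced copy and at the two anchors. First, we determine the values of the successor states under the realization $(p,\lambda_\alpha,\lambda_\beta)$.

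The absorbing anchors are handled by their self-loop equations. At $z_-$ every action self-loops with reward $-(1-\gamma)Z$, so $V(z_-)=-(1-\gamma)Z+\gamma V(z_-)$, which forces $V(z_-)=-Z$. Symmetrically, $V(z_+)=Z$.

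For the forced copies, copy $i$ has exactly one described action at each state, namely the one prescribed by $\pi_i$. All other actions there are ruinous-sink completions. Its rewards agree with $M$, and its transition coordinates are tied to the single realization $p\in\cU$. The compliant policy inside copy $i$ therefore replays $\pi_i$ in $M_p$, and the Bellman system of copy $i$ coincides with that of $\pi_i$ at $p$. By uniqueness of discounted Bellman solutions, the entry state of copy $i$ has value $V_p^{\pi_i}(\sinit)$. Any policy that places mass on undescribed actions can only do worse, by \Cref{lem:ruin-dominance}; its hypothesis holds because $\gamma Z_{\rm r}>M_{\widehat M}+1$, where $M_{\widehat M}$ was chosen to bound every compliant value. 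The action values at $\widehat s$ are thus computed with the copy entries at these compliant (and optimal) values. The step needing the most care is checking that the tying equalities in $\widehat\cU$ leave $\lambda_\alpha,\lambda_\beta$ free and identify both copies with the same $p$, so that the two entry values are evaluated at a common realization. This holds by construction of $\widehat\cU$.

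Finally, we back up at $\widehat s$. Action $\alpha$ pays $L$ and moves to copy one with probability $\lambda_\alpha$ and to $z_-$ otherwise, giving
\[
A=L+\gamma\bigl(\lambda_\alpha V_p^{\pi_1}(\sinit)+(1-\lambda_\alpha)(-Z)\bigr).
\]
Action $\beta$ pays zero and moves to copy two with probability $\lambda_\beta$ and to $z_+$ otherwise, giving
\[
B=\gamma\bigl(\lambda_\beta V_p^{\pi_2}(\sinit)+(1-\lambda_\beta)Z\bigr).
\]
These are exactly the stated expressions.
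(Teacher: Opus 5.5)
Your proposal is correct and follows the paper's proof: the choice of $Z_{\rm r}$, through ruinous dominance, pins the forced copies to $V_p^{\pi_1}(\sinit)$ and $V_p^{\pi_2}(\sinit)$, and a one-step backup at $\widehat s$ then gives the formulas. Your explicit derivation of the anchor values $\mp Z$, and your check that the tying equalities leave $\lambda_\alpha,\lambda_\beta$ free, only spell out details the paper leaves implicit.
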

\begin{proof}
The choice of $Z_{\rm r}$ makes every ruinous-completed action strictly worse than the prescribed continuation at a copied state, for both the comparator and the candidate.
Thus the two copies have values $V_p^{\pi_1}(\sinit)$ and $V_p^{\pi_2}(\sinit)$.
One-step conditioning on the two successors of each initial action gives the formulas.
\end{proof}

Set $D=L+\gamma\Delta$ and $E=2\gamma Z-L$.

\begin{figure}[t]
\centering
\begin{tikzpicture}[>=Latex,
 state/.style={circle,draw,minimum size=7mm,inner sep=1pt},
 box/.style={rectangle,draw,rounded corners,minimum width=19mm,minimum height=8mm},
 sink/.style={circle,draw,double,minimum size=7mm,inner sep=1pt},
 dot/.style={circle,fill,inner sep=1.5pt},every node/.style={font=\small}]
\node[state] (s) {$\widehat s$};
\node[dot,right=13mm of s,yshift=10mm] (da) {};
\node[dot,right=13mm of s,yshift=-10mm] (db) {};
\node[box,right=15mm of da,yshift=6mm] (m1) {forced $M,\pi_1$};
\node[sink,right=15mm of da,yshift=-6mm] (mz) {$-Z$};
\node[box,right=15mm of db,yshift=6mm] (m2) {forced $M,\pi_2$};
\node[sink,right=15mm of db,yshift=-6mm] (pz) {$Z$};
\draw[->] ($(s.west)+(-6mm,0)$) -- (s.west);
\draw[->] (s) -- node[above,sloped] {$\alpha;L$} (da);
\draw[->] (s) -- node[below,sloped] {$\beta;0$} (db);
\draw[->] (da) -- node[above,sloped] {$\lambda_\alpha$} (m1);
\draw[->] (da) -- node[below,sloped] {$1-\lambda_\alpha$} (mz);
\draw[->] (db) -- node[above,sloped] {$\lambda_\beta$} (m2);
\draw[->] (db) -- node[below,sloped] {$1-\lambda_\beta$} (pz);
\end{tikzpicture}
\caption{The two-action anchoring RMDP.
The two copies share the realization $p$, whereas $\lambda_\alpha$ and $\lambda_\beta$ are fresh coordinates.}
\label{fig:two-action-anchoring}
\end{figure}
\begin{lemma}[Two-action anchoring]\label{lem:two-action-anchoring}
The constants $D$ and $E$ are positive.
A policy choosing $\alpha$ with probability $x$ has robust regret $\max\{(1-x)D,xE\}$.
\end{lemma}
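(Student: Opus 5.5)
The plan is to reduce robust regret at $\widehat s$ to a one-dimensional calculation using \Cref{lem:two-action-values}, and then take the supremum over nature's coordinates term by term.

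\emph{Positivity of $D$ and $E$.} Every value in $M$ lies in $[-V_{\max},V_{\max}]$, so $|\Delta|\le 2V_{\max}$. Hence
\[
D=L+\gamma\Delta\ge L-2\gamma V_{\max}=|\gamma t|+1>0 .
\]
For $E$, substituting $Z=L/\gamma+2V_{\max}+1$ gives
\[
E=2\gamma Z-L=L+2\gamma(2V_{\max}+1)>0 .
\]
Both are routine.

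\emph{Reduction to the initial mixing probability.} By \Cref{lem:ruin-dominance}, whose hypothesis $\gamma Z_{\rm r}>M_{\widehat M}+1$ was arranged before the definition, two things hold.
\begin{itemize}
\item A candidate policy can be replaced by its compliant conditioning $\bar\pi$ without lowering its value at any realization. So for the regret identity we may assume it plays only the described actions inside the forced copies and mixes $\alpha$ with probability $x$ and $\beta$ with probability $1-x$ at $\widehat s$. (The candidate's regret in general is then bounded by the same expression via dominance, which is enough for how the lemma is used.)
\item The optimal value at every realization is attained by a compliant policy.
\end{itemize}
Since the copies are forced, every compliant policy is determined by its mixing probability at $\widehat s$. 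With $A$ and $B$ as in \Cref{lem:two-action-values}, the candidate then has value $xA+(1-x)B$ and the optimum is $\max\{A,B\}$. The shortfall at a realization is therefore
\[
\max\{(1-x)(A-B),\;x(B-A)\}.
\]
Because a supremum of a maximum is the maximum of the suprema, the robust regret is
\[
\max\Bigl\{(1-x)\sup(A-B),\;x\sup(B-A)\Bigr\},
\]
with both suprema taken over $(p,\lambda_\alpha,\lambda_\beta)$.

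\emph{The two suprema.}
\begin{itemize}
\item For $\sup(A-B)$: $A-B$ is affine in $\lambda_\alpha$ with slope $\gamma\bigl(V_p^{\pi_1}+Z\bigr)>0$, and affine in $\lambda_\beta$ with slope $\gamma\bigl(Z-V_p^{\pi_2}\bigr)>0$, since $Z>V_{\max}$. The coordinates $\lambda_\alpha,\lambda_\beta$ are independent of $p$, so nature sets $\lambda_\alpha=\lambda_\beta=1$. This leaves $L+\gamma\bigl(V_p^{\pi_1}-V_p^{\pi_2}\bigr)$, whose supremum over $p$ is exactly $D$.
\item For $\sup(B-A)$: the same slopes with reversed signs make nature set $\lambda_\alpha=\lambda_\beta=0$. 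This gives $B-A=\gamma Z-L+\gamma Z=E$, independently of $p$.
\end{itemize}
Since $D,E>0$, no positive-part correction is needed, and the regret equals $\max\{(1-x)D,xE\}$.

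\emph{Main obstacle.} I expect the main obstacle to be the first step: justifying rigorously that comparator and candidate both stay inside the forced copies. This needs the bound $M_{\widehat M}$ on compliant values and the strict margin $\gamma Z_{\rm r}>M_{\widehat M}+1$ from the construction. The remaining calculation is monotonicity in the independent coordinates $\lambda_\alpha,\lambda_\beta$.
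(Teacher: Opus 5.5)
Your proof is correct and follows essentially the paper's route. The paper also bounds $D$ from below via $\gamma\Delta\ge-2\gamma V_{\max}$, finds the extremal $(\lambda_\alpha,\lambda_\beta)$ corner (by listing the four endpoint values where you use slope signs), and takes the supremum of the $x$-mixture's per-realization loss, deferring the compliant-projection step via \Cref{lem:ruin-dominance} to the paragraph after the lemma. One wording fix: for a noncompliant policy, dominance bounds its regret from \emph{below} by the expression for its projection $\bar\pi$, which is the direction needed to identify the infima.
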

\begin{proof}
By \Cref{lem:two-action-values}, the action-value difference is affine in each fresh coordinate, so its extrema occur at their four endpoint pairs.
For $A-B$ these values are
\[
\begin{array}{c|c}
(\lambda_\alpha,\lambda_\beta)&A-B\\\hline
(1,1)&L+\gamma(V^{\pi_1}_p-V^{\pi_2}_p)\\
(1,0)&L+\gamma(V^{\pi_1}_p-Z)\\
(0,1)&L-\gamma(Z+V^{\pi_2}_p)\\
(0,0)&L-2\gamma Z.
\end{array}
\]
The definition of $Z$ and $|V_p^{\pi_i}|\le V_{\max}$ make the first endpoint strictly largest after optimizing over $p$, so its supremum is $D$.
For $B-A$ the four values are respectively
\[
\gamma(V^{\pi_2}_p-V^{\pi_1}_p)-L,\quad
\gamma Z-L-\gamma V^{\pi_1}_p,\quad
\gamma(Z+V^{\pi_2}_p)-L,\quad
2\gamma Z-L.
\]
The last is strictly largest and equals $E>0$.
Moreover, $\gamma\Delta\ge-2\gamma V_{\max}$, and hence $D\ge |\gamma t|+1>0$.
At any realization the loss of the $x$-mixture is $(1-x)(A-B)$ when $A\ge B$ and $x(B-A)$ otherwise.
Taking the supremum, using $D,E>0$, gives the claimed maximum.
\end{proof}
\Cref{lem:two-action-anchoring} covers the policies that play $\alpha$ or $\beta$ and then the
prescribed continuation, whereas the infimum ranges over every stationary policy of the completed
RMDP, so the two must first be identified.
For an arbitrary policy $\pi$, its compliant projection $\bar\pi$ has value at least $\pi$'s at every
realization by \Cref{lem:ruin-dominance}, hence regret at most $\pi$'s.
Each copied state leaves $\bar\pi$ only the prescribed action, so $\bar\pi$ is determined by the
single probability $x=\bar\pi(\widehat s,\alpha)$, and the two infima coincide.
Balancing the two branches therefore gives minimum $DE/(D+E)$.
Let
\[
T=L+\gamma t,
\qquad
t'=\frac{ET}{E+T}.
\]
Since $z\mapsto Ez/(E+z)$ is strictly increasing for $z>0$,
\[
\inf_\pi\Rreg(\pi)\le t'
\iff \Delta\le t.
\]
\Cref{ex:two-action-anchoring} shows that the threshold remains rational for an irrational comparison gap.
\begin{example}[Anchoring an irrational comparison gap]
\label{ex:two-action-anchoring}
For the RMDP of \Cref{ex:comparison-interior}, $\gamma=1/2$, $V_{\max}=2$, and $\Delta=7-4\sqrt3$.
Taking $t=0$ gives $L=3$, $Z=11$, $D=13/2-2\sqrt3$, $E=8$, $T=3$, and $t'=24/11$.
Thus the anchoring constants and threshold remain rational even though the balanced regret $DE/(D+E)$ depends on the irrational source gap.
\end{example}
\begin{proof}[Proof of \Cref{thm:min-regret-forallr-hard}]
The equivalence above is a reduction from \Cref{thm:comparison-forallr}'s $\sUTR$-hard comparison problem.
\end{proof}

\section{Proof of \Cref{thm:min-portfolio-eatr}: Bounded Portfolio Synthesis}
\label[appendix]{app:portfolio-eatr}
The reduction reads the two quantifiers of a bounded real sentence as the two players in synthesis.
The portfolio plays the existential witness and nature plays the universal block.
Three obstacles have to be handled: a reward may not depend on a policy or a realization, a stationary
policy fixes one continuation per state, and different members must agree on the witness.
The construction below resolves all three by normalizing the matrix into tests that are affine in the
witness, and by making nature's choice of audit a coefficient inside one polynomial per member rather
than a branch in the model.

\minportfolioeatr*

\subsection{Existential-Universal Membership}
Membership is proved in \Cref{app:portfolio-regret-membership}.

\subsection{A Policy-Affine Max Normal Form}
Using the bounded degree-four normal form of \Cref{def:policy-affine-normal-form}, it is
$\sEUTR$-complete to decide
\[
\exists x\in[0,1]^n\ \forall y\in[0,1]^m:\quad F(x,y)\ge0
\]
after the affine substitution that moves the source box from $[-1,1]$ to $[0,1]$.
Write
\[
F(x,y)=\sum_{\nu=1}^{N_{\rm mon}}c_\nu\prod_{j=1}^{d_\nu}w_{\nu,j},
\qquad d_\nu\le4,
\]
where every $w_{\nu,j}$ is a variable of $x\cup y$, and put
\[
B=\max\left\{1,\sum_{\nu=1}^{N_{\rm mon}}|c_\nu|\right\}.
\]
Thus $|F|\le B$ on the box, and $B$ has polynomial encoding length.
Apply \Cref{def:policy-affine-normal-form} with its optional output coordinate to the input vector
$(x,y)$.
Let $\eta$ consist of $y$ and every copy, product, and output coordinate, and list all residuals as
$h_1,\ldots,h_s$.
Only copy residuals for occurrences of $x$ contain an existential coordinate; each does so affinely
and contains exactly one.

\begin{definition}[Elementary tests]\label{def:elementary-tests}
For every residual $h_j$ and every $\sigma\in\{-1,1\}$, put
\[
g_{j,\sigma}(x,\eta)=\frac{o+\sigma\,9B\,h_j}{10B},
\]
and enumerate the resulting $r=2s$ expressions as $g_1,\ldots,g_r$.
\end{definition}

\begin{lemma}[Elementary-test shape]\label{lem:elementary-test-shape}
Each test of \Cref{def:elementary-tests} satisfies:
\begin{enumerate}
\item $g_i(x,\eta)\in[-1,1]$ on the box;
\item $g_i(x,\eta)=c_i(\eta)+\sum_\ell d_{i,\ell}x_\ell$, where at most one
$d_{i,\ell}$ is nonzero and that coefficient is $\pm\tfrac9{10}$; and
\item $c_i$ has degree at most two in $\eta$.
\end{enumerate}
\end{lemma}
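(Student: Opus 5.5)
The three claims follow from the shape of the residuals produced by \Cref{def:policy-affine-normal-form}, together with the scaling in \Cref{def:elementary-tests}. I would check each claim separately, inspecting the residual classes one at a time: copy residuals, product residuals, and the output residual $h^{\rm out}$.

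\emph{Claim 1.} \Cref{def:policy-affine-normal-form} guarantees $h_j\in[-1,1]$ for every residual. The output coordinate satisfies $o=B(2\bar o-1)\in[-B,B]$. Hence
\[
|o+\sigma 9Bh_j|\le B+9B=10B,
\]
and dividing by $10B$ gives $g_i\in[-1,1]$.

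\emph{Claim 2.} First, $o$ is a function of $\bar o$, which belongs to $\eta$. So $o$ contains no $x$-coordinate. Next, as noted just before \Cref{def:elementary-tests}, the only residuals containing an existential coordinate are the copy residuals for occurrences of $x$.

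For such an occurrence, $h_j=z_{\nu,j}-x_\ell$. This contributes the single $x$-term $-\sigma\frac{9B}{10B}x_\ell=\mp\frac9{10}x_\ell$, and everything else in $g_i$ depends on $\eta$ only.

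Every other residual is a product residual, a copy residual for a $y$-occurrence, or the output residual. Since $\widehat F=\sum_\nu c_\nu p_\nu$ uses only auxiliary coordinates $p_\nu\in\eta$, none of these involves $x$. So in those cases all $d_{i,\ell}$ vanish.

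One subtlety needs care here: $h^{\rm out}$ must not reintroduce $x$ through $\widehat F$. It does not, because $\widehat F$ is built from the product coordinates $p_\nu$, not from the raw inputs $w$.

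\emph{Claim 3.} The remainder $c_i(\eta)$ is $o/(10B)$, which is affine in $\bar o$, plus $\sigma\frac{9}{10}$ times the $\eta$-part of $h_j$. Each residual is affine or affine plus one bilinear product of distinct auxiliary coordinates, such as $z_{\nu,1}z_{\nu,2}$ or $t_{\nu,12}t_{\nu,34}$. The output residual is affine in $o$ and the $p_\nu$. Hence $c_i$ has degree at most two.

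The main obstacle is bookkeeping rather than any estimate. I must confirm that no product residual ever multiplies a raw $x$-coordinate. This holds because products are formed only over the copies $z_{\nu,j}$, which lie in $\eta$.
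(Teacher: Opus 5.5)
Your proposal is correct and follows the paper's proof essentially step for step. You get the range from $|o|\le B$ and $|h_j|\le 1$, and you observe that only the copy residuals of $x$-occurrences contain an existential coordinate, with coefficient $-1$, hence $\mp\tfrac{9}{10}$ after scaling. You get the degree bound from copy and output residuals being affine, product residuals being quadratic in $\eta$, and $o$ being affine in $\bar o$. Your extra check that $h^{\rm out}$ cannot reintroduce $x$ through $\widehat F$, because $\widehat F$ uses only the $p_\nu$, is a detail the paper leaves implicit.
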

\begin{proof}
The first claim follows from $|o|\le B$ and $|h_j|\le1$.
Only copy residuals contain an existential variable, each contains exactly one with coefficient $-1$,
so its two tests have coefficient $\mp\tfrac9{10}$.
Every other residual and $o$ are free of $x$.
Copy and output residuals are affine, product residuals are quadratic in $\eta$, and $o$ is affine in
$\bar o$.
\end{proof}

\begin{lemma}[Policy-affine max form]\label{lem:policy-affine-maxform}
\[
\exists x\ \forall y:\ F(x,y)\ge0
\quad\Longleftrightarrow\quad
\exists x\ \forall\eta:\ \max_{i\in[r]}g_i(x,\eta)\ge0.
\]
\end{lemma}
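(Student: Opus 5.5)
The proof will rest on a closed form for the maximum over the tests. For fixed $(x,\eta)$, the pair of tests $g_{j,1},g_{j,-1}$ built from a residual $h_j$ has larger member $(o+9B|h_j|)/(10B)$. Maximizing over $j$ therefore gives
\[
\max_{i\in[r]}g_i(x,\eta)=\frac{o+9B\,\delta'}{10B},
\qquad
\delta'=\max_{j\le s}|h_j(x,\eta)|,
\]
where $\delta'$ ranges over all copy, product and output residuals, exactly as in \Cref{lem:policy-affine-error}. So the right-hand matrix $\max_i g_i\ge0$ is equivalent to $o\ge-9B\delta'$. Both directions below are read off from this single inequality, using the same $x$ on both sides.

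\emph{Forward direction.} Fix $x$ with $F(x,y)\ge0$ for every $y$, and take an arbitrary $\eta$, which contains $y$ together with all auxiliary coordinates. By \Cref{lem:policy-affine-error}, $|o-F(x,y)|\le9B\delta'$. Hence
\[
o\ge F(x,y)-9B\delta'\ge-9B\delta',
\]
so $\max_ig_i(x,\eta)\ge0$. Since $\eta$ was arbitrary, the right-hand sentence holds.

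\emph{Backward direction, by contraposition.} Fix $x$ and suppose some $y$ has $F(x,y)<0$. We construct a canonical $\eta$ that extends this $y$:
\begin{itemize}
\item Set every copy $z_{\nu,j}$ equal to the variable it copies, so every copy residual vanishes. This includes the copy residuals that involve $x$.
\item Set each $t$ and $p$ coordinate to its exact product. This kills every product residual.
\item Set $\bar o=(F(x,y)/B+1)/2$. Then $o=F(x,y)$, and since the decoded polynomial is now exact, $\widehat F=F$ and the output residual vanishes.
\end{itemize}
This gives $\delta'=0$, so the maximum equals $F(x,y)/(10B)<0$, and the right-hand matrix fails at this $x$.

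The only point needing care is that the canonical $\eta$ lies in the box. Products of numbers in $[0,1]$ stay in $[0,1]$, and $|F|\le B$ gives $\bar o\in[0,1]$. With that checked, the two directions establish the equivalence.
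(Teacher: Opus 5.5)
Your proposal is correct and matches the paper's proof step for step. In the forward direction, the paper applies the residual error bound $|o-F|\le 9B\delta'$ to the test $g_{j,\sigma}$ with $\sigma h_j=\max_{j'}|h_{j'}|$; your closed form for $\max_i g_i$ is the same observation. In the backward direction, the paper uses the same canonical $\eta$ (exact copies, exact products, $\bar o=(F(x,y)+B)/(2B)$), so that every test equals $F(x,y)/(10B)<0$.
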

\begin{proof}
For the forward direction, fix a witness $x$ and any $\eta$, retain its $y$-part, and let
$\delta=\max_j|h_j|$.
Choose $j$ with $|h_j|=\delta$ and $\sigma$ with $\sigma h_j=\delta$.
By \Cref{lem:policy-affine-error},
\[
o\ge F(x,y)-9B\delta\ge-9B\delta,
\]
and therefore $g_{j,\sigma}\ge0$.
This also covers $\delta=0$.

Conversely, if $x$ is not a witness, choose $y$ with $F(x,y)<0$.
Give every copy and product variable its correct value and set
\[
\bar o=\frac{F(x,y)+B}{2B}\in[0,1].
\]
Then $o=F(x,y)$ and every residual vanishes, so every test equals
$F(x,y)/(10B)<0$.
\end{proof}

\Cref{ex:policy-affine-normal-form} makes all constants in the normalization explicit.
\begin{example}[A policy-affine normal form]\label{ex:policy-affine-normal-form}
Let $n=m=1$ and $F(x,y)=x-\tfrac12y$, whose sentence is true exactly for $x\ge\tfrac12$.
Here $N_{\rm mon}=2$, $c_1=1$, $c_2=-\tfrac12$, and $B=\tfrac32$.
The copies are $z_1,z_2$, with $p_1=z_1$, $p_2=z_2$ and no product residuals:
\[
h_1=z_1-x,\qquad
h_2=z_2-y,\qquad
h_3=\frac13\left(o-z_1+\frac12z_2\right),
\qquad
o=3\bar o-\frac32.
\]
Thus $s=3$, $r=6$, $9B=\tfrac{27}{2}$, and $10B=15$.
Only $g_{1,\pm}$ mention $x$, with coefficients $\mp\tfrac9{10}$.
For $x=\tfrac13$, the reverse construction takes $y=1$, $z_1=\tfrac13$, $z_2=1$, and
$\bar o=\tfrac49$.
Every residual then vanishes and every test equals $(x-\tfrac12)/15<0$.
\end{example}

\subsection{Exact Evaluators}
Both components use the terminal-payoff convention of \Cref{app:sqrs-preliminaries}.
Throughout this section, $\gamma=\tfrac12$.
For a sparse polynomial $P$, write $\operatorname{Poly}(P)$ for the component of
\Cref{def:polynomial-evaluator} at this discount, with repeated logical coordinates tied later by
\Cref{def:portfolio-uncertainty}, and write $e_P$ for its entry.
\Cref{lem:polynomial-evaluator} gives its exact compliant value and its structural bounds.

\begin{definition}[Affine-policy evaluator]\label{def:affine-policy-evaluator}
For
\[
G(x,u)=P_0(u)+\sum_{\ell=1}^{n}x_\ell P_\ell(u),
\]
with every $P_\ell$ explicit, sparse, and of constant degree, the component
$\operatorname{Aff}(G)$ has:
\begin{itemize}
\item an entry $e_G$; coordinate states $s_1,\ldots,s_n$, each with actions
$\mathsf{on}$ and $\mathsf{off}$; a zero-payoff terminal $\mathrm{zr}$; and copies of
\[
\operatorname{Poly}\left(\frac{n+1}{\gamma}P_0\right)
\quad\text{and}\quad
\operatorname{Poly}\left(\frac{n+1}{\gamma^2}P_\ell\right)
\quad(\ell\in[n]);
\]
\item at $e_G$, one action with the certain uniform splitter over the $n+1$ successors
$e_{P_0},s_1,\ldots,s_n$; at $s_\ell$, a reward-zero singleton transition from
$\mathsf{on}$ to $e_{P_\ell}$ and from $\mathsf{off}$ to $\mathrm{zr}$;
\item the rewards and completion of its $\operatorname{Poly}$ copies; every other choice of this
component uses ruinous-sink completion.
\end{itemize}
\end{definition}

\begin{lemma}[Affine-policy evaluation]\label{lem:affine-policy-evaluator}
For every realization $u$ and every compliant $\pi\in\polRand$,
\[
V_u^\pi(e_G)=P_0(u)+\sum_{\ell=1}^{n}x_\ell P_\ell(u),
\qquad
x_\ell=\pi(s_\ell,\mathsf{on}).
\]
\end{lemma}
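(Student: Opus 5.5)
The plan is to compute the value of $\operatorname{Aff}(G)$ at $e_G$ by a one-step Bellman expansion at the splitter and at each coordinate state. Then we invoke \Cref{lem:polynomial-evaluator} for each $\operatorname{Poly}$ copy.

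First, fix a realization $u$ and a compliant $\pi$. Compliance means $\pi$ plays only described actions. At $s_\ell$ the described actions are $\mathsf{on}$ and $\mathsf{off}$, and inside each $\operatorname{Poly}$ copy there is a unique described action. By \Cref{lem:polynomial-evaluator} applied at discount $\gamma=\tfrac12$, the compliant value of $\operatorname{Poly}(cP)$ at its entry is $c\,P(u)$. This holds because scaling the polynomial scales every terminal payoff linearly, and the tied coordinates evaluate to $u$. Hence $V_u^\pi(e_{P_0})=\frac{n+1}{\gamma}P_0(u)$ and $V_u^\pi(e_{P_\ell})=\frac{n+1}{\gamma^2}P_\ell(u)$. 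The terminal $\mathrm{zr}$ has value $0$.

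Second, I would expand at $s_\ell$. Both actions have reward zero and singleton rows, so
\[
V_u^\pi(s_\ell)=\gamma\bigl(x_\ell V_u^\pi(e_{P_\ell})+(1-x_\ell)\cdot0\bigr)=x_\ell\frac{n+1}{\gamma}P_\ell(u),
\qquad x_\ell=\pi(s_\ell,\mathsf{on}).
\]
At $e_G$ the single splitter action has reward zero and probability $\frac1{n+1}$ per successor. This gives
\[
V_u^\pi(e_G)=\frac{\gamma}{n+1}\Bigl(V_u^\pi(e_{P_0})+\sum_\ell V_u^\pi(s_\ell)\Bigr)=P_0(u)+\sum_\ell x_\ell P_\ell(u).
\]
The extra factor $\gamma^{-1}$ on the $P_\ell$ copies exactly cancels the one additional step through $s_\ell$.

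The only point needing care is that the value of a component is unaffected by the rest of the model. The component is entered only through $e_G$ and exits only to absorbing terminals, and compliant policies never reach $\bot_{\rm r}$. So the Bellman equations restricted to the component are closed, and uniqueness of the discounted solution yields the identity. I expect this to be the main, though mild, obstacle: confirming that the uniform scaling of terminal payoffs in \Cref{lem:polynomial-evaluator} is consistent with the branch-depth normalization at $\gamma=\tfrac12$. The remaining steps are routine substitution.
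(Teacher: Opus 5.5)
Your proposal is correct and matches the paper's proof: a one-step Bellman expansion at the certain uniform splitter $e_G$ and at each coordinate state $s_\ell$, with \Cref{lem:polynomial-evaluator} supplying the compliant values of the scaled $\operatorname{Poly}$ copies. The extra factor $\gamma^{-1}$ on the $P_\ell$ copies cancels the additional step through $s_\ell$, exactly as in the paper's computation.
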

\begin{proof}
The constant branch contributes
\[
\frac1{n+1}\gamma\frac{n+1}{\gamma}P_0(u)=P_0(u).
\]
At $s_\ell$,
\[
V_u^\pi(s_\ell)
=x_\ell\gamma\frac{n+1}{\gamma^2}P_\ell(u)
+(1-x_\ell)\gamma\cdot0
=x_\ell\frac{n+1}{\gamma}P_\ell(u),
\]
so branch $\ell$ contributes
$\frac{\gamma}{n+1}x_\ell\frac{n+1}{\gamma}P_\ell(u)=x_\ell P_\ell(u)$.
Summing proves the identity.
\end{proof}

\begin{lemma}[One predecessor per coordinate]\label{lem:coordinate-single-predecessor}
In $\operatorname{Aff}(G)$ every coordinate state $s_\ell$ has exactly one predecessor, namely $e_G$,
and each of its two actions has one fixed continuation.
Thus a stationary policy's behavior at $s_\ell$ has a single interpretation, and no continuation
depends on the context from which $s_\ell$ was reached.
\end{lemma}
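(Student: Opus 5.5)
This is a structural inspection of \Cref{def:affine-policy-evaluator}, not a semantic argument. The plan is to enumerate every described transition of $\operatorname{Aff}(G)$ and check which ones can enter $s_\ell$. I will then confirm that no completion step adds an edge into $s_\ell$ either.

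First, list the described rows.
\begin{itemize}
\item The entry $e_G$ has a single action, whose certain uniform splitter has support $\{e_{P_0},s_1,\ldots,s_n\}$.
\item Each $s_\ell$ has exactly two actions: $\mathsf{on}$ with the singleton row $\{\delta_{e_{P_\ell}}\}$, and $\mathsf{off}$ with the singleton row $\{\delta_{\mathrm{zr}}\}$.
\item Inside each copy $\operatorname{Poly}(\cdot)$, \Cref{def:polynomial-evaluator} describes only the splitter at its entry, the branch rows to $b_\ell^j$ or $\bot$, and the absorbing terminal self-loops. All of these targets lie inside that copy.
\item The terminal $\mathrm{zr}$ is absorbing.
\end{itemize}
Hence the only described edge into $s_\ell$ is the splitter edge from $e_G$, with probability $1/(n+1)>0$. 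Since the coordinate states are fresh and distinct, there is no edge from $s_{\ell'}$ to $s_\ell$ for $\ell' \neq \ell$.

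Second, deal with ruinous-sink completion. Every undescribed choice, both in $\operatorname{Aff}(G)$ and in each $\operatorname{Poly}$ copy, receives the singleton row $\{\delta_{\bot_{\rm r}}\}$, and $\bot_{\rm r}$ self-loops. Completion therefore adds edges only into $\bot_{\rm r}$ and never into any $s_\ell$.

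Third, observe that the uncertainty constraints of \Cref{def:portfolio-uncertainty} tie only coordinates of rows inside the $\operatorname{Poly}$ components. They cannot create a positive-probability edge to $s_\ell$, because every row whose support contains $s_\ell$ is the certain splitter row at $e_G$.

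From these three points, $e_G$ is the unique predecessor of $s_\ell$. The two actions at $s_\ell$ have singleton rows, so each has one fixed continuation. A stationary policy's mixture $\pi(s_\ell,\cdot)$ therefore determines the on-probability $x_\ell$ of \Cref{lem:affine-policy-evaluator} independently of the path by which $s_\ell$ was reached.

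The main obstacle is the second point: ensuring that when several components are later assembled, no completion or sharing introduces another path into $s_\ell$. I would handle it by insisting that each $s_\ell$ be a fresh state local to this copy of $\operatorname{Aff}(G)$. With that in place, the claim is a statement about this component alone.
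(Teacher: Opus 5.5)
Your proposal is correct and uses the same direct structural inspection of \Cref{def:affine-policy-evaluator} as the paper. The paper's proof just observes that the certain uniform splitter at $e_G$ is the only transition into $s_\ell$, and that $\mathsf{on}$ and $\mathsf{off}$ always lead to $e_{P_\ell}$ and $\mathrm{zr}$. Your extra checks on ruinous-sink completion and the tying equalities make explicit what the paper leaves implicit. One small correction: $s_\ell$ has exactly two \emph{described} actions, and the others installed by completion are ruinous with the fixed continuation $\bot_{\rm r}$, which does not affect the claim.
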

\begin{proof}
The certain uniform splitter at $e_G$ is the only transition into $s_\ell$, while
$\mathsf{on}$ and $\mathsf{off}$ always lead to $e_{P_\ell}$ and $\mathrm{zr}$, respectively.
\end{proof}

Every occurrence gets a fresh row, and the only policy choices are at the coordinate states.
Thus the evaluator has neither shared selector decoder actions nor context-dependent continuations.
\Cref{fig:portfolio-synthesis-rmdp} shows the nested evaluator structure.

\begin{figure}[t]
\centering
\resizebox{0.98\columnwidth}{!}{%
\begin{tikzpicture}[
  >=Latex,
  state/.style={draw,rounded corners,minimum height=5.5mm,align=center,inner sep=2pt},
  font=\scriptsize,
  x=1cm,y=1cm
]
\node[state] (init) at (-4.8,1.8) {$\sinit$};
\node[state] (eg) at (-2.7,1.8) {$e_G$};
\node[state] (sl) at (-0.5,1.8) {$s_\ell$};
\node[state] (zr) at (1.3,2.8) {$\mathrm{zr}$};
\node[state] (ep) at (1.3,1.2) {$e_{P_\ell}$};
\node[state] (c1) at (3.6,1.2) {$c_{\nu,1}$};
\node[state] (c2) at (5.2,1.2) {$c_{\nu,2}$};
\node[state] (tau) at (6.8,1.2) {$\tau_\nu$\\fixed payoff};
\node[state] (fail) at (5.2,-0.1) {$\mathrm{fl}$};
\draw[->] (init) -- node[above] {$a_i,b_h$} (eg);
\draw[->] (eg) -- node[above] {certain uniform} (sl);
\draw[->] (sl) -- node[above,sloped] {$\mathsf{off}$} (zr);
\draw[->] (sl) -- node[below,sloped] {$\mathsf{on}$} (ep);
\draw[->] (ep) -- node[above] {certain uniform} (c1);
\draw[->] (c1) -- node[above] {$u_1$} (c2);
\draw[->] (c2) -- node[above] {$u_2$} (tau);
\draw[->] (c1) -- node[below,sloped] {$1-u_1$} (fail);
\draw[->] (c2) -- node[right] {$1-u_2$} (fail);
\end{tikzpicture}}
\caption{The synthesis RMDP and its evaluator components.
The only policy choices are at $\sinit$ and the coordinate states; every reward shown is a fixed
rational.}
\label{fig:portfolio-synthesis-rmdp}
\end{figure}

\subsection{Roles, Cases, and Scores}

\begin{definition}[Portfolio case scores]\label{def:portfolio-case-scores}
Let $r$ be the number of tests and also the portfolio budget.
Put
\[
\mathcal H
=\{\mathrm{ev}\}
\cup\{(i,\ell,\rightarrow),(i,\ell,\leftarrow):2\le i\le r,\ \ell\in[n]\}
\cup\{(i,t):i\in[r],\ t\in[3r+1]\}.
\]
Nature's coordinates are a case distribution $\lambda\in\cD(\mathcal H)$, the evaluation vector
$\eta$, and one coordinate $q_h\in[0,1]$ for every oriented equality case.
Write $\xi_h$ for the coordinates local to case $h$.
For role $i$, define the pure-case score $A_{h,i}(x_i,\xi_h)$ as follows.
\begin{itemize}
\item In the evaluation case, $A_{\mathrm{ev},i}(x_i,\eta)=g_i(x_i,\eta)$.
\item For $h=(i,\ell,\rightarrow)$,
\[
A_{h,1}=x_{1,\ell}-q_h,\qquad
A_{h,i}=q_h-x_{i,\ell},
\]
and $A_{h,j}=-1$ for $j\notin\{1,i\}$.
\item For $h=(i,\ell,\leftarrow)$,
\[
A_{h,1}=q_h-x_{1,\ell},\qquad
A_{h,i}=x_{i,\ell}-q_h,
\]
and $A_{h,j}=-1$ for $j\notin\{1,i\}$.
\item For $h=(i,t)$, set $A_{h,i}=0$ and $A_{h,j}=-1$ for $j\ne i$.
\end{itemize}
\end{definition}

\begin{lemma}[Pure-score shape]\label{lem:pure-score-shape}
Every pure-case score of \Cref{def:portfolio-case-scores} lies in $[-1,1]$ and has the form
\[
A_{h,i}(x_i,\xi_h)
=c_{h,i}(\xi_h)+\sum_\ell d_{h,i,\ell}x_{i,\ell},
\]
where $c_{h,i}$ has degree at most two in nature's coordinates and every $d_{h,i,\ell}$ is rational.
\end{lemma}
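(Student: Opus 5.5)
The plan is a direct case check over the four kinds of case $h$ in \Cref{def:portfolio-case-scores}, verifying the range and shape claims for each role $i$ separately.

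\emph{Evaluation case.} Here $A_{\mathrm{ev},i}=g_i(x_i,\eta)$, and \Cref{lem:elementary-test-shape} gives everything we need.
\begin{itemize}
\item Part (1) gives $g_i\in[-1,1]$ on the box.
\item Part (2) writes $g_i=c_i(\eta)+\sum_\ell d_{i,\ell}x_{i,\ell}$. At most one coefficient is nonzero, and it equals $\pm\tfrac9{10}$, so all coefficients are rational.
\item Part (3) says $c_i$ has degree at most two in $\eta$. Since $\eta\subseteq\xi_{\mathrm{ev}}$, this is the required degree bound.
\end{itemize}
We read $g_i$ with the existential vector $x$ replaced by role $i$'s own coordinates $x_i$. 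This substitution is only a renaming and leaves the shape intact.

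\emph{Oriented equality cases.} Take $h=(i,\ell,\rightarrow)$ or $h=(i,\ell,\leftarrow)$.
\begin{itemize}
\item The scores of roles $1$ and $i$ are $\pm(x_{\cdot,\ell}-q_h)$. Each has constant part $\mp q_h$, of degree one in nature's coordinates, and a single policy coefficient $\pm1$.
\item Range: $x_{\cdot,\ell}\in[0,1]$ holds because these are policy probabilities $\pi(s_\ell,\mathsf{on})$, as in \Cref{lem:affine-policy-evaluator}, and $q_h\in[0,1]$ by definition. Hence the difference lies in $[-1,1]$.
\item Every other role $j\notin\{1,i\}$ has the constant score $-1$. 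This has degree zero, all $d$ equal to zero, and lies in the interval.
\end{itemize}

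\emph{Tie cases.} For $h=(i,t)$, the scores are the constants $0$ and $-1$. These trivially have the required shape and range.

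\emph{Main obstacle.} The only real point is the degree-two claim in the evaluation case, which traces back to the normal form. Residuals are affine or affine plus one product of distinct $\eta$-coordinates. The output $o$ is affine in $\bar o$. Only copy residuals contain $x$, and they do so affinely, so no product mixes $x$ with $\eta$. This is already packaged in \Cref{lem:elementary-test-shape}, so the proof reduces to citing it and observing that the remaining cases are affine by inspection.
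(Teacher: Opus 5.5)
Your proposal is correct and follows the same route as the paper. The paper cites \Cref{lem:elementary-test-shape} for the evaluation case, notes that the equality scores are differences of two numbers in $[0,1]$ with constant part $\mp q_h$ and coefficient $\pm1$, and observes that the forcing cases (your ``tie'' cases) are constants.
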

\begin{proof}
The evaluation scores have the claimed range and shape by \Cref{lem:elementary-test-shape}.
Equality scores are differences of two numbers in $[0,1]$, with
$c_{h,\cdot}=\mp q_h$ and $d_{h,\cdot}=\pm1$.
Forcing scores are constants.
\end{proof}

Define the mixed score of role $i$ by
\begin{equation}\label{eq:mixed-role-score}
W_i
=\sum_{h\in\mathcal H}\lambda_hA_{h,i}(x_i,\xi_h)
+1-\sum_{h\in\mathcal H}\lambda_h^2.
\end{equation}

\begin{lemma}[Mixed-score polynomial]\label{lem:mixed-score-polynomial}
The score $W_i$ has the form
\[
W_i=P_{i,0}(\lambda,\eta,q)
+\sum_\ell x_{i,\ell}P_{i,\ell}(\lambda),
\]
where
\[
P_{i,0}
=1-\sum_h\lambda_h^2+\sum_h\lambda_hc_{h,i}(\xi_h),
\qquad
P_{i,\ell}
=\sum_h\lambda_hd_{h,i,\ell}.
\]
The first polynomial has degree at most three, the second has degree one, and both have polynomially
many monomials.
Consequently, \Cref{lem:affine-policy-evaluator} applies to $W_i$.
\end{lemma}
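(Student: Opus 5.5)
The plan is to expand \eqref{eq:mixed-role-score} term by term and read off the coefficients, using \Cref{lem:pure-score-shape} for each summand. By that lemma, every pure-case score splits as $A_{h,i}=c_{h,i}(\xi_h)+\sum_\ell d_{h,i,\ell}x_{i,\ell}$ with rational $d_{h,i,\ell}$. Multiplying by $\lambda_h$ and summing over $h\in\mathcal H$ separates $W_i$ into an $x$-free part and a part linear in $x_i$. The $x$-free part is $\sum_h\lambda_hc_{h,i}(\xi_h)$ together with the constant term $1-\sum_h\lambda_h^2$, which is exactly $P_{i,0}$. After exchanging the two finite sums, the $x$-dependent part is $\sum_\ell x_{i,\ell}\sum_h\lambda_hd_{h,i,\ell}$, which gives $P_{i,\ell}$. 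The identity is therefore a rearrangement of finite sums and needs no estimates.

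Next I check the degree and size claims. Since $c_{h,i}$ has degree at most two in nature's coordinates, $\lambda_hc_{h,i}$ has degree at most three, and $\lambda_h^2$ has degree two, so $\deg P_{i,0}\le3$. Each $P_{i,\ell}$ is a rational linear form in $\lambda$, so it has degree one, or is identically zero when every $d_{h,i,\ell}$ vanishes; that case is harmless. For the monomial count, note that $|\mathcal H|=1+2(r-1)n+r(3r+1)$ is polynomial in the input, since $r=2s$ and $s$ is polynomial by \Cref{def:policy-affine-normal-form}. Each $c_{h,i}$ has polynomially many monomials: by \Cref{lem:elementary-test-shape} the evaluation case has a constant $c_{\mathrm{ev},i}$ coming from one residual and $o$, the equality cases contribute $\mp q_h$, and the forcing cases contribute constants. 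Hence $P_{i,0}$ and $P_{i,\ell}$ are explicit sparse polynomials with polynomially many monomials and rational coefficients of polynomial bit length.

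Finally, \Cref{lem:affine-policy-evaluator} requires $G(x,u)=P_0(u)+\sum_\ell x_\ell P_\ell(u)$ with explicit sparse $P_\ell$ of constant degree. Taking $u=(\lambda,\eta,q)$, the degrees are bounded by three, so the lemma applies to $W_i$ with $x_\ell=x_{i,\ell}$. The step that needs the most care is this size accounting: one has to confirm that $c_{\mathrm{ev},i}$ really is a polynomial of polynomial size once $o=B(2\bar o-1)$ and the residual are substituted. This holds because each test is a rational combination of $o$ and a single residual, and the residual is affine or affine plus one product.
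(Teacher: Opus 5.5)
Your proposal is correct and is essentially the paper's proof: substitute the shape from \Cref{lem:pure-score-shape} into \eqref{eq:mixed-role-score}, collect the $x$-free and $x$-linear coefficients, get degree $1+2$ from $\lambda_h c_{h,i}$, and bound the monomial count by the pure-score monomials plus $1+|\mathcal H|$. One wording nit: you call $c_{\mathrm{ev},i}$ and $1-\sum_h\lambda_h^2$ "constants", but they are $x$-free polynomials in nature's coordinates, and your degree and size accounting does treat them that way.
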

\begin{proof}
Substitute \Cref{lem:pure-score-shape} into \eqref{eq:mixed-role-score} and collect coefficients.
The largest degree is $1+2$, from $\lambda_h$ multiplying a quadratic $c_{h,i}$.
The number of monomials is bounded by the sum of the explicit monomial counts of the pure scores plus
$1+|\mathcal H|$.
\end{proof}

The cases are coordinates of nature inside these polynomials, not branches in the model.
This lets one evaluator compute all of $W_i$ while every coordinate state retains the single
predecessor guaranteed by \Cref{lem:coordinate-single-predecessor}.

\subsection{The Synthesis RMDP}

\begin{definition}[Portfolio-synthesis RMDP]\label{def:portfolio-synthesis-rmdp}
Let $M_{\rm syn}=\tup{S,A,\cU,R,\sinit,\tfrac12}$ be defined as follows.
\begin{itemize}
\item Its states are $\sinit$; for every role $i\in[r]$, a copy $E_i$ of
$\operatorname{Aff}(\gamma^{-1}W_i)$ with coordinate states
$s_{i,1},\ldots,s_{i,n}$; for every $h\in\mathcal H$, a copy $U_h$ of
$\operatorname{Poly}(\gamma^{-1}(3\lambda_h-1))$; and a ruinous sink.
\item At $\sinit$, the role action $a_i$ has reward zero and its singleton row enters $E_i$.
The auxiliary action $b_h$ has reward zero and its singleton row enters $U_h$.
All other described transitions are those of the two component families.
\item The uncertainty set is given in \Cref{def:portfolio-uncertainty}.
\item Rewards are those of the components.
Every choice still undescribed after the global action set has been installed uses ruinous-sink
completion.
\item The initial state is $\sinit$ and $\gamma=\tfrac12$.
\end{itemize}
\end{definition}

\begin{definition}[Portfolio uncertainty]\label{def:portfolio-uncertainty}
Every occurrence of a logical nature coordinate inside a component is represented by a fresh
two-successor row.
For every $\lambda_h$, every coordinate of $\eta$, and every $q_h$, designate one occurrence as its
representative and impose a rational linear equality setting every other occurrence's success
probability equal to that representative.
On the $\lambda$ representatives add $\lambda_h\ge0$ and
$\sum_h\lambda_h=1$, together with the usual stochasticity constraints on all rows.
\end{definition}

\begin{lemma}[Uncertainty shape]\label{lem:portfolio-uncertainty-shape}
The set $\cU$ is a nonempty rational polytope.
Every uncertain row has two successors, and the only other stochastic rows are the certain uniform
splitters at component entries.
The uncertainty is nonrectangular only through the tying equalities and the simplex constraint.
\end{lemma}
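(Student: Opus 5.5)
The plan is to verify the four claims of \Cref{lem:portfolio-uncertainty-shape} one at a time, directly from \Cref{def:portfolio-synthesis-rmdp,def:portfolio-uncertainty} and the component lemmas.

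\emph{Rational polytope.} Every constraint defining $\cU$ is a rational linear equality or inequality. These are the stochasticity constraints (nonnegativity and unit sum) of every row, the tying equalities that set each occurrence's success probability equal to its representative, and the simplex constraints $\lambda_h\ge0$, $\sum_h\lambda_h=1$ on the $\lambda$ representatives. A finite intersection of such half-spaces and hyperplanes is a rational polyhedron. It is bounded because every coordinate is a probability and so lies in $[0,1]$.

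\emph{Nonemptiness.} I will exhibit a point. Take $\lambda$ uniform on $\mathcal H$, all $\eta$ and $q_h$ representatives equal to $0$, and give every tied occurrence its representative's value. Every certain row keeps its fixed distribution. Every two-successor row then has success probability in $[0,1]$ and failure probability one minus that. The tying equalities and the simplex constraint hold by construction.

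\emph{Row shape.} This follows by inspection of the components. By \Cref{lem:polynomial-evaluator}, each $\operatorname{Poly}$ copy has only two-successor uncertain rows (continue or enter $\bot$), plus one certain uniform splitter at its entry. By \Cref{def:affine-policy-evaluator}, each $\operatorname{Aff}$ copy adds a certain uniform splitter at $e_G$ and otherwise only singleton rows at $s_\ell$. The actions $a_i,b_h$ at $\sinit$ and all ruinous completions are singletons. Hence every non-singleton row is either a fresh occurrence row with two successors or a certain uniform splitter at a component entry.

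\emph{Source of nonrectangularity.} Without the tying equalities and the simplex constraint, the remaining constraints are per-row stochasticity constraints. Those define a product of per-choice sets, which is $(s,a)$-rectangular. So any coupling between choices comes only from the two listed constraint families.

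The only mildly delicate point is nonemptiness together with well-definedness. The concern is that a logical coordinate occurring with a complemented orientation $1-u$ could produce contradictory ties. Under \Cref{def:portfolio-uncertainty}, however, every occurrence is tied to its representative's success probability directly, and all representatives are free in $[0,1]$ apart from the simplex constraint on $\lambda$. So the explicit point above satisfies every constraint.
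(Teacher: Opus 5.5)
Your proposal is correct and follows essentially the same route as the paper: an explicit witness (a simplex $\lambda$ with the other representatives fixed in $[0,1]$ and propagated through the ties) for nonemptiness, the rational linear constraint families for the polytope claim, and inspection of the $\operatorname{Poly}$ and $\operatorname{Aff}$ components for the row shape and the source of nonrectangularity. The paper additionally notes that every coefficient is $0$ or $\pm1$ and that there are linearly many constraints, giving polynomial encoding length. You instead make boundedness and the product structure of the untied constraints explicit; neither addition changes the argument.
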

\begin{proof}
Choose any $\lambda$ in the simplex and arbitrary values in $[0,1]$ for the other representatives,
then propagate them through the tying equalities, which witnesses nonemptiness.
Its half-space description consists of exactly four families, all listed in
\Cref{def:portfolio-uncertainty}: one equality per nonrepresentative occurrence tying it to its
representative, the bounds $\lambda_h\ge0$, the equality $\sum_h\lambda_h=1$, and nonnegativity with
normalization on every row.
Every coefficient is $0$ or $\pm1$, and the number of constraints is linear in the number of rows, so
the description has polynomial encoding length and the reduction can write it down.
The remaining claims follow from
\Cref{def:polynomial-evaluator,def:affine-policy-evaluator,def:portfolio-uncertainty}.
\end{proof}

\subsection{Action Values and the Optimal-Value Bound}

\begin{lemma}[Role-action values]\label{lem:portfolio-role-action-values}
In the RMDP of \Cref{def:portfolio-synthesis-rmdp}, for every realization and every compliant
$\pi\in\polRand$,
\[
Q_u^\pi(\sinit,a_i)=W_i(x_i,\lambda,\eta,q),
\qquad
x_{i,\ell}=\pi(s_{i,\ell},\mathsf{on}).
\]
\end{lemma}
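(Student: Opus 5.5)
The plan is a direct one-step Bellman unfolding at $\sinit$ followed by the evaluator identity of \Cref{lem:affine-policy-evaluator}. The role action $a_i$ has reward zero and a singleton row into the entry $e_{G_i}$ of $E_i=\operatorname{Aff}(\gamma^{-1}W_i)$. Hence
\[
Q_u^\pi(\sinit,a_i)=0+\gamma V_u^\pi(e_{G_i}).
\]
Since $\gamma=\tfrac12$ and the component evaluates $G_i=\gamma^{-1}W_i$, the claim reduces to showing
\[
V_u^\pi(e_{G_i})=\gamma^{-1}W_i(x_i,\lambda,\eta,q).
\]

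To get this, I first write $\gamma^{-1}W_i$ in the affine-policy shape required by \Cref{def:affine-policy-evaluator}. By \Cref{lem:mixed-score-polynomial},
\[
\gamma^{-1}W_i=\gamma^{-1}P_{i,0}+\sum_\ell x_{i,\ell}\,\gamma^{-1}P_{i,\ell},
\]
with explicit sparse polynomials of constant degree (at most three) in nature's logical coordinates. Next I check that every occurrence row inside the $\operatorname{Poly}$ copies of $E_i$ realizes the intended logical value. By \Cref{def:portfolio-uncertainty}, each occurrence of $\lambda_h$, of an $\eta$-coordinate, or of $q_h$ is tied by equality to its representative. So at any realization $u\in\cU$, every occurrence's success probability equals the logical coordinate it encodes. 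Consequently the compliant value of each $\operatorname{Poly}$ copy, given by \Cref{lem:polynomial-evaluator}, equals the scaled polynomial evaluated at $(\lambda,\eta,q)$.

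With this in hand, \Cref{lem:affine-policy-evaluator} gives
\[
V_u^\pi(e_{G_i})=\gamma^{-1}\Bigl(P_{i,0}+\sum_\ell x_{i,\ell}P_{i,\ell}\Bigr),\qquad x_{i,\ell}=\pi(s_{i,\ell},\mathsf{on}).
\]
By \Cref{lem:coordinate-single-predecessor}, the coordinate state $s_{i,\ell}$ is entered only from $e_{G_i}$, so $\pi(s_{i,\ell},\mathsf{on})$ is exactly the mixing probability the evaluator uses. Multiplying by $\gamma$ yields $W_i$.

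The main obstacle is making sure compliance is applied correctly inside the global model. The component lemmas are stated for a component taken in isolation. In $M_{\rm syn}$ the global action set is larger, and the copies $E_i$ and $U_h$ are disjoint but share one ruinous sink. I will argue that a compliant $\pi$ plays only described actions at every state reachable from $e_{G_i}$. These states belong to $E_i$ alone, since the components are disjoint and the only entries into $E_i$ come from $a_i$. So the value recursion inside $E_i$ is literally the recursion of the isolated component, and the isolated identities transfer unchanged.

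A secondary check is that the reward scalings and the ties do not interfere with the splitter weights and discount factors. The $\operatorname{Poly}$ terminal payoffs are fixed rationals set at construction, and only the transition probabilities depend on the realization $u$. So the identity of \Cref{lem:polynomial-evaluator} holds pointwise in $u$, and no further interaction needs to be controlled.
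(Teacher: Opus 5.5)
Your proposal is correct and follows the paper's proof: one reward-zero step from $\sinit$ into $E_i$, whose entry has value $\gamma^{-1}W_i$ by \Cref{lem:mixed-score-polynomial,lem:affine-policy-evaluator}, so multiplying by $\gamma$ gives $W_i$. The paper leaves implicit the checks you spell out, and your treatment of them is sound: the tied occurrence rows realize the logical coordinates, and compliance keeps the value recursion inside the disjoint copy $E_i$.
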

\begin{proof}
The initial action makes one reward-zero transition into $E_i$, whose value is
$\gamma^{-1}W_i$ by
\Cref{lem:mixed-score-polynomial,lem:affine-policy-evaluator}.
\end{proof}

\begin{lemma}[Auxiliary-action values]\label{lem:portfolio-aux-action-values}
For every realization and every compliant policy,
\[
Q_u(\sinit,b_h)=3\lambda_h-1.
\]
At a pure case $\lambda=\delta_{h^*}$, this value is two for $h=h^*$ and $-1$ otherwise.
\end{lemma}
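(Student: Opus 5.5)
The plan is to mirror the proof of \Cref{lem:portfolio-role-action-values}. The auxiliary action $b_h$ makes one reward-zero, singleton transition from $\sinit$ into the entry $e_{3\lambda_h-1}$ of the component $U_h=\operatorname{Poly}(\gamma^{-1}(3\lambda_h-1))$. Hence
\[
Q_u(\sinit,b_h)=0+\gamma\,V_u(e_{3\lambda_h-1}).
\]
First I would confirm that this component has no policy choices except ruinous ones. Every state of a $\operatorname{Poly}$ component has a single described action, and all other choices are ruinous-sink completions. So every compliant policy plays the same described actions inside $U_h$, and $V_u(e_{3\lambda_h-1})$ is independent of which compliant policy is used. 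This justifies the policy-free notation $Q_u(\sinit,b_h)$.

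Next I would apply \Cref{lem:polynomial-evaluator} to the sparse polynomial $\gamma^{-1}(3\lambda_h-1)$. It has two monomials: the linear term $3\gamma^{-1}\lambda_h$, realized by a chain of length one through an occurrence row, and the constant term $-\gamma^{-1}$, realized by a chain of length zero. The lemma gives compliant value $\sum_\ell c_\ell\prod_j u_{\ell,j}$ over the occurrence rows. The single occurrence row in the linear branch is tied by \Cref{def:portfolio-uncertainty} to the representative of $\lambda_h$, so its success probability equals $\lambda_h$ at every realization in $\cU$. The entry value is therefore $\gamma^{-1}(3\lambda_h-1)$, and multiplying by the discount $\gamma$ gives $3\lambda_h-1$.

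For the pure case $\lambda=\delta_{h^*}$, substitute $\lambda_{h^*}=1$ to get $3-1=2$, and $\lambda_h=0$ for $h\neq h^*$ to get $-1$.

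There is no real obstacle here. The only point needing care is that the constant branch and the tying equality are handled correctly, namely that every occurrence row for $\lambda_h$ really equals its representative. That holds by construction of $\cU$.
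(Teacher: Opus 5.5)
Your proof is correct and follows the paper's own argument: a single reward-zero transition from $\sinit$ into $U_h$, then \Cref{lem:polynomial-evaluator} applied to $\gamma^{-1}(3\lambda_h-1)$ with the tying equalities of \Cref{def:portfolio-uncertainty}, then the initial discount $\gamma$. Your remarks on the length-zero constant branch and on compliant policies having no freedom inside $U_h$ spell out what the paper's one-line proof leaves implicit.
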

\begin{proof}
Apply \Cref{lem:polynomial-evaluator} to
$\gamma^{-1}(3\lambda_h-1)$ and include the initial discount.
\end{proof}

\begin{lemma}[Optimal-value bound]\label{lem:portfolio-optimal-bound}
For every realization, $V_u^*(\sinit)\le2$, with equality whenever
$\lambda=\delta_h$ is a pure case.
\end{lemma}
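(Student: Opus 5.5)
By \Cref{lem:ruin-dominance}, ruinous choices are strictly worse than every compliant alternative, so the plan reduces $V_u^*(\sinit)$ to a maximum over compliant policies. For compliant policies, the optimal value at $\sinit$ is the maximum of the initial action values, namely the role values $Q_u(\sinit,a_i)$ and the auxiliary values $Q_u(\sinit,b_h)$. Each role value can be optimized inside $E_i$ independently of the others, because the copies $E_i$ are disjoint. So the task is to bound every $\sup_{x_i}W_i(x_i,\lambda,\eta,q)$ by $2$ and every $3\lambda_h-1$ by $2$, and then to exhibit equality at pure cases.

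\emph{Auxiliary actions.} By \Cref{lem:portfolio-aux-action-values}, $Q_u(\sinit,b_h)=3\lambda_h-1\le 2$, since $\lambda_h\le1$. At a pure case $\lambda=\delta_h$, the action $b_h$ attains exactly $2$, which gives the equality claim once the upper bound is established.

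\emph{Role actions.} By \Cref{lem:portfolio-role-action-values}, every compliant policy satisfies $Q_u^\pi(\sinit,a_i)=W_i$, where the coordinate probabilities $x_{i,\ell}$ range over $[0,1]$. By \Cref{lem:pure-score-shape}, each pure score $A_{h,i}$ lies in $[-1,1]$ for every $x_i\in[0,1]^n$. Since $\lambda$ is a distribution, this gives $\sum_h\lambda_hA_{h,i}\le1$, and together with $1-\sum_h\lambda_h^2\le1$ we get $W_i\le 2$. This bound is uniform in $x_i$, so it also bounds the optimum over the coordinate states of $E_i$. To make the argument rigorous I would note that $W_i$ is affine in each $x_{i,\ell}$. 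Consequently the optimal value at $e_{G}$ for $\operatorname{Aff}(\gamma^{-1}W_i)$ is attained by a deterministic on/off choice at each coordinate state. By \Cref{lem:coordinate-single-predecessor}, these choices do not interact with the rest of the model, so every value achievable inside $E_i$ equals $W_i$ evaluated at some $x_i$ in the box, and hence is at most $2$. Combining the two cases gives $V_u^*(\sinit)\le2$ for every realization, with equality at pure cases witnessed by $b_h$.

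\emph{Main obstacle.} The delicate step is justifying that, within the completed RMDP, optimal play inside each evaluator copy is captured by \Cref{lem:affine-policy-evaluator}. This needs a bound $M$ on compliant values so that the ruinous-sink constant satisfies the hypothesis of \Cref{lem:ruin-dominance}. I would take the polynomial-evaluator terminals' payoffs as that bound and choose $Z$ after them, as the completion convention prescribes. The choices made inside the $\operatorname{Poly}$ components are all forced, so the only freedom left to the policy is at the coordinate states, which is exactly what the argument above handles.
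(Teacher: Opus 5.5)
Your proposal is correct and follows essentially the same route as the paper: bound the auxiliary values by $3\lambda_h-1\le2$ and the role values by $W_i\le2$ (pure scores in $[-1,1]$ plus $1-\sum_h\lambda_h^2\le1$), extend from compliant to all policies via \Cref{lem:ruin-dominance}, and witness equality at $\lambda=\delta_h$ with $b_h$. Your detour through affinity and deterministic on/off choices is harmless but unnecessary, since the bound $W_i\le2$ already holds uniformly for every $x_i\in[0,1]^n$.
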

\begin{proof}
The only policy choices at described states are at $\sinit$ and at coordinate states.
The action values are those of
\Cref{lem:portfolio-role-action-values,lem:portfolio-aux-action-values}.
Since every pure score is at most one,
\[
W_i
\le\sum_h\lambda_h+1-\sum_h\lambda_h^2
=2-\sum_h\lambda_h^2
\le2,
\]
and $3\lambda_h-1\le2$.
By \Cref{lem:ruin-dominance}, these upper bounds extend from compliant policies to all policies.
A randomized action at $\sinit$ gives a convex combination of these values, so it cannot exceed two.
At $\lambda=\delta_h$, action $b_h$ attains two.
\end{proof}

\begin{lemma}[Combining pure cases]\label{lem:portfolio-combine}
For fixed coordinate vectors $x_1,\ldots,x_r$, the following are equivalent:
\begin{enumerate}
\item for every pure case $h$ and every valuation of $\xi_h$, some role has $A_{h,i}\ge0$;
\item for every $\lambda\in\cD(\mathcal H)$ and every valuation of all coordinates, some role has
$W_i\ge0$.
\end{enumerate}
\end{lemma}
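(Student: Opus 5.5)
The plan is to prove the two directions separately. The pure cases $\lambda=\delta_h$ give the easy direction. For general $\lambda$, I would combine two facts: every pure score is at least $-1$, and the penalty $1-\sum_h\lambda_h^2$ is always at least $1-\max_h\lambda_h$.

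\emph{(2) implies (1).} Fix a pure case $h$ and a valuation of $\xi_h$. Extend it arbitrarily to all of nature's remaining coordinates, which is possible since each lives in a box or simplex. Then apply (2) at $\lambda=\delta_h$. By \eqref{eq:mixed-role-score}, at this point
\[
W_i=A_{h,i}(x_i,\xi_h)+1-1=A_{h,i}(x_i,\xi_h),
\]
so the role with $W_i\ge0$ also has $A_{h,i}\ge0$.

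\emph{(1) implies (2).} Fix $\lambda\in\cD(\mathcal H)$ and a valuation of all coordinates. For each $h$, restrict the valuation to $\xi_h$ and use (1) to pick a role $i(h)$ with $A_{h,i(h)}(x_{i(h)},\xi_h)\ge0$. Let $h^*$ maximize $\lambda_h$, and take the role $i^*=i(h^*)$.

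Split the sum in $W_{i^*}$ by whether $i(h)=i^*$. The terms with $i(h)=i^*$ are nonnegative, and this group includes $h^*$. Every other term is at least $-\lambda_h$, because each pure score lies in $[-1,1]$ by \Cref{lem:pure-score-shape}. Hence
\[
W_{i^*}\ \ge\ -\sum_{h\ne h^*}\lambda_h+1-\sum_h\lambda_h^2
\ =\ -(1-\lambda_{h^*})+1-\sum_h\lambda_h^2 .
\]
Next, $\sum_h\lambda_h^2\le\lambda_{h^*}\sum_h\lambda_h=\lambda_{h^*}$. Therefore $1-\sum_h\lambda_h^2\ge1-\lambda_{h^*}$, and so $W_{i^*}\ge0$.

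The main subtlety is choosing the role for a mixed $\lambda$. An arbitrary role that wins some case can lose up to $1-\lambda_h$ on the other cases, which can exceed the penalty. Taking the winner of the heaviest case exactly balances the loss against the lower bound $1-\lambda_{h^*}$ on the penalty.

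One further point must be checked: each valuation of the full coordinate vector restricts to a valid valuation of $\xi_h$, so (1) applies case by case. This holds because the scores $A_{h,i}$ depend only on $x_i$ and $\xi_h$.
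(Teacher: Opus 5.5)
Your proof is correct and follows the paper's argument. The pure cases $\lambda=\delta_h$ give (2)$\Rightarrow$(1). For (1)$\Rightarrow$(2) you take the role covering a heaviest case $h^*$ and combine $A_{h,i}\ge-1$ with $\sum_h\lambda_h^2\le\lambda_{h^*}$, exactly as the paper does; choosing a role $i(h)$ for every case and splitting the sum is harmless but unnecessary, since only the hypothesis at $h^*$ is used.
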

\begin{proof}
The second claim implies the first by taking $\lambda=\delta_h$, when $W_i=A_{h,i}$.
For the converse, fix a realization and take
$h^*\in\arg\max_h\lambda_h$.
Apply the pure-case hypothesis at $h^*$ with the realization's own $\xi_{h^*}$ to obtain a role $i$
with $A_{h^*,i}\ge0$.
All other pure scores are at least $-1$, so
\[
\sum_h\lambda_hA_{h,i}\ge-(1-\lambda_{h^*}).
\]
Moreover,
\[
\sum_h\lambda_h^2
\le\lambda_{h^*}\sum_h\lambda_h
=\lambda_{h^*}.
\]
Substitution in \eqref{eq:mixed-role-score} gives
\[
W_i\ge-(1-\lambda_{h^*})+1-\lambda_{h^*}=0.
\]
\end{proof}

Each oriented equality case needs its own coordinate $q_h$: the role selected for $h^*$ depends on
$\xi_{h^*}$, which must remain free of every other case's coordinates.
\Cref{ex:portfolio-combine} shows that the correction bound can be tight.

\begin{example}[The mixing correction]\label{ex:portfolio-combine}
If $\lambda$ is uniform on two cases, then
$1-\sum_h\lambda_h^2=\tfrac12$.
If a role has score zero at one of those cases and $-1$ at the other, its mixed score is
\[
\frac12\cdot0+\frac12\cdot(-1)+\frac12=0.
\]
Thus the combining bound is tight.
\end{example}

\subsection{Correctness}

\begin{lemma}[Forward direction]\label{lem:portfolio-forward}
If the source sentence holds with witness $x$, then the portfolio
$\Pi=\{\pi_1,\ldots,\pi_r\}$ defined by
\[
\pi_i(\sinit)=a_i,
\qquad
\pi_i(s_{j,\ell},\mathsf{on})=x_\ell
\quad\text{for every }j,\ell
\]
and completed compliantly at every other state
satisfies $\Rreg(\Pi)\le2$.
\end{lemma}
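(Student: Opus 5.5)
The plan is to reduce the claim to the pure-case statement of \Cref{lem:portfolio-combine}, and then verify that statement case by case. Fix a realization $u=(\lambda,\eta,q)$. Each $\pi_i$ plays $a_i$ at $\sinit$ and is compliant, so \Cref{lem:portfolio-role-action-values} gives
\[
V_u^{\pi_i}(\sinit)=\gamma\cdot\gamma^{-1}W_i=W_i(x_i,\lambda,\eta,q),
\qquad
x_i=x\text{ for every }i.
\]
By \Cref{lem:portfolio-optimal-bound}, $V_u^*(\sinit)\le2$. The regret at $u$ is therefore at most $2-\max_iW_i$. It suffices to show $\max_iW_i\ge0$ at every realization, and taking the supremum over $u$ then yields $\Rreg(\Pi)\le2$.

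By \Cref{lem:portfolio-combine}, applied with all coordinate vectors equal to $x$, it is enough to check condition (1). That is, for every pure case $h$ and every valuation of $\xi_h$, some role $i$ has $A_{h,i}\ge0$. I split by case type.

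\emph{Evaluation case.} Here $A_{\mathrm{ev},i}=g_i(x,\eta)$. Since $x$ witnesses the source sentence, \Cref{lem:policy-affine-maxform} (forward direction) gives $\max_ig_i(x,\eta)\ge0$ for every $\eta$.

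\emph{Oriented equality case.} Take $h=(i,\ell,\rightarrow)$. Then $A_{h,1}=x_\ell-q_h$ and $A_{h,i}=q_h-x_\ell$. These two scores sum to zero, so one of them is nonnegative whatever $q_h$ is. The case $(i,\ell,\leftarrow)$ is symmetric.

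\emph{Forcing case.} For $h=(i,t)$, role $i$ scores exactly $0$.

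So condition (1) holds, hence condition (2) holds, and every realization has some $W_i\ge0$.

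The step I expect to need the most care is the identification of member values with $W_i$. It relies on each $\pi_i$ being compliant and on the coordinate states $s_{i,\ell}$ being read only through $E_i$. \Cref{lem:coordinate-single-predecessor} guarantees this, so setting $\pi_i(s_{j,\ell},\mathsf{on})=x_\ell$ at all $j$ is harmless: only $j=i$ matters. I also need the $\gamma$ factors to cancel, and they do, since $E_i$ evaluates $\gamma^{-1}W_i$ and is reached after one discounted step. Everything else is a direct citation of the lemmas above.
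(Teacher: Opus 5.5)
Your proposal is correct and matches the paper's proof essentially step for step. Like the paper, you identify each member's value with $W_i$ via the role-action lemma, check pure-case coverage in the evaluation, oriented-equality and forcing cases, lift coverage to every realization with the combining lemma, and conclude with the optimal-value bound $V_u^*(\sinit)\le 2$.
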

\begin{proof}
Every $\pi_i$ is stationary and
$V_u^{\pi_i}(\sinit)=W_i(x)$ by \Cref{lem:portfolio-role-action-values}.
We verify pure-case coverage.
At $\mathrm{ev}$, \Cref{lem:policy-affine-maxform} gives
$\max_i g_i(x,\eta)\ge0$ for every $\eta$.
At an oriented equality case, the two nonconstant scores have maximum
\[
\max\{x_\ell-q_h,q_h-x_\ell\}=|x_\ell-q_h|\ge0.
\]
At a forcing case $(i,t)$, role $i$ scores zero.
By \Cref{lem:portfolio-combine}, every realization has some $W_i\ge0$.
Combining this with \Cref{lem:portfolio-optimal-bound} bounds regret by two.
\end{proof}

\Cref{lem:portfolio-attainment} is needed here because the reverse direction turns $\rho_r\le2$ into a
concrete portfolio.
Its hypotheses hold: $\cU$ is a compact rational polytope by
\Cref{lem:portfolio-uncertainty-shape}.
The other synthesis reductions compute their infimum in closed form.

\begin{lemma}[Role forcing]\label{lem:portfolio-role-forcing}
If $\Pi$ has at most $r$ members and $\Rreg(\Pi)\le2$, then it has exactly $r$ members, exactly one
member chooses each $a_i$ with probability one, and no member chooses an auxiliary action at
$\sinit$.
\end{lemma}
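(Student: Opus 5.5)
The plan is a counting argument over the forcing cases $(i,t)$, $t\in[3r+1]$, of \Cref{def:portfolio-case-scores}. Fix a portfolio $\Pi$ with $|\Pi|\le r$ and $\Rreg(\Pi)\le2$. For each forcing case $h=(i,t)$, nature may play the pure realization $\lambda=\delta_h$, with arbitrary values of the remaining coordinates. There $V_u^*(\sinit)=2$ by \Cref{lem:portfolio-optimal-bound}. The regret bound therefore forces some member $\pi\in\Pi$ with $V_u^\pi(\sinit)\ge0$ at this realization.

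First I will compute upper bounds on the action values at $\sinit$ under $\lambda=\delta_{(i,t)}$, valid for every policy and not only compliant ones.
\begin{itemize}
\item By \Cref{lem:portfolio-role-action-values} together with \Cref{lem:ruin-dominance}, $Q(\sinit,a_i)\le W_i=A_{h,i}+1-1=0$.
\item For $j\ne i$, $Q(\sinit,a_j)\le A_{h,j}=-1$.
\item By \Cref{lem:portfolio-aux-action-values}, $Q(\sinit,b_{(i,t)})\le2$ and $Q(\sinit,b_{h'})\le-1$ for $h'\ne(i,t)$.
\item Every ruinous action at $\sinit$ has value $-\gamma Z_{\rm r}<-1$.
\end{itemize}
The forcing scores do not depend on the coordinate vectors $x_j$, so the behavior of a member inside the evaluators is irrelevant; noncompliant behavior there only lowers values. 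Writing $\pi(\cdot)$ for the action probabilities at $\sinit$, we get
\[
V_u^\pi(\sinit)\le 2\pi(b_{(i,t)})-\bigl(1-\pi(a_i)-\pi(b_{(i,t)})\bigr).
\]
Hence $\pi$ can cover case $(i,t)$ only if $\pi(a_i)+3\pi(b_{(i,t)})\ge1$.

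Next, fix a role $i$ and a member $\pi$ with $\pi(a_i)<1$. Every case $(i,t)$ it covers needs $\pi(b_{(i,t)})\ge(1-\pi(a_i))/3>0$. These auxiliary masses are disjoint and sum to at most $1-\pi(a_i)$, so $\pi$ covers at most three of the $3r+1$ cases $(i,t)$. If no member played $a_i$ with probability one, the at most $r$ members would cover at most $3r<3r+1$ of these cases, contradicting the regret bound. So for every $i\in[r]$ some member plays $a_i$ with probability one.

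Such a member is unique to its role, since it cannot also play $a_j$ with probability one for $j\ne i$. This gives an injection from $[r]$ into $\Pi$. With $|\Pi|\le r$ the injection is a bijection, so $\Pi$ has exactly $r$ members. Every member then plays some role action with probability one, hence never an auxiliary action at $\sinit$.

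The step needing most care is the first one: the value inequality must be established as an upper bound for arbitrary, possibly noncompliant, members, including ruinous mass at $\sinit$. This is where \Cref{lem:ruin-dominance} is invoked, together with the fact that forcing scores are independent of both $x$ and $\xi$.
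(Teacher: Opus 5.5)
Your proposal is correct and follows the paper's proof: the same bound $V_u^\pi(\sinit)\le\pi(a_i)+3\pi(b_{(i,t)})-1$ at each pure forcing case, the same argument that a member with $\pi(a_i)<1$ covers at most three of the $3r+1$ forcing cases of role $i$, and the same injection of roles into members. The only difference is cosmetic: you apply \Cref{lem:ruin-dominance} to the continuation values and charge ruinous mass at $\sinit$ directly, whereas the paper first replaces each member by its compliant projection $\bar\pi$; your route has the small advantage that $\pi(a_i)=1$ is concluded for the original member rather than for its renormalized projection.
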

\begin{proof}
At a pure case, \Cref{lem:portfolio-optimal-bound} gives $V_u^*(\sinit)=2$.
Thus the regret bound implies
\[
\max_{\pi\in\Pi}V_u^\pi(\sinit)\ge0
\]
at every pure case and every valuation of its coordinates.

A member need not be compliant: besides its initial choice it may play a globally installed action
inside a component, where the action values below do not apply.
Replace each member $\pi$ by $\bar\pi$, which \Cref{lem:ruin-dominance} shows has value at least
$\pi$'s at every state and realization, so the displayed bound is only strengthened.
The projection keeps the initial distribution over $\{a_i,b_h\}$ and renormalizes at each coordinate
state, which leaves $\pi(s_{i,\ell},\mathsf{on})$ in $[0,1]$, so $\bar\pi$ still encodes a witness and
the count below is unaffected.

Fix a role $i$ and a forcing case $h=(i,t)$.
At $\lambda=\delta_h$, action $a_i$ has value zero, every other role action has value $-1$, action
$b_h$ has value two, every other auxiliary action has value $-1$, and every completed action has value
at most $-1$.
For a policy with initial distribution $p$, collect the mass outside $\{a_i,b_h\}$ to obtain
\[
V_u^\pi(\sinit)
\le p(a_i)+3p(b_h)-1.
\]
Suppose $p(a_i)<1$ and put $\delta_\pi=1-p(a_i)>0$.
Covering $h$ forces $p(b_h)\ge\delta_\pi/3$.
The actions $b_{(i,t)}$ are distinct and outside $a_i$, so
\[
\sum_{t=1}^{3r+1}p(b_{(i,t)})\le\delta_\pi.
\]
This policy can therefore cover at most three of the $3r+1$ forcing cases for role $i$.
If every portfolio member had $p(a_i)<1$, the at most $r$ members would cover at most $3r$ such cases,
a contradiction.
Thus some member has $p(a_i)=1$.

No policy can put probability one on both $a_i$ and $a_j$ for $i\ne j$.
The $r$ roles therefore require $r$ distinct members, exhausting the budget.
Every member is pure on its role action at $\sinit$, leaving none on an auxiliary action.
\end{proof}

\begin{lemma}[Equality cases]\label{lem:portfolio-equality-cases}
Under the hypotheses of \Cref{lem:portfolio-role-forcing}, let $x_i$ be the vector encoded by the
member assigned to role $i$.
Then $x_1=\cdots=x_r$.
\end{lemma}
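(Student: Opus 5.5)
We will argue by contradiction through the oriented equality cases, using the role-forcing structure already established. By \Cref{lem:portfolio-role-forcing} the portfolio has exactly $r$ members, and the member $\pi^{(i)}$ assigned to role $i$ plays $a_i$ with probability one at $\sinit$. As in that proof, we first replace every member by its compliant projection $\bar\pi$. By \Cref{lem:ruin-dominance} this only raises values, and it keeps the encoded vectors $x_{i,\ell}=\pi^{(i)}(s_{i,\ell},\mathsf{on})$ unchanged. Then \Cref{lem:portfolio-role-action-values} gives $V_u^{\pi^{(i)}}(\sinit)=W_i(x_i,\lambda,\eta,q)$ for every realization.

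It suffices to show $x_{1,\ell}=x_{i,\ell}$ for every $i\ge2$ and every $\ell\in[n]$. Suppose instead that $x_{1,\ell}<x_{i,\ell}$ for some such $i$ and $\ell$. Take the pure case $h=(i,\ell,\rightarrow)$, i.e.\ $\lambda=\delta_h$. There \Cref{lem:portfolio-optimal-bound} gives $V_u^*(\sinit)=2$, so the regret bound $\Rreg(\Pi)\le2$ forces $\max_j W_j\ge0$. At this pure case $W_j=A_{h,j}$, since $1-\sum_h\lambda_h^2=0$. Now choose nature's local coordinate $q_h$ strictly between the two entries, for instance $q_h=(x_{1,\ell}+x_{i,\ell})/2\in[0,1]$. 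Then
\begin{itemize}
\item $A_{h,1}=x_{1,\ell}-q_h<0$,
\item $A_{h,i}=q_h-x_{i,\ell}<0$,
\item $A_{h,j}=-1$ for every other role $j$.
\end{itemize}
So every member has negative value, contradicting the bound. The symmetric situation $x_{1,\ell}>x_{i,\ell}$ is handled the same way using the case $(i,\ell,\leftarrow)$, whose scores have the opposite signs; the same midpoint $q_h$ makes both nonconstant scores negative. Hence $x_i=x_1$ for all $i$, which gives $x_1=\cdots=x_r$.

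The main point to check is the realization itself. We must confirm that setting $\lambda=\delta_h$ together with an arbitrary value of $q_h$ (and arbitrary values for the other coordinates) is a genuine point of $\cU$. This holds by \Cref{def:portfolio-uncertainty}: the representative $q_h$ is a free coordinate in $[0,1]$, and tying its other occurrences to it keeps the point feasible.

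We must also confirm that every member's value equals its score. Each member is pure on its role action at $\sinit$, so no auxiliary action $b_h$ contributes, and the value of role member $j$ is exactly $W_j$. The only remaining subtlety is the step that renormalizes members at the coordinate states. This step never changes the quantities $x_{i,\ell}$ used in the argument, because those are exactly the on-probabilities that the projection preserves.
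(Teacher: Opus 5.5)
Your proof is correct and is essentially the paper's argument. At the pure case $(i,\ell,\rightarrow)$, the midpoint $q_h$ makes every member's score negative when $x_{1,\ell}<x_{i,\ell}$, and the reversed orientation excludes $x_{1,\ell}>x_{i,\ell}$; the paper phrases this as the equivalence ``coverage for every $q$ iff $x_{1,\ell}\ge x_{i,\ell}$'', of which you use only the direction that is needed.

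One small correction concerns your claim that the projection leaves the encoded vectors unchanged. The compliant projection renormalizes at coordinate states, so it does not preserve $\pi(s_{i,\ell},\mathsf{on})$ when a member puts mass on ruinous actions there. The vector $x_i$ should therefore be read off the projected member, as the paper does in \Cref{lem:portfolio-role-forcing}.
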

\begin{proof}
Fix $i\ge2$ and $\ell$.
At pure case $(i,\ell,\rightarrow)$, coverage for a given $q\in[0,1]$ is
\[
\max\{x_{1,\ell}-q,\ q-x_{i,\ell},\ -1\}\ge0,
\]
equivalently
\[
\max\{x_{1,\ell}-q,\ q-x_{i,\ell}\}\ge0.
\]
If $x_{1,\ell}<x_{i,\ell}$, the legal midpoint
$q=\tfrac12(x_{1,\ell}+x_{i,\ell})$ makes both terms negative.
Conversely, if $x_{1,\ell}\ge x_{i,\ell}$, every $q$ satisfies
$q\le x_{1,\ell}$ or $q\ge x_{i,\ell}$, so one term is nonnegative.
Coverage for every $q$ is therefore equivalent to
$x_{1,\ell}\ge x_{i,\ell}$.
The reverse oriented case gives $x_{i,\ell}\ge x_{1,\ell}$.
\end{proof}

\begin{lemma}[Evaluation case]\label{lem:portfolio-evaluation-case}
Under the same hypotheses, with $x$ the common vector,
\[
\max_i g_i(x,\eta)\ge0
\qquad\text{for every }\eta.
\]
\end{lemma}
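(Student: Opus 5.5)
The plan is to use the evaluation case $\mathrm{ev}$ in the same way the equality cases were used in \Cref{lem:portfolio-equality-cases}, and then to transfer the resulting pure-case coverage to the common vector. We fix an arbitrary valuation $\eta$ of the evaluation coordinates and realize the pure case $\lambda=\delta_{\mathrm{ev}}$, with the remaining coordinates arbitrary. By \Cref{lem:portfolio-optimal-bound}, $V_u^*(\sinit)=2$ at this realization, so $\Rreg(\Pi)\le2$ yields $\max_{\pi\in\Pi}V_u^\pi(\sinit)\ge0$.

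As in the proof of \Cref{lem:portfolio-role-forcing}, we replace each member by its compliant projection $\bar\pi$. By \Cref{lem:ruin-dominance} this only increases values, and it leaves the encoded coordinate probabilities unchanged. \Cref{lem:portfolio-role-forcing} then says each member is pure on a distinct role action $a_i$ at $\sinit$, so its value is exactly $Q_u^{\bar\pi}(\sinit,a_i)$. By \Cref{lem:portfolio-role-action-values}, that value is $W_i(x_i,\lambda,\eta,q)$.

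At $\lambda=\delta_{\mathrm{ev}}$ the correction term $1-\sum_h\lambda_h^2$ vanishes, so $W_i=A_{\mathrm{ev},i}(x_i,\eta)=g_i(x_i,\eta)$ by \Cref{def:portfolio-case-scores}. Hence $\max_i g_i(x_i,\eta)\ge0$. \Cref{lem:portfolio-equality-cases} gives $x_i=x$ for every role, so this becomes $\max_i g_i(x,\eta)\ge0$. Since $\eta$ was arbitrary, the claim follows.

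The main obstacle is the bookkeeping that lets the projected members be evaluated through \Cref{lem:portfolio-role-action-values}. Two points need checking. First, the pure-case realization must lie in $\cU$: by \Cref{lem:portfolio-uncertainty-shape}, any simplex point together with arbitrary representative values is feasible. Second, the projection must not change which role a member is assigned to, nor its $x_i$. Both are already settled in the proof of \Cref{lem:portfolio-role-forcing}. I would also note that only $x_i$ of role $i$ enters $W_i$, because $E_i$ has its own coordinate states $s_{i,\ell}$, so the identification $x_i=x$ is exactly what is needed.
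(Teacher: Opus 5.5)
Your proof is correct and follows the paper's own argument. At the pure case $\lambda=\delta_{\mathrm{ev}}$ the optimal value is two, so coverage gives $\max_i g_i(x_i,\eta)\ge0$, and \Cref{lem:portfolio-equality-cases} collapses the $x_i$ to the common $x$; you spell out the compliant projection and \Cref{lem:portfolio-role-action-values}, which the paper leaves implicit.

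One small correction: the projection $\bar\pi$ renormalizes over $\{\mathsf{on},\mathsf{off}\}$ at coordinate states, so it need not leave $\pi(s_{i,\ell},\mathsf{on})$ unchanged. This is harmless, because every $x_i$, here and in the role-forcing and equality-case lemmas, is read off the projected members.
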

\begin{proof}
At the pure case $\mathrm{ev}$, role $i$ has value
$A_{\mathrm{ev},i}(x_i,\eta)=g_i(x_i,\eta)$.
Pure-case coverage gives $\max_i g_i(x_i,\eta)\ge0$.
Apply \Cref{lem:portfolio-equality-cases}.
\end{proof}

\begin{proof}[Proof of \Cref{thm:min-portfolio-eatr}]
Membership is the existential-universal encoding in the first subsection.
For hardness, \Cref{lem:portfolio-forward} gives $\rho_r\le2$ whenever the source sentence holds.
Conversely, if $\rho_r\le2$, then \Cref{lem:portfolio-attainment} supplies a portfolio of at most $r$
members with regret at most two.
\Cref{lem:portfolio-role-forcing,lem:portfolio-equality-cases,lem:portfolio-evaluation-case}, followed
by \Cref{lem:policy-affine-maxform}, gives the source sentence.
Thus
\[
\rho_r\le2
\quad\Longleftrightarrow\quad
\exists x\ \forall y:\ F(x,y)\ge0.
\]
The reduction is polynomial and satisfies the stated restrictions by
\Cref{lem:portfolio-synthesis-size,lem:portfolio-synthesis-structure}.
\end{proof}

\Cref{ex:portfolio-synthesis-rmdp} illustrates both equality auditing and role forcing.
\begin{example}[A two-role synthesis instance]\label{ex:portfolio-synthesis-rmdp}
Take $r=2$ and $n=1$.
Then $3r+1=7$ and
$|\mathcal H|=1+2+14=17$.
At the pure case $(2,1,\rightarrow)$ with $q=\tfrac25$, role one scores
$x_{1,1}-\tfrac25$ and role two scores $\tfrac25-x_{2,1}$.
For $x_{1,1}=\tfrac3{10}$ and $x_{2,1}=\tfrac12$, both are $-\tfrac1{10}$.
No member is nonnegative, and the regret is $2+\tfrac1{10}$, exposing
$x_{1,1}<x_{2,1}$.
If both coordinates are $\tfrac12$, role one scores $\tfrac1{10}$ and covers the case.

At forcing case $(1,3)$, a member with $p(a_1)=1$ scores zero.
A member with
$p(a_1)=p(b_{(1,3)})=\tfrac12$ scores
\[
3\cdot\frac12+\frac12-1=1
\]
and also covers that case, but its remaining mass $\tfrac12$ can place the required
$\tfrac16$ on at most three of role one's seven forcing actions.
\end{example}

\subsection{Size and Structural Restrictions}

\begin{lemma}[Synthesis size]\label{lem:portfolio-synthesis-size}
The reduction runs in polynomial time.
\end{lemma}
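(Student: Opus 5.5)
The plan is to count the objects produced, level by level, and confirm that each is polynomial in the size of the $\sEUTR$ source sentence and has polynomial encoding length. Start with the normal form. \Cref{def:policy-affine-normal-form} introduces one copy coordinate per occurrence $w_{\nu,j}$ and at most three product residuals per monomial, since $d_\nu\le4$. It also adds one output coordinate and residual. Hence the number $s$ of residuals is $O(N_{\rm mon})$, and the number of tests is $r=2s$. Every test coefficient is a rational built from $c_\nu$, $B$, $9B$ and $10B$. Because $B\le 1+\sum_\nu|c_\nu|$, each has polynomially many bits.

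Next comes the case set. It has $|\mathcal H|=1+2(r-1)n+r(3r+1)=O(r^2+rn)$, which is polynomial; this is exactly why $k=r$ stays unary-sized. By \Cref{lem:pure-score-shape} each pure score is explicit and has $O(1)$ monomials beyond those of $g_i$. So \Cref{lem:mixed-score-polynomial} gives each $W_i$ as $P_{i,0}+\sum_\ell x_{i,\ell}P_{i,\ell}$, with total monomial count $O(|\mathcal H|+\sum_i\#g_i)$ and degree at most three.

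Then bound the components. \Cref{lem:polynomial-evaluator} sizes $\operatorname{Poly}(P)$ at $O(N\max d)$ states. Its terminal payoffs have the form $N\gamma^{-(d+1)}c$ with $\gamma=\tfrac12$ and $d\le3$, so they have polynomial bits even after the extra factors $(n+1)/\gamma^2$ from \Cref{def:affine-policy-evaluator} and $\gamma^{-1}$ from \Cref{def:portfolio-synthesis-rmdp}. Each $E_i$ therefore has size polynomial in $n$ and the monomial count. Each $U_h$ is a degree-one evaluator of constant size. Summing over $r$ roles and $|\mathcal H|$ auxiliary components gives a polynomial state count. The global action set is $\{a_i\}\cup\{b_h\}\cup\{\mathsf{on},\mathsf{off}\}$ plus route actions, which is also polynomial.

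The step I expect to be the most delicate is the uncertainty description and the ruinous-sink constant. \Cref{def:portfolio-uncertainty} gives one tying equality per nonrepresentative occurrence, together with the simplex constraints and the row constraints. All coefficients are in $\{0,\pm1\}$, so by \Cref{lem:portfolio-uncertainty-shape} the description is linear in the number of rows. For completion, the default rule requires a $Z$ with $\gamma Z>V^{\rm bd}+1$, where $V^{\rm bd}$ is computed from the (polynomial-bit) maximal reward. Choosing $Z=2(V^{\rm bd}+2)$ keeps the bit length polynomial. With these counts, each construction step is a direct write-out that runs in polynomial time.
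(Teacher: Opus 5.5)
Your proposal is correct and follows essentially the same counting argument as the paper: polynomially many residuals and tests, the explicit size of $\mathcal H$, a constant-degree $W_i$ with polynomially many monomials, $\operatorname{Poly}$ components with at most three factor rows per branch, and terminal payoffs $N\gamma^{-(d+1)}c$ of polynomial bit length because $\gamma=\tfrac12$. Your extra checks of the uncertainty description and the ruinous-sink constant correspond to what the paper delegates to \Cref{lem:portfolio-uncertainty-shape} and its global size conventions; one harmless slip is that this reduction has no $\mathrm{route}$ actions (those belong to $\Lift$, which is not used here), and the remaining actions are the unique actions at splitters, factor rows, and terminals, which are still polynomially many.
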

\begin{proof}
There is one copy variable per occurrence, at most three product variables per monomial, and one output
coordinate.
Thus $s$ and $r=2s$ are polynomial, and
\[
|\mathcal H|=1+2(r-1)n+r(3r+1).
\]
Every $W_i$ has degree at most three and polynomially many monomials.
Each $\operatorname{Poly}$ component has one branch per monomial and at most three factor rows per
branch.
The budget $r$ is written in unary and is polynomial.
Every constant comes from rational arithmetic on input coefficients, $B$, polynomially bounded
integers, and powers $2^{e_\nu+1}$, so every encoding length is polynomial.
\end{proof}

\begin{lemma}[Synthesis structure]\label{lem:portfolio-synthesis-structure}
The RMDP $M_{\rm syn}$ has fixed discount $\tfrac12$, threshold two, and is acyclic apart from absorbing
terminals.
Every uncertain row has two successors, and the only other stochastic rows are certain uniform
splitters.
Every reward is a fixed rational independent of the policy and realization.
Nonrectangularity arises only from the tying equalities and simplex constraint of
\Cref{def:portfolio-uncertainty}.
\end{lemma}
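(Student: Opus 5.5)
The statement is \Cref{lem:portfolio-synthesis-structure}. I will verify each claim directly against \Cref{def:portfolio-synthesis-rmdp,def:portfolio-uncertainty} and the component definitions; no new construction is needed. The discount is fixed to $\tfrac12$ in \Cref{def:portfolio-synthesis-rmdp}. The threshold two is the one used in \Cref{lem:portfolio-forward} and in the final equivalence, and it does not depend on the source sentence. Rewards are fixed rationals by construction. At $\sinit$ every action has reward zero. Inside each $\operatorname{Poly}$ copy, the only nonzero rewards are the terminal self-loop rewards $(1-\gamma)r_\ell$, with $r_\ell=N\gamma^{-(d_\ell+1)}c_\ell$ and $c_\ell$ a coefficient of the explicit polynomial $\gamma^{-1}W_i$, suitably rescaled in $\operatorname{Aff}$, or of $\gamma^{-1}(3\lambda_h-1)$. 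These coefficients are rational numbers computed from the input. All dependence on $x$, $\lambda$, $\eta$, $q$ sits in transition probabilities or in policy probabilities at coordinate states, never in rewards. The ruinous sink reward $-(1-\gamma)Z$ is also a fixed rational.

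For acyclicity, I will exhibit a topological order. It places $\sinit$ first, then the entries $e_G$ of the $E_i$ and their coordinate states $s_{i,\ell}$, then each $\operatorname{Poly}$ copy's splitter state followed by its branch chains $b^0_\ell,\dots,b^{d_\ell}_\ell$, and finally the terminals $\mathrm{zr}$, $\bot$, the branch terminals and $\bot_{\rm r}$. Every described edge goes forward in this order by \Cref{def:polynomial-evaluator,def:affine-policy-evaluator}, and $\sinit$ only enters component entries. Ruinous completion only adds edges to the absorbing $\bot_{\rm r}$. The only self-loops are therefore at absorbing finals: branch terminals, zero sinks and $\bot_{\rm r}$.

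For the row structure, I will use \Cref{lem:portfolio-uncertainty-shape}. Every uncertain row is an occurrence row of some $\operatorname{Poly}$ chain, with the two successors $b^j_\ell$ and $\bot$. The stochastic rows that are not uncertain are exactly the splitters at $\operatorname{Poly}$ entries and at $\operatorname{Aff}$ entries, and both are certain and uniform. Every remaining row is a singleton: the initial actions, $\mathsf{on}/\mathsf{off}$, terminal self-loops and ruinous rows. The last claim is immediate from \Cref{def:portfolio-uncertainty}. Without the tying equalities and the simplex constraint, every occurrence row would range independently over its segment, which would make $\cU$ a product of per-choice sets.

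The only step requiring care is the topological order. I need to confirm that reusing a logical coordinate across several $\operatorname{Poly}$ copies creates no shared states. This holds because each occurrence has a fresh row and each copy has its own states, so the coupling lives entirely in $\cU$ and not in the graph.
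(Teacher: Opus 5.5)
Your proposal is correct and follows the paper's own argument: both check each claim directly against the component definitions, noting that every described path runs forward from $\sinit$ through a component entry, an optional coordinate state, and a chain of fresh two-successor factor rows to an absorbing terminal, that all rewards are fixed rational payoffs while policy and realization enter only through coordinate-state actions and factor rows, and that the remaining row and rectangularity claims follow from \Cref{lem:portfolio-uncertainty-shape}. Your explicit topological order is a more detailed version of the paper's observation that no transition returns to an earlier state.
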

\begin{proof}
Every described path goes from $\sinit$ to a component entry, through a certain uniform splitter, an
optional coordinate state and a chain of factor selectors, and then to a terminal.
No transition returns to an earlier state, and the only self-loops are at terminal states and named
ruinous sinks.
The factor selectors are precisely the fresh two-successor uncertain rows.
Policy probabilities enter only through coordinate-state actions, and realization coordinates only
through factor rows; all terminal payoffs and rewards were fixed before the realization and policy are
chosen.
The remaining claims follow from \Cref{lem:portfolio-uncertainty-shape}.
\end{proof}

Folding the cases into the coefficients of one polynomial per role ensures that each role's entire
mixed score is computed by one evaluator and every coordinate state has one predecessor.

\section{Experimental setup}
\label[appendix]{app:experiments}

This appendix gives the details behind \Cref{sec:implementation}.
We first fix the class of models the benchmarks belong to and the uncertainty set each one induces, then describe the offline construction of a portfolio, and then the two evaluations answering the research questions:
whether portfolios reduce empirical robust regret as the budget grows \textbf{(RQ1)}, and whether the best member can be identified online in a fixed but unknown environment \textbf{(RQ2)}.
\Cref{alg:experimental-pipeline-full} summarizes the whole procedure.
Detailed descriptions of the two benchmarks close the appendix, and the tables and figures themselves appear in \Cref{app:experimental_data}.

\subsection{Parametric MDPs and their uncertainty sets}

A \emph{parametric MDP} (pMDP) is a tuple $\tup{S,A,P,R,\sinit,\gamma,X}$ in which $X$ is a finite set of parameters and every transition probability $P(s,a,s')$ is a polynomial in $\bQ[X]$.
A valuation $\theta\colon X\to\bR$ is \emph{well defined} when it turns every choice row into a probability distribution, and \emph{graph preserving} when no transition polynomial that is not identically zero evaluates to zero under it.
Both benchmarks below are \emph{affine} pMDPs: every transition polynomial has degree one, and the admissible valuations form a compact box $D\subseteq\bR^X$ on which well-definedness holds by construction, so no clamping or renormalization is ever needed.

Such a benchmark is an RMDP in the sense of \Cref{sec:preliminaries}: each $\theta\in D$ instantiates a transition vector $\vec u_\theta$, and
\[
\cU_D=\{\vec u_\theta : \theta\in D\}
\]
is the induced uncertainty set.
This set is \emph{not} $(s,a)$-rectangular.
A single parameter occurs in the rows of many different choices --- the wind intensity $p$ governs every grid cell of the UAV benchmark, and the cooling effectiveness $p_c$ every state of the datacenter benchmark --- so fixing nature's move at one choice fixes it everywhere else.

Robust policy evaluation, however, is performed by robust value iteration \citep{DBLP:journals/mor/Iyengar05}, which requires a rectangular uncertainty set.
We therefore relax each parameter region $c\subseteq D$ to the interval MDP whose per-choice uncertainty sets are the coordinatewise ranges of the transition polynomials over $c$.
Writing $\cU^{\square}_c$ for the resulting rectangular set, this relaxation only ever adds transition vectors,
\[
\cU_c=\{\vec u_\theta:\theta\in c\}\subseteq\cU^{\square}_c,
\]
because it drops the equalities tying repeated occurrences of a parameter to a common value.

\subsection{Offline portfolio construction}

\paragraph{Candidates.}
We split the parameter box $D$ into a uniform grid of cells by dividing each parameter's interval into $10$ equal-width bins, so a two-parameter benchmark yields $\lvert\mathcal C\rvert=10^2=100$ cells.
For every cell $c\in\mathcal C$ we instantiate the pMDP at the cell midpoint $\operatorname{mid}(c)$ and solve the resulting ordinary MDP by value iteration at tolerance $10^{-8}$, which gives an optimal policy $\pi^*_{\operatorname{mid}(c)}$ and its value $V^*_{\operatorname{mid}(c)}(\sinit)$.
The candidate set $\Pi(\mathcal C)$ collects one such memoryless deterministic policy per cell.

\paragraph{Scoring candidates.}
Every candidate is then evaluated robustly against every cell.
For a candidate $\pi$ and a cell $c'$, robust value iteration on the interval relaxation $\cU^{\square}_{c'}$ returns
\[
\underline V^\pi_{c'}
=\inf_{\vec u\in\cU^{\square}_{c'}}V^\pi_{\vec u}(\sinit),
\]
and the score recorded in the loss matrix is the midpoint-anchored difference
\begin{equation}\label{eq:midpoint-loss}
L_{\pi,c'}
=V^*_{\operatorname{mid}(c')}(\sinit)-\underline V^\pi_{c'}.
\end{equation}
Two approximations separate $L_{\pi,c'}$ from the exact cellwise robust regret $\sup_{\theta\in c'}\bigl(V^*_\theta(\sinit)-V^\pi_\theta(\sinit)\bigr)$.
First, the optimal value is taken at the single point $\operatorname{mid}(c')$ rather than at the maximizing $\theta$, which may err in either direction.
Second, by the inclusion above, the infimum over $\cU^{\square}_{c'}$ is at most the infimum over $\cU_{c'}$, so this term alone makes $L_{\pi,c'}$ an over-estimate of the loss it stands in for.
The matrix $L$ is a clustering score, not a certificate; the guarantees reported in \Cref{sec:implementation} come from the sampled evaluation of the next subsection, which uses no relaxation at all.

\paragraph{Selecting the portfolio.}
Each candidate $\pi$ thus carries a profile $L_\pi\in\bR^{\mathcal C}$, one entry per cell, describing where in the parameter space it does well.
For a budget $K$ we run K-means on these profiles and keep, from each of the $K$ clusters, the candidate whose profile is nearest that cluster's center, yielding a portfolio $\Pi_K\subseteq\Pi(\mathcal C)$ of $K$ memoryless deterministic policies.
Clustering is a cheap stand-in for the intractable synthesis problem of \Cref{prob:min-portfolio-regret}: it never leaves the candidate set, and it optimizes profile similarity rather than portfolio regret.

\paragraph{A single-policy reference point.}
The same loss matrix also yields an exact reference for $K=1$.
Minimizing each candidate's worst cell,
\[
\min_{\pi\in\Pi(\mathcal C)}\ \max_{c\in\mathcal C}\ L_{\pi,c},
\]
is the best worst-case score attainable by any single member of the candidate set, computed directly from $L$ without sampling or clustering.
It is the mini-max regret figure quoted in \Cref{sec:implementation}.

\paragraph{Implementation.}
The prototype is written in Python.
Models are built and instantiated with Stormvogel~\citep{VolkEtAl26StormTutorial} on top of Storm, which also provides the parameter regions and the conversion of a region into the interval MDP used for the relaxation above.
Value iteration, robust value iteration, and fixed-policy evaluation are our own NumPy implementations, in dense and sparse variants; the larger benchmarks use the sparse path throughout.
Clustering uses scikit-learn's K-means with automatic restart selection and the seed as its random state.
Each stage of the pipeline caches its output in HDF5, so a stage can be resumed or re-run without repeating its predecessors.

\subsection{RQ1: measuring portfolio regret}

The portfolios are scored against valuations drawn directly from $D$, with no cells and no rectangular relaxation involved.
For a seed we draw $\widehat D\subset D$ uniformly at random with $\lvert\widehat D\rvert=1000$, and at every $\theta\in\widehat D$ we instantiate the pMDP and compute two exact quantities: the optimal value $V^*_\theta(\sinit)$ by value iteration, and the value $V^\pi_\theta(\sinit)$ of each portfolio member by fixed-policy evaluation.
The reported estimate is
\[
\widehat\Rreg_{\widehat D}(\Pi)
=\max_{\theta\in\widehat D}\Bigl(V^*_\theta(\sinit)-\max_{\pi\in\Pi}V^\pi_\theta(\sinit)\Bigr).
\]
Each valuation is thus charged the shortfall of the best member \emph{at that valuation}, matching the offline coverage quantity of \eqref{eqn:rreg} and charging nothing for identifying that member online; RQ2 measures the identification cost separately.
Because the supremum in \eqref{eqn:rreg} is replaced by a maximum over finitely many samples, $\widehat\Rreg_{\widehat D}(\Pi)$ under-approximates $\Rreg(\Pi)$.
The whole construction-and-evaluation procedure is repeated for the three seeds $0,1,2$, which reseed both the K-means initialization and the sample $\widehat D$; \Cref{tab:merged-results} reports the individual seeds and \Cref{tab:portfolio-results-summary} their mean.

\subsection{RQ2: online portfolio selection}

\paragraph{Setting.}
A fixed but hidden valuation $\theta$ defines a generative model $G_\theta$ from which trajectories of the instantiated MDP can be sampled starting at $\sinit$.
The portfolio members are treated as the arms of a bandit: pulling arm $\pi$ rolls out one trajectory of horizon $H=100$ under $\pi$ and returns its discounted return.
Nothing but these sampled returns is observed, neither $\theta$ nor any value function, so a run must discriminate between members from rollouts alone.
Returns lie in $[-B_H,B_H]$ for
\[
B_H=R_{\max}\frac{1-\gamma^H}{1-\gamma},
\]
with $R_{\max}$ the benchmark's largest reward magnitude.

\paragraph{The selection rule.}
We use UCB \citep{pred-learn-games} in its best-arm-identification form.
After one warm-up pull of every arm, round $t$ computes for each arm $i$ still in contention a confidence radius: the empirical-Bernstein radius~\citep{10.1145/1390156.1390241}, whose leading term shrinks with the empirical variance rather than with the worst-case range; on these benchmarks the returns are far less variable than $B_H$ suggests, so it converges considerably faster than a Hoeffding radius.
Writing $i_{\mathrm{safe}}$ for the arm of greatest lower confidence bound, the run stops as soon as
\[
\mathrm{lcb}(i_{\mathrm{safe}})\ \ge\ \max_{j\neq i_{\mathrm{safe}}}\mathrm{ucb}(j)-\varepsilon_{\mathrm{abs}},
\]
that is, as soon as no other arm can be more than $\varepsilon_{\mathrm{abs}}$ better.
Otherwise every arm whose upper bound has fallen below $\mathrm{lcb}(i_{\mathrm{safe}})-\varepsilon_{\mathrm{abs}}$ is eliminated, and the arm of greatest upper confidence bound is pulled next.
A run therefore has no fixed iteration budget: it stops when its own stopping test fires, and the number of pulls varies from run to run and grows with $K$.

We use confidence $\delta=0.1$ and tolerance $\varepsilon=10^{-3}$, the latter expressed as a fraction of the return bound, so the absolute tolerance is $\varepsilon_{\mathrm{abs}}= \varepsilon \cdot B_H = 10^{-3}B_H$.
Since the empirical-Bernstein radius does the work of separating the arms, $\varepsilon$ acts only as a cap that keeps near-ties from running indefinitely, and is kept small.

\paragraph{What is measured.}
At each iteration, a run has both a \emph{pulled} arm, the exploratory one of greatest upper confidence bound, and a \emph{recommended} arm, the one of greatest lower confidence bound --- the member it would deploy if forced to stop right then.
We report the recommended arm, the pulled arm being exploratory by design.

A recommendation is scored against the best \emph{portfolio} member at the hidden valuation, that is, $\arg\max_{\pi\in\Pi}V^\pi_\theta(\sinit)$ computed exactly by fixed-policy evaluation, and not against the unrestricted optimum $V^*_\theta$.
The question is whether online selection finds the best policy the portfolio actually contains.
Two quantities are recorded per iteration.
The first is the fraction of runs whose recommendation is that best member.
The second is the shortfall of the recommended member, normalized by the spread of the portfolio at that valuation,
\[
L^\pi_\theta
=\frac{V^{\mathrm{best}}_\theta(\iota) -V^\pi_\theta(\iota)}{V^{\mathrm{best}}_\theta(\iota)-V^{\mathrm{worst}}_\theta(\iota)},
\]
so that recommending the best member scores $0$ and recommending the worst scores $1$ (with $L^\pi_\theta=0$ by convention when every member ties).
The second measure is the more informative of the two: the first counts a near-tie between two almost equally good members as an outright failure, whereas the second charges only what that confusion actually costs.

\paragraph{Sampling and aggregation.}
For each portfolio size $K$ and each seed, we draw $30$ valuations uniformly from $D$ and run the selection rule once per valuation, all from one seeded random stream.
Pooling the three seeds gives the $90$ runs behind each curve.
Because runs stop at different iterations, a curve at iteration $t$ averages over runs still active at $t$ together with runs that have already committed, each of the latter contributing its own deployed outcome from its stopping time onward.
Curves are drawn on a logarithmic iteration axis with values binned accordingly, and are shown with standard-error bands.

\subsection{Datacenter benchmark}

This benchmark models a controller regulating a server room's temperature and humidity while working off a job queue, uncertain about how strongly its actions and the ambient environment actually move the system.

\paragraph{State space.}
A state $s=(T,H,L)\in S$ records a discretized temperature level $T\in\{0,\dots,10\}$, a humidity level $H\in\{0,\dots,5\}$, and a queue length $L\in\{0,\dots,4\}$, so $\lvert S\rvert=11\cdot6\cdot5=330$.
Episodes start at $s_0=(5,2,2)$, a comfortable middle of every range.

\paragraph{Action space.}
At every state, the controller picks one of five actions,
\[
A=\{\textsc{cool-high},\textsc{cool-low},\textsc{hold},\textsc{heat},\textsc{dehumidify}\}.
\]
They trade energy against comfort: \textsc{cool-high} pushes hardest on temperature and costs the most, \textsc{cool-low} is its cheaper and weaker counterpart, \textsc{hold} spends almost nothing and lets the room drift, \textsc{heat} moves temperature the other way, and \textsc{dehumidify} targets humidity instead of temperature.

\paragraph{Parameter space.}
Two parameters are unknown, $\theta=(p_c,p_e)\in\Theta=[0.5,0.9]\times[0.1,0.8]$.
The cooling effectiveness $p_c$ is how reliably an action moves the room as intended, and the environmental pressure $p_e$ is how strongly workload and ambient conditions push back.
A high $p_c$ makes the controller's actions dependable; a high $p_e$ means the room heats, humidifies, and accumulates work regardless of what the controller does.

\paragraph{Transitions.}
Every state-action pair has four successor branches, split evenly between a \emph{control} outcome governed by $p_c$ and an \emph{environment} outcome governed by $p_e$:
\[
\mathbb P_\theta(s'\mid s,a)\in
\left\{\tfrac{p_c}{2},\ \tfrac{1-p_c}{2},\ \tfrac{p_e}{2},\ \tfrac{1-p_e}{2}\right\},
\qquad
\tfrac{p_c}{2}+\tfrac{1-p_c}{2}+\tfrac{p_e}{2}+\tfrac{1-p_e}{2}=1,
\]
so the four branches sum to one at every $\theta$ and every transition probability is affine in $\theta$.
Writing $[x]_{lo}^{hi}:=\min(hi,\max(lo,x))$ for clamping a coordinate to its range, and abbreviating $T^{\pm}:=[T{\pm}1]_0^{10}$, $H^{\pm}:=[H{\pm}1]_0^{5}$, $L^{\pm}:=[L{\pm}1]_0^{4}$, \Cref{tab:datacenter-transitions} gives the successor reached by each branch.
At a boundary state, two branches can land on the same successor, in which case their probabilities are added.

\begin{table}[h]
\centering
\begin{tabular}{lcccc}
\toprule
Action & $p_c/2$ & $(1-p_c)/2$ & $p_e/2$ & $(1-p_e)/2$ \\
\midrule
\textsc{cool-high}  & $(T^-,H,L)$ & $(T^+,H,L)$ & $(T,H,L^-)$   & $(T,H^+,L^+)$ \\
\textsc{cool-low}   & $(T^-,H,L)$ & $(T,H,L)$   & $(T,H,L^-)$   & $(T^+,H,L^+)$ \\
\textsc{hold}       & $(T,H,L)$   & $(T^+,H,L)$ & $(T,H^-,L^-)$ & $(T^+,H^+,L^+)$ \\
\textsc{heat}       & $(T^+,H,L)$ & $(T^-,H,L)$ & $(T,H,L^-)$   & $(T^+,H^+,L^+)$ \\
\textsc{dehumidify} & $(T,H^-,L)$ & $(T,H^+,L)$ & $(T^-,H,L^-)$ & $(T^+,H^+,L^+)$ \\
\bottomrule
\end{tabular}
\caption{Successor state reached by each of the four branches of $\mathbb P_\theta(\cdot\mid s,a)$, for $s=(T,H,L)$, in the datacenter benchmark.}
\label{tab:datacenter-transitions}
\end{table}

\paragraph{Rewards and discount.}
The reward is deterministic and independent of $\theta$:
\[
r(s,a)=r_E(a)+r_T(T)+r_H(H)+r_L(L),
\]
combining a per-action energy cost
\[
r_E(\textsc{cool-high})=-4,\quad
r_E(\textsc{cool-low})=-2,\quad
r_E(\textsc{hold})=-1,\quad
r_E(\textsc{heat})=-2,\quad
r_E(\textsc{dehumidify})=-3,
\]
with penalties that activate as each dimension approaches its ceiling:
\[
r_T(T)=\begin{cases}0&T\le7\\-20&T\in\{8,9\}\\-50&T=10\end{cases}
\qquad
r_H(H)=\begin{cases}0&H\le3\\-15&H=4\\-40&H=5\end{cases}
\qquad
r_L(L)=\begin{cases}0&L\ne4\\-10&L=4\end{cases}
\]
The controller therefore pays continuously for energy and catastrophically for letting any dimension reach its ceiling, and the parameters decide how expensive it is to keep away from those ceilings.
The largest reward magnitude is $R_{\max}=\max_{s,a}\lvert r(s,a)\rvert=104$, attained at $T{=}10,H{=}5,L{=}4$ under \textsc{cool-high}, and the discount is $\gamma=0.95$, giving $\Vmax=R_{\max}/(1-\gamma)=2080$.

\subsection{UAV benchmark}

This benchmark models a UAV flying through a cluttered three-dimensional grid from a fixed start to a landing pad, under wind drift and actuator-failure risk.

\paragraph{State space.}
A grid state $s=(x,y,z)$ ranges over $\{0,\dots,L_x{-}1\}\times\{0,\dots,L_y{-}1\}\times\{0,\dots,L_z{-}1\}$, where $x$ is the direction of travel, $y$ the lateral axis the wind blows across, and $z$ the altitude with $z=0$ at ground level.
Three further states complete the space: an absorbing \textsc{Crash} state, a \textsc{Goal} state, and the terminal sink \textsc{done} that \textsc{Goal} moves to.
The start is $s_0=(0,\lfloor L_y/2\rfloor,1)$.

\paragraph{Action space.}
In every grid cell, the UAV chooses one of seven actions,
\[
\mathcal A=\{\textsc{E},\textsc{W},\textsc{N},\textsc{S},\textsc{UP},\textsc{DOWN},\textsc{HOVER}\},
\]
the six unit moves along $\pm x$, $\pm y$, $\pm z$ together with a no-op.
\textsc{E} advances toward the goal, \textsc{N} and \textsc{S} steer laterally, \textsc{UP} and \textsc{DOWN} trade altitude against exposure to the two hazards, and \textsc{HOVER} holds position while the disturbances still act.
\textsc{Goal} offers the single forced action \textsc{collect} leading to \textsc{done}, and \textsc{Crash} and \textsc{done} offer only a self-loop.

\paragraph{Parameter space.}
Two parameters are unknown, $\theta=(p,q)\in\Theta=[0,p_{\max}]\times[0,q_{\max}]$.
The wind intensity $p$ is the chance that a step is overridden by a one-cell lateral drift, and the actuator-drop probability $q$ is the probability that it is overridden by losing a level of altitude.
The generator requires $p_{\max}+q_{\max}<1$, so $1-p-q>0$ at every admissible valuation and every branch below is a probability with no clamping or renormalization.
The two parameters penalize opposite routes, which is what makes the benchmark interesting: wind hurts the low corridor, altitude drops hurt the high crossing.

\paragraph{Transitions.}
Writing $\mathrm{clip}(\cdot)$ for componentwise clamping into the grid bounds, every grid action has the same three-branch form
\[
\mathbb P_\theta(s'\mid s,a)=
(1-p-q)\,\mathbf 1[s'=\mathrm{redirect}(s_{\mathrm{int}})]
+p\,\mathbf 1[s'=\mathrm{redirect}(s_{\mathrm{wind}})]
+q\,\mathbf 1[s'=\mathrm{redirect}(s_{\mathrm{drop}})],
\]
where $s_{\mathrm{int}}=\mathrm{clip}(s+\mathrm{move}(a))$ is the intended move, the only branch that depends on $a$; $s_{\mathrm{wind}}=\mathrm{clip}(s+(0,1,0))$ is the lateral drift; and $s_{\mathrm{drop}}=(x,y,z{-}1)$ if $z>0$, else $s$ itself, a ground-level skid.
Coincident branches are summed, as in the datacenter benchmark.
The map $\mathrm{redirect}(x,y,z)$ sends a candidate cell to \textsc{Crash} if it lies in the obstacle set $\mathcal O$, to \textsc{Goal} if it lies in the target region $\mathcal T$, and to the grid state $(x,y,z)$ otherwise.

The layout is procedural in the grid extents.
With $y_c=\lfloor L_y/2\rfloor$, $w_x=\max(1,\lfloor L_x/6\rfloor)$, $x_w=\lfloor L_x/2\rfloor-\lfloor w_x/2\rfloor$, $x_p=\lfloor 3L_x/4\rfloor$, and $x_c=L_x-\max(2,\lfloor L_x/8\rfloor)$, the obstacle set $\mathcal O=\mathcal O_{\mathrm{slab}}\cup\mathcal O_{\mathrm{pillar}}\cup\mathcal O_{\mathrm{canopy}}$ consists of a wall spanning $x\in[x_w,x_w{+}w_x)$ at every altitude $z\le L_z{-}2$ except a two-level corridor on the centerline ($y=y_c$, $z\le1$); a two-wide pillar at $x=x_p$, $y\in\{y_c,y_c{+}1\}$, $z\le L_z{-}2$; and a canopy at the top altitude $z=L_z{-}1$ restricted to $x\ge x_c$.
The target region $\mathcal T=\{(L_x{-}1,y,0):y\in\{y_c{-}1,y_c,y_c{+}1\}\}$ is a three-cell landing pad on the far wall.
The generator checks that the start is not an obstacle and that $\mathcal T\cap\mathcal O=\emptyset$.
Together the slab, passable only through a corridor exposed to the wind, and the canopy, blocking the high route just before the goal, force a trade-off between a short low-altitude route hurt by $p$ and a longer high-altitude route hurt by $q$, so the optimal route switches with $(p,q)$.

\paragraph{Rewards and discount.}
The reward is deterministic and parameter-independent: $r(s,a)=1$ for $(\textsc{Goal},\textsc{collect})$ and $0$ for every other reachable state-action pair, so $R_{\max}=1$.
The optimal discounted value at $s_0$ is therefore exactly the discounted probability of eventually landing.
The discount is $\gamma=0.99$, kept close to $1$ because routes take on the order of $L_x$ steps and a smaller discount would wash out the reachability signal over that horizon; this gives $\Vmax=R_{\max}/(1-\gamma)=100$.

\paragraph{Scalability.}
The grid extents $(L_x,L_y,L_z)$ are free parameters of the generator, subject to $L_x\ge8$, $L_z\ge3$, and odd $L_y\ge5$ so that the centerline $y_c$ is a single column.
Every obstacle and goal formula above depends only on $(L_x,L_y,L_z)$, so the low-versus-high trade-off survives at every size, and
\[
\lvert\mathcal S\rvert=3+\bigl\lvert\{(x,y,z)\in\text{grid}:(x,y,z)\notin\mathcal O\cup\mathcal T\}\bigr\rvert=O(L_xL_yL_z).
\]
The benchmark comes in three sizes, listed in \Cref{tab:uav-sizes}, and all share $(p_{\max},q_{\max},\gamma)=(0.25,0.20,0.99)$ and the layout above, differing only in $(L_x,L_y,L_z)$.

\begin{table}[h]
\centering
\begin{tabular}{lccc}
\toprule
Name & $(L_x,L_y,L_z)$ & $\lvert\mathcal S\rvert$ \\
\midrule
\texttt{uav-small}  & $(8,5,3)$   & 98 \\
\texttt{uav-medium} & $(12,9,4)$  & 358 \\
\texttt{uav-large}  & $(24,15,6)$ & 1{,}813 \\
\bottomrule
\end{tabular}
\caption{Sizes of the \texttt{uav} benchmark family used in the experiments.}
\label{tab:uav-sizes}
\end{table}

Every grid cell offers the same seven actions, so the family admits $7^{\lvert\mathcal S\rvert-3}$ memoryless deterministic policies.
Even \texttt{uav-small} therefore rules out exhaustive search over policies.

\section{Experimental data}
\label[appendix]{app:experimental_data}

\paragraph{Regret results (RQ1).}
\Cref{tab:merged-results} gives the seed-level results summarized in \Cref{tab:portfolio-results-summary}.
For each portfolio size $K$ and each benchmark, it reports two quantities.
The ``max-regret'' column is $\widehat\Rreg_{\widehat D}(\Pi_K)$, the largest portfolio regret over that seed's $1000$ sampled valuations; these are the numbers averaged into \Cref{tab:portfolio-results-summary}, and they should be read against the benchmark's $\Vmax$, listed in the header, since the datacenter values live on a scale over twenty times larger than the UAV ones.
The ``inertia'' column is the K-means objective, the within-cluster sum of squared distances from each profile to its assigned center.
It measures how tightly the loss profiles cluster, not how well the portfolio performs, and is included only as a diagnostic: it falls steeply with $K$ on every benchmark, confirming that the candidate profiles do separate into distinguishable groups.

\paragraph{Online selection results (RQ2).}
\Cref{fig:ucb-datacenter} in the main text shows the two datacenter measures.
\Cref{fig:ucb-arm-comparison-all-1,fig:ucb-arm-comparison-all-2} show the same pair of measures for all four benchmarks, one benchmark per row, with all values of $K$ overlaid within each panel.
The left column is the fraction of the $90$ pooled runs whose recommended member is the best portfolio member at their hidden valuation; the right column is that recommendation's normalized shortfall $L^\pi_\theta$, which is $0$ when the best member is recommended and $1$ when the worst is.
Reading the two columns together is what the discussion in \Cref{sec:implementation} relies on: the left column degrades visibly as $K$ grows, since more members mean more near-ties to resolve, while the right column stays low throughout, showing that the members confused for one another are close in value and the cost of the confusion is small.

\begin{table}[t]
  \centering
  \small
  \begin{tabular}{cccccccccc}
    \toprule
     &  & \multicolumn{2}{c}{uav-small} & \multicolumn{2}{c}{uav-medium} & \multicolumn{2}{c}{uav-large} & \multicolumn{2}{c}{data-center-climate} \\
    \cmidrule(lr){3-4} \cmidrule(lr){5-6} \cmidrule(lr){7-8} \cmidrule(lr){9-10}
     &  & \multicolumn{2}{c}{$|S| = 98$} & \multicolumn{2}{c}{$|S| = 358$} & \multicolumn{2}{c}{$|S| = 1813$} & \multicolumn{2}{c}{$|S| = 330$} \\
     &  & \multicolumn{2}{c}{$\Vmax = 100$} & \multicolumn{2}{c}{$\Vmax = 100$} & \multicolumn{2}{c}{$\Vmax = 100$} & \multicolumn{2}{c}{$\Vmax = 2080$} \\
    K & seed & max-regret & inertia & max-regret & inertia & max-regret & inertia & max-regret & inertia \\
    \midrule
    1 & 0 & 0.053 & 23.239 & 0.091 & 42.668 & 0.125 & 59.498 & 14.649 & 1313595.187 \\
    1 & 1 & 0.053 & 23.239 & 0.091 & 42.668 & 0.124 & 59.498 & 13.404 & 1313595.187 \\
    1 & 2 & 0.053 & 23.239 & 0.091 & 42.668 & 0.129 & 59.498 & 14.771 & 1313595.187 \\
    1 & avg & 0.053 & 23.239 & 0.091 & 42.668 & 0.126 & 59.498 & 14.275 & 1313595.187 \\
    \midrule
    2 & 0 & 0.032 & 1.935 & 0.019 & 3.974 & 0.016 & 1.629 & 4.127 & 215193.617 \\
    2 & 1 & 0.032 & 1.935 & 0.019 & 3.974 & 0.016 & 1.629 & 3.413 & 220731.474 \\
    2 & 2 & 0.032 & 1.935 & 0.020 & 3.974 & 0.018 & 1.629 & 4.053 & 215193.617 \\
    2 & avg & 0.032 & 1.935 & 0.019 & 3.974 & 0.017 & 1.629 & 3.864 & 217039.569 \\
    \midrule
    3 & 0 & 0.014 & 0.697 & 0.005 & 1.951 & 0.016 & 0.638 & 2.813 & 72737.797 \\
    3 & 1 & 0.027 & 1.157 & 0.005 & 1.951 & 0.014 & 1.317 & 2.836 & 72737.797 \\
    3 & 2 & 0.014 & 0.697 & 0.020 & 2.152 & 0.018 & 0.638 & 2.809 & 72737.797 \\
    3 & avg & 0.018 & 0.850 & 0.010 & 2.018 & 0.016 & 0.865 & 2.820 & 72737.797 \\
    \midrule
    5 & 0 & 0.002 & 0.094 & 0.003 & 0.193 & 0.016 & 0.107 & 2.809 & 37639.989 \\
    5 & 1 & 0.002 & 0.093 & 0.003 & 0.183 & 0.014 & 0.107 & 2.836 & 40156.574 \\
    5 & 2 & 0.002 & 0.093 & 0.003 & 0.183 & 0.015 & 0.107 & 2.730 & 39166.131 \\
    5 & avg & 0.002 & 0.093 & 0.003 & 0.186 & 0.015 & 0.107 & 2.792 & 38987.564 \\
    \midrule
    7 & 0 & 0.002 & 0.015 & 0.003 & 0.032 & 0.004 & 0.044 & 2.809 & 19076.922 \\
    7 & 1 & 0.002 & 0.012 & 0.003 & 0.032 & 0.003 & 0.044 & 1.170 & 20262.995 \\
    7 & 2 & 0.002 & 0.012 & 0.003 & 0.032 & 0.015 & 0.048 & 1.392 & 19462.184 \\
    7 & avg & 0.002 & 0.013 & 0.003 & 0.032 & 0.008 & 0.045 & 1.790 & 19600.700 \\
    \midrule
    10 & 0 & 0.002 & 0.002 & 0.003 & 0.007 & 0.004 & 0.010 & 0.817 & 8950.955 \\
    10 & 1 & 0.002 & 0.002 & 0.003 & 0.007 & 0.003 & 0.011 & 0.824 & 9239.430 \\
    10 & 2 & 0.002 & 0.002 & 0.003 & 0.009 & 0.002 & 0.010 & 0.759 & 8610.714 \\
    10 & avg & 0.002 & 0.002 & 0.003 & 0.008 & 0.003 & 0.011 & 0.800 & 8933.700 \\
    \bottomrule
  \end{tabular}
  \caption{Full regret results (RQ1).
  For each portfolio size $K$, rows 0--2 report the three clustering seeds and ``avg'' their arithmetic mean.
  ``max-regret'' is the largest sampled portfolio regret.
  ``Inertia'' is the K-means within-cluster sum of squared distances between data points and the cluster centroid, not a regret measure.}
  \label{tab:merged-results}
\end{table}

\begin{figure}[t]
    \centering
    \renewcommand{\arraystretch}{1.1}
    \setlength{\tabcolsep}{2pt}
    \begin{tabular}{@{} >{\centering\arraybackslash}m{1.5em}
                        >{\centering\arraybackslash}m{0.44\linewidth}
                        >{\centering\arraybackslash}m{0.44\linewidth} @{}}
        & \small\textbf{Correct arm} & \small\textbf{Normalised regret of} \\
        & \small\textbf{recommended} & \small\textbf{recommended arm} \\[2pt]
        \rotatebox{90}{\small\textbf{Datacenter}} &
        \includegraphics[width=\linewidth]{figures/datacenter/ucb_pickup_allK_LCB.pdf} &
        \includegraphics[width=\linewidth]{figures/datacenter/ucb_regret_range_allK_LCB.pdf} \\
        \rotatebox{90}{\small\textbf{uav-small}} &
        \includegraphics[width=\linewidth]{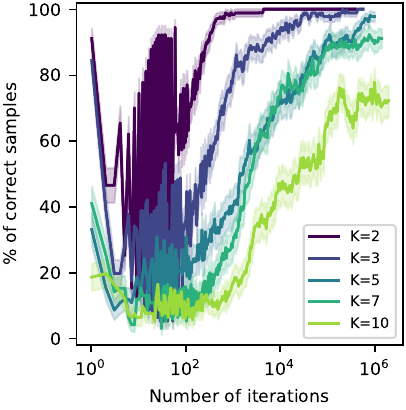} &
        \includegraphics[width=\linewidth]{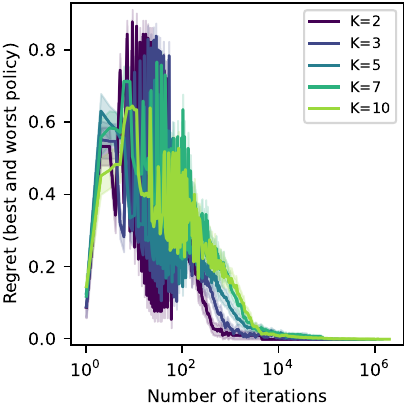} \\
    \end{tabular}
    \caption{Reported UCB deployment results across all tested values of $K$, for the datacenter and uav-small benchmark (RQ2). Each row corresponds to a benchmark; columns show, left to right, the reported correct-arm rate for the recommended arm and the regret of the recommended arm.}
    \label{fig:ucb-arm-comparison-all-1}
\end{figure}

\begin{figure}[t]
    \centering
    \renewcommand{\arraystretch}{1.1}
    \setlength{\tabcolsep}{2pt}
    \begin{tabular}{@{} >{\centering\arraybackslash}m{1.5em}
                        >{\centering\arraybackslash}m{0.44\linewidth}
                        >{\centering\arraybackslash}m{0.44\linewidth} @{}}
        & \small\textbf{Correct arm} & \small\textbf{Normalised regret of} \\
        & \small\textbf{recommended} & \small\textbf{recommended arm} \\[2pt]
        \rotatebox{90}{\small\textbf{UAV-medium}} &
        \includegraphics[width=\linewidth]{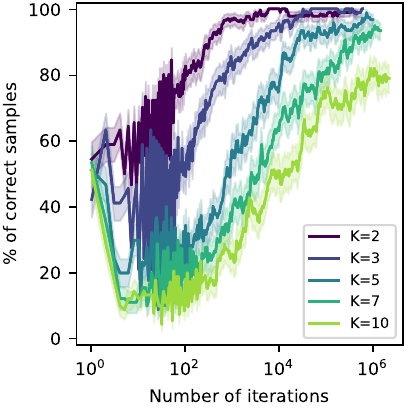} &
        \includegraphics[width=\linewidth]{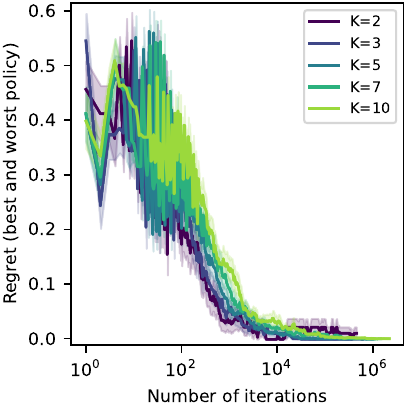} \\
        \rotatebox{90}{\small\textbf{UAV-large}} &
        \includegraphics[width=\linewidth]{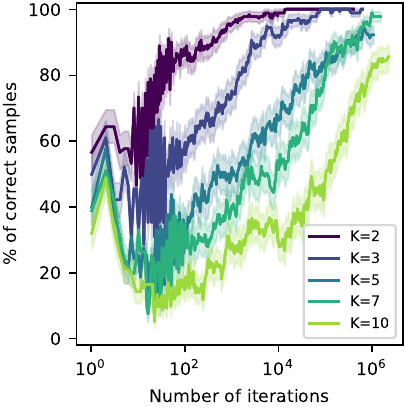} &
        \includegraphics[width=\linewidth]{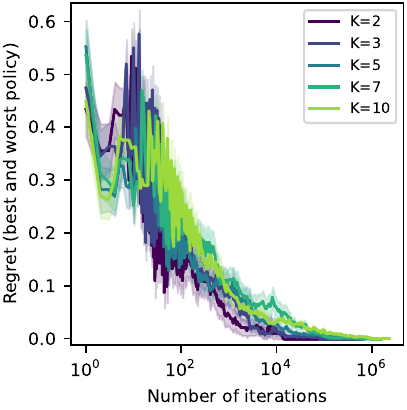} \\
    \end{tabular}
    \caption{Reported UCB deployment results across all tested values of $K$, for the UAV-medium and UAV-large benchmarks (RQ2). Each row corresponds to a benchmark; columns show, left to right, the reported correct-arm rate for the recommended arm and the regret of the recommended arm.}
    \label{fig:ucb-arm-comparison-all-2}
\end{figure}

\begin{algorithm}[t]
\caption{Paired portfolio synthesis and evaluation protocol}
\label{alg:experimental-pipeline-full}

\KwIn{
    benchmark pMDP $M$,
    domain $D$,
    portfolio budgets $\mathcal{K}$,
    seeds $\mathcal{S}$,
    empirical robust regret sample size $n$,
    UCB sample size $m$,
    UCB parameters $(H,\varepsilon,\delta)$
}

\ForEach{$K \in \mathcal{K}$}{
    \tcp{Offline candidate construction.}
    $\textsc{Cells}\leftarrow\textsc{DiscretizeDomain}(D)$\;
    $\textsc{Candidates}\leftarrow
    \textsc{ComputeCandidatePolicies}(M,\textsc{Cells})$\;
    \tcp{Offline robust candidate evaluation.}
    $\textsc{Scores}\leftarrow
    \textsc{RobustPolicyEvaluation}(M,\textsc{Cells},\textsc{Candidates})$\;
    \tcp{Offline mini-max regret evaluation.}
    ComputeMinimaxRegret(Scores, Cells, Candidates)\;
    \ForEach{$s \in \mathcal{S}$}{
        \tcp{Offline construction of portfolio.}
        $\Pi\leftarrow
        \textsc{ReduceAndCluster}(\textsc{Candidates},\textsc{Scores},K,s)$\;
        \tcp{Offline empirical robust regret evaluation of portfolio.}
        $\widehat{D}\leftarrow\textsc{SampleDomain}(D,n,s)$\;
        \ForEach{$\theta\in\widehat{D}$}{
            $V^*\leftarrow\textsc{OptimalValue}(M,\theta)$\;
            $V_\Pi\leftarrow\max_{\pi\in\Pi}
            \textsc{EvaluatePolicy}(M,\theta,\pi)$\;
            $r\leftarrow V^*-V_\Pi$\;
            $\textsc{Store}(K,s,\theta,r)$\;
        }
        Maximise empirical robust regret within one seed\;
        \tcp{Online UCB-based deployment.}
        $\widehat{D}\leftarrow\textsc{SampleDomain}(D,m,s)$\;
        \ForEach{$\theta\in\widehat{D}$}{
            Create generative model $G_\theta$\;
            $z\leftarrow\textsc{RunUCB}(G_\theta,\Pi,H,\varepsilon,\delta)$\;
            $\textsc{Store}(K,s,\theta,z)$\;
        }
    }
    Average empirical robust regret over seeds\;
    Pool UCB samples across seeds\;
}
$\textsc{GenerateReport}()$\;
\end{algorithm}

\end{document}